\def\ddefloop#1{\ifx\ddefloop#1\else\ddef{#1}\expandafter\ddefloop\fi}
\def\ddef#1{\expandafter\def\csname bb#1\endcsname{\ensuremath{\mathbb{#1}}}}
\def\ddefloop#1{\ifx\ddefloop#1\else\ddef{#1}\expandafter\ddefloop\fi}
\def\ddef#1{\expandafter\def\csname frak#1\endcsname{\ensuremath{\mathfrak{#1}}}}
\def\ddefloop#1{\ifx\ddefloop#1\else\ddef{#1}\expandafter\ddefloop\fi}
\def\ddef#1{\expandafter\def\csname fr#1\endcsname{\ensuremath{\mathfrak{#1}}}}
\def\ddefloop#1{\ifx\ddefloop#1\else\ddef{#1}\expandafter\ddefloop\fi}
\def\ddef#1{\expandafter\def\csname eul#1\endcsname{\ensuremath{\EuScript{#1}}}}
\def\ddefloop#1{\ifx\ddefloop#1\else\ddef{#1}\expandafter\ddefloop\fi}
\def\ddef#1{\expandafter\def\csname scr#1\endcsname{\ensuremath{\mathscr{#1}}}}
\def\ddefloop#1{\ifx\ddefloop#1\else\ddef{#1}\expandafter\ddefloop\fi}
\def\ddef#1{\expandafter\def\csname b#1\endcsname{\ensuremath{\mathbf{#1}}}}
\def\ddefloop#1{\ifx\ddefloop#1\else\ddef{#1}\expandafter\ddefloop\fi}
\def\ddef#1{\expandafter\def\csname bhat#1\endcsname{\ensuremath{\hat{\mathbf{#1}}}}}
\def\ddefloop#1{\ifx\ddefloop#1\else\ddef{#1}\expandafter\ddefloop\fi}
\def\ddef#1{\expandafter\def\csname btil#1\endcsname{\ensuremath{\tilde{\mathbf{#1}}}}}
\def\ddefloop#1{\ifx\ddefloop#1\else\ddef{#1}\expandafter\ddefloop\fi}
\def\ddef#1{\expandafter\def\csname bst#1\endcsname{\ensuremath{\mathbf{#1}^\star}}}
\def\ddefloop#1{\ifx\ddefloop#1\else\ddef{#1}\expandafter\ddefloop\fi}
\def\ddef#1{\expandafter\def\csname bst#1\endcsname{\ensuremath{\mathbf{#1}^\star}}}
\def\ddefloop#1{\ifx\ddefloop#1\else\ddef{#1}\expandafter\ddefloop\fi}
\def\ddef#1{\expandafter\def\csname bhat#1\endcsname{\ensuremath{\hat{\mathbf{#1}}}}}
\def\ddefloop#1{\ifx\ddefloop#1\else\ddef{#1}\expandafter\ddefloop\fi}
\def\ddef#1{\expandafter\def\csname sy#1\endcsname{\ensuremath{{\boldsymbol{#1}}}}}
\def\ddefloop#1{\ifx\ddefloop#1\else\ddef{#1}\expandafter\ddefloop\fi}
\def\ddef#1{\expandafter\def\csname b#1\endcsname{\ensuremath{\mathbf{#1}}}}
\def\ddefloop#1{\ifx\ddefloop#1\else\ddef{#1}\expandafter\ddefloop\fi}
\def\ddef#1{\expandafter\def\csname barb#1\endcsname{\ensuremath{\bar{\mathbf{#1}}}}}
\def\ddef#1{\expandafter\def\csname c#1\endcsname{\ensuremath{\mathcal{#1}}}}
\def\ddef#1{\expandafter\def\csname h#1\endcsname{\ensuremath{\widehat{#1}}}}
\def\ddef#1{\expandafter\def\csname hc#1\endcsname{\ensuremath{\widehat{\mathcal{#1}}}}}
\def\ddef#1{\expandafter\def\csname t#1\endcsname{\ensuremath{\widetilde{#1}}}}
\def\ddef#1{\expandafter\def\csname tc#1\endcsname{\ensuremath{\widetilde{\mathcal{#1}}}}}
\newcommand{\bxi}{\bm{\xi}}
\newcommand{\btau}{\bm{\tau}}
\definecolor{Orange}{rgb}{1,0.5,0.25}
\DeclareMathSymbol{\shortminus}{\mathbin}{AMSa}{"39}
\Crefname{equation}{Eq.}{Eqs.}
\Crefname{assumption}{Assm.}{Assms.}
\Crefname{condition}{Condition}{Conditions}
\Crefname{claim}{Claim}{Claims}
\newcommand{\icmlpar}[1]{\iftoggle{arxiv}{\paragraph{#1}}{\textbf{#1}}}
\newcommand{\N}{\mathbb{N}}
\newcommand{\R}{\mathbb{R}}
\numberwithin{equation}{section}
\newcommand\numberthis{\addtocounter{equation}{1}\tag{\theequation}}
\newcommand{\ftil}{\tilde{f}}
\newcommand{\rmd}{\mathrm{d}}
\newcommand{\bzero}{\ensuremath{\mathbf 0}}
\def\bu{\mathbf{u}}
\def\by{\mathbf{y}}
\def\bz{\mathbf{z}}
\def\bu{\mathbf{u}}
\def\bw{\mathbf{w}}
\def\bI{\mathbf{I}}
\DeclareFontFamily{U}{mathx}{\hyphenchar\font45}
\DeclareFontShape{U}{mathx}{m}{n}{
      <5> <6> <7> <8> <9> <10>
      <10.95> <12> <14.4> <17.28> <20.74> <24.88>
      mathx10
      }{}
\DeclareSymbolFont{mathx}{U}{mathx}{m}{n}
\DeclareMathAccent{\widecheck}{0}{mathx}{"71}
\DeclareMathAccent{\wideparen}{0}{mathx}{"75}
\newcommand{\ignore}[1]{}
\DeclareMathOperator{\BigOm}{\mathcal{O}}
\newcommand{\BigOh}[1]{\BigOm\left({#1}\right)}
\DeclareMathOperator{\BigOmtil}{\widetilde{\mathcal{O}}}
\newcommand{\BigOhTil}[1]{\BigOmtil\left({#1}\right)}
\newcommand{\iidsim}{\overset{\mathrm{i.i.d}}{\sim}}
	\theoremstyle{plain}
	\newtheorem{theorem}{Theorem}
	\newtheorem{lemma}{Lemma}[section]
	\newtheorem{claim}[lemma]{Claim}
	\newtheorem{corollary}{Corollary}[section]
	\newtheorem{proposition}[lemma]{Proposition}
	\theoremstyle{definition}
	\newtheorem{definition}{Definition}[section]
	\newtheorem{example}{Example}[section]
	\newtheorem{remark}{Remark}[section]
    \newtheorem{observation}[lemma]{Observation}
  \newtheorem{fact}{Fact}[section]	
  \newtheorem{assumption}{Assumption}[section]
  \newtheorem{condition}{Condition}[section]
\newcommand{\neutralize}[1]{\expandafter\let\csname c@#1\endcsname\count@}
\newtheorem*{theorem*}{Theorem}
\newtheorem*{lemma*}{Lemma}
\newtheorem*{corollary*}{Corollary}
\newtheorem*{proposition*}{Proposition}
\newtheorem*{claim*}{Claim}
\newtheorem*{fact*}{Fact}
\newtheorem*{observation*}{Observation}
\newtheorem*{definition*}{Definition}
\newtheorem*{remark*}{Remark}
\newtheorem*{example*}{Example}
\newtheoremstyle{named}{}{}{\itshape}{}{\bfseries}{}{.5em}{\Cref{#3} {\normalfont (informal)} }
{}
\theoremstyle{named}
\theoremstyle{plain}
\DeclareMathAlphabet{\mathbfsf}{\encodingdefault}{\sfdefault}{bx}{n}
\DeclareMathOperator*{\argmin}{arg\,min}
\DeclareMathOperator*{\conv}{conv}
\let\Pr\relax
\DeclareMathOperator{\Pr}{\mathbb{P}}
\newcommand{\lrbra}[1]{\!\left[#1\right]\!}
\newcommand{\ceil}[1]{\lceil #1 \rceil}
\newcommand{\E}{\mathbb{E}}
\newcommand{\EE}[1]{\E\lrbra{#1}}
\newcommand{\eps}{\varepsilon}
\renewcommand{\leq}{~\le~}
\renewcommand{\geq}{~\ge~}
\let\oldtfrac\tfrac
\renewcommand{\tfrac}[2]{\smash{\oldtfrac{#1}{#2}}}
\let\nablaold\nabla
\renewcommand{\nabla}{\nablaold\mkern-2.5mu}
\newcommand{\Exp}{\mathbb{E}}
\newcommand{\Z}{\mathbb{Z}}
\newcommand{\I}{\mathbb{I}}
\DeclareMathOperator{\Var}{{\rm Var}}                        
\newcommand*{\zero}{{\bm 0}}
\newcommand{\veps}{\varepsilon}
\newcommand{\Hsum}{\cH_{\mathtt{sum}}}
\newcommand{\Fbet}{\cF_{\upbeta}}
\newcommand{\hycon}{hypercontractivity}
\newcommand{\Hycon}{Hypercontractivity}
\newcommand{\radbar}{\overline{\rad}}
\newcommand{\delnpzero}{\updelta_{n,\Pzero}}
\newcommand{\Radenp}{\Rade_{n,\Pr}}
\newcommand{\Radenpzero}{\Rade_{n,\Pzero}}
\newcommand{\starhull}{\mathrm{star}}
\newcommand{\Efn}{\widehat{\cR}^{\mathtt{crs}}_{n}}
\newcommand{\covnum}{\scrN}
\newcommand{\mnum}{\scrM}
\newcommand{\Rnalphsig}{\widehat{\cR}^{\cF}_{n,\alpha,\sigma}}
\newcommand{\Rnsigf}{\hat{\cR}^{\cF}_{n,\sigma}}
\newcommand{\Ezero}{\Exp_{\envzero}}
\newcommand{\ccf}{\cC_{\mathtt{loc}}}
\newcommand{\fst}{f_{\star}}
\newcommand{\gst}{g_{\star}}
\newcommand{\hst}{h_{\star}}
\newcommand{\xivar}{\boldsymbol{\xi}}
\newcommand{\epsvar}{\boldsymbol{\veps}}
\newcommand{\Vzero}{\Var_{\envzero}}
\newcommand{\Eenv}{\Exp_{\env}}
\newcommand{\xvar}{\bx}
\newcommand{\yvar}{\by}
\newcommand{\zvar}{\bz}
\newcommand{\Lossn}{\hat{\cL}_n}
\newcommand{\Rade}{\mathscr{R}}
\newcommand{\Gauss}{\mathscr{G}}
\newcommand{\Raden}{\Rade_n}
\newcommand{\Rfn}{\hat{\cR}_{\mathtt{loc},n}}
\newcommand{\rcrossn}{\updelta_{n,\mathtt{crs}}}
\newcommand{\fhatn}{\hat{f}_n}
\newcommand{\ghatn}{\hat{g}_n}
\newcommand{\delnst}{\updelta_{n}}
\newcommand{\rhonst}{\uprho_{n}}
\newcommand{\Fcent}{\cF_{\mathtt{cnt}}}
\newcommand{\Gcent}{\cG_{\mathtt{cnt}}}
 \newcommand{\entclass}{\mathsf{EntFam}}
\newcommand{\rad}{\mathsf{rad}}
\newcommand{\Dudfun}{\mathscr{D}}
\newcommand{\Ptest}{\Pr_{\mathrm{test}}}
\newcommand{\Ptrain}{\Pr_{\mathrm{train}}}
\newcommand{\Etest}{\Exp_{\mathrm{test}}}
\newcommand{\Etrain}{\Exp_{\mathrm{train}}}
\newcommand{\Rtest}{\cR_{\mathrm{test}}}
\newcommand{\Rtrain}{\cR_{\mathrm{train}}}
\newcommand{\fhat}{\hat{f}}
\newcommand{\ghat}{\hat{g}}
\newcommand{\Rzero}{\mathcal{R}_{\envzero}}
\newcommand{\env}{\mathsf{e}}
\newcommand{\Renv}{\cR_{\env}}
\newcommand{\Pzero}{\Pr_{\envzero}}
\renewcommand{\cY}{\mathcal{Y}}
\newcommand{\Prenv}{\Pr_{\env}}
\title{Statistical Learning under Heterogeneous Distribution Shift}
\author{Max Simchowitz\footnote{\href{mailto:msimchow@csail.mit.edu}{msimchow@csail.mit.edu}, equal contributor.} \\ MIT \and Anurag Ajay\footnote{\href{mailto:aajay@mit.edu}{aajay@mit.edu}, equal contributor.} \\ MIT \and Pulkit Agrawal\footnote{\href{mailto:pulkitag@mit.edu}{pulkitag@mit.edu}} \\ MIT \and Akshay Krishamurthy\footnote{\href{mailto:akshaykr@microsoft.com}{akshaykr@microsoft.com}} \\ MSR NY}
\date{\today}
\begin{document}

\maketitle
\begin{abstract}
This paper studies the prediction of a target $\mathbf{z}$ from a pair of random variables $(\mathbf{x},\mathbf{y})$, where the ground-truth predictor is additive $\mathbb{E}[\mathbf{z} \mid \mathbf{x},\mathbf{y}] = f_\star(\mathbf{x}) +g_{\star}(\mathbf{y})$. We study the performance of empirical risk minimization (ERM) over functions $f+g$, $f \in \mathcal{F}$ and $g \in \mathcal{G}$, fit on a given training distribution, but evaluated on a test distribution which exhibits covariate shift. We show that, when the class $\mathcal{F}$ is 
``simpler" than $\mathcal{G}$ (measured, e.g., in terms of its metric entropy), our predictor is more resilient to \emph{heterogeneous covariate shifts} in which the shift in $\mathbf{x}$ is much greater than that in $\mathbf{y}$. \msedit{Our analysis proceeds by demonstrating that ERM behaves \emph{qualitatively similarly 
to orthogonal machine learning}: the rate at which ERM recovers the $f$-component of the predictor has only a lower-order dependence on the complexity of the class $\mathcal{G}$, adjusted for partial non-indentifiability introduced by the additive structure. }
These results rely on a novel H\"older style inequality for the Dudley integral which may be of independent interest. Moreover, we corroborate our theoretical findings with experiments demonstrating improved resilience to shifts in ``simpler'' features across numerous domains.


\end{abstract}


\section{Introduction}
Modern machine learning systems are routinely deployed under distribution shift \citep{taori2020measuring,koh2021wilds}. However, statistical learning theory has primarily focused on studying the generalization error in the situation where the test and the training distributions are identical \citep{bartlett2002rademacher,vapnik2006estimation}. 
In the setting of \emph{covariate shift}---where only features/covariates change between training and testing, but the target function remains fixed---guarantees from statistical learning theory can be applied
via a reweighting argument, leading to the classical bound involving density ratios depicted in \Cref{eq:bound_naive}. 
However, this approach may be overly pessimistic and may not account for the relative differences in performance degradation between different distribution shift settings.

Well-specified linear regression is perhaps the simplest setting that
admits favorable distribution shift behavior \citep{lei2021near}. Here,
out-of-distribution generalization is controlled by the alignment
between the second moment matrices of the training and test
distribution, rather than the significantly worse density
ratios. {Beyond the linear setting, ML models including neural
  networks often suffer from \emph{spurious correlation}~\citep[c.f.,][]{arjovsky2019invariant}, where the model
  exploits correlations in the training distribution to learn an
  accurate-but-incorrect predictor that fails to generalize to a
  de-correlated distribution. 
Though this phenomenon and other
  related ones  are well-documented
  experimentally, a general theory of distribution
  shift---particularly one that explains the behavior of deep learning
  models in practice---has remained undeveloped.} 

  A useful theory of distribution shift should make {predictions} as to which shifts a learned model is most sensitive to in possible test environments, given properties of the model {which can be evaluated from training data} \citep{xiao2020noise,koh2021wilds,rahimian19dro}. We illustrate this point with the following example.
\begin{example}\label{exmp:quadruped}
Consider a quadruped carrying different payloads across multiple terrains, with a policy trained via reinforcement learning. Should one expect more degradation in performance with new shapes or sizes of payloads? Or should a policy suffer more from novel terrains? If our policy requires camera inputs, should we expect that changes in lighting conditions or times of day have more of an effect? Or if the policy relies on tactile sensation, should we expect changes in weather (e.g., rain on the tactile sensors) to present more of an obstacle?
\end{example}

\msedit{Any theory that attemps to quantify ``difficulty'' of covariate shifts in different features should further acount for how algorithmic decisions affect out-of-distribution performance. Notably, it is known to be challenging in general to outperform pure supervised learning on out-of-distribution benchmarks \cite{koh2021wilds}. Why might pure supervised learning perform \emph{better than expected} under covariate shift, relative to alternatives that attempt to explicitly guard against said shift?   }

\icmlpar{Contributions. } This paper gestures towards a richer theory of generalization
under covariate shift; one that makes such actionable predictions about the relative resilience of a model to the 
kinds of multifarious shifts illustrated in \Cref{exmp:quadruped}.  More specifically, we highlight a setting we call \emph{heterogeneous covariate shift}, where the distribution of one feature shifts more than another.

Theoretically, we study supervised prediction from a pair of (possibly non-independent) random variables $(\bx,\by)$. We think of $\bx$ as corresponding to ``simple features'' and $\by$ to more complex ones. We show greater resilience to heterogeneous distribution shifts in which
the shift in the marginal of $\by$ is significantly smaller than that
of the joint distribution. Specifically, our analysis restricts its attention to regression functions which decompose additively as $f(\bx) + g(\by)$. We show that empirical risk minimization
(ERM) over functions of the form $f(\bx) + g(\by)$ leads to much more
favorable generalization guarantees than those obtained via the
na\"{i}ve covariate shift bound.  {In the most favorable
  setting, we obtain a test error bound that scales only with the
  covariate shift in the marginal of the ``complex feature'' $\by$, so that even though
  spurious correlations between $\bx$ and $\by$ are present, they play no role in the
  generalization performance of the ERM. 
 }

While limited, the  additive framework proposes a useful metric to evaluate relative complexity of the features:  the richness of their associated function classes. This suggests a more general hypothesis that can be formulated \emph{without the additivity assumption:} we can determine resilience to shifts in a given feature by evaluating the ``complexity'' of a model's dependence on that feature. Using in-distribution generalization as a proxy for model complexity,  we find that deep
learning models are \emph{consistenly more resilient to shifts in
simpler features than they are to shifts in complex features}; this finding holds across a range of tasks, including synthetic settings,
computer vision benchmarks, and imitation learning. \msedit{We believe that this adaptivity of empirical risk minimization to may explain why pure supervised learning may be so hard to outperform for distribution shift resilience \cite{koh2021wilds}.}
 We hope that, taken together, our theoretical and experimental results
initiate a further dialogue between the field of statistical learning
theory and the study of distribution shift in machine learning more
broadly.

\icmlpar{Proof Techniques.} 
 The technical challenge to obtaining favorable distribution
  shift is correlation between $\bx$ and $\by$, which, among other things, 
leads to
  unidentifiability of the generalizing predictor. We show that when
  $\cG$ is sufficiently expressive, 
  the simple predictor $f$ can be learned, up to a bias arising from
  identifiability, at a rate that exhibits a lower order dependence on
  $\cG$.  Although this predictor \emph{is} affected by distribution shifts
  in $\bx$, the low complexity of the function class $\cF$ and the
  lower order dependence on $\cG$ implies that the impact on the overall performance 
is
  rather small. Then ERM can learn a $g$ that corrects for the bias in
  $f$ and is unaffected by distribution shifts in $\bx$. The core
  technical result for this argument is the generalization bound for
  $\cF$ { which disentangles the correlations between $\bx$ and $\by$; this result relies, among other things, on a novel H\"{o}lder-style inequality for the
  Dudley integral of products of function classes, which may be of
  independent interest. }



\icmlpar{Related Work.} 
Our results and techniques are very much in the spirit of classical statistical learning theory \citep{bartlett2005local,bousquet2002stability,bartlett2002rademacher,vapnik2006estimation}, but also have the flavor of more recent work on orthogonal/double machine learning \citep{chernozhukov2017double,foster2019orthogonal,mackey2018orthogonal}. In that parlance, we can view $g$ as a nuisance parameter for estimating $f$ and our results show similar (but not quite matching) recovery guarantees without explicit double-training interventions. {We discuss comparisons to orthogonal ML in the sequel.}
\iftoggle{icml}
{}{
  
}
Resilience to distribution shift has received considerable attention in recent years \citep{miller2021accuracy,taori2020measuring,santurkar2020breeds,koh2021wilds,zhou2022domain}, with the vast majority of the work being empirical. While the present work focuses on studying vanilla empirical risk minimization, there have been many methods produced to explicity tackle distribution shift including \textsc{coral} \citep{sun2016deep}, \textsc{irm} \cite{arjovsky2019invariant}, and distributionally robust optimization, the latter having seen recent advances on both empirical and theoretical fronts~\citep{schmidt2018adversarially,rahimian19dro,sinha18dro}.
\iftoggle{icml}
{}{
  
}
Though the statistical properties of distribution shift under empirical risk minimization has garnered substantially less attention, recent work has given precise characterizations of the effects of covariate shift for certain specific function classes, notably kernels \citep{ma2022optimally} and H\"older smooth classes \citep{pathak2022new}. Our work complements these by considering structural situations in which interesting generalization phenomena arise for arbitrary function classes. Lastly, \cite{dong2023first} establish Laplacian-like connectivity conditions under which test-error of additive predictors $f(\bx) + g(\bx)$ (as in this work) can be bounded in terms of train-error, focusing on (a) situations where the marginals over $\bx,\by$ between test- and train-distibutions coincide but joint distributions differ and (b) discrete-  Gaussian-distributed features. By contrast, our work allows for changes in both joint and marginal distributions (albeit with cruder measures of shift), general feature distributions, and exposes statistical phenomena not addressed by the former work.


\newcommand{\envtrain}{\mathsf{train}}
\newcommand{\envtest}{\mathsf{test}}
\renewcommand{\Etrain}{\Exp_{\envtrain}}
\renewcommand{\Etest}{\Exp_{\envtest}}
\renewcommand{\Ptrain}{\Pr_{\envtrain}}
\renewcommand{\Rzero}{\mathcal{R}_{\envtrain}}
\renewcommand{\Ezero}{\Exp_{\envtrain}}
\renewcommand{\Vzero}{\Var_{\envtrain}}
\renewcommand{\Pzero}{\Ptrain}
\renewcommand{\Rtest}{\mathcal{R}_{\envtest}}
\vspace{-.5em}
\section{Theoretical Setup}
\vspace{-.5em}
We study the prediction of a scalar $\zvar \in \R$ from two covariates $\xvar \in \cX, \yvar \in \cY$ under distribution shift. We postulate a pair of testing and training environments denoted $\env \in \{\envtest,\envtrain\}$, each of which index laws $\Prenv$ over $(\xvar,\yvar,\zvar)$, and whose expectation operators are denoted by $\Eenv$. 
We assume the environments do not differ in the Bayes regression function, i.e., they exhibit only \emph{covariate shift}:
\begin{assumption}[Covariate Shift]\label{asm:cov_shift} We assume that, for all $\xvar,\yvar$, $\Etrain[\zvar \mid \xvar,\yvar] = \Etest[\zvar \mid \xvar,\yvar]$.
\end{assumption}
Next, we assume that the we have access to a class of functions that
capture the conditional expectations $\Etrain[\zvar \mid \xvar,\yvar]$
via additive structure.  Specifically, we assume access to classes
$\cF: \cX \to \R$ and $\cG: \cY \to \R$ for which $(x,y) \mapsto
\Etrain[\zvar \mid \xvar=x,\yvar=x] \in \cF + \cG$.  This is typically
referred to as being realizable or well-specified.
\begin{assumption}[Additive well-specification]\label{asm:well_spec} For some $\fst \in \cF$ and $\gst \in \cG$, it holds thats
\begin{align}
 \Ptrain[\bz \mid \bx = x, \by=x] \sim \cN(\fst(x) + \gst(y),\sigma^2) \label{eq:additive_well_spec}
\end{align}
\end{assumption}
Via universality of Gaussian processes, our results can be extended to general subgaussian noise. 
Since the model is well-specified, a natural performance measure of a predictor $(f,g)$ is its excess square-loss risk, denoted $\Renv(f,g)$:
\iftoggle{arxiv}
{
\begin{align*}
\Renv(f,g) &:= \Exp_{\env}(f(\bx)+g(\yvar) - \zvar)^2 - \inf_{  f' \in \cF,  g' \in \cG}\Exp_{\env}( f'(\xvar)+ g'(\yvar) - \zvar)^2 \\
&= \Exp_{\env}((f-\fst)(\xvar) + (g-\gst)(\yvar))^2.
\end{align*}
}
{
   \begin{align*}
\Renv(f,g) &:= \Exp_{\env}((f-\fst)(\xvar) + (g-\gst)(\yvar))^2.
\end{align*} 
}

\paragraph{Empirical Risk Minimization. } We study the excess risk under $\Ptest$ of square-loss empirical risk minimizers, or ERMs, for $\Ptrain$.  Given a number $n \in \N$, we collect $(\xvar_i,\yvar_i,\zvar_i)_{i\in [n]} \iidsim \Ptrain$ samples and let $(\hat f_n,\hat g_n)$ denote (any) empirical risk minimizer of the samples: 
\iftoggle{arxiv}
{
    \begin{align}
(\hat f_n,\hat g_n) \in \argmin_{(f,g)\in \cF \times \cG} \Lossn(f,g), \quad  \Lossn(f,g) := \frac{1}{n}\sum_{i=1}^n (f(\xvar_i)+g(\yvar_i)-\zvar_i)^2. \label{eq:erm}
\end{align}
}
{$(\hat f_n,\hat g_n) \in \argmin_{(f,g)\in \cF \times \cG} \Lossn(f,g)$, where
\begin{align}
 \textstyle\Lossn(f,g) := \frac{1}{n}\sum_{i=1}^n (f(\xvar_i)+g(\yvar_i)-\zvar_i)^2. \label{eq:erm}
\end{align}

}
\paragraph{Distribution Shift.} 
Although we have samples from $\Ptrain$, we are primarily interested in the excess square loss under $\Ptest$. 
For simplicity, the body of this paper focuses on when the density ratios between these distributions are upper bounded; as discussed in \Cref{sec:refined_measures}, these conditions can be weakened considerably. 
\iftoggle{icml}
{}
{We introduce the density ratio coefficients for the joint distribution $(\xvar,\yvar)$ as well as the marginal distribution over $\yvar$.}
\begin{definition}\label{def:nux} 
Define the \emph{density ratio coefficients} $\nu_{x,y},\nu_y \geq 1$ to be the smallest scalars such that
 for all measurable sets $A \subset \cX \times \cY$ and $B \subset \cY$,
\iftoggle{arxiv}
{
    \begin{align*}
\Ptest[(\xvar,\yvar) \in A] \le \nu_{x,y}\Ptrain[(\xvar,\yvar) \in A], \quad \Ptest[\yvar \in B] \le \nu_{y}\Ptrain[\yvar \in B].
\end{align*}
}
{
    \begin{align*}
\Ptest[(\xvar,\yvar) \in A] &\le \nu_{x,y}\Ptrain[(\xvar,\yvar) \in A]\\
\Ptest[\yvar \in B] &\le \nu_{y}\Ptrain[\yvar \in B].
\end{align*}
}
\end{definition}
The interesting regime is where $\nu_{x,y},\nu_y$ are finite.
A standard covariate shift argument upper bounds the excess risk on $\Ptest$ by the joint density ratio, $\nu_{x,y}$, times the excess risk on $\Ptrain$. 
Our aim is to show that much better bounds are possible. Specifically, if the class $\cF$ is ``smaller'' than the class $\cG$, then the excess risk on $\Ptest$ is \emph{less sensitive} to shifts in the joint distribution (i.e., $\nu_{x,y}$) than it is to shifts in the $\yvar$-marginal (i.e., $\nu_y$).
Such an improvement is most interesting in the regime where $\nu_{x,y} \gg \nu_y$, which requires that $\xvar$ is not a measurable function of $\yvar$. 

Controlling distribution shift via bounded density ratios is popular in the offline reinforcement learning, where such terms are called \emph{concentrability coefficients} \citep{xie2020q,xie2022role}. We stress that the uniform density ratio bounds in this section are merely for convenience; we discuss generalizations at length in \Cref{sec:refined_measures}.



\paragraph{Conditional Completeness.}
Notice that $(\fst,\gst)$ may not be identifiable in the model \Cref{eq:additive_well_spec}. 
The most glaring counterexample occurs when $\xvar = \yvar$, and $\fst + \gst \in \cF \cap \cG$. 
Then, $(f,g) = (\fst + \gst,\zero)$ and $(f,g) = (\zero,\fst+\gst)$ are both optimal pairs of predictors. 
However, this setting is uninteresting for our purposes, since $\xvar=\yvar$ implies that $\nu_{x,y} = \nu_y$. 
On the other hand, when $\xvar$ and $\yvar$ are independent, the model is identifiable up to a constant offset, i.e., $(\fst+c,\gst-c)$ is an optimal pair.
This line of reasoning suggests that the indentifiable part of $\fst$ in \Cref{eq:additive_well_spec} corresponds to the part of $\xvar$ that is orthogonal to $\yvar$. 
To capture this effect, we introduce the conditional \emph{bias} of $f$ given $\yvar$ under the \emph{training distribution}:
\begin{align}
\upbeta_{f}(\cdot) = \Etrain[(f-\fst)(\xvar) \mid \yvar = \cdot]. \label{eq:bet_f}
\end{align}
Note that this is a function of $y$, not $x$. 
One can check that $\Rtrain(f,g) = 0$ if and only if $(f(\xvar),g(\yvar)) = (\fst(\xvar) + \upbeta_f(\yvar), \gst(\yvar) - \upbeta_f(\yvar))$ with probability one over $(\xvar,\yvar) \sim \Ptrain$. Note, in particular, that this requires $\upbeta_f$ is almost surely (under $\Ptrain$) equal to a measurable function of $\bx$. 
This allows, for example, $(f,g) = (\fst-c,\gst +c)$ for constants $c \in \R$, and, in particular, $(\fst,\gst)$ meet these requirements since $\upbeta_{\fst} = 0$.

We now introduce our final, and arguably only non-standard, assumption. 
\begin{assumption}[$\gamma$-Conditional Completeness]\label{asm:conditional_completeness} There exists some $\gamma > 0$ such that, for any $(f,g) \in \cF \times \cG$ satisfying $\Rtrain(f,g) \le \gamma^2$, it holds that $g - \upbeta_f \in \cG$.
\end{assumption}
Conditional completeness is somewhat non-intuitive but it is satisfied in some natural cases. 
We list them here informally, and defer formal exposition to \Cref{app:on_conditiona_completeness}.
First, as aluded to above, when $\xvar \perp \yvar$, $\upbeta_f(\yvar)$ is constant in $\yvar$ and so 
conditional completeness holds as long as $\cG$ is closed under affine translation.
Second, it holds when $\cF$ and $\cG$ are linear classes and $\xvar$ and $\yvar$ are jointly Gaussian; this follows since the conditional distribution $\Etrain[\xvar \mid \yvar=y]$ is linear in $y$.
The latter example extends to nonparametric settings: conditional completeness holds if the conditional expectations $\xvar \mid \yvar$ are smooth and $\cG$ contains correspondingly smooth functions. 

The restriction to $\Rtrain(f,g) \le \gamma^2$ allows us to make the assumption compatible with the following, standard boundedness assumption (for otherwise we would need to have $g - k \upbeta_f \in \cG$ for all $k \in \N$, see~\Cref{rem:boundedness_cc}.)
\begin{assumption}[Boundedness]\label{asm:bounded} We assume that for all $f \in \cF$ and $g \in \cG$, $|f(\xvar)|$ and $|g(\yvar)|$ are uniformly bounded by some $B > 0$.
For simplicity, we also assume $\cF$ and $\cG$ contain the zero predictor.
\end{assumption}

\icmlpar{Notation.} We use $a \lesssim b$ to denote inequality up to universal constants, and use $\BigOh{\cdot}$ and $\BigOhTil{\cdot}$ as informal notation suppressing problem-dependent constants and logarithmic factors, respectively. A scalar-valued random variable is standard normal if $Z \sim \cN(0,1)$ and Rademacher if $Z$ is uniform on $\{-1,1\}$. For $v  = (v_1,\dots,v_n) \in \R^n$ and $q \in [1,\infty)$, define the normalized $q$-norms  $\|v\|_{q,n} = (\frac{1}{n}\sum_{i=1}^n |v_i|^q)^{1/q}$ and $\|v\|_{\infty,n} = \|v\|_{\infty} = \max_{ i \in [n]}|v_i|$. We let $\cW = \cX \times \cY$ with elements $w \in \cW$, so we can view classes $f \in \cF, g \in \cG$, and $\upbeta_{f}$ as mappings with type $\cW \to \R$. Given $h \in \cH$ and a sequence $w_{1:n} \in \cW^n$, define the evaluation vector $h[w_{1:n}] := (h(w_1),\dots,h(w_n)) \in \R^n$ and evaluated class $\cH[w_{1:n}] := \{h[w_{1:n}]: h \in \cH\} \subset \R^n$. 

\newcommand{\rate}{\mathrm{rate}}

\newcommand{\rateqn}[1][q]{\rate_{n,#1}}
\newcommand{\ratestn}{\rate_{n,\star}}

\section{Results}
All of our results follow from the same schematic: we argue that if
$\cF$ is simpler than $\cG$, it is much easier to recover $\fst$ than
it is to recover $\gst$, subject to the identifiability issues
introduced by $\upbeta_f$. 
To express this, we introduce the per-function risks, for $\env \in \{\envtrain,\envtest\}$:
\iftoggle{arxiv}
{
\begin{align*}
\Renv[f] := \Exp_{\env}[(f-\fst - \upbeta_f)^2], \quad \Renv[g; f] := \Eenv[(g-\gst + \upbeta_f)^2]
\end{align*}
}
{
	\begin{align*}
\Renv[f] &:= \Exp_{\env}[(f-\fst - \upbeta_f)^2].\\
\Renv[g; f] &:= \Eenv[(g-\gst + \upbeta_f)^2].
\end{align*}
}
Our schematic shows that $\Rtrain[\hat{f}_n] \ll \Rtrain[\hat{g}_n;\hat{f}_n]$,  with precise convergence rates.
The expression $\Renv[f]$ reflects that $f$ is identifiable only up to a bias, while $\Renv[g; f]$ can be thought of as the residual error after accounting for the bias in $f$. 
A straightforward consequence of these definitions is the following risk decomposition:
\begin{lemma}\label{lem:excess_decomp} Let $(f,g)\in \cF \times \cG$. Then, under \Cref{asm:cov_shift,asm:well_spec}, $\Rtrain(f,g) = \Rtrain[f] + \Rtrain[g;f]$. Morever, $\Rtest(f,g) \le 2(\Rtest[f] + \Rtest[g;f])$. Therefore,
\iftoggle{arxiv}
{
	\begin{align}
\Rtest(f,g) \le 2(\nu_{x,y}\Rtrain[f] + \nu_y\Rtrain[g;f]) \le 2(\nu_{x,y}\Rtrain[f] + \nu_y\Rtrain(f,g)). \label{eq:extrap_excess}
\end{align}
}
{
	\begin{align}
\Rtest(f,g) \le 2(\nu_{x,y}\Rtrain[f] + \nu_y\Rtrain(f,g)). \label{eq:extrap_excess}
\end{align}
}

\end{lemma}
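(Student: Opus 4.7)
The plan is to derive the identity and inequalities by algebraic expansion of the square, exploiting the defining property of $\upbeta_f$ to eliminate a cross term under $\Ptrain$, and then invoking the density ratio bounds from \Cref{def:nux}.

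First I would prove the identity $\Rtrain(f,g) = \Rtrain[f] + \Rtrain[g;f]$. I would write
\begin{align*}
(f-\fst)(\xvar) + (g-\gst)(\yvar) = \bigl((f-\fst)(\xvar) - \upbeta_f(\yvar)\bigr) + \bigl((g-\gst)(\yvar) + \upbeta_f(\yvar)\bigr)
\end{align*}
and expand the square. The squared terms give $\Rtrain[f]$ and $\Rtrain[g;f]$, so it suffices to show the cross term vanishes in expectation under $\Ptrain$. Since the second summand is a measurable function of $\yvar$ only, by the tower rule the cross term equals
\begin{align*}
\Etrain\bigl[\bigl((g-\gst)(\yvar) + \upbeta_f(\yvar)\bigr)\cdot \Etrain[(f-\fst)(\xvar) - \upbeta_f(\yvar) \mid \yvar]\bigr],
\end{align*}
and by the very definition \eqref{eq:bet_f} of $\upbeta_f$ the inner conditional expectation is zero almost surely.

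For the second claim, the same decomposition applies pointwise under $\Ptest$, but now the cross term need not vanish because $\upbeta_f$ is defined via $\Ptrain$, not $\Ptest$. Applying the elementary inequality $(a+b)^2 \le 2a^2 + 2b^2$ inside the expectation $\Etest$ immediately yields $\Rtest(f,g) \le 2\Rtest[f] + 2\Rtest[g;f]$.

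For the final chain of inequalities, the key observation --- and the only substantive point in the lemma --- is that the random variable $(f-\fst)(\xvar) - \upbeta_f(\yvar)$ is a function of the joint pair $(\xvar,\yvar)$, while $(g-\gst)(\yvar) + \upbeta_f(\yvar)$ is a function of $\yvar$ alone. Hence by \Cref{def:nux},
\begin{align*}
\Rtest[f] \le \nu_{x,y}\,\Rtrain[f], \qquad \Rtest[g;f] \le \nu_y\,\Rtrain[g;f],
\end{align*}
and the last inequality $\Rtrain[g;f] \le \Rtrain(f,g)$ follows from the identity in the first claim together with nonnegativity of $\Rtrain[f]$. There is no serious obstacle here; the only thing to be careful about is keeping track of which quantities are $\yvar$-measurable so that the tighter density ratio $\nu_y$ rather than $\nu_{x,y}$ can be applied to the residual term, as this is precisely the separation that the remainder of the paper will exploit.
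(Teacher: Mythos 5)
Your proof is correct and follows essentially the same route as the paper's: expand the square around $\upbeta_f$, kill the cross term under $\Ptrain$ via the tower property, bound $\Rtest(f,g)$ by $2(\Rtest[f]+\Rtest[g;f])$ (the paper does this via Cauchy--Schwarz plus AM--GM on the cross term, which is equivalent to your $(a+b)^2\le 2a^2+2b^2$), and then apply the density-ratio bounds, using that the $g$-residual is $\yvar$-measurable. No gaps.
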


\Cref{eq:extrap_excess} is the starting point for our results. 
By comparison, the standard distribution shift bound is
\begin{align}
\Rtest(f,g)  \le \nu_{x,y}\Rtrain(f,g). \label{eq:bound_naive}
\end{align}
Hence, \Cref{eq:extrap_excess} 
leads to sharper estimates for ERM in the regime where $\nu_y \ll
\nu_{x,y}$ and $\Rtrain[\hat{f}_n] \ll \Rtest[\hat{f}_n,\hat{g}_n)$,
  i.e., when the shift in $\yvar$ is less than the shift in the joint
  distribution and when the estimate of $\fst$ is more accurate than
  the estimate of $\fst + \gst$.
\iftoggle{arxiv}
{
  
}
{}
The bulk of the analysis involves obtaining sharp bounds on
$\Rtrain[\hat{f}_n]$, this is sketched in 
\Cref{sec:analysis_overview}. In the remainder of this section, we
describe implications for various settings of interest.

\newcommand{\vst}{v^\star}
\newcommand{\nulin}{\nu_{\,\mathrm{lin}}}
\newcommand{\herm}{\mathsf{H}}

\subsection{Nonparametric Rates}
We begin by demonstrating improvements in the \emph{non-parametric regime}, where we measure the complexity of function classes by their metric entropies. 
Recall that an $\epsilon$-\emph{cover} of a set $\bbV$ in a norm $\|\cdot\|$ is a set $\bbV' \subset \bbV$ such that, for any $v \in \bbV$, there exists $v' \in \bbV'$ for which $\|v-v'\| \le \epsilon$. The \emph{covering number} of $\bbV$ at scale $\epsilon$ in norm $\|\cdot\|$ is the minimal cardinality of an $\epsilon$-cover, denoted $\covnum(\bbV,\|\cdot\|,\veps)$. 
Metric entropies of function classes are defined via the logarithm of the covering number.
\begin{definition}[Metric Entropy]\label{defn:metric_entropy} We define the $q$-norm metric entropy of a function class $\cH: \cW \to \R$ as $\cM_q(\epsilon,\cH):= \sup_{n}\sup_{w_{1:n}}\log \covnum(\cH[w_{1:n}],\|\cdot\|_{q,n},\veps)$.
\end{definition}
As in classical results in statistical learning theory, rates of
convergence depend on function class complexity primarily through the
\emph{growth rate} of the metric entropy, i.e., how
$\cM_q(\epsilon,\cH)$ scales as a function of $\epsilon$. We state our
first main result informally, in line with this tradition.

\begin{theorem}[Informal]\label{thm:nonpar} 
Under~\Cref{asm:cov_shift}-\Cref{asm:bounded}, the error of $(\fhatn,\ghatn)$ under $\Ptest$ is bounded as follows with high probability:
\begin{align*}
\Rtest(\fhatn,\ghatn) \lesssim \BigOhTil{\nu_{x,y} \left(\rateqn[2](\cF) + \ratestn(\cG)^2 + \frac{\sigma^2}{n}\right) + \nu_{y} \rateqn[2](\cG) },
\end{align*}
where above we define
\begin{align}
\rateqn(\cH) = \begin{cases} \frac{d}{n} & \cM_{q}(\epsilon,\cH) = \BigOh{d \log(1/\epsilon)}\\
n^{-\frac{2}{2+p}} & \cM_{q}(\epsilon,\cH) = \BigOh{\epsilon^{-p}},~ p \le 2\\
n^{-\frac{1}{p}} & \cM_{q}(\epsilon,\cH) = \BigOh{\epsilon^{-p}}, ~p > 2
\end{cases}, \label{eq:rateqn_informal}
\end{align}
and $\ratestn(\cH) = n^{-(1/2 \wedge 1/p)}$ for $\cM_{\infty}(\epsilon,\cH) = \BigOh{\epsilon^{-p}}$. 
\end{theorem}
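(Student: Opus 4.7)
The plan is to start from the risk decomposition of Lemma \ref{lem:excess_decomp}, which already reduces test-side control to two training-side quantities: the bias-adjusted single-component risk $\Rtrain[\hat f_n]$ (multiplied by $\nu_{x,y}$) and the joint risk $\Rtrain(\hat f_n,\hat g_n)$ (multiplied by $\nu_y$). The target bound then just asks for
\[
 \Rtrain[\hat f_n]\;\lesssim\;\rateqn[2](\cF)+\ratestn(\cG)^2+\tfrac{\sigma^2}{n},\qquad \Rtrain(\hat f_n,\hat g_n)\;\lesssim\;\rateqn[2](\cF)+\rateqn[2](\cG)+\tfrac{\sigma^2}{n},
\]
each up to logarithmic factors. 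The second (joint) bound is the classical ERM guarantee for a well-specified least-squares problem on the sum class $\{f+g : f\in\cF, g\in\cG\}$; I would obtain it via an offset/localized Rademacher argument and Dudley chaining with the $L_2(\Ptrain)$ metric entropy, using Assumption \ref{asm:bounded} and the sub-Gaussian noise of Assumption \ref{asm:well_spec}, together with the fact that $\cM_2(\varepsilon,\cF+\cG)\le \cM_2(\varepsilon/2,\cF)+\cM_2(\varepsilon/2,\cG)$. The three regimes in \eqref{eq:rateqn_informal} are then just the three standard outputs of integrating Dudley's entropy integral against parametric ($d\log(1/\varepsilon)$), low-entropy ($\varepsilon^{-p}$ with $p\le 2$), and high-entropy ($p>2$, non-Donsker) growth, with the $n^{-1/p}$ rate in the last case coming from a single-scale truncation since chaining no longer converges.

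The main work is the sharper bound on $\Rtrain[\hat f_n]$, and this is where the claimed ``orthogonal ML''--style decoupling enters. Starting from the basic inequality $\Lossn(\hat f_n,\hat g_n)\le \Lossn(\fst,\hat g_n-\upbeta_{\hat f_n})$, which is legal because $\hat g_n-\upbeta_{\hat f_n}\in\cG$ by Assumption \ref{asm:conditional_completeness} (after verifying we are in the regime $\Rtrain(\hat f_n,\hat g_n)\le\gamma^2$ using the joint bound above), expanding the squares and using additive well-specification reduces $\Rtrain[\hat f_n]$ to controlling, uniformly in $(f,g)\in\cF\times\cG$, two empirical processes evaluated at $\tilde f:=f-\fst-\upbeta_f$ and $\tilde g:=g-\gst+\upbeta_f$: a pure-noise term $\tfrac1n\sum_i\xi_i\tilde f(w_i)$ and a cross term $\tfrac1n\sum_i\tilde f(w_i)\tilde g(w_i)$. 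The noise term is handled by standard chaining against the centered class $\{\tilde f\}$, giving $\sigma\cdot\rateqn[2](\cF)^{1/2}\cdot\|\tilde f\|_{2,n}$, i.e.\ $\sigma^2/n + \rateqn[2](\cF)$ after AM-GM and localization around $\|\tilde f\|_{2,n}^2\asymp\Rtrain[\hat f_n]$.

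The delicate piece is the cross-term. A naive Cauchy--Schwarz uniform bound pays $\sqrt{\rateqn[2](\cF)\cdot\rateqn[2](\cG)}$, which is exactly the kind of contamination by $\cG$'s complexity that the theorem is trying to avoid. The paper's promised H\"older-style inequality for the Dudley integral of a product class $\cF\cdot\cG$ is what saves us: it yields a chaining bound that factorizes as the $L_2$-Dudley integral of $\cF$ multiplied by the $L_\infty$-Dudley integral of $\cG$, rather than the product of two $L_2$ integrals. Combined with boundedness (Assumption \ref{asm:bounded}) and localization, this produces a cross-term bound of order $\rateqn[2](\cF)^{1/2}\cdot\ratestn(\cG)$, which via AM-GM splits into the desired $\rateqn[2](\cF)+\ratestn(\cG)^2$ and can be absorbed into $\Rtrain[\hat f_n]$ via the usual fixed-point localization argument.

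Putting the two training-side bounds into \eqref{eq:extrap_excess} gives
\[
 \Rtest(\hat f_n,\hat g_n)\lesssim \nu_{x,y}\bigl(\rateqn[2](\cF)+\ratestn(\cG)^2+\tfrac{\sigma^2}{n}\bigr)+\nu_y\bigl(\rateqn[2](\cF)+\rateqn[2](\cG)+\tfrac{\sigma^2}{n}\bigr),
\]
and since $\nu_y\le\nu_{x,y}$ the redundant $\nu_y\rateqn[2](\cF)$ and $\nu_y\sigma^2/n$ contributions are absorbed, producing the stated bound. The main obstacle throughout is the H\"older--Dudley product inequality: it must be stated in a form that survives localization (so that the $L_\infty$ factor on $\cG$ is evaluated at the appropriate small scale), and verifying that the $\ratestn$ casework is the correct output of the $L_\infty$ side of the chaining is the step most likely to require care, particularly in the non-Donsker regime $p>2$ where the usual chaining is replaced by a one-shot cover and the resulting $\ratestn(\cG)^2=n^{-2/p}$ rate must still be shown to dominate alternative estimates.
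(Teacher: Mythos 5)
Your route is the paper's own: the decomposition of \Cref{lem:excess_decomp}, a standard localized bound for the joint risk (\Cref{prop:generic_sum_regret}), and a refined basic-inequality argument for $\Rtrain[\fhatn]$ that uses conditional completeness to recenter the comparator and isolates a noise term plus a cross term (\Cref{lem:reg_decomp}, \Cref{prop:main_reg}), with the cross term handled by a H\"older-type chaining bound and a fixed-point localization. However, the key inequality as you state it is not correct. You claim the cross term is controlled by the $\ell_2$-Dudley integral of $\cF$ \emph{multiplied by} the $\ell_\infty$-Dudley integral of $\cG$. That multiplicative form is false in general: take $\bbV=\bbU=\{\zero,v\}$ with $v=(1,\dots,1)\in\R^n$; then both Dudley functionals are of order $n^{-1/2}$, so their product is $O(1/n)$, while $\Raden(\bbV\odot\bbU)=\Raden(\{\zero,v\})\asymp n^{-1/2}$. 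The statement that is actually provable (\Cref{prop:dudley_holder}) is \emph{additive}: $\Raden(\bbV\odot\bbU)\le \rad_q(\bbU)\,\Dudfun_{n,p}(\bbV)+\rad_p(\bbV)\,\Dudfun_{n,q}(\bbU)$, each Dudley functional weighted only by the conjugate-norm \emph{radius} of the other set. Applied with $(p,q)=(2,\infty)$ and localization on $\Fcent$ alone, this gives $\delcrossn^2\lesssim \deldud(\Fcent,B)^2+\sup_{w_{1:n}}\Dudfun_{n,\infty}(\Gcent[w_{1:n}])^2$, i.e., the sum $\rateqn[2](\cF)+\ratestn(\cG)^2$ directly, so your AM--GM step is unnecessary and your intended conclusion survives --- but the lemma you propose to prove it from does not hold. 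Relatedly, your worry about making the $\ell_\infty$ factor on $\cG$ ``survive localization'' is moot: the argument deliberately localizes only $\Fcent$ inside the product, and the inability to localize $\Gcent$ in $\ell_\infty$ is precisely why $\ratestn(\cG)$ is an unlocalized rate that saturates at $n^{-1/2}$ below the Donsker threshold.

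A second, smaller omission: the chaining is carried out on the centered classes $\Fcent=\{f-\fst-\upbeta_f\}$ and $\Gcent=\{g-\gst+\upbeta_f\}$, both of which involve the bias functions $\upbeta_f$ ranging over all of $\cF$. To express the final rates in terms of the metric entropies of $\cF$ and $\cG$ alone, you must additionally bound the entropy of $\upbeta_{\cF}:=\{\upbeta_f:f\in\cF\}$ by that of $\cF$ (conditional expectation contracts the relevant covers, via Jensen; see \Cref{lem:beta_F_conv}); absent an assumption that the $\ell_\infty$ entropy of $\cF$ is dominated by that of $\cG$, an extra $\min\{\ratestn(\cF)^2,\ratestn(\upbeta_{\cF})^2\}$ term remains inside the $\nu_{x,y}$ bracket, as in part (c) of \Cref{thm:nonpar_formal}.
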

A formal statement is given in \Cref{sec:instantiating_the_rates}. As a preliminary point of comparison, the naive analysis would yield a bound of the form
\begin{align*}
  \Rtest(\fhatn,\ghatn) \leq \BigOhTil{\nu_{x,y}(\rateqn[2](\cF) + \rateqn[2](\cG)) + \frac{\sigma^2}{n}}, \tag{naive analysis, covariate shift}
\end{align*}
which can be worse than the above bound when $\nu_y \ll \nu_{x,y}$ and $\ratestn(\cG)^2 \ll \rateqn[2](\cG)$. The rate in \Cref{thm:nonpar} is a consequence of the second result:
\begin{theorem}[Faster recovery of $\fst$ up to bias, informal]\label{thm:double_ml} Adopt the notation of \Cref{thm:nonpar}. With high probability, it holds that
\begin{align}
\Rtrain[\fhatn] = \Etrain[(\fhat - \fst - \upbeta_{\fhatn})^2] \lesssim \BigOhTil{\rateqn[2](\cF) + \ratestn(\cG)^2 + \frac{\sigma^2}{n}},
\end{align}
\end{theorem}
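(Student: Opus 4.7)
The plan is to extract $\Rtrain[\fhatn]$ from the standard ERM basic inequality, combined with an ``orthogonal decomposition'' relative to $\upbeta_{\fhatn}$ that exposes the nuisance-parameter structure advertised in the introduction. Starting from $\Lossn(\fhatn,\ghatn) \le \Lossn(\fst,\gst)$ and substituting $\bz_i = \fst(\bx_i)+\gst(\by_i)+\varepsilon_i$, one gets
\begin{align*}
\tfrac{1}{n}\tsum_i (\Delta f(\bx_i)+\Delta g(\by_i))^2 \le \tfrac{2}{n}\tsum_i \varepsilon_i\,(\Delta f(\bx_i) + \Delta g(\by_i)),
\end{align*}
where $\Delta f = \fhatn-\fst$ and $\Delta g = \ghatn-\gst$. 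I would then set $\tilde f := \Delta f - \upbeta_{\fhatn}$ and $\tilde g := \Delta g + \upbeta_{\fhatn}$, so that $\Delta f + \Delta g = \tilde f + \tilde g$ and, crucially, $\Etrain[\tilde f(\bx)\mid \by] = 0$ by the definition of $\upbeta_{\fhatn}$. A preliminary step is to show $\Rtrain(\fhatn,\ghatn) \le \gamma^2$ on the high-probability event (e.g.\ by comparing against $(\fst,\gst)$ in the basic inequality and applying a coarse uniform convergence bound), so that \Cref{asm:conditional_completeness} applies; this places $\tilde g$ in an enlarged but controllable subclass of $\cG-\cG$.

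Expanding the square on the left-hand side of the basic inequality produces three quadratic pieces plus the two noise pieces:
\begin{align*}
\|\tilde f\|_n^2 + \|\tilde g\|_n^2 + \tfrac{2}{n}\tsum_i \tilde f(\bx_i)\tilde g(\by_i) \le \tfrac{2}{n}\tsum_i \varepsilon_i \tilde f(\bx_i) + \tfrac{2}{n}\tsum_i \varepsilon_i \tilde g(\by_i).
\end{align*}
I would control the two noise processes by \emph{separate} localized empirical process bounds tailored to each class: the $\varepsilon_i\tilde f(\bx_i)$ process is handled by a standard $L^2$-chaining/offset-Rademacher argument, yielding $\sigma^2\,\rateqn[2](\cF) + \tfrac14 \|\tilde f\|_{L^2}^2$; the $\varepsilon_i\tilde g(\by_i)$ process is instead chained in sup-norm, exploiting that $\cG$ is the ``complex'' class whose rate we want to appear squared, giving $\sigma^2\,\ratestn(\cG)^2 + \tfrac14 \|\tilde g\|_{L^2}^2$. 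The point of chaining in $\|\cdot\|_\infty$ is precisely to trade a worse $\cG$-entropy rate for a \emph{squared} rate in the final bound.

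The main obstacle is the cross term $\tfrac{2}{n}\tsum_i \tilde f(\bx_i)\tilde g(\by_i)$, which has zero mean by the orthogonality $\Etrain[\tilde f(\bx)\mid\by]=0$ but is a bilinear empirical process whose sup cannot be controlled by a naive Cauchy--Schwarz without losing the mean-zero structure. Here I would invoke the advertised novel H\"older-style inequality for the Dudley integral of a product class: roughly, the $L^2$-Dudley integral of $\{\tilde f\cdot \tilde g\}$ splits as the product of the $L^2$-Dudley integral of $\cF$ (or its ``centered'' orthogonal subclass) and the sup-norm Dudley integral of $\cG$. This yields a bound of order $\|\tilde f\|_{L^2}\cdot \ratestn(\cG) + $ lower-order terms, which AM-GM absorbs as $\tfrac14 \|\tilde f\|_{L^2}^2 + \ratestn(\cG)^2$.

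Finally, I would convert the empirical norms $\|\tilde f\|_n^2,\|\tilde g\|_n^2$ to their population counterparts by a standard uniform concentration argument on the localized classes (burning another lower-order term), and rearrange. The $\|\tilde g\|_{L^2}^2$ terms pulled out by AM-GM on the right are absorbed into $\|\tilde g\|_n^2$ on the left; what survives on the left is $\tfrac12\|\tilde f\|_{L^2(\Ptrain)}^2 = \tfrac12 \Rtrain[\fhatn]$, and the right-hand side aggregates to $\rateqn[2](\cF) + \ratestn(\cG)^2 + \sigma^2/n$, up to logs. The technical heart of the argument is the H\"older-Dudley bound on the cross term, which is what permits $\cG$ to enter only through its sup-norm entropy (hence only as a squared, lower-order rate), and is therefore the step I would expect to consume most of the actual proof-work.
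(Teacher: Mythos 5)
Your overall architecture --- first establishing $\Rtrain(\fhatn,\ghatn)\le\gamma^2$ so that conditional completeness applies, isolating the cross term $\frac1n\sum_i\tilde f(\bx_i)\tilde g(\by_i)$, and controlling it with the H\"older--Dudley product bound --- matches the paper's. But there is a genuine gap in how you launch the basic inequality. Comparing $\Lossn(\fhatn,\ghatn)$ to $\Lossn(\fst,\gst)$ leaves both $\|\tilde g\|_{2,n}^2$ and the noise process $\frac2n\sum_i\varepsilon_i\tilde g(\by_i)$ in play, and your claimed bound for the latter --- $\sigma^2\,\ratestn(\cG)^2+\frac14\|\tilde g\|^2$ via sup-norm chaining --- is false in general. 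That process is linear in $g$ while the offset is quadratic, so the supremum of their difference is governed by the localized critical radius of $\cG$: it is of order $\rateqn[2](\cG)$ to the \emph{first} power (e.g.\ $n^{-2/(2+p)}$ below the Donsker threshold), not $\ratestn(\cG)^2\asymp n^{-1}$. No choice of chaining norm changes this, because a noise-times-single-function process has no product structure to exploit. Carrying that term through, your final bound on $\Rtrain[\fhatn]$ degrades to $\rateqn[2](\cF)+\rateqn[2](\cG)+\sigma^2/n$, which is precisely the naive bound the theorem is meant to beat.

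The paper avoids this by changing the comparator: since conditional completeness guarantees $\ghatn+\upbeta_{\fhatn}\in\cG$, one may invoke the ERM inequality $\Lossn(\fhatn,\ghatn)\le\Lossn(\fst,\ghatn+\upbeta_{\fhatn})$. With this choice the quadratic and noise terms in $\tilde g=\ghatn-\gst+\upbeta_{\fhatn}$ cancel \emph{identically} on both sides (see \Cref{lem:reg_decomp}), so the only surviving $\cG$-dependence is the cross term, and only there does the H\"older inequality for Dudley integrals (\Cref{prop:dudley_holder}) deliver the squared sup-norm rate. Conditional completeness is thus not merely a device for placing $\tilde g$ in a controllable subclass, as you use it; it is what licenses the comparator that eliminates the first-order $\cG$-noise interaction. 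With that one change, the remainder of your outline (offset bound for the $\cF$-noise process, empirical-to-population norm conversion, AM--GM absorption of the cross term) goes through essentially as in the paper.
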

It is crucial to note that the interaction between the complexity of the class $\cG$ and the distribution shift parameter $\nu_{x,y}$ in \Cref{thm:nonpar}, as well as the dependence of the bias-adjusted risk $\Rtrain[\fhatn]$ of $\cG$ in \Cref{thm:double_ml}, scales with the  \emph{squared} convergence rate for $\cG$.

Analogously, naively upper bounding $\Rtrain[\fhatn] \le \Rtrain(f,g)$ would yield
\begin{align}
\Rtrain[\fhatn] = \Etrain[(\fhat - \fst - \upbeta_{\fhatn})^2] \lesssim \BigOhTil{\rateqn[2](\cF) + \rate_{n,2}(\cG) + \frac{\sigma^2}{n}}, \tag{naive analysis, recovery of $\fst$}.
\end{align}
Examining the definition of the $\rate$ functions in \Cref{thm:nonpar}, we see that when the $\ell_2$ and $\ell_{\infty}$ metric entropies of $\cG$ are comparable, we can see that $\rate_{\star,n}(\cG)^2 \ll \rate_{n,2}(\cG)$, and, when bounding the rate function with  exponent $p \ge 2$ in \eqref{eq:rateqn_informal} (above the so-called Donsker threshold), $\rate_{\star,n}(\cG)^2 \sim \rate_{n,2}(\cG)^2$. In these cases, \Cref{thm:nonpar,thm:double_ml} yield substantial improvements of the naive counterparts.

\subsection{Comparison with Orthogonal ML}
\label{sec:orthogonal}
The style of our results is similar to those appearing in the
literature on Neyman orthogonalization (also referred to as
Double/Debiased ML or orthogonal statistical
learning)~\citep[c.f.,][]{chernozhukov2017double,mackey2018orthogonal,foster2019orthogonal}. At
a high level, orthogonal ML considers a situation with an unknown pair
$(\fst,\gst)$, where we are primarily interested in learning $\fst$,
referring to $\gst$ as a nuisance function. We describe two categories of differences: difference in \emph{problem specification} and difference in \emph{statistical rates}.

\paragraph{Differences in problem specification.}
In orthogonal ML, the parameter $\gst$ is truly a nuissance whose confounding effect on $\fst$ is to be removed. In our setting, however, the optimal predictor depends on both $\fst$ and $\gst$ through their sum, and thus $\gst$ cannot be neglected in the prediction.  

Moreover, orthogonal ML leverages an auxiliary supervision
mechanism to learn $\gst$ in order to remove it. In contrast, we reason about the statistical convergence of single-step ERM without access to auxilliary information 

\paragraph{Differences in statistical rates.} In orthogonal ML with ERM, it is shown in \cite{foster2019orthogonal} that the dependence of recovery of $\fst$ on the class $\cG$ scales as 
\begin{align}
\Exp[(\hat f_{\mathrm{OrthogonalML}} - \fst)^2] \lesssim \BigOhTil{\rate_{n,2}(\cF)  + \rate_{n,2}(\cG)^2 + \frac{\sigma^2}{n}}.
\end{align}
 Qualitatively, the rates are similar to those in \Cref{thm:nonpar,thm:double_ml}, with the exception that we replace $\ratestn(\cG)$ with $\rate_{n,2}(\cG)$.  There are two comparative weakness in our bound:
 \begin{itemize}
 	\item[(a)] First, $\ratestn(\cG)$ dependence on the $\ell_{\infty}$ covering numbers of $\cG$, whereas $\rate_{n,2}(\cG)$ depends on the $\ell_2$ covering numbers. 
 	\item[(b)] For $p \le 1/2$ (below the so-called Donsker threshold), $\rate_{n,2}(\cG)^2$ can decay to zero faster than $\cO(1/n)$, leaving the $\sigma^2/n$ term to dominate it. On the other hand, $\ratestn(\cG)^2$ scales as $1/n$ with some constant factor prepended, and thus, can dominate the $\sigma^2/n$ term when this constant factor is large. Similarly, dependence on $\sigma^2$ may differ between the two. We partially address this limitation for finite (and more generally, parameteric) function classes, as discussed in \Cref{sec:finite_fn}. 
 \end{itemize}
The dependence on $\ell_{\infty}$ covering numbers arises from our H\"older Inequality for the Dudley integral, \Cref{prop:dudley_holder}, applied to bounding the cross-interactions between the $\cF$ and $\cG$ classes. 
 The suboptimal  $\ratestn(\cG) = \cO(n^{-1/2})$ for $p\le 1/2$ arises from the same proposition, which incurs a dependence on the unlocalized complexity of the class $\cG$ rather than the localized complexities which determine $\rate_{n,2}$. By comparison, \cite{foster2019orthogonal} use independent data to learn $\gst$ beforehand, and thus  do not need to decorrelate $\fhat$ and $\ghat$ in the same way.  It is an open question if this discrepancy reflects a limitation
  in our analysis or is a fundamental limitation of ERM.

 Aside from the above situations, our rates coincide. We summarize this observation:
 \begin{observation} Let $\sigma^2 \ge 1$ and suppose that the class $\cG$ satisfies $\cM_{\infty}(\epsilon,\cG) \le C\cM_{2}(\epsilon,\cG)$ for all $\epsilon > 0$ and some constant $C$. Then,  for some constant $C'$ depending only on $\cG$ such that 
 \begin{align}
 \ratestn(\cG)^2 + \frac{\sigma^2}{n} \le C'\left(\rate_{n,2}(\cG)^2 + \frac{\sigma^2}{n}\right).
 \end{align} 
 \end{observation}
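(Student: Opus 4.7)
The plan is a simple case analysis over the growth regime of the metric entropy of $\cG$, using the hypothesis $\cM_{\infty}(\epsilon,\cG) \le C\,\cM_{2}(\epsilon,\cG)$ to align the two notions of complexity. I treat three regimes: (i) parametric entropy $\cM_q(\epsilon,\cG) = \BigOh{d\log(1/\epsilon)}$, (ii) nonparametric with exponent $p \le 2$, and (iii) nonparametric with $p > 2$. In each regime the hypothesis on the $\ell_\infty$ entropy transfers the same (polynomial or logarithmic) growth rate to $\cM_\infty$, up to the multiplicative constant $C$; this is what lets me plug the same exponent $p$ into both $\ratestn$ and $\rate_{n,2}$.

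For the parametric regime (i), the $\ell_\infty$ entropy is $\BigOh{d\log(1/\epsilon)}$, which corresponds to the $p\to 0$ limit of the $\ratestn$ formula, so $\ratestn(\cG)^2 \lesssim 1/n$. Since $\sigma^2 \geq 1$, this is absorbed into $\sigma^2/n$, so the inequality holds with $C' = 2$ (independent of $d$). For regime (ii), $p \le 2$, one has $\ratestn(\cG)^2 = n^{-(1 \wedge 2/p)} = n^{-1}$ (because $2/p \ge 1$), and again $\sigma^2 \ge 1$ immediately yields $\ratestn(\cG)^2 \le \sigma^2/n$. So in both (i) and (ii), we get $\ratestn(\cG)^2 + \sigma^2/n \le 2\sigma^2/n \le 2(\rate_{n,2}(\cG)^2 + \sigma^2/n)$, giving the claim with $C' = 2$.

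The only regime where $\ratestn(\cG)^2$ is not automatically dominated by $\sigma^2/n$ is (iii), $p > 2$. Here the $\ell_\infty$-based rate gives $\ratestn(\cG)^2 = n^{-2/p}$, and the $\ell_2$-based rate gives $\rate_{n,2}(\cG)^2 = n^{-2/p}$ as well (both sit in the ``above-Donsker'' branch of \eqref{eq:rateqn_informal}). Since the exponents match identically, the two rates coincide up to a constant depending only on the implicit constants in the entropy bounds and on $C$. Adding $\sigma^2/n$ to both sides preserves the inequality, so we can take $C'$ to be the resulting ratio.

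The only mild subtlety I anticipate is bookkeeping the constants in the entropy bounds: writing $\cM_2(\epsilon,\cG) \le A_2 \epsilon^{-p}$ gives $\cM_\infty(\epsilon,\cG) \le C A_2 \epsilon^{-p}$, so the constants hidden in $\rate_{n,2}(\cG)$ and $\ratestn(\cG)$ differ by a factor depending on $C$, which is absorbed in $C'$. There is no hard estimate here; the argument is essentially a lookup in the definition of the rate functions, and the role of the assumption $\sigma^2 \geq 1$ is precisely to swallow the $1/n$ floor that $\ratestn$ carries in the sub-Donsker regime but that $\rate_{n,2}$ may undercut.
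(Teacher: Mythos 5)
Your proof is correct and is exactly the intended argument: the paper gives no separate proof of this observation, and the surrounding discussion (points (a) and (b) in the orthogonal-ML comparison) makes clear the justification is precisely your case analysis — the hypothesis $\cM_{\infty}(\epsilon,\cG)\le C\cM_2(\epsilon,\cG)$ forces the same entropy exponent into both rate functions, $\sigma^2\ge 1$ absorbs the $1/n$ floor of $\ratestn(\cG)^2$ in the at-or-below-Donsker regimes, and the exponents $n^{-2/p}$ coincide above it. Nothing further is needed.
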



\paragraph{Orthogonal ML without Orthogonal ML}
Despite its limitations, our bound can be  somewhat more practical than what is found in the orthogonal machine learning literature, as it applies
to ERM directly and does not require algorithmic modifications or an
auxiliary supervision signal. The key difference here is that whereas orthogonal ML aims for \emph{inference} -- consistent recovery of $\fst$ --  we care only about the \emph{prediction error} of $\fst + \gst$. Thus, we need not address the identifiability challenges present in orthogonal ML. As a consequence, we bypass algorithmic
modifications that typically require more precise modeling of the data
generating process, and which typically render orthogonal ML more susceptible to
misspecification issues. Finally, we should note that in canonical
settings for orthogonal learning, we can show that our main
assumption, conditional completeness, holds. In this sense, our work
shows that, in typically settings for orthogonal learning, one can
obtain similar statistical improvements \emph{with ERM alone} and
\emph{without auxiliary supervision}.

Please see~\Cref{app:assumptions} for an even more detailed discussion.

\subsection{Finite Function Classes}\label{sec:finite_fn}
When $\cF$ and $\cG$ are finite function classes with  $ \log|\cF| \le d_1$ and $ \log |\cG| \le d_2$, an application of \Cref{thm:nonpar} gives the rate of $\Rtest(\fhatn,\ghatn) \lesssim \nu_{x,y} \cdot \frac{d_1 + d_2}{n}$, which is precisely what one obtains via naive change of measure arguments. 
Although direct application of \Cref{thm:nonpar} does not yield improvements---precisely because of the lack of localization as discussed above--- we \emph{can} improve upon this bound with an additional \emph{hypercontractivity} assumption, often popular in the statistical learning literature \citep{mendelson2015learning}. We defer formal definitions, a formal theorem statement, and proofs to \Cref{sec:finite_function_classes}; the following informal theorem summarizes our findings.
\begin{theorem}[Informal]\label{thm:finite_class_informal} Under certain hypercontractivity conditions detailed in \Cref{sec:finite_function_classes}, it holds  with high probability that
\begin{align*}
\Rtest(\fhatn,\ghatn) \lesssim \frac{1}{n}\left(\nu_{x,y}d_1 +  \nu_y d_2 + \nu_{x,y} d_2 \cdot \phi_n(d_1,d_2)\right),
\end{align*} 
where $\phi_n(d_1,d_2) = (\frac{d_2}{n})^{c_1} + (\frac{d_1}{d_2})^{c_2}$, for constants $c_1,c_2 > 0$ depending on the hypercontractivity exponents. 
\end{theorem}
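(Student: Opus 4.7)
The plan is to apply the risk decomposition of~\Cref{lem:excess_decomp} and bound the two training-distribution quantities $\Rtrain(\fhatn,\ghatn)$ and $\Rtrain[\fhatn]$ separately. Substituting them into~\eqref{eq:extrap_excess} then yields the advertised test-risk bound: $\Rtrain(\fhatn,\ghatn)$ supplies the $\nu_y d_2/n$ term, while $\nu_{x,y}\Rtrain[\fhatn]$ supplies both $\nu_{x,y} d_1/n$ and the refined $\nu_{x,y} d_2 \phi_n(d_1,d_2)/n$ correction. The easy bound $\Rtrain(\fhatn,\ghatn) \lesssim (d_1+d_2)/n$ follows from standard uniform convergence over the finite class $\cF \times \cG$, combined with the hypercontractivity assumption to obtain a fast-rate Bernstein-type control.

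The heart of the argument is the refined bound on $\Rtrain[\fhatn]$. Starting from the basic inequality $\Lossn(\fhatn,\ghatn) \leq \Lossn(\fst,\gst)$ and writing $F := \fhatn - \fst$, $G := \ghatn - \gst$, I would invoke the orthogonal decomposition $F(\xvar) = \epsilon(\xvar,\yvar) + \upbeta_{\fhatn}(\yvar)$ with $\epsilon := F - \upbeta_{\fhatn}$ satisfying $\Etrain[\epsilon \mid \yvar] = 0$ by definition of $\upbeta_{\fhatn}$. Setting $\tilde{G} := G + \upbeta_{\fhatn}$, the population residual decomposes as $F(\xvar)+G(\yvar) = \epsilon(\xvar,\yvar)+\tilde{G}(\yvar)$ with the two summands orthogonal under $\Ptrain$. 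The basic inequality then becomes, schematically,
\[
\tfrac{1}{n}{\textstyle\sum_i} \epsilon(\xvar_i,\yvar_i)^2 + \tfrac{1}{n}{\textstyle\sum_i} \tilde G(\yvar_i)^2 \,\lesssim\, \underbrace{\tfrac{2}{n}{\textstyle\sum_i} \xi_i \epsilon(\xvar_i,\yvar_i)}_{\text{(i)}} + \underbrace{\tfrac{2}{n}{\textstyle\sum_i} \xi_i \tilde G(\yvar_i)}_{\text{(ii)}} - \underbrace{\tfrac{2}{n}{\textstyle\sum_i} \epsilon(\xvar_i,\yvar_i)\tilde G(\yvar_i)}_{\text{(iii)}}.
\]
Conditional completeness (\Cref{asm:conditional_completeness}) is crucial here: it guarantees $\gst - \upbeta_{\fhatn} \in \cG$, so that $\tilde G$ lies in $\cG$ and the decomposition is compatible with the ERM optimality of $\ghatn$.

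Each of the three empirical-process terms is controlled by a Bernstein-type union bound over the finite classes $\cF$ (indexing $\upbeta_f$ and $\epsilon_f$) and $\cG$ (indexing $\tilde G$), with hypercontractivity supplying the variance-to-second-moment control needed for fast rates. Schematically, (i) is of order $\sqrt{\sigma^2 \Rtrain[\fhatn]\, d_1/n}$, (ii) of order $\sqrt{\sigma^2 \Rtrain[\ghatn;\fhatn]\, d_2/n}$, and the coupling term (iii)---which vanishes in expectation by orthogonality---concentrates around zero at rate $\sqrt{\Rtrain[\fhatn]\, \Rtrain[\ghatn;\fhatn](d_1+d_2)/n}$, where hypercontractivity is used to obtain $\Etrain[\epsilon^2 \tilde G^2]\lesssim \Etrain[\epsilon^2]\,\Etrain[\tilde G^2]$. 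An asymmetric AM-GM split of (iii) between $\Rtrain[\fhatn]$ and $\Rtrain[\ghatn;\fhatn]$, tuned by a ratio depending on $d_1/d_2$, produces the $(d_1/d_2)^{c_2}$ factor; first bounding $\Rtrain[\ghatn;\fhatn]\lesssim d_2/n$ from the easy part and feeding it back into the coupling bound then produces the $(d_2/n)^{c_1}$ factor.

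The main obstacle will be controlling (iii) with the correct polynomial exponents: the hypercontractivity constants govern how tightly $\Etrain[\epsilon^2 \tilde G^2]$ can be factorized into a product of second moments, and hence directly determine $c_1$ and $c_2$. A secondary technical point is the data-dependence of $\upbeta_{\fhatn}$, which I would handle by taking a union bound over $f \in \cF$---treating $\upbeta_f$ as a fixed function of $\yvar$ for each choice of $f$---and then specializing to $\fhatn$; since $\upbeta_f$ is determined by $f$, this adds no complexity beyond the $\log|\cF| \le d_1$ we already pay for.
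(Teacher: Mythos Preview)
Your high-level strategy matches the paper's: both invoke \Cref{lem:excess_decomp}, both bound $\Rtrain(\fhatn,\ghatn)\lesssim d_2/n$ by standard finite-class localization, and both refine $\Rtrain[\fhatn]$ by exploiting the orthogonal split $F=\epsilon+\upbeta_{\fhatn}$ together with conditional completeness to make the cross term $\frac{1}{n}\sum_i\epsilon(\bw_i)\tilde G(\bw_i)$ mean-zero. The paper's version of your basic inequality is slightly cleaner: instead of $\Lossn(\fhatn,\ghatn)\le\Lossn(\fst,\gst)$, it compares to $\Lossn(\fst,\ghatn+\upbeta_{\fhatn})$ (legal by conditional completeness), which kills your term (ii) and the $\frac{1}{n}\sum_i\tilde G_i^2$ on the left entirely, leaving only $\epsilon$-terms and your cross term (iii); this is \Cref{lem:reg_decomp}.

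There is, however, a genuine gap in your handling of (iii). A direct Bernstein union bound on $\frac{1}{n}\sum_i\epsilon_i\tilde G_i$ over $\cF\times\cG$ carries a first-order term $\|\epsilon\tilde G\|_\infty\cdot(d_1+d_2)/n\asymp d_2/n$; this term is independent of $r$ and forces $r^2\gtrsim d_2/n$, wiping out the $\phi_n(d_1,d_2)$ improvement. The paper sidesteps this by routing through the Rademacher complexity of the product class $\Fcent(r)\odot\Gcent(\gamma)$ (via the offset machinery of \Cref{prop:main_reg_mod}): the sub-Gaussian maximal inequality for finite sets then needs only the $\ell_2$ radius, with no first-order term. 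It bounds that radius by H\"older, $\rad_2\le\rad_{q_1}(\Fcent(r)[\bw_{1:n}])\cdot\rad_{q_2}(\Gcent(\gamma)[\bw_{1:n}])$, and controls each factor separately. Hypercontractivity of $\tilde\cF$ gives $\rad_{q_1}\lesssim\kappa r+(d_1/n)^{1/q_1}$ via Bernstein on the $q_1$-th powers --- crucially, the first-order term here involves only $d_1$, not $d_2$ --- while boundedness of $\cG$ gives $\rad_{q_2}\lesssim\gamma^{2/q_2}$. This is exactly where the exponents $c_1=2/q_2$, $c_2=1/q_1$ arise (\Cref{prop:rad_bound_finite_hyper}). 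Note also that the paper requires hypercontractivity only on $\tilde\cF$; your claimed factorization $\Etrain[\epsilon^2\tilde G^2]\lesssim\Etrain[\epsilon^2]\Etrain[\tilde G^2]$ would need it on both classes, which is stronger than what is assumed.
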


When $d_2 \gg d_1$, the bound replaces the dimension term $d_2 \nu_{x,y}$ with  $d_2 \phi(d_1,d_2)\nu_{x,y} + \nu_y d_2$, a strict improvement when $\nu_y \ll \nu_{x,y}$ and $\phi(d_1,d_2) \ll 1$. The above bound can be extended to function classes with ``parametric'' metric entropy  (\Cref{rem:ext_to_par}). In all cases, $\phi_n(d_1,d_2) \ge \frac{d_2}{n}$, which is still weaker than {an idealized version of \Cref{thm:nonpar} where $\rateqn[2](\cG)^2$ replaces $\ratestn(\cG)^2$.}

\subsection{Refined Measures of Distribution Shift}\label{sec:refined_measures}
The decomposition in \Cref{lem:excess_decomp} and all subsequent guarantees  can be refined considerably. First, we can replace uniform bounds on the density ratios (\Cref{def:nux}) with the following function-dependent quantities:
\begin{align}
\nu_1  &:= \sup_{f \in \cF} \frac{\Etest[(f-\fst - \upbeta_f)^2]}{\Etrain[(f-\fst - \upbeta_f)^2]}\label{eq:nu1}\\
\nu_2  &:= \sup_{f \in \cF, g \in \cG} \frac{\Etest[(g-\gst - \upbeta_f)^2]}{\Etrain[(g-\gst - \upbeta_f)^2]} \label{eq:nu2},
\end{align}
\iftoggle{icml}{
  \vspace{-.3em}
}
{
}
\begin{corollary}\label{cor:excess_decomp} Immediately from \Cref{lem:excess_decomp}, it holds that  $\Rtest(f,g) \le 2(\nu_{1}\Rtrain[f] + \nu_2\Rtrain(f,g))$
\end{corollary}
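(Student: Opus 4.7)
The plan is to derive the corollary as a direct consequence of \Cref{lem:excess_decomp} by substituting the function-dependent ratios $\nu_1$ and $\nu_2$ in place of the uniform density ratios $\nu_{x,y}, \nu_y$ used there. The lemma already supplies the pointwise bound $\Rtest(f,g) \le 2(\Rtest[f] + \Rtest[g;f])$, so all that remains is to convert each test-expectation term on the right back to a training expectation.

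First I would invoke \eqref{eq:nu1}: since $\Rtest[f] = \Etest[(f-\fst-\upbeta_f)^2]$ and $\Rtrain[f] = \Etrain[(f-\fst-\upbeta_f)^2]$ are exactly the numerator and denominator appearing in the defining supremum for $\nu_1$, we immediately obtain $\Rtest[f] \le \nu_1 \Rtrain[f]$ for every $f \in \cF$. An analogous application of \eqref{eq:nu2} to the pair $(f,g) \in \cF \times \cG$ yields $\Rtest[g;f] \le \nu_2 \Rtrain[g;f]$. Substituting both into the bound from \Cref{lem:excess_decomp} gives $\Rtest(f,g) \le 2(\nu_1 \Rtrain[f] + \nu_2 \Rtrain[g;f])$.

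To put this in the form stated in the corollary, I would then use the decomposition $\Rtrain(f,g) = \Rtrain[f] + \Rtrain[g;f]$ from \Cref{lem:excess_decomp}, together with the nonnegativity of $\Rtrain[f]$, to conclude $\Rtrain[g;f] \le \Rtrain(f,g)$. Combined with $\nu_1 \Rtrain[f] \le \nu_1 \Rtrain[f]$ (trivially) and $\nu_2 \Rtrain[g;f] \le \nu_2 \Rtrain(f,g)$, this delivers $\Rtest(f,g) \le 2(\nu_1 \Rtrain[f] + \nu_2 \Rtrain(f,g))$ as claimed.

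No serious obstacle arises here; the corollary is flagged ``immediately from \Cref{lem:excess_decomp}'' and the argument is a bookkeeping exercise in matching the ratios in \eqref{eq:nu1}--\eqref{eq:nu2} to the per-function risks in the lemma. The only point I would double-check during write-up is a sign convention: \eqref{eq:nu2} is written with $(g-\gst-\upbeta_f)^2$ while $\Renv[g;f]$ is defined with $(g-\gst+\upbeta_f)^2$; I would read this as a consistent notational choice (so that the ratio $\Rtest[g;f]/\Rtrain[g;f]$ is indeed upper-bounded by $\nu_2$) or else replace $\upbeta_f$ by $-\upbeta_f$ in the definition of $\nu_2$ before applying the above substitution.
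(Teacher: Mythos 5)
Your proof is correct and matches the paper's intended argument exactly: the paper gives no separate proof (the corollary is declared immediate), and the chain $\Rtest(f,g) \le 2(\Rtest[f]+\Rtest[g;f]) \le 2(\nu_1\Rtrain[f]+\nu_2\Rtrain[g;f]) \le 2(\nu_1\Rtrain[f]+\nu_2\Rtrain(f,g))$ is precisely the substitution of the function-dependent ratios into the proof of \Cref{lem:excess_decomp}. Your observation about the sign of $\upbeta_f$ in \eqref{eq:nu2} versus $\Renv[g;f]$ correctly identifies a typo in the paper, and your resolution (read $\nu_2$ as defined with $+\upbeta_f$ so the ratio matches the per-function risk) is the right one.
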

Both \Cref{thm:nonpar} and \Cref{thm:finite_class_informal} continue to hold using $\nu_1$ and $\nu_2$ instead of $\nu_{x,y}$ and $\nu_y$.
Note that $\nu_1 \le \nu_{x,y}$ and $\nu_{2} \le \nu_y$ always, but they can be much smaller as demonstrated by the follow upper bounds on $\nu_1$.
\begin{lemma}\label{lem:nu_x_indep} Suppose $\bx \perp \by$ under $\Ptrain$. Then $\nu_1 \le \nu_{x} := \sup_{A \subset \cX}\Ptest[\bx \in A]/\Ptrain[\bx \in A]$.
\end{lemma}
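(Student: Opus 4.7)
The plan is to exploit the independence of $\bx$ and $\by$ under $\Ptrain$ to show that the bias $\upbeta_f$ collapses to a constant, so that the numerator and denominator defining $\nu_1$ only depend on the marginal law of $\bx$ in their respective environments. The main obstacle is largely notational rather than technical: we need to check that $\upbeta_f$ depends only on the \emph{training} conditional law (which is part of its definition in \eqref{eq:bet_f}), so that even if $\bx \not\perp \by$ under $\Ptest$, the quantity $(f - \fst - \upbeta_f)$ remains a function of $\bx$ alone.

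The first step is to observe that under $\bx \perp \by$ (with respect to $\Ptrain$), the training conditional expectation collapses:
\begin{align*}
\upbeta_f(y) = \Etrain[(f-\fst)(\bx) \mid \by = y] = \Etrain[(f-\fst)(\bx)] =: c_f,
\end{align*}
which is a constant (in $y$) depending only on $f$ and $\Ptrain$. Consequently, the function
\begin{align*}
h_f(w) := (f(\bx) - \fst(\bx) - \upbeta_f(\by))^2 = ((f-\fst)(\bx) - c_f)^2
\end{align*}
depends only on $\bx$, not on $\by$, and is nonnegative.

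The second step is a standard change-of-measure argument on the $\bx$-marginals. Since $h_f \ge 0$ depends only on $\bx$, for any $f \in \cF$ we have
\begin{align*}
\Etest[h_f(\bx)] = \int h_f(x)\, \rmd\Ptest_{\bx}(x) \le \nu_x \int h_f(x)\, \rmd\Ptrain_{\bx}(x) = \nu_x \, \Etrain[h_f(\bx)],
\end{align*}
where the inequality follows by approximating $h_f$ by simple functions and applying the definition of $\nu_x$ on each level set. Rearranging and taking the supremum over $f \in \cF$ yields $\nu_1 \le \nu_x$, as claimed. No further use of the structure of $\cF$, $\cG$, or of the law of $\by$ is required, which is precisely the payoff of the independence hypothesis.
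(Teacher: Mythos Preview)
Your proof is correct and follows essentially the same approach as the paper: both observe that independence under $\Ptrain$ makes $\upbeta_f$ a constant, so $(f-\fst-\upbeta_f)^2$ depends only on $\bx$, and then bound the ratio by the $\bx$-marginal density ratio $\nu_x$. Your argument is slightly more explicit about the change-of-measure step, but the substance is identical.
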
 
\begin{lemma}\label{lem:lin_shift} Assume (a) $\cX$ is a Hilbert space, (b) the functions $f \in \cF$ are linear in $\xvar$ and (c) there are constants $\nulin > 0$ such that, with $\upbeta_x(\yvar) := \Exp_{\envtrain}[\xvar \mid \yvar]$, 
\iftoggle{arxiv}
{
\begin{align*}
\Etest[(\xvar - \upbeta_x(\yvar))(\xvar - \upbeta_x(\yvar))^{\herm}] \preceq \nulin \cdot \Etrain[(\xvar - \upbeta_x(\yvar))(\xvar - \upbeta_x(\yvar))^{\herm}]
\end{align*} 
}
{
	$\Etest[(\xvar - \upbeta_x(\yvar))(\xvar - \upbeta_x(\yvar))^{\herm}] \preceq \nulin \cdot \Etrain[(\xvar - \upbeta_x(\yvar))(\xvar - \upbeta_x(\yvar))^{\herm}]$.
} Then, $\nu_1 \le \nulin$.
\end{lemma}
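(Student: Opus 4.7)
\textbf{Proof plan for Lemma~\ref{lem:lin_shift}.} The idea is that linearity of $\cF$ collapses the functional supremum defining $\nu_1$ into a supremum of quadratic forms in the residual covariance of $\xvar - \upbeta_x(\yvar)$, at which point hypothesis (c) applies directly and finishes the argument.

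First, I would use assumption (b) together with the Riesz representation for linear functionals on the Hilbert space $\cX$ to write $f(\xvar) = \langle \theta_f, \xvar\rangle$ and $\fst(\xvar) = \langle \theta_{\fst}, \xvar\rangle$, and set $\Delta := \theta_f - \theta_{\fst}$, so that $(f - \fst)(\xvar) = \langle \Delta, \xvar\rangle$. Next I would compute the bias by pulling the conditional expectation through the inner product. By the definition~\eqref{eq:bet_f}, which is taken under $\Ptrain$,
\begin{align*}
\upbeta_f(\yvar) = \Etrain\!\left[\langle \Delta, \xvar\rangle \,\big|\, \yvar\right] = \langle \Delta, \upbeta_x(\yvar)\rangle,
\end{align*}
where linearity of conditional expectation is used and $\upbeta_x(\yvar) = \Etrain[\xvar \mid \yvar]$ is as defined in hypothesis (c). Crucially, $\upbeta_f$ is a fixed function of $\yvar$ determined by $\Ptrain$, so the same expression is used whether the outer expectation is under train or test.

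Substituting this identity, I would then observe that for either environment $\env \in \{\envtrain, \envtest\}$,
\begin{align*}
\Eenv\!\left[\bigl((f-\fst)(\xvar) - \upbeta_f(\yvar)\bigr)^2\right]
&= \Eenv\!\left[\langle \Delta, \xvar - \upbeta_x(\yvar)\rangle^2\right] \\
&= \big\langle \Delta,\; \Eenv\!\left[(\xvar - \upbeta_x(\yvar))(\xvar - \upbeta_x(\yvar))^{\herm}\right] \Delta \big\rangle.
\end{align*}
Both numerator and denominator in the definition of $\nu_1$ now take the same quadratic form in $\Delta$, with the only dependence on $\env$ appearing in the sandwiched second-moment operator of the residual $\xvar - \upbeta_x(\yvar)$.

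Finally I would apply hypothesis (c) directly: the operator inequality on these second-moment operators yields, for every $\Delta$,
\begin{align*}
\Etest\!\left[(f-\fst-\upbeta_f)^2\right] \le \nulin \cdot \Etrain\!\left[(f-\fst-\upbeta_f)^2\right].
\end{align*}
Taking the supremum over $f \in \cF$ and recalling the definition~\eqref{eq:nu1} gives $\nu_1 \le \nulin$. There is no serious obstacle in this argument; the only subtlety, which I would flag explicitly in Step~2, is that $\upbeta_f$ is defined via a training-distribution conditional expectation and hence is the same function of $\yvar$ whether expectations are subsequently taken under $\Ptrain$ or $\Ptest$. Boundedness of $\Delta$ (implicit via \Cref{asm:bounded}) ensures all quadratic forms are finite, so no convergence issues arise.
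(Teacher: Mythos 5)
Your proof is correct and follows essentially the same route as the paper's: represent $f-\fst$ as $\langle \Delta, \cdot\rangle$, observe that $\upbeta_f(\yvar)=\langle\Delta,\upbeta_x(\yvar)\rangle$ so the bias-adjusted error becomes the quadratic form $\langle\Delta,\Eenv[(\xvar-\upbeta_x(\yvar))(\xvar-\upbeta_x(\yvar))^{\herm}]\Delta\rangle$, and apply the operator inequality in hypothesis (c). Your explicit flagging that $\upbeta_f$ is defined under $\Ptrain$ and hence unchanged when the outer expectation is taken under $\Ptest$ is a correct and worthwhile clarification, though the paper leaves it implicit.
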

\Cref{sec:lem:excess_decomp_lin} proves both lemmas.
Importantly, $\nulin$ can be finite even when $\nu_{x,y}$ is infinite, e.g. if the distribution over $\bx$ is discrete under $\Ptrain$, but continuous under $\Ptest$.

\icmlpar{Beyond Uniform Ratios.} \Cref{eq:nu1,eq:nu2} can be generalized further to allow for additive error.

\begin{restatable}{corollary}{coradd}\label{cor:additive_error} Suppose that, for all $f \in \cF$ and $g \in \cG$,
\begin{align*}
\Etest[(f-\fst - \upbeta_f)^2] &\le \nu_1 \Etrain[(f-\fst - \upbeta_f)^2] + \Delta_1 \\
\Etest[(g-\gst - \upbeta_f)^2] &\le \nu_2\Etrain[(g-\gst - \upbeta_f)^2] + \Delta_2 ,
\end{align*}
Then, immediately from \Cref{lem:excess_decomp},  
\begin{align*}\Rtest(f,g) \le 2(\nu_{1}\Rtrain[f] + \nu_2\Rtrain(f,g)+\Delta_1 + \Delta_2).
\end{align*}
\end{restatable}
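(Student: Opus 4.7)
The plan is to reduce the claim directly to \Cref{lem:excess_decomp} combined with the two pointwise inequalities assumed in the hypothesis. The overall strategy is extremely short---indeed the statement flags it as immediate---so my proposal is essentially to chain together three elementary steps with no new technical ingredients required.

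First, I would invoke the second part of \Cref{lem:excess_decomp}, namely the bound $\Rtest(f,g) \le 2(\Rtest[f] + \Rtest[g;f])$, which holds under \Cref{asm:cov_shift,asm:well_spec} (inherited as standing assumptions). Unpacking the definitions $\Rtest[f] = \Etest[(f-\fst-\upbeta_f)^2]$ and $\Rtest[g;f] = \Etest[(g-\gst+\upbeta_f)^2]$, the hypothesized inequalities (which I read as quantifying distribution shift of each squared residual, with the obvious sign convention on $\upbeta_f$ since only the square matters) give
\begin{align*}
\Rtest[f] &\le \nu_1 \Rtrain[f] + \Delta_1, \\
\Rtest[g;f] &\le \nu_2 \Rtrain[g;f] + \Delta_2.
\end{align*}

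Next, I would upper bound the per-function training risk $\Rtrain[g;f]$ by the full training risk. This uses the decomposition half of \Cref{lem:excess_decomp}: $\Rtrain(f,g) = \Rtrain[f] + \Rtrain[g;f]$, together with non-negativity of $\Rtrain[f]$, to conclude $\Rtrain[g;f] \le \Rtrain(f,g)$. Substituting this in yields
\begin{align*}
\Rtest(f,g) \le 2\bigl(\nu_1 \Rtrain[f] + \nu_2 \Rtrain(f,g) + \Delta_1 + \Delta_2\bigr),
\end{align*}
which is the claimed bound.

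The only mildly non-trivial choice here is to replace $\nu_2 \Rtrain[g;f]$ by $\nu_2 \Rtrain(f,g)$ rather than keeping the tighter per-function version---this is done to match the format of \Cref{lem:excess_decomp} and \Cref{cor:excess_decomp}, and because the full training risk $\Rtrain(f,g)$ is the object controlled by ERM. There is no real obstacle: the entire argument is a two-line substitution, and no new inequality beyond \Cref{lem:excess_decomp} is invoked. Any subtlety lies only in confirming that the sign convention on $\upbeta_f$ inside the hypothesized bounds is consistent with the definitions of $\Rtrain[f]$ and $\Rtrain[g;f]$, which it is since both quantities appear squared.
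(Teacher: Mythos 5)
Your proposal is correct and follows exactly the route the paper intends: it mirrors the chain used for the last part of \Cref{lem:excess_decomp} (bound $\Rtest(f,g) \le 2(\Rtest[f]+\Rtest[g;f])$, apply the per-term shift inequalities, then use $\Rtrain[g;f]\le\Rtrain(f,g)$ from the exact train-risk decomposition). The sign discrepancy you flag on $\upbeta_f$ is a typo in the paper's statement of the hypothesis (it also appears in the definition of $\nu_2$), and your reading is the intended one.
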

This deceptively simple modification allows for situations when the density ratios between the test and train distributions are not uniformly bounded, or possibly even infinite. 
\Cref{sec:beyond_uniform_density} details the many consequences of this observation. We highlight a key one here:
\begin{lemma}\label{lem:chi_sq} Suppose
\Cref{asm:bounded}  holds. Then, for $\Rtrain[f],\Rtrain(f,g)$ sufficiently small, 
\begin{align*}
 \Rtest(f,g) &\le 8B\sqrt{\Rtrain[f]\cdot \chi^2(\Ptest(\bx,\by),\Ptrain(\bx,\by))} \\
 &\quad+ 8B \sqrt{\Rtrain(f,g) \cdot \chi^2(\Ptest(\by),\Ptrain(\by))},
\end{align*}
where $ \chi^2(\Ptest(\bx,\by),\Ptrain(\bx,\by))$ denotes the $\chi^2$ divergence (see e.g. \citet[Chapter 2]{polyanskiy2022information}) between the joint distribution of $(\bx,\by)$ under test and train distributions, and $\chi^2(\Ptest(\by),\Ptrain(\by))$ denotes $\chi^2$ divergence restricted to the marginals of $\by$.
\end{lemma}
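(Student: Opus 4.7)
The plan is to combine \Cref{lem:excess_decomp} with a change-of-measure Cauchy--Schwarz step, carefully tracking which variables each term depends on so that the $\by$-marginal density ratio (rather than the joint one) appears in one of the two summands. \Cref{lem:excess_decomp} gives $\Rtest(f,g) \le 2(\Rtest[f] + \Rtest[g;f])$, so it suffices to bound each of these separately.

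For any square-integrable $Z$ and any pair of measures $\mathbb{P}',\mathbb{P}$ with likelihood ratio $L = d\mathbb{P}'/d\mathbb{P}$, one has $\E_{\mathbb{P}'}[Z^2] - \E_{\mathbb{P}}[Z^2] = \E_{\mathbb{P}}[(L-1)Z^2]$, and Cauchy--Schwarz combined with $\E_{\mathbb{P}}[(L-1)^2] = \chi^2(\mathbb{P}',\mathbb{P})$ yields
\begin{align*}
\E_{\mathbb{P}'}[Z^2] \le \E_{\mathbb{P}}[Z^2] + \sqrt{\chi^2(\mathbb{P}',\mathbb{P})\cdot \E_{\mathbb{P}}[Z^4]}.
\end{align*}
I apply this with $Z_1 := f(\bx)-\fst(\bx)-\upbeta_f(\by)$ and $Z_2 := g(\by)-\gst(\by)+\upbeta_f(\by)$. \Cref{asm:bounded}, together with $|\upbeta_f|\le 2B$ (it is a conditional expectation of a difference of bounded functions), implies $|Z_i|\le 4B$, so $\E[Z_i^4]\le 16B^2\E[Z_i^2]$ and the display simplifies to $\E_{\mathbb{P}'}[Z_i^2] \le \E_{\mathbb{P}}[Z_i^2] + 4B\sqrt{\chi^2(\mathbb{P}',\mathbb{P})\cdot \E_{\mathbb{P}}[Z_i^2]}$.

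The only non-routine observation is that $Z_2$ is a function of $\by$ alone, because $\upbeta_f$ is by definition a conditional expectation given $\by$. Hence the change of measure for $\E_{\Ptest}[Z_2^2]$ involves only the $\by$-marginals, producing the marginal $\chi^2(\Ptest(\by),\Ptrain(\by))$; in contrast, $Z_1$ depends on both coordinates and its change of measure necessarily involves the joint $\chi^2(\Ptest(\bx,\by),\Ptrain(\bx,\by))$. This asymmetry is precisely what yields the split between joint and marginal divergences in the statement, and is really the only conceptual content of the proof.

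Combining the two per-term bounds with $\Rtrain[g;f] \le \Rtrain(f,g)$ (immediate from the decomposition $\Rtrain(f,g)=\Rtrain[f]+\Rtrain[g;f]$ in \Cref{lem:excess_decomp}) and substituting into $\Rtest(f,g) \le 2(\Rtest[f] + \Rtest[g;f])$ yields an inequality of the form
\begin{align*}
\Rtest(f,g) \le 2\Rtrain[f] + 2\Rtrain(f,g) + 8B\sqrt{\chi^2(\Ptest(\bx,\by),\Ptrain(\bx,\by))\,\Rtrain[f]} + 8B\sqrt{\chi^2(\Ptest(\by),\Ptrain(\by))\,\Rtrain(f,g)}.
\end{align*}
This is equivalent to invoking \Cref{cor:additive_error} with $\nu_1=\nu_2=1$ and additive slacks $\Delta_i$ supplied by Cauchy--Schwarz. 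The ``sufficiently small'' hypothesis is used only in the last step: when $\Rtrain[f]$ and $\Rtrain(f,g)$ are each bounded by constant multiples of $B^2\chi^2(\cdot,\cdot)$, the two linear terms are dominated by the corresponding square-root terms via the elementary inequality $t \le \sqrt{ts}$ whenever $t\le s$, yielding the stated bound (up to adjusting universal constants). No step is technically difficult; the main obstacle is the bookkeeping that ensures the marginal $\chi^2$ is correctly attached to $Z_2$ rather than the joint one.
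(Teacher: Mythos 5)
Your proof is correct, but it takes a genuinely different route from the paper's. The paper derives \Cref{lem:chi_sq} as a special case of a general truncation argument: \Cref{lem:h_X_change_of_measure} splits the test expectation on the event where the density ratio exceeds a threshold $t$, bounds the integrand by $16B^2$ there, controls the probability of that event via Markov's inequality applied to a decreasing $\phi$-divergence (here $\phi(u)=1/u-1$, whose divergence with reversed arguments is exactly $\chi^2$), and finally optimizes over $t$ (\Cref{defn:cor_dens_ratio_tail}, \Cref{cor:phi_div}, \Cref{exm:power_div}). You instead write $\Exp_{\Ptest}[Z^2]-\Exp_{\Ptrain}[Z^2]=\Exp_{\Ptrain}[(L-1)Z^2]$ and apply Cauchy--Schwarz directly, using boundedness to convert $\Exp[Z^4]$ into $16B^2\,\Exp[Z^2]$; this is shorter, avoids the threshold optimization entirely, and gives the additive-slack form without any smallness hypothesis. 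Both arguments hinge on the same structural point, which you correctly identify as the only real content: $g-\gst+\upbeta_f$ is a function of $\by$ alone, so its change of measure only sees the marginal $\chi^2(\Ptest(\by),\Ptrain(\by))$, while $f-\fst-\upbeta_f$ forces the joint divergence. What the paper's longer route buys is generality --- the same truncation lemma yields \Cref{defn:cor_dens_ratio_tail} for arbitrary density-ratio tails and \Cref{cor:phi_div} for general power divergences, of which the $\chi^2$ bound is one instance --- whereas your Cauchy--Schwarz step is specific to $\chi^2$ (one would need H\"older to extend it). On constants: after absorbing the linear terms you land at $16B$ rather than $8B$ per term, but the paper's own optimization of $2t_1\Rtrain[f]+32B^2\chi^2/t_1$ also evaluates to $16B\sqrt{\Rtrain[f]\chi^2}$ at the minimizer, so the discrepancy is in the paper's stated constant, not in your argument.
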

The above lemma  is qualitatively similar to \Cref{cor:excess_decomp,lem:excess_decomp}: If $\Rtrain[f] \ll \Rtrain(f,g)$ (as ensured by our analysis, under appropriate assumptions), then we ensure more resilience to the $\chi^2$ divergence between the joint distributions of $(\bx,\by)$ than would naively be expected.

\newcommand{\gtil}{\tilde{g}}
\renewcommand{\rhonst}{\updelta_{n,\mathscr{G}}}
\renewcommand{\delnst}{\updelta_{n,\mathscr{R}}}
\newcommand{\delcrossn}{\updelta_{n,\mathrm{cross}}}
\newcommand{\Xiglob}{\Xi_{\mathrm{glob}}}
\newcommand{\Xiloc}{\Xi_{\mathrm{loc}}}
\newcommand{\delcrossnbar}{\bar{\updelta}_{n,\mathrm{cross}}}
\newcommand{\rcrossntil}{\tilde{\updelta}_{n,\mathtt{crs}}}
\newcommand{\deldud}{\updelta_{n,\mathscr{D}}}
\newcommand{\Hcent}{\cH_{\mathtt{cnt}}}
\newcommand{\sigB}{\sigma_B}
\section{Analysis Overview}\label{sec:analysis_overview}
We begin this section with formal precursors to \Cref{thm:nonpar,thm:double_ml} in terms of Dudley integrals \citep{dudley1967sizes}, stated as \Cref{thm:main_guarante,thm:main_guarantee_f}. 
The rest of the section provides an overview of the proof. 
\Cref{sec:learning_prelim} contains the necessary preliminaries, notably Rademacher and Gaussian complexities and their associated critical radii.
\Cref{sec:proof_thm_main_guarantee} provides the roadmap for the proof of \Cref{thm:main_guarante}, focusing on our novel excess risk bound for $\Rtrain[\fhatn]$ in terms of a ``cross critical radius'' term.
We bound this term in \Cref{sec:Holder_ineq} via a H\"older style inequality for Rademacher complexity. 

For convenience, define the \emph{centered classes} 
\iftoggle{arxiv}
{
\begin{align*}
\Fcent&:=
\{f-\upbeta_f - \fst: f \in \cF\}\\
\Gcent &:= \{g - \gst + \upbeta_f:
f \in \cF, g \in \cG\}\\
\Hcent &:= \{f+g -(\fst + \gst):f \in
\cF, g\in \cG\}.
\end{align*}
}
{
    $\Fcent:=
\{f-\upbeta_f - \fst: f \in \cF\}$, $
\Gcent := \{g - \gst + \upbeta_f:
f \in \cF, g \in \cG\}$ and $\Hcent := \{f+g -(\fst + \gst):f \in
\cF, g\in \cG\}$.
}


\paragraph{Formal Main Result.} We define the Dudley functional, a standard measure of statistical complexity.
\begin{restatable}[Dudley Functional]{definition}{defndudley}\label{defn:dudfunc} 
Let $\rad_q(\bbV) := \sup_{v \in \bbV}\|v\|_{q,n}$ be the $q$-norm radius and $\mnum_q(\bbV;\cdot)$ be the metric entropy in the induced $\ell_{q}$ norm (\Cref{defn:entropies}). 
Given $\bbV \subset \R^n$ define \emph{Dudley's chaining functional} (in the $q$-norm) as 
\iftoggle{arxiv}
{ 
\begin{align*}
\Dudfun_{n,q}(\bbV) := \inf_{\updelta \le \rad_q(\bbV)}\left(2\updelta + \frac{4}{\sqrt{n}}\int_{\updelta}^{\rad_q(\bbV)}\sqrt{\mnum_q(\bbV;\veps/2)}\rmd \veps\right).
\end{align*}
}
{
    \begin{align*}
\Dudfun_{n,q}(\bbV) := \inf_{\updelta \le \rad_q(\bbV)}\left(2\updelta + \textstyle\frac{4}{\sqrt{n}}\int_{\updelta}^{\rad_q(\bbV)}\sqrt{\mnum_q(\bbV;\veps/2)}\rmd \veps\right).
\end{align*}
}
Furthermore, given a function class $\cH$ and letting $\cH[r,w_{1:n}]$ denotes the empirically localized class (\Cref{defn:basic_things} below), define \emph{the Dudley critical radius}
\begin{align*}
\deldud(\cH,c) := \iftoggle{arxiv}{}{\textstyle}\inf\left\{r: \sup_{w_{1:n}}\Dudfun_{n,2}(\cH[r,w_{1:n}]) \le \frac{r^2}{2c} \right\},
\end{align*}
\end{restatable}

We now state the formal version of our main results. Calculations in \Cref{sec:instantiating_the_rates} obtain
\Cref{thm:nonpar} and \Cref{thm:double_ml} by bounding the Dudley functionals using standard
statistical learning arguments. First, we state the precursor to \Cref{thm:nonpar}.
\begin{theorem}\label{thm:main_guarante} Suppose \Cref{asm:cov_shift,asm:well_spec,asm:conditional_completeness,asm:bounded} hold. Let $\sigB := \max\{B,\sigma\}$, let $\nu_1,\nu_2$ be as in \Cref{eq:nu1,eq:nu2}, and let $c_1$ be a sufficiently small universal constant. Then  if \Cref{eq:nice_condition} holds, that  probability at least $1 - \delta$, 
    \begin{align*}
    \Rtest(\fhatn,\ghatn) &\lesssim \nu_{1}\left( \deldud(\Fcent,\sigB)^2 + \sup_{w_{1:n}}\Dudfun_{n,\infty}(\Gcent[w_{1:n}])^2 \right) +  \nu_{2}\cdot\deldud(\Hcent,\sigB)^2 \\
    &\quad+ \frac{(\nu_1+\nu_2)\sigB^2\log(1/\delta)}{n}.
    \end{align*}
\end{theorem}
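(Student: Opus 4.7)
The plan is to combine \Cref{cor:excess_decomp}, which gives $\Rtest(\fhatn,\ghatn) \le 2\nu_1\Rtrain[\fhatn] + 2\nu_2\Rtrain(\fhatn,\ghatn)$, with separate high-probability bounds on each summand. For the second term I would invoke the standard square-loss ERM excess risk bound for the additive class: the basic-inequality-plus-localization argument applied to the centered hypothesis class $\Hcent$ yields $\Rtrain(\fhatn,\ghatn) \lesssim \deldud(\Hcent,\sigB)^2 + \sigB^2\log(1/\delta)/n$, and this contributes the $\nu_2\cdot\deldud(\Hcent,\sigB)^2$ piece of the claim. All of the novelty lies in establishing the sharper bound
\begin{align*}
\Rtrain[\fhatn] \lesssim \deldud(\Fcent,\sigB)^2 + \sup_{w_{1:n}}\Dudfun_{n,\infty}(\Gcent[w_{1:n}])^2 + \frac{\sigB^2\log(1/\delta)}{n},
\end{align*}
which I describe next.

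I would set $\phi := \fhatn - \fst$, $\psi := \ghatn - \gst$, and decompose the ERM residual as $\phi(\bx)+\psi(\by) = \wt\phi(\bx,\by) + \wt\psi(\by)$ where $\wt\phi := \phi - \upbeta_\phi$ and $\wt\psi := \psi + \upbeta_\phi$. By definition $\wt\phi \in \Fcent$, while \Cref{asm:conditional_completeness} guarantees $\wt\psi \in \Gcent$ once we know $\Rtrain(\fhatn,\ghatn) \le \gamma^2$; the assumed condition \eqref{eq:nice_condition} is what licenses this prerequisite via a preliminary na\"ive bound of the form \eqref{eq:bound_naive}. The key payoff of the centering is that $\Etrain[\wt\phi \mid \by] = 0$, so population-level $\Etrain[\wt\phi\,\wt\psi] = 0$ for every $\wt\psi$ that is $\by$-measurable, and in particular $\Rtrain[\fhatn] = \Etrain[\wt\phi^{\,2}]$.

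Starting from the ERM basic inequality $\Lossn(\fhatn,\ghatn) \le \Lossn(\fst,\gst)$, expansion yields
\begin{align*}
\|\wt\phi\|_{2,n}^2 + \|\wt\psi\|_{2,n}^2 + 2\langle \wt\phi,\wt\psi\rangle_n \le \tfrac{2}{n}\textstyle\sum_i \xi_i\bigl(\wt\phi(w_i)+\wt\psi(w_i)\bigr),
\end{align*}
where $\xi_i := \bz_i - \fst(\bx_i) - \gst(\by_i)$ is the Gaussian noise. I would dispatch the linear noise terms by Dudley chaining over $\Fcent$ and $\Gcent$ combined with a localization/self-bounding peeling, extracting the $\deldud(\Fcent,\sigB)^2$ and $\sigB^2\log(1/\delta)/n$ contributions. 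The delicate piece is the empirical cross-term $\langle \wt\phi,\wt\psi\rangle_n$, which is mean-zero but generically nonzero and must be controlled uniformly over $(\wt\phi,\wt\psi)\in\Fcent\times\Gcent$. The novel H\"older-style inequality for Dudley integrals, \Cref{prop:dudley_holder}, is designed exactly for this and gives
\begin{align*}
\sup_{\wt\phi,\wt\psi}\bigl|\langle \wt\phi,\wt\psi\rangle_n\bigr| \lesssim \Dudfun_{n,2}(\Fcent[w_{1:n}]) \cdot \Dudfun_{n,\infty}(\Gcent[w_{1:n}]).
\end{align*}
An AM-GM step absorbs a controlled fraction of this product into $\|\wt\phi\|_{2,n}^2$ on the left, leaving the $\sup_{w_{1:n}}\Dudfun_{n,\infty}(\Gcent[w_{1:n}])^2$ residue on the right. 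A final empirical-to-population uniform convergence, localized at scale $\deldud(\Fcent,\sigB)$, then upgrades the empirical statement on $\|\wt\phi\|_{2,n}^2$ to the claimed population bound on $\Rtrain[\fhatn]$.

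The main obstacle will be the cross-term: ordinary $L_2$-chaining on the product class $\Fcent\cdot\Gcent$ does not suffice, and the asymmetric appearance of the $\ell_\infty$-norm Dudley functional on $\Gcent$ is precisely what \Cref{prop:dudley_holder} buys us. A secondary obstacle is the bootstrapping of conditional completeness via a first-pass na\"ive excess-risk bound under \eqref{eq:nice_condition}, needed to justify $\wt\psi\in\Gcent$ on the relevant high-probability event. Once both summands of $\Rtrain[\fhatn]$ and $\Rtrain(\fhatn,\ghatn)$ are in hand, substitution into \Cref{cor:excess_decomp} and pooling the $\sigB^2\log(1/\delta)/n$ tail terms into the $(\nu_1+\nu_2)$ prefactor delivers the stated theorem.
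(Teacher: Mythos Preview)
Your overall architecture (split via \Cref{cor:excess_decomp}, bound $\Rtrain(\fhatn,\ghatn)$ by the standard localized ERM rate over $\Hcent$, then obtain a sharper bound on $\Rtrain[\fhatn]$) matches the paper. The gap is in how you set up the basic inequality for $\Rtrain[\fhatn]$.

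You compare the ERM to $(\fst,\gst)$ and obtain
\[
\|\wt\phi\|_{2,n}^2 + \|\wt\psi\|_{2,n}^2 + 2\langle \wt\phi,\wt\psi\rangle_n \le \tfrac{2\sigma}{n}\textstyle\sum_i \xi_i\bigl(\wt\phi(w_i)+\wt\psi(w_i)\bigr).
\]
After isolating $\|\wt\phi\|_{2,n}^2$, the residual $\tfrac{2\sigma}{n}\sum_i \xi_i\wt\psi(w_i) - \|\wt\psi\|_{2,n}^2$ remains. Uniformly over $\wt\psi\in\Gcent$ this is an offset Gaussian complexity of $\Gcent$, which is $\gtrsim \rhonst(\Gcent,\sigma)^2 \approx \deldud(\Gcent,\sigB)^2$. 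That is the \emph{naive} rate for $\cG$, not the squared global Dudley term $\Dudfun_{n,\infty}(\Gcent)^2$ appearing in the theorem; your claim that ``the linear noise terms $\ldots$ extract the $\deldud(\Fcent,\sigB)^2$ contributions'' does not hold for the $\wt\psi$-noise. Localizing $\wt\psi$ to $\Gcent(\gamma)$ does not help, since $\gamma^2$ itself is of order $\deldud(\Hcent,\sigB)^2$.

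The paper avoids this by comparing the ERM not to $(\fst,\gst)$ but to $(\fst, g_0)$ with $g_0 := \ghatn + \upbeta_{\fhatn}$. Conditional completeness (\Cref{asm:conditional_completeness}) is what places $g_0\in\cG$, and with this choice the remainder $\tfrac{1}{n}\sum_i[(-\sigma\xi_i + \wt\psi(\by_i))^2 - (-\sigma\xi_i + (g_0-\gst)(\by_i))^2]$ vanishes identically. This leaves
\[
0 \ge \|\wt\phi\|_{2,n}^2 - \tfrac{2\sigma}{n}\textstyle\sum_i\xi_i\wt\phi(w_i) + 2\langle\wt\phi,\wt\psi\rangle_n,
\]
with \emph{no} $\wt\psi$-noise and \emph{no} $\|\wt\psi\|_{2,n}^2$. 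So conditional completeness is not merely needed to certify $\wt\psi\in\Gcent$ (that holds automatically by the definition of $\Gcent$); its real role is to furnish the comparison point that kills the $\cG$-rate contamination.

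A secondary issue: your invocation of \Cref{prop:dudley_holder} is misstated. The proposition bounds the \emph{Rademacher complexity} of the product class $\Fcent[r,w_{1:n}]\odot\Gcent[w_{1:n}]$ (which arises after symmetrizing the mean-zero cross term) by the \emph{sum} $\rad_\infty(\Gcent)\Dudfun_{n,2}(\Fcent[r]) + \rad_2(\Fcent[r])\Dudfun_{n,\infty}(\Gcent)$, not a product of Dudley functionals, and not the deterministic inner product directly. The localization $\rad_2(\Fcent[r])\le r$ is what lets one absorb the second summand into $r^2/2$ via the cross-critical-radius argument; your AM-GM on a product of unlocalized Dudley integrals would not close the loop.
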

This is derived from the following precursor to \Cref{thm:double_ml}.
\begin{theorem}\label{thm:main_guarantee_f}  Suppose \Cref{asm:well_spec,asm:conditional_completeness,asm:bounded} hold. Let $\sigB := \max\{B,\sigma\}$, and let $c_1 > 0$ be a sufficiently small universal constant. If  $n$ is sufficiently large that
\begin{align}
\deldud(\Hcent,\sigB)^2 + \frac{\sigB^2\log(1/\delta)}{n} \le c_1 \gamma, \label{eq:nice_condition}
\end{align}  
then it holds that probability at least $1 - \delta/2$,
\begin{align*}
    \Rtrain[\fhatn] &\lesssim  \deldud(\Fcent,\sigB)^2 + \sup_{w_{1:n}}\Dudfun_{n,\infty}(\Gcent[w_{1:n}])^2   + \frac{\sigB^2\log(1/\delta)}{n}.
    \end{align*}
\end{theorem}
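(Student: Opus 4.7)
The plan is a two-stage analysis: first, a coarse $\Hcent$-level bound on the total risk $\Rtrain(\fhatn,\ghatn)$ sufficient to activate conditional completeness, then a refined decomposition isolating $\Rtrain[\fhatn]$ via the population orthogonality produced by $\upbeta_{\fhatn}$. In stage one, the basic ERM inequality against the competitor $(\fst,\gst)$ gives $\|\hat h\|_{2,n}^2 \le 2\langle \xi,\hat h\rangle_n$ with $\hat h := \fhatn+\ghatn-\fst-\gst \in \Hcent$ and $\xi_i := \bz_i-\fst(\bx_i)-\gst(\by_i)$ the Gaussian residuals. A standard localized chaining / offset-Rademacher bound uniformly over $\Hcent$ gives $\Etrain[\hat h^2] \lesssim \deldud(\Hcent,\sigB)^2 + \sigB^2\log(1/\delta)/n$, which by hypothesis~\eqref{eq:nice_condition} places $\Rtrain(\fhatn,\ghatn)$ below the $\gamma^2$ threshold; Assumption~\ref{asm:conditional_completeness} then yields $\ghatn - \upbeta_{\fhatn}\in\cG$, keeping $\Gcent$ comparable in complexity to $\cG$.

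In stage two, decompose $\hat h = \tilde f + \tilde g$ with $\tilde f := \fhatn-\fst-\upbeta_{\fhatn}\in\Fcent$ and $\tilde g := \ghatn-\gst+\upbeta_{\fhatn}\in\Gcent$. By construction $\Etrain[\tilde f\mid\by]=0$, so $\Etrain[\tilde f\,\tilde g]=0$ and $\Etrain[\hat h^2]=\Etrain[\tilde f^2]+\Etrain[\tilde g^2]$. Expanding the basic inequality in these summands produces
\[
\|\tilde f\|_{2,n}^2 + \|\tilde g\|_{2,n}^2 + 2\langle\tilde f,\tilde g\rangle_n \le 2\langle \xi,\tilde f\rangle_n + 2\langle \xi,\tilde g\rangle_n,
\]
and the goal becomes uniform control of the right-hand and cross terms in a way that the $\Gcent$ contribution enters only through the $\ell_\infty$ Dudley functional rather than a $\Gcent$-localized $\ell_2$ critical radius.

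Each term is then bounded separately. The $\Fcent$ noise term satisfies $2\langle \xi,\tilde f\rangle_n - \tfrac14\|\tilde f\|_{2,n}^2 \lesssim \deldud(\Fcent,\sigB)^2 + \sigB^2\log(1/\delta)/n$ by standard localized chaining on $\Fcent$. The offset pair $2\langle \xi,\tilde g\rangle_n - \|\tilde g\|_{2,n}^2$ admits a uniform bound over $\Gcent$ of order $\deldud(\Gcent,\sigB)^2 + \sigB^2\log(1/\delta)/n$, which is in turn dominated by $\sup_{w_{1:n}}\Dudfun_{n,\infty}(\Gcent[w_{1:n}])^2 + \sigB^2\log(1/\delta)/n$ since $\ell_2$ coverings are no larger than $\ell_\infty$ ones. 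The cross term $\langle\tilde f,\tilde g\rangle_n$ is the crux: its mean vanishes by orthogonality, but its empirical fluctuations must be controlled jointly over $\Fcent\times\Gcent$. This is where the H\"older-style inequality for the Dudley integral (Proposition~\ref{prop:dudley_holder}) is applied---chaining asymmetrically, in $\Fcent$ under $\ell_2$ and $\Gcent$ under $\ell_\infty$---to yield $|\langle\tilde f,\tilde g\rangle_n| \lesssim \|\tilde f\|_{2,n}\cdot \sup_{w_{1:n}}\Dudfun_{n,\infty}(\Gcent[w_{1:n}])$ up to remainders absorbed into the other Dudley terms; Young's inequality then upgrades this to $\tfrac14\|\tilde f\|_{2,n}^2 + O\!\left(\sup_{w_{1:n}}\Dudfun_{n,\infty}(\Gcent[w_{1:n}])^2\right)$. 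Absorbing the two $\tfrac14\|\tilde f\|_{2,n}^2$ factors to the left and converting the resulting empirical squared norm to a population norm via a standard localized uniform-concentration step on $\Fcent$ yields the claimed bound on $\Etrain[\tilde f^2]=\Rtrain[\fhatn]$.

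The main obstacle is the cross-term step. The product empirical process $(f,g)\mapsto n^{-1}\sum_i\tilde f(\bx_i)\tilde g(\by_i)$ does not decompose additively across $\Fcent$ and $\Gcent$, so single-class chaining is inadequate, and a naive Cauchy--Schwarz would reintroduce a $\Gcent$-localized $\ell_2$ complexity that is too large for the target rate. The H\"older-Dudley inequality sidesteps this by chaining asymmetrically across the two classes, which is precisely why the $\ell_\infty$ $\Gcent$ dependence appears in the conclusion. A secondary subtlety is calibrating the constant $c_1$ in~\eqref{eq:nice_condition} so that the stage-one bound lands strictly inside the $\gamma^2$-completeness regime and so that all high-probability events combine cleanly into a single $\log(1/\delta)/n$ tail.
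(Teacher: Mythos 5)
Your stage one and your identification of the cross term as the crux (handled via \Cref{prop:dudley_holder} with $(p,q)=(2,\infty)$) both match the paper. The gap is in stage two, in how you set up the basic inequality and consequently how the $\tilde g$ terms are disposed of. You compare the ERM against the competitor $(\fst,\gst)$, which leaves the offset process $2\langle \xivar,\tilde g\rangle_n - \|\tilde g\|_{2,n}^2$ in your bound; taking its supremum over $\Gcent$ gives a quantity of order $\rhonst(\Gcent,\sigma)^2 \approx \deldud(\Gcent,\sigB)^2$, i.e.\ the \emph{first} power of the learning rate for $\cG$. Your claim that this is dominated by $\sup_{w_{1:n}}\Dudfun_{n,\infty}(\Gcent[w_{1:n}])^2$ is false: one only has $\deldud(\Gcent,\sigB)^2 \lesssim \sigB\cdot \sup_{w_{1:n}}\Dudfun_{n,\infty}(\Gcent[w_{1:n}])$, and in the Donsker regime ($\cM_q(\eps,\cG)\sim \eps^{-p}$ with $p<2$) the critical radius squared scales as $n^{-2/(2+p)}$ while the squared unlocalized Dudley integral scales as $n^{-1}$, so the residual term you leave behind is strictly larger than every term in the theorem's conclusion whenever $\cG$ is genuinely more complex than $\cF$. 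The observation that ``$\ell_2$ coverings are no larger than $\ell_\infty$ ones'' relates the two Dudley \emph{integrals} to each other; it does not let you compare a critical radius to the \emph{square} of an integral.

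The paper's fix is to use conditional completeness not merely to ``keep $\Gcent$ comparable in complexity to $\cG$,'' but to change the competitor: in the basic inequality one compares $\Lossn(\fhatn,\ghatn)$ against $\Lossn(\fst, g_0)$ with $g_0 = \ghatn + \upbeta_{\fhatn}$, which lies in $\cG$ precisely because of \Cref{asm:conditional_completeness} (activated by your stage one). With this competitor the $\tilde g$-quadratic and $\tilde g$-noise terms cancel identically (\Cref{lem:reg_decomp}), and what survives is exactly the $\Fcent$ offset process plus the cross term $\frac{2}{n}\sum_i h_{\fhatn}(\bw_i)\,\tilde g(\yvar_i)$ --- no $\Gcent$ offset process at all. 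From there your outline of the cross-term analysis is essentially the paper's (localize $\Fcent$ empirically, apply the H\"older--Dudley bound, absorb via AM--GM, convert the empirical norm to the population norm). A secondary imprecision: \Cref{prop:dudley_holder} bounds the expected symmetrized Rademacher process over the product class, not the realized inner product $\langle\tilde f,\tilde g\rangle_n$ pointwise, so you still need a concentration and localization step (as in \Cref{lem:rademacher_concentration,lem:offset_master}) before the absorption --- but that part is routine once the competitor is fixed.
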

\Cref{sec:instantiating_the_rates} converts these results into the \Cref{thm:nonpar,thm:double_ml}. The first step is to replace the dependence on centered classes $\Fcent$ and $\Gcent$ with terms depending only on $\cF$ and $\cG$. Then, one computes the Dudley critical radii for classes with bounded metric entropy. 




\subsection{Learning-Theoretic Preliminaries}\label{sec:learning_prelim}
    We state all definitions for a general class of functions $\cH$ mapping $\cW \to \R$. We define two key notions of \emph{localized} and \emph{product classes}. 
    \begin{definition}[Product and Localized Classes]\label{defn:basic_things} Let $\cH,\cH': \cW \to \R$.
    \iftoggle{arxiv}
    {
    \begin{itemize}
    \item We define the empirically localized function class  as  $\cH[r,w_{1:n}] := \{h \in \cH: \frac{1}{n}\sum_{i=1}^n h(w_i)^2 \le r\}$ and \emph{population localized} class $\cH(r) := \{h \in \cH: \Ezero[h^2] \le r\}$.
    \item  We define the \emph{product class} as $\cH \odot \cH' := \{h\cdot h': h \in \cH,h \in \cH'\}$.  
    \end{itemize}
    }
    {
    Define the  \emph{(Hadamard) product class} $\cH \odot \cH' := \{h\cdot h': h \in \cH,h \in \cH'\}$. Given sequence $w_{1:n} \in \cW^n$, define the \emph{empirically localized} class   $\cH[r,w_{1:n}] := \{h \in \cH: \frac{1}{n}\sum_{i=1}^n h(w_i)^2 \le r\}$ and \emph{population localized} class $\cH(r) := \{h \in \cH: \Ezero[h^2] \le r\}$. 
    }
    \end{definition}
    Next, we define the standard Rademacher and Gaussian complexities and associated quantities~\citep[c.f.,][]{rakhlinnotes,wainwright2019high,bartlett2005local}. For convenience, we state these quantities for a set of $n$-length vectors $\bbV \subset \R^n$ and then instantiate the definition to obtain function class variants. 
    \begin{definition}[Rademacher and Gaussian Complexities: Sets]\label{defn:rg_complexities_sets} Let $n \in \N$, and let $\epsvar_{1:n}$ and $\xivar_n$ denote i.i.d. sequences of Rademacher and standard Normal random variables, respectively. The Rademacher and Gaussian complexities of a subset $\bbV \subset \R^n$ are defined as
    \iftoggle{arxiv}
    {
    \begin{align*}
    \Raden(\bbV) := \frac{1}{n}\Exp_{\epsvar}\sup_{v \in \bbV}\sum_{i=1}^n \epsvar_iv_i, \quad \Gauss_n(\bbV) := \frac{1}{n}\Exp_{\xivar}\sup_{v \in \bbV}\sum_{i=1}^n \xivar_iv_i,
    \end{align*}
    }
    {
    $\Raden(\bbV) := \frac{1}{n}\Exp_{\epsvar}\sup_{v \in \bbV}\sum_{i=1}^n \epsvar_iv_i$ and $\Gauss_n(\bbV) := \frac{1}{n}\Exp_{\xivar}\sup_{v \in \bbV}\sum_{i=1}^n \xivar_iv_i$.
    }
    \end{definition}

    Gaussian and Rademacher complexities of function classes can be defined in terms of \Cref{defn:rg_complexities_sets}. For example, we may consider $\Raden(\cH[w_{1:n}])$, or localized variants like $\Raden(\cH[r,w_{1:n}])$. For the latter, we define the \emph{critical radius} quantities, which are central to localization arguments in statistical learning \citep{bartlett2005local}.
    \begin{definition}[Critical Radii]\label{defn:crit_radi} We define the following worst-case \emph{critical radii}:
    \iftoggle{arxiv}
    {
    \begin{align*}
    \delnst(\cH,c) &:= \inf\left\{r: \sup_{w_{1:n}}\Raden(\cH[r,w_{1:n}]) \le \tfrac{r^2}{2c}\right\},  \quad \rhonst(\cH,c) := \inf\left\{r:  \sup_{w_{1:n}}\Gauss_n(\cH[r,w_{1:n}]) \le \tfrac{r^2}{2c}\right\}.
    \end{align*}
    }
    {
        \\
        $\delnst(\cH,c) := \inf\left\{r: \sup_{w_{1:n}}\Raden(\cH[r,w_{1:n}]) \le \tfrac{r^2}{2c}\right\}$, \\
        $\rhonst(\cH,c) := \inf\left\{r:  \sup_{w_{1:n}}\Gauss_n(\cH[r,w_{1:n}]) \le \tfrac{r^2}{2c}\right\}$. 
    }
    \end{definition}
The following lemma verifies that the Rademacher and Gaussian complexities are upper bounded by the Dudley functional (the proof is standard, but see also \Cref{sec:lem:Dudley_ub} for completeness.)
\begin{lemma}\label{lem:Dudley_ub} For any $\bbV \subset \R^n$, we have
\iftoggle{arxiv}
{
\begin{align*}
\Gauss_n(\bbV) \vee \Raden(\bbV) \le \Dudfun_{n,2}(\bbV),
\end{align*}
}
{
    $\Gauss_n(\bbV) \vee \Raden(\bbV) \le \Dudfun_{n,2}(\bbV)$, 
}
and hence, for all $c > 0$, $\delnst(\cH,c) \vee \rhonst(\cH,c) \le \deldud(\cH,c)$.
\end{lemma}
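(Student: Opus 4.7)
} The plan is to prove the pointwise bound $\Gauss_n(\bbV) \vee \Raden(\bbV) \le \Dudfun_{n,2}(\bbV)$ by Dudley's classical chaining argument, and then derive the critical-radius inequality as an immediate consequence of the definitions. First, I would handle the Gaussian complexity $\Gauss_n(\bbV)$. Fix any scale $\updelta \le \rad_2(\bbV)$ and build a nested sequence of covers $\bbV_k \subset \bbV$ of $\bbV$ at scales $\veps_k = \rad_2(\bbV) \cdot 2^{-k}$ in the normalized $\ell_{2,n}$ metric, with $|\bbV_k| \le \exp(\mnum_2(\bbV;\veps_k))$, truncated once $\veps_k < \updelta$. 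Writing $v = v_{k_0} + \sum_{k > k_0}(v_k - v_{k-1})$ where $v_k$ is the closest point in $\bbV_k$ to $v$, the process $\langle \xivar, v\rangle/\sqrt{n}$ (normalized so the link with $\|\cdot\|_{2,n}$ is transparent) is sub-Gaussian with respect to $\|\cdot\|_{2,n}$. Taking suprema and using the standard Gaussian maximal inequality on each increment, together with $\|v_k - v_{k-1}\|_{2,n} \le \veps_{k-1} + \veps_k \le 3\veps_k$, yields
\[
\Gauss_n(\bbV) \le 2\updelta + \frac{4}{\sqrt{n}}\sum_{k}\veps_k \sqrt{\mnum_2(\bbV;\veps_k/2)} \le 2\updelta + \frac{4}{\sqrt{n}}\int_{\updelta}^{\rad_2(\bbV)}\sqrt{\mnum_2(\bbV;\veps/2)}\,\rmd\veps,
\]
after converting the geometric sum into an integral in the usual way. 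Infimizing over $\updelta$ gives $\Gauss_n(\bbV) \le \Dudfun_{n,2}(\bbV)$.

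Next, I would observe that the Rademacher process $\langle \epsvar, v\rangle/\sqrt{n}$ is $\|\cdot\|_{2,n}$-sub-Gaussian with the same parameter as the Gaussian process (Hoeffding). Consequently, the same chaining argument carries through verbatim with identical constants, giving $\Raden(\bbV) \le \Dudfun_{n,2}(\bbV)$. Taking the maximum of the two yields the pointwise claim.

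For the critical-radius consequence, I would argue directly from \Cref{defn:crit_radi,defn:dudfunc}. For any fixed $w_{1:n}$ and $r > 0$, the pointwise bound applied to $\bbV = \cH[r,w_{1:n}] \subset \R^n$ gives
\[
\Raden(\cH[r,w_{1:n}]) \vee \Gauss_n(\cH[r,w_{1:n}]) \le \Dudfun_{n,2}(\cH[r,w_{1:n}]).
\]
Hence any $r$ satisfying $\sup_{w_{1:n}}\Dudfun_{n,2}(\cH[r,w_{1:n}]) \le r^2/(2c)$ automatically satisfies the defining inequalities for both $\delnst(\cH,c)$ and $\rhonst(\cH,c)$. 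Taking the infimum over such $r$ yields $\delnst(\cH,c) \vee \rhonst(\cH,c) \le \deldud(\cH,c)$.

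The routine calculations (building the covers, bookkeeping the geometric sums, and converting between radius-normalizations) are the only mildly technical steps, but nothing here goes beyond textbook chaining; the main thing to be careful about is the $\|\cdot\|_{2,n}$ normalization (dividing by $\sqrt{n}$) so that the sub-Gaussian increments line up with the $\rad_2$ and $\mnum_2$ quantities as defined in the paper. Beyond that, the critical-radius statement is purely definitional once the pointwise bound is in hand.
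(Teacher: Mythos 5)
Your proof is correct, but it takes a different route from the paper's. You prove the bound from scratch via the classical Dudley chaining argument (nested internal covers at dyadic scales, sub-Gaussian maximal inequalities on the increments, sum-to-integral conversion), noting that Rademacher and Gaussian increments are both $1$-sub-Gaussian so the same argument covers both complexities; this is self-contained and standard. The paper instead obtains \Cref{lem:Dudley_ub} as a one-line corollary of its H\"older-style estimate for Hadamard products, \Cref{prop:dudley_holder}: taking $\bbU = \{(1,1,\dots,1)\}$ with square-H\"older conjugates $(p,q)=(2,\infty)$ gives $\bbV \odot \bbU = \bbV$, $\rad_\infty(\bbU)=1$, and a vanishing Dudley integral for the singleton $\bbU$, so the product bound collapses to $\Raden(\bbV) \le \Dudfun_{n,2}(\bbV)$ (and likewise for any $1$-sub-Gaussian ensemble, hence $\Gauss_n$). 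The chaining machinery you deploy is essentially the same machinery the paper uses once, inside the proof of \Cref{prop:dudley_holder}; the paper's specialization avoids duplicating it and doubles as a consistency check that the H\"older generalization recovers the classical Dudley bound. Your derivation of the critical-radius consequence from the pointwise bound matches the paper's and is purely definitional, as you say. One small bookkeeping point if you were to write your chaining argument out in full: the constants $2\updelta$ and $4/\sqrt{n}$ and the half-scale entropy $\mnum_2(\bbV;\veps/2)$ in \Cref{defn:dudfunc} must come out exactly, which constrains how you index the dyadic scales and bound $\|v_k - v_{k-1}\|_{2,n}$, but this is routine.
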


\subsection{Proof Overview of \Cref{thm:main_guarantee_f,thm:main_guarante}.}\label{sec:proof_thm_main_guarantee}
We begin with the following generic upper bound on the joint risk of $\Rtrain[\fhatn,\ghatn]$, 
proved in \Cref{sec:prop:generic_sum_regret}.
\begin{proposition}\label{prop:generic_sum_regret} With probability at least $1 - \delta$, we have that $\Rtrain[\ghatn;\fhatn] \le \Rtrain(\fhatn,\ghatn) \lesssim \gamma_n(\delta)^2$, where we define
\iftoggle{arxiv}
{
    \begin{align*}
 \gamma_n(\delta)^2 := \delnst(\Hcent,B)^2 +  \rhonst(\Hcent,\sigma)^2 + \frac{(B^2 + \sigma^2)\log(1/\delta)}{n}.
\end{align*}
}
{
    $
 \gamma_n(\delta)^2 := \delnst(\Hcent,B)^2 +  \rhonst(\Hcent,\sigma)^2 + \frac{(B^2 + \sigma^2)\log(1/\delta)}{n}$.
}Hence, $\ghatn \in \Gcent(\gamma_n(\delta))$.
\end{proposition}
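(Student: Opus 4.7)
The first inequality $\Rtrain[\ghatn;\fhatn] \le \Rtrain(\fhatn,\ghatn)$ is immediate from the decomposition $\Rtrain(f,g) = \Rtrain[f] + \Rtrain[g;f]$ of \Cref{lem:excess_decomp}, combined with the nonnegativity of $\Rtrain[\fhatn]$. The substantive task is to bound $\Rtrain(\fhatn,\ghatn)$, which I would carry out by a standard localized ERM analysis in the spirit of \citet{bartlett2005local,wainwright2019high} applied to the centered class $\Hcent$. The one non-routine point is that the $B$-bounded predictor contribution and the Gaussian noise contribution need to be tracked on separate scales to match the stated bound.

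The starting point is the ERM ``basic inequality''. Let $h_n := (\fhatn - \fst) + (\ghatn - \gst) \in \Hcent$. Using \Cref{asm:well_spec} to write $\zvar_i = (\fst+\gst)(w_i) + \xi_i$ with $\xi_i \iidsim \cN(0,\sigma^2)$, the ERM optimality $\Lossn(\fhatn,\ghatn) \le \Lossn(\fst,\gst)$ expands, after cancellation of the constant terms, to
\[
\frac{1}{n}\sum_{i=1}^n h_n(w_i)^2 \;\le\; \frac{2}{n}\sum_{i=1}^n \xi_i \, h_n(w_i).
\]
My plan is to translate this empirical inequality into a bound on $\Etrain[h_n^2] = \Rtrain(\fhatn,\ghatn)$ via two uniform estimates over $h \in \Hcent$, both at a localized empirical scale $\|h\|_{2,n} \le r$: (i) the right-hand side, a Gaussian process in $h$ conditional on $w_{1:n}$, can be controlled by chaining in the empirical $L_2$ metric together with Borell--TIS concentration, yielding a bound of the form $\sigma \cdot \rhonst(\Hcent,\sigma) + \sigma r \sqrt{\log(1/\delta)/n}$ with $\sigma$ entering as the sub-Gaussian parameter; and (ii) $\|h\|_{2,n}^2$ can be compared to $\Etrain[h^2]$ uniformly over $\Hcent$ via Rademacher symmetrization plus a quadratic contraction (invoking $B$-boundedness from \Cref{asm:bounded}) and Talagrand's functional Bernstein inequality, producing a control at the scale $\delnst(\Hcent,B)^2 + B^2 \log(1/\delta)/n$.

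Substituting these two uniform bounds into the basic inequality above yields a quadratic--linear inequality in $\|h_n\|_{2,n}$ whose solution is $\Etrain[h_n^2] \lesssim \gamma_n(\delta)^2$, which is the advertised excess-risk bound. The concluding membership $\ghatn \in \Gcent(\gamma_n(\delta))$ then follows at once: the function $\tilde g := \ghatn - \gst + \upbeta_{\fhatn}$ lies in $\Gcent$ by construction, and its squared $L_2(\Ptrain)$ norm equals $\Rtrain[\ghatn;\fhatn] \le \Rtrain(\fhatn,\ghatn) \lesssim \gamma_n(\delta)^2$. The main bookkeeping obstacle is keeping the $B$-scale and $\sigma$-scale arguments of the two critical radii cleanly separated across the two uniform inequalities, so that the bound splits as $\delnst(\Hcent,B)^2 + \rhonst(\Hcent,\sigma)^2 + (B^2 + \sigma^2)\log(1/\delta)/n$ rather than collapsing into a single worst-case sub-Gaussian parameter $\max\{B,\sigma\}$.
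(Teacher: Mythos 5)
Your proposal is correct and follows essentially the same route as the paper: the ERM basic inequality over $\Hcent$, a quadratic comparison of $\|h\|_{2,n}^2$ with $\Etrain[h^2]$ at scale $\delnst(\Hcent,B)^2 + B^2\log(1/\delta)/n$ (via symmetrization, contraction, and Talagrand/Bousquet concentration), and a separate localized Gaussian-noise bound at scale $\rhonst(\Hcent,\sigma)^2 + \sigma^2\log(1/\delta)/n$, combined to solve for the excess risk; the membership $\ghatn \in \Gcent(\gamma_n(\delta))$ is deduced exactly as you do. The only cosmetic difference is that the paper phrases the localization via offset (penalized) complexities in the style of Liang et al.\ rather than by solving a fixed-point inequality at empirical radius $r$, which is an equivalent formulation.
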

The localized Gaussian complexity of the class $\Hcent$, $\rhonst(\Hcent,\sigma)^2$, appears in the sharpest analyses of ERM. The dependence on $\delnst(\Hcent,B)^2$ is suboptimal in general (see, e.g. \citet[Chapter 21]{rakhlinnotes}), but is convenient and essentially sharp in the regime where $\sigma^2 \gtrsim B^2$. 
\iftoggle{arxiv}{
    
}{}
However, to take advantage of \Cref{eq:extrap_excess}, we require sharper control over $\Rtrain[\fhatn]$. This involves a novel term, unique to our \iftoggle{arxiv}{additive prediction} setting; the \emph{cross-critical radius}.
    \begin{definition}[Cross Critical Radii]\label{defn:cross_crit_rad} Given the classes $\Fcent$, and another class $\cH$, we  define 
    \begin{align*}
    \delcrossn(\Fcent;\cH) &:= \inf\left\{r: \sup_{w_{1:n}}\Raden(\Fcent[r,w_{1:n}] \odot \cH[w_{1:n}] )\le \frac{r^2}{2}\right\}.
    \end{align*}
    \end{definition}
    The cross-critical radius measures  the complexity of products $\ftil\cdot h$, where $\ftil \in \Fcent$ and $h \in \cH$, and thus captures the extent to which $h$ can obfuscate recovery of $\fst$. We invoke the cross-critical radii with $\cH = \Gcent(\gamma)$, for some $\gamma$.
    It is crucial that the localization $r > 0$ is \emph{only} on the class $\Fcent$ and not on the class $\cH$. With the cross-critical radius in hand, the  following is proved in \Cref{sec:prop:main_reg}.
    \begin{proposition}\label{prop:main_reg} Suppose that $(\cF,\cG)$ satisfy $\gamma$-conditional completeness. Then, whenever $\Rzero[\ghatn;\fhatn] \le \gamma$, the following holds with probability at least $1 - \delta$,
    \begin{align*}
    \Rzero[\fhatn] &\lesssim \delcrossn(\Fcent;\Gcent(\gamma))^2+ \delnst(\Fcent,B)^2 \\
    &\qquad +  \rhonst( \Fcent,\sigma)^2 +  \frac{(\sigma^2 + B^2)\log(1/\delta)}{n}. 
    \end{align*}
    \end{proposition}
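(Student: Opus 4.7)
The plan is to derive a refined ``basic inequality'' that isolates $\|\ftil_{\fhatn}\|_n^2$ up to lower-order corrections, and then bound the resulting empirical processes via the cross-critical radius together with the standard critical-radius machinery introduced in \Cref{defn:crit_radi,defn:cross_crit_rad}.

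I begin by using conditional completeness to enlarge the set of valid ERM competitors. The hypothesis $\Rzero[\ghatn;\fhatn] \le \gamma$ combined with \Cref{lem:excess_decomp} yields $\Rtrain(\fhatn,\ghatn) \le \gamma^2$, so \Cref{asm:conditional_completeness} gives $\ghatn - \upbeta_{\fhatn} \in \cG$. The pair $(\fst, \ghatn - \upbeta_{\fhatn})$ therefore lies in $\cF \times \cG$ and ERM optimality yields $\Lossn(\fhatn, \ghatn) \le \Lossn(\fst, \ghatn - \upbeta_{\fhatn})$. Substituting $z_i = \fst(x_i) + \gst(y_i) + \eta_i$ and abbreviating $\ftil_i = \ftil_{\fhatn}(w_i)$, $\gtil_i = \gtil_{\ghatn}(y_i)$, $\upbeta_i = \upbeta_{\fhatn}(y_i)$, a direct algebraic expansion produces an inequality of the form
\[ \|\ftil\|_n^2 \;\le\; -\frac{2}{n}\sum_i \ftil_i \gtil_i \;+\; \frac{2}{n}\sum_i \ftil_i \eta_i \;+\; (\text{residual } \upbeta\text{-terms}). \]
A second ERM comparison, this time between $(\fhatn, \ghatn)$ and $(\fhatn, \ghatn - \upbeta_{\fhatn})$ (also valid by conditional completeness), yields the auxiliary bound $\tfrac{2}{n}\sum_i \upbeta_i(\ftil_i + \gtil_i - \eta_i) \le \tfrac{1}{n}\sum_i \upbeta_i^2$, which together with appropriate AM--GM/Cauchy--Schwarz manipulations absorbs the residual $\upbeta$-contribution into a small fraction of $\|\ftil\|_n^2$ and into terms that vanish at the target rate.

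It remains to bound the cross- and noise-terms uniformly. For the cross-term, note that $\gtil_{\ghatn} \in \Gcent(\gamma)$ (since $\|\gtil_{\ghatn}\|_{L^2(\Ptrain)}^2 = \Rtrain[\ghatn;\fhatn] \le \gamma$) while $\ftil_{\fhatn}$ lies in the empirical localization $\Fcent[r, w_{1:n}]$ with $r = \|\ftil_{\fhatn}\|_n$. The crucial structural fact is that $\Etrain[\ftil\gtil] = 0$ whenever $\ftil \in \Fcent$ and $\gtil$ is a function of $\by$, so a symmetrization argument combined with Talagrand-style concentration bounds $|\tfrac{2}{n}\sum_i \ftil_i \gtil_i|$ directly by the cross-critical radius $\delcrossn(\Fcent;\Gcent(\gamma))^2$ plus a contraction of $\|\ftil\|_n^2$ and a $B^2\log(1/\delta)/n$ penalty. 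For the noise-term, conditional on $w_{1:n}$ the process $\tfrac{2}{n}\sum_i \ftil_i \eta_i$ is a centered Gaussian process indexed by $\Fcent[r, w_{1:n}]$, and Borell--TIS together with the definition of $\rhonst(\Fcent, \sigma)$ yields the matching $\rhonst(\Fcent, \sigma)^2 + r^2/8 + \sigma^2\log(1/\delta)/n$ contribution.

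Finally, a standard localized-Rademacher uniform deviation inequality over $\Fcent$ converts the empirical norm $\|\ftil_{\fhatn}\|_n^2$ into the target population quantity $\Etrain[\ftil_{\fhatn}^2] = \Rzero[\fhatn]$, at the cost of an additive $\delnst(\Fcent, B)^2$ and the remaining $B^2\log(1/\delta)/n$. Solving the resulting self-bounding relation $r^2 \lesssim \delcrossn(\Fcent;\Gcent(\gamma))^2 + \rhonst(\Fcent,\sigma)^2 + \delnst(\Fcent,B)^2 + (\sigma^2 + B^2)\log(1/\delta)/n + r^2/4$ gives the claim. The principal technical obstacle is the treatment of the residual $\upbeta$-terms, since $\upbeta_{\fhatn}$ is data-dependent and belongs to the implicit class $\{\upbeta_f : f \in \cF\}$ whose complexity we are not willing to pay for. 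Conditional completeness is precisely the tool that resolves this: the second ERM comparison eliminates these terms without ever invoking any empirical process over the $\upbeta$-class, which is why \Cref{asm:conditional_completeness} is formulated in the particular closure form $g - \upbeta_f \in \cG$.
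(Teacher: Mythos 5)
Your overall architecture matches the paper's proof: a conditional-completeness-enabled basic inequality isolating $\|\ftil_{\fhatn}\|_{2,n}^2$, a cross-term controlled by $\delcrossn(\Fcent;\Gcent(\gamma))$, a Gaussian noise term controlled by $\rhonst(\Fcent,\sigma)$, an empirical-to-population conversion costing $\delnst(\Fcent,B)^2$, and a closing fixed-point step. Your choice to localize $\Fcent$ empirically from the outset (at $r=\|\ftil_{\fhatn}\|_{2,n}$), rather than the paper's route through the population-localized class $\Fcent(r)$ followed by a conversion (its terms $T_1,T_2$), is a legitimate variation; just note that symmetrizing a supremum over the data-dependent class $\Fcent[r,\bw_{1:n}]$ is not a one-line step and is precisely where the paper spends its effort.

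The genuine gap is your treatment of the residual $\upbeta$-terms. Writing $a_i:=\gtil_i-\sigma\xivar_i$, the comparison against $(\fst,\ghatn-\upbeta_{\fhatn})$ expands to
\begin{align*}
\|\ftil\|_{2,n}^2 \;\le\; -\frac{2}{n}\sum_{i=1}^n \ftil_i a_i \;-\;\frac{4}{n}\sum_{i=1}^n \upbeta_i a_i \;+\; 4\|\upbeta_{\fhatn}\|_{2,n}^2,
\end{align*}
and the last term is of constant order in general (it is exactly the non-identifiable component of $\fhatn-\fst$), so it cannot be absorbed at the target rate. Your second comparison yields $\frac{2}{n}\sum_i\upbeta_i(\ftil_i+a_i)\le\|\upbeta_{\fhatn}\|_{2,n}^2$, an \emph{upper} bound on $+\frac{2}{n}\sum_i\upbeta_i a_i$; but to control $-\frac{4}{n}\sum_i\upbeta_i a_i$ you need a \emph{lower} bound on that sum, and no nonnegative combination of your two inequalities cancels the $\upbeta$-terms. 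The fix --- and what the paper's \Cref{lem:reg_decomp} does --- is to choose the competitor $g_0$ with $(g_0-\gst)(\yvar_i)=(\ghatn-\gst+\upbeta_{\fhatn})(\yvar_i)$, i.e.\ translate $\ghatn$ by $+\upbeta_{\fhatn}$ rather than $-\upbeta_{\fhatn}$: the remainder then vanishes identically, the basic inequality becomes the clean $\|\ftil\|_{2,n}^2\le -\frac{2}{n}\sum_i\ftil_i\gtil_i+\frac{2\sigma}{n}\sum_i\xivar_i\ftil_i$, and no second comparison is needed. (In fairness, \Cref{asm:conditional_completeness} as literally stated gives $g-\upbeta_f\in\cG$, which conflicts with the sign the paper's own proof uses; but the direction of the translation matters, and the residual-cancellation mechanism you describe does not close.)
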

   \newcommand{\Gcentil}{\cG - \gst} 
    \newcommand{\Efirst}{\cE_{\text{(Prop.~\ref{prop:generic_sum_regret})}}} 
     \newcommand{\Esecond}{\cE_{\text{(Prop.~\ref{prop:main_reg})}}} 

     The last ingredient is the following lemma which upper bounds the cross-critical radius, and whose proof is deferred to \Cref{sec:Holder_ineq}. 
     \begin{lemma}[Generic Cross-Critical Radius Bound]\label{cor:dun_fun} For any class $\cG$, it holds that  
     \begin{align}
     \delcrossn(\Fcent;\cG)^2 \le \deldud(\Fcent,2B)^2 + 16\sup_{w_{1:n}}\Dudfun_{n,\infty}(\cG[w_{1:n}])^2.
     \end{align}
    \end{lemma}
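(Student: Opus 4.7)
The argument should follow the template of a standard critical-radius calculation, with the novel ingredient being the asymmetric H\"older-Dudley inequality (\Cref{prop:dudley_holder}) doing the heavy lifting. My plan is to first obtain a pointwise upper bound on $\Raden(\Fcent[r,w_{1:n}] \odot \cG[w_{1:n}])$ of the schematic form
\[
\Raden(\Fcent[r,w_{1:n}] \odot \cG[w_{1:n}]) \;\lesssim\; \rad_\infty(\Fcent[r,w_{1:n}]) \cdot \Dudfun_{n,2}(\Fcent[r,w_{1:n}]) \;+\; \rad_2(\Fcent[r,w_{1:n}]) \cdot \Dudfun_{n,\infty}(\cG[w_{1:n}]),
\]
and then tune $r$ to make the right-hand side $\le r^2/2$. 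The Hölder inequality for the Dudley integral should produce exactly this form by combining the pointwise split $|fg-f'g'| \le |g|\,|f-f'| + |f'|\,|g-g'|$ with a chaining argument that covers $\Fcent$ in $\ell_2$ and $\cG$ in $\ell_\infty$; the asymmetric choice of norms is what allows $\cG$ to enter only through an $\ell_\infty$-Dudley integral (and permits $\cG$ to be unbounded in $\ell_\infty$ as long as that integral is finite).

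Having this in hand, I would plug in the two radii that come for free: by \Cref{asm:bounded} applied to $\cF$ together with the definition $\Fcent = \{f - \fst - \upbeta_f\}$, one has $\rad_\infty(\Fcent[r,w_{1:n}]) \le 2B$ (since $|f|,|\fst|\le B$ and $|\upbeta_f|\le 2B$, losing at most a harmless numerical factor that I would absorb into the Hölder constants), and from the definition of empirical localization, $\rad_2(\Fcent[r,w_{1:n}]) \le r$. This yields, up to universal constants,
\[
\sup_{w_{1:n}} \Raden(\Fcent[r,w_{1:n}] \odot \cG[w_{1:n}]) \;\le\; 2B \cdot \sup_{w_{1:n}} \Dudfun_{n,2}(\Fcent[r,w_{1:n}]) \;+\; r \cdot \sup_{w_{1:n}} \Dudfun_{n,\infty}(\cG[w_{1:n}]).
\]

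The final step is a decoupling. I would separately require each summand to be at most $r^2/4$. The first condition, $2B \cdot \sup_{w_{1:n}} \Dudfun_{n,2}(\Fcent[r,w_{1:n}]) \le r^2/4$, holds by definition as soon as $r \ge \deldud(\Fcent,2B)$. The second, $r \cdot \sup_{w_{1:n}}\Dudfun_{n,\infty}(\cG[w_{1:n}]) \le r^2/4$, is just the linear inequality $r \ge 4\sup_{w_{1:n}}\Dudfun_{n,\infty}(\cG[w_{1:n}])$. Taking $r$ to be the maximum of these two thresholds gives $\Raden(\Fcent[r,w_{1:n}] \odot \cG[w_{1:n}]) \le r^2/2$ for all $w_{1:n}$, which by the definition of $\delcrossn$ implies $\delcrossn(\Fcent;\cG) \le r$. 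Squaring and using $\max(a,b)^2 \le a^2 + b^2$ produces the claimed estimate
\[
\delcrossn(\Fcent;\cG)^2 \;\le\; \deldud(\Fcent,2B)^2 \;+\; 16\,\sup_{w_{1:n}} \Dudfun_{n,\infty}(\cG[w_{1:n}])^2.
\]

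The main obstacle, and the only nontrivial step, is the H\"older-Dudley inequality itself. Unlike the usual chaining bound, here one must chain in two \emph{different} metrics simultaneously and verify that the cross-term integrals, after a change of variables, reassemble into $\Dudfun_{n,2}(\Fcent)$ and $\Dudfun_{n,\infty}(\cG)$ without leaving residual factors of $\rad_\infty(\cG)$ (which would be fatal, since $\cG$ is only assumed to have a finite $\ell_\infty$-Dudley integral, not a finite $\ell_\infty$-radius). I would expect the proof of \Cref{prop:dudley_holder} to absorb any stray $\rad_\infty(\cG)$ into $\Dudfun_{n,\infty}(\cG)$ via the trivial bound $\rad_\infty(\cG) \le \tfrac{1}{2}\Dudfun_{n,\infty}(\cG)$ that follows from evaluating the Dudley functional at $\updelta = \rad_\infty(\cG)$. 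Everything downstream of \Cref{prop:dudley_holder} is the routine critical-radius arithmetic outlined above.
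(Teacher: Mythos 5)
Your proposal follows the paper's proof of this lemma essentially verbatim: apply \Cref{prop:dudley_holder} with $(p,q)=(2,\infty)$, bound the two radius factors, and run the standard critical-radius arithmetic, splitting $r^2/2$ into two conditions of $r^2/4$ each. The structure and the final bound are right.

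Two concrete errors in the justification, though neither is fatal here. First, you misquote the H\"older--Dudley inequality: \Cref{prop:dudley_holder} pairs each Dudley integral with the radius of the \emph{other} set, so the factor multiplying $\Dudfun_{n,2}(\Fcent[r,w_{1:n}])$ is $\rad_\infty(\cG[w_{1:n}])$, not $\rad_\infty(\Fcent[r,w_{1:n}])$ --- this is forced by the pointwise split $|fg-f'g'|\le |g|\,|f-f'|+|f'|\,|g-g'|$ you yourself invoke. In this application both radii are $O(B)$ (by \Cref{asm:bounded}, cf.\ \Cref{lem:cent_bound}), so the downstream arithmetic survives, but the inequality as you state it is not the one the proposition proves. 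Second, and relatedly, your fallback for $\ell_\infty$-unbounded $\cG$ --- the claim $\rad_\infty(\cG)\le\tfrac12\Dudfun_{n,\infty}(\cG)$ --- is false: taking $\updelta=\rad_\infty(\cG)$ in the infimum defining the Dudley functional yields the \emph{reverse} inequality $\Dudfun_{n,\infty}(\cG)\le 2\rad_\infty(\cG)$ (a singleton has zero Dudley functional but positive radius). The lemma genuinely uses uniform boundedness of $\cG$ to absorb $\rad_\infty(\cG)$ into the constant $B$; it does not extend by this route to classes with finite $\ell_\infty$-Dudley integral but infinite $\ell_\infty$-radius. A last, minor point of bookkeeping: $2B\cdot\sup_{w_{1:n}}\Dudfun_{n,2}(\Fcent[r,w_{1:n}])\le r^2/4$ is the defining condition for $r\ge\deldud(\Fcent,4B)$, not $\deldud(\Fcent,2B)$; the paper uses $\rad_\infty(\cG)\le B$, which is what makes the argument of $\deldud$ equal to $2B$, and any residual factor of $2$ can be moved across via \Cref{lem:dudley_star_prop}.
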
 
    We now formally conclude the proofs of \Cref{thm:main_guarantee_f,thm:main_guarante}. In what follows, let $\Efirst$ denote the event of \Cref{prop:generic_sum_regret}, and $\Esecond$ the event of \Cref{prop:main_reg}.

   \begin{proof}[Proof of \Cref{thm:main_guarantee_f}]
   It suffices to show that on $\Efirst$ and $\Esecond$, the conclude of the theorem holds.
   Upper bounding Rademacher and Gaussian critical radii by the Dudley radius, and using $\sigB = \max\{\sigma,B\}$, we have that on $\Efirst $,
   \begin{align}
   \Rtrain[\ghatn;\fhatn]  \lesssim \Rtrain(\fhatn,\ghatn) \le \deldud(\sigB,\Hcent)^2 + \frac{\sigB^2\log(1/\delta)}{n}, \label{eq:Efirst} 
   \end{align}
   Hence, if 
   \begin{align}
   \deldud(\sigB,\Hcent)^2 + \sigB^2\log(1/\delta)/n \le c_1 \gamma \label{eq:cgam_cond}
   \end{align}
    for a small enough $c_1 >0$, then, on $\Esecond$,
   \begin{align*}
   \Rzero[\fhatn] &\lesssim \delcrossn(\Fcent;\Gcent(\gamma))^2+ \deldud(\Fcent,\sigB)^2 + \frac{\sigB^2\log\frac{1}{\delta}}{n}\\
   &\le \delcrossn(\Fcent;\Gcent)^2+ \deldud(\Fcent,\sigB)^2 + \frac{\sigB^2\log \frac{1}{\delta}}{n}.
   \end{align*} 
   The key step is to now apply \Cref{cor:dun_fun}, stated above, to upper bound  the cross critical radius:
    \begin{align*}\delcrossn(\Fcent;\Gcent)^2 \le \deldud(\Fcent,2B)^2 + 16\sup_{w_{1:n}}\Dudfun_{n,\infty}(\Gcent[w_{1:n}])^2. 
    \end{align*}  
    By \Cref{lem:dudley_star_prop} and the bound $B \le \sigB$, 
    \begin{align*}\deldud(\Fcent,2B)^2 \lesssim \deldud(\Fcent,B)^2 \le \deldud(\Fcent,\sigB)^2.
    \end{align*} 
    Combining the previous three inequalities, if \Cref{eq:cgam_cond} is met and $\Efirst \cap \Esecond$ hold, then
     \begin{align*}
   \Rzero[\fhatn] &\lesssim \deldud(\Fcent,\sigB)^2 + \sup_{w_{1:n}}\Dudfun_{n,\infty}(\Gcent[w_{1:n}])^2 + \frac{\sigB^2\log \frac{1}{\delta}}{n}, \label{eq:Rzero_bound}
   \end{align*} 
   as needed.

   \end{proof}

   \begin{proof}[Proof of \Cref{thm:main_guarante}]
      \Cref{thm:main_guarante} follows readily. 
 From \Cref{cor:excess_decomp},
   \begin{align}
   \Rtest(f,g)  \lesssim \nu_{1}\Rtrain[f] + \nu_2\Rtrain(f,g)  
   \end{align}
   The result now follows from the inequalities \Cref{eq:Efirst} and \Cref{eq:Rzero_bound}, which hold on $\Efirst\cap\Esecond$ and if \Cref{eq:cgam_cond} is met. 
   \end{proof}

\subsection{Controlling Cross Critical Radius via a  H\"older-Inequality for Dudley's integral}\label{sec:Holder_ineq}
Recall \Cref{lem:Dudley_ub}, which restates the well-known fact that the Rademacher and Gaussian complexities of a function class can be upper bounded by Dudley functional defined in \Cref{defn:dudfunc}~\citep[c.f.,][Chapter 5]{dudley1967sizes, wainwright2019high}. We establish a H\"older style generalization of this upper bound. 
In what follows, given $p,q \in [2,\infty]$, we say $(p,q)$ are \emph{square H\"older conjugates} if $(p/2,q/2)$ are regular H\"older conjugates, i.e. $\frac{2}{p} + \frac{2}{q} = 1$. Examples include $(p,q) = (2,\infty)$, $(p,q) = (4,4)$, and $p,q = (\infty,2)$. If $v,u \in \R^n$ are two vectors, then H\"older's inequality implies that for any square H\"older conjugates $p,q$,
\iftoggle{arxiv}
{
\begin{align*}
\|v\odot u\|_{2,n} \le \|v\|_{2,p}\cdot\|u\|_{q,n}
\end{align*}
}
{
    $\|v\odot u\|_{2,n} \le \|v\|_{2,p}\cdot\|u\|_{q,n}$.
}
It may be tempting to generalize \Cref{lem:Dudley_ub} to product classes via
\begin{align}
\Raden(\bbV\odot \bbU) \lesssim \Dudfun_{n,p}(\bbV) \cdot \Dudfun_{n,q}(\bbU)\label{eq:naive_Holder_dudley},
\end{align}
where we recall $\bbV \odot \bbU := \{v \odot u, v \in \bbV, u \in \bbU\}$. Our key result is that \Cref{eq:naive_Holder_dudley} can be sharpened considerably. The following technical result is proved in \Cref{sec:prop:dudley_holder}. 
\begin{proposition}[Dudley Estimate for Hadamard Products (Sets)]\label{prop:dudley_holder} Let $p,q \in [2,\infty]$ satisfy $1/p + 1/q \le 1/2$, and (for simplicitly) suppose $\zero \in \bbV \cap \bbW$. Then,
\begin{align*}
\Raden(\bbV\odot \bbU) \le \rad_q(\bbU)\Dudfun_{n,p}(\bbV) + \rad_p(\bbV)\Dudfun_{n,q}(\bbU). 
\end{align*}
The same bound holds for $\Raden$ replaced by any process defined where the $\epsvar_i$ are $1$-subGaussian variables (e.g. Gaussian complexity $\Gauss_n$). 
\end{proposition}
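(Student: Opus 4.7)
My plan is to establish the bound via Dudley's chaining inequality applied to the subGaussian process $Z_{(v,u)} := \frac{1}{n}\sum_{i=1}^n \epsvar_i v_i u_i$ indexed by $(v,u) \in \bbV \times \bbU$, but with respect to a product pseudometric designed so that the chaining integral splits cleanly into a $\bbV$-piece scaled by $\rad_q(\bbU)$ and a $\bbU$-piece scaled by $\rad_p(\bbV)$. Since the $\epsvar_i$ are $1$-subGaussian, the increments of $Z$ satisfy $\|Z_{(v_1,u_1)} - Z_{(v_2,u_2)}\|_{\psi_2} \lesssim \tfrac{1}{\sqrt{n}}\|v_1 \odot u_1 - v_2 \odot u_2\|_{2,n}$, so the sub-Gaussian Dudley entropy integral applies against any pseudometric that majorizes $\|v_1 \odot u_1 - v_2 \odot u_2\|_{2,n}$.

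The crucial pointwise step is an elementary H\"older inequality for the normalized $\ell_2$ norm: since $p,q \in [2,\infty]$ with $1/p+1/q \le 1/2$, applying H\"older with conjugate exponents $p/2$ and $q/2$ (and absorbing any slack via the monotonicity $\|\cdot\|_{r,n} \le \|\cdot\|_{r',n}$ for $r\le r'$ of normalized norms) gives the pointwise bound $\|v\odot u\|_{2,n} \le \|v\|_{p,n}\|u\|_{q,n}$. Combining this with the telescoping identity $v_1\odot u_1 - v_2\odot u_2 = (v_1-v_2)\odot u_1 + v_2 \odot (u_1-u_2)$, together with the radius bounds $\|u_1\|_{q,n} \le \rad_q(\bbU)$ and $\|v_2\|_{p,n}\le \rad_p(\bbV)$ (which use $\zero \in \bbV \cap \bbU$), I obtain the majorizing product pseudometric
\[
d\bigl((v_1,u_1),(v_2,u_2)\bigr) := \rad_q(\bbU)\|v_1-v_2\|_{p,n} + \rad_p(\bbV)\|u_1-u_2\|_{q,n}.
\]

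Next, I observe that covering numbers factor through the product pseudometric: an $\epsilon$-cover of $\bbV\times\bbU$ in $d$ is obtained as the product of an $\epsilon/(2\rad_q(\bbU))$-cover of $\bbV$ in $\|\cdot\|_{p,n}$ and an $\epsilon/(2\rad_p(\bbV))$-cover of $\bbU$ in $\|\cdot\|_{q,n}$, so $\log\covnum$ is at most additive. Plugging this into Dudley's integral for the subGaussian process $Z$, splitting the additive entropy under the square root by $\sqrt{a+b}\le\sqrt{a}+\sqrt{b}$, and performing the change of variables $\epsilon'=\epsilon/(2\rad_q(\bbU))$ in the $\bbV$-piece (symmetrically for $\bbU$), the factors $\rad_q(\bbU)$ and $\rad_p(\bbV)$ fall out in front; the upper limits become $\rad_p(\bbV)$ and $\rad_q(\bbU)$ respectively, using that the $\|\cdot\|_{2,n}$-diameter of $\bbV\odot\bbU$ is at most $2\rad_p(\bbV)\rad_q(\bbU)$. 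This matches the stated bound.

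The main technical nuisance---not a deep obstacle, but requiring careful bookkeeping---is threading the $\inf_{\updelta}$ over truncation levels built into the definition of $\Dudfun$. The truncated form of Dudley arises by stopping the chaining at scale $\updelta$ and absorbing the residual deterministically; crucially, the two integrals produced by the split can be truncated at \emph{independent} levels $\updelta_1,\updelta_2$, each yielding the $(2\updelta + \tfrac{4}{\sqrt{n}}\int_{\updelta}^{\mathrm{rad}})$ pattern from $\Dudfun_{n,\cdot}$. The argument works verbatim for any mean-zero $1$-subGaussian variables $\epsvar_i$ (e.g.\ Gaussians), since only sub-Gaussianity of the increments of $Z$ is invoked.
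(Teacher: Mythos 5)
Your proof is correct in substance, but it takes a recognizably different route from the paper's. The paper runs an explicit \emph{double} chaining: it builds separate dyadic nets for $\bbV$ (in $\|\cdot\|_{p,n}$) and $\bbU$ (in $\|\cdot\|_{q,n}$), expands $v\odot u = \sum_{j,k}\Delta_j(v;\bbV)\odot\Delta_k(u;\bbU)$, controls each increment-product set $\bbW_{j,k}$ via the finite-class maximal inequality together with the H\"older bound $\rad_2(\bbW_{j,k})\le \veps_j(\bbV)\veps_k(\bbU)$, and then lets the geometric sum over one index produce the radius prefactor multiplying the other set's entropy integral. You instead linearize at the level of the metric: the telescoping identity $v_1\odot u_1 - v_2\odot u_2 = (v_1-v_2)\odot u_1 + v_2\odot(u_1-u_2)$ plus H\"older yields the product pseudometric $d = \rad_q(\bbU)\|\cdot\|_{p,n} + \rad_p(\bbV)\|\cdot\|_{q,n}$, after which a \emph{single} Dudley chain with additive entropies and $\sqrt{a+b}\le\sqrt{a}+\sqrt{b}$ does the rest. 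Your route is the cleaner, more modular presentation (it makes transparent why only $1$-subGaussianity of the increments is needed); the paper's route pays for its explicitness but hands you the independent truncation depths $j_1,k_1$ for free. That is the one point where your write-up is glossy rather than wrong: a literal single-parameter truncation of the $d$-chain at one level $\updelta$ couples the two component scales as $\updelta/(2\rad_q(\bbU))$ and $\updelta/(2\rad_p(\bbV))$, so to realize genuinely independent $\updelta_1,\updelta_2$ (as the $\inf$ in each $\Dudfun$ requires) you must refine the $\bbV$-cover and the $\bbU$-cover to different depths --- at which point you are effectively reconstructing the paper's two-chain bookkeeping. This is fixable and you flag it, so it is not a gap, but it deserves an explicit lemma (the paper's net-reduction step, which contributes exactly the $2\rad_q(\bbU)\updelta_1 + 2\rad_p(\bbV)\updelta_2$ residual) rather than a remark.
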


Notice that rather than having $\Dudfun_{n,p}(\bbV)$ and $\Dudfun_{n,q}(\bbU)$ multiply each other as in \Cref{eq:naive_Holder_dudley}, each is only multiplied by the (H\"older square conjugate) radius term. This is in general considerably sharper, as typically $\rad_p(\bbV) \ll \Dudfun_{n,p}(\bbV)$ unless $\bbV$ is exceedingly small. By taking $\bbU = \{(1,1,\dots,1) \in \R^n\}$, \Cref{prop:dudley_holder} implies the standard Dudley bound, \Cref{lem:Dudley_ub}, as a corollary (see \Cref{sec:lem:Dudley_ub}). We now use the above proposition to upper bound the cross-critical radius. 
\begin{proof}[Proof of \Cref{cor:dun_fun}] Recall the defintion of $\delcrossn$ (\Cref{defn:cross_crit_rad})
\iftoggle{arxiv}
{
\begin{align*}
\delcrossn(\Fcent;\cG) &:= \inf\left\{r: \sup_{w_{1:n}}\Raden(\Fcent[r,w_{1:n}] \odot \cG[w_{1:n}] )\le \frac{r^2}{2}\right\}.
\end{align*}
}
{
}
By {\Cref{prop:dudley_holder}} with $(p,q) = (2,\infty)$,
\begin{align*}
\Raden(\Fcent[r,w_{1:n}] \odot \cG[w_{1:n}] ) &\le \rad_2(\Fcent[r,w_{1:n}])\Dudfun_{n,\infty}(\cG[w_{1:n}]) + \rad_{\infty}(\cG)\Dudfun_{n,2}(\Fcent[r,w_{1:n}])  \\
&\le B\Dudfun_{n,2}(\Fcent[r,w_{1:n}]) +  r\Dudfun_{n,\infty}(\cG[w_{1:n}]),
\end{align*}
where we use that $\rad_2(\Fcent[r,w_{1:n}])\leq r$ by the definition of localization. In particular, if $r$ satisfies
\begin{align*}
\frac{r^2}{4} \ge B\sup_{w_{1:n}}\Dudfun_{n,2}(\Fcent[r,w_{1:n}]), \quad \frac{r}{4} \ge \sup_{w_{1:n}}\Dudfun_{n,\infty}(\cG[w_{1:n}]).
\end{align*}
then $\delcrossn(\Fcent;\cG) \le r$. Thus,
\begin{align*}
\delcrossn(\Fcent;\Gcent) &\le \inf\left\{r: \sup_{w_{1:n}}\Dudfun_{n,2}(\Fcent[r,w_{1:n}]) \le \frac{r^2}{4B} \right\} \vee 4\sup_{w_{1:n}}\Dudfun_{n,\infty}(\cG[w_{1:n}])
\end{align*}
The bound follows by squaring.
\end{proof}
\begin{remark}
Because we consider the $\infty$-norm Dudley integral of $\Gcent$, it is hard to take advantage of localization of $\Gcent$ at $\Gcent(\gamma)$ in the $\cL_2$ norm. The absence of localization leads to the suboptimal dependence on leading constants compared to what is obtained through Double ML \cite{foster2019orthogonal}. \Cref{sec:finite_function_classes} shows that, for finite-function classes, one can take advantage of localization with strong hypercontractivity assumptions.
\end{remark}

\section{Experiments}
\begin{figure*}[t]
    \centering
    \includegraphics[width=\textwidth]{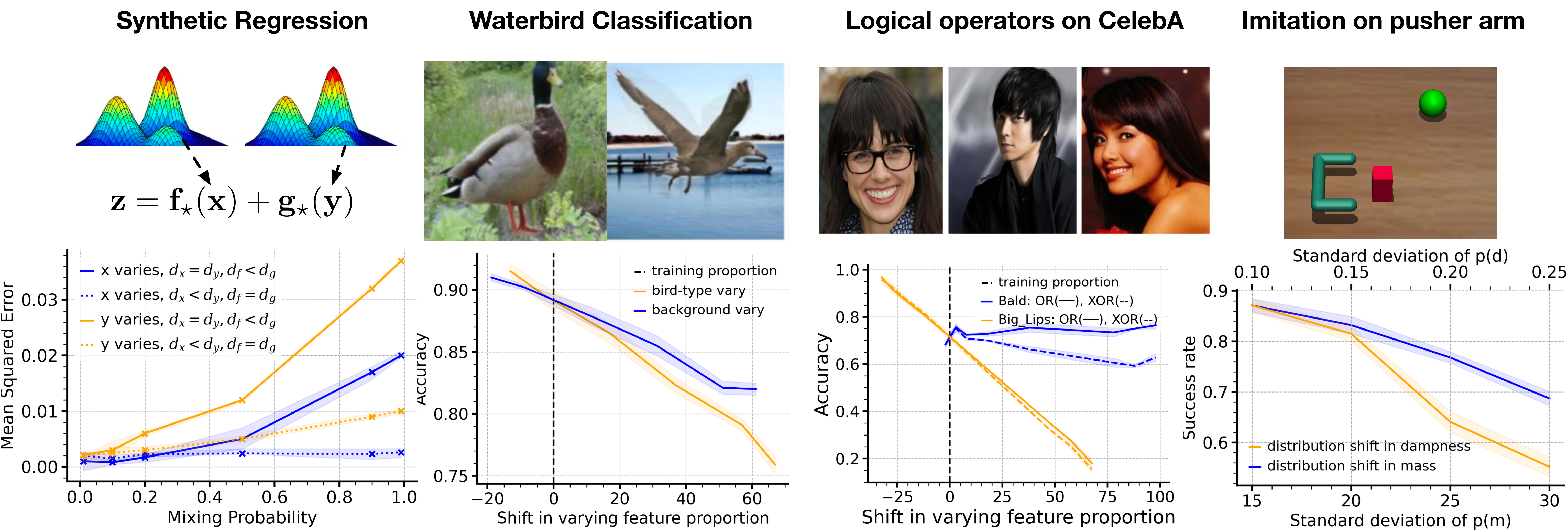}
    \iftoggle{arxiv}{}{\vspace{-1em}}
    \caption{\textbf{Testing resiliency of learned predictive models}. We calculate the performance of learned models as we shift the distribution of either \textit{complex} features (orange color) or \textit{simple} features (blue color). To shift distribution of $\{\bx,\by\}$ in synthetic regression, we vary the mixing probabilities of their distributions. To shift distribution of a feature in Waterbirds and CelebA datsets, we vary its proportion. To shift distribution of a feature in robotic pusher arm control, we increase standard deviation of its sampling distribution. Corroborating our theoretical expectations, we show that these models are more robust to distribution shift in \textit{simple} features.
    }
    \label{fig:comb_fig}
    \iftoggle{arxiv}{}{\vspace{-1em}}
\end{figure*}

In this section we present experiments to validate our
theoretical findings and demonstrate how the conceptual
takeaways---that predictive models are more resilient to distribution
shifts in simple features---applies to a broad range of practical
settings. All of our experiments have a similar form: we (a)
identify simple and complex features and justify these choices and (b) measure how the performance of a predictive model changes with
distribution shifts in these features. We experiment with neural network models on tasks ranging from synthetic
regression problems 
to computer vision benchmarks 
to imitation learning in a robotics
simulator. 
We take the following operational definition of simplicity: 
\begin{quote}\emph{Feature 1 is simpler than feature 2 if the generalization error, without distribution shift, on a predictive task involving feature 1 is smaller than that for an analogous task involving feature 2.}
\end{quote}
 We believe that this is the correct empirical correlate for the complexity measures adopted by our theoretical results. 
  Across domains, we consistently find that
\emph{
predictive models are more resilient to shifts in simpler features, thus defined. } 
%
We now summarize the experimental results, deferring
details to \Cref{app:exp}. In all experiments, we report average
performance and standard error across $4$ replicates.

\icmlpar{Synthetic Regression with Additive Structure.} To closely mirror our theory, we predict $\bz = f_{\star}(\bx) + g_{\star}(\by)$ from an input $(\bx,\by)$, where $\fst:\mathbb{R}^{d_x}\rightarrow\mathbb{R
}$ and $\gst:\mathbb{R}^{d_y}\rightarrow\mathbb{R}$ are randomly initialized $2$-layered multi-layer perceptions (MLP) having hidden dimensions $d_f$ and $d_g$, respectively. We sample $\bx$ and $\by$, respectively, independently from Gaussian mixtures  $p_x = \mathcal{N}(\mathbf{1}_{d_x}, I) + (1-p_x)\mathcal{N}(-\mathbf{1}_{d_x}, \bI_{d_x})$ and $p_y =\mathcal{N}(\mathbf{1}_{d_y}, \bI_{d_y}) + (1-p_y)\mathcal{N}(-\mathbf{1}_{d_y}, \bI_{d_y})$ where $p_x,p_y$ are mixing probabilties, and $\mathbf{1}_{d} \in \R^d$ and $\bI \in \R^{d\times d}$ are the all ones vector and identity matrix. To make $\bx$ simpler, we either have $d_x < d_y$ or $d_f < d_g$: \Cref{sub_app:regression} shows that the auxiliary task of predicting $f_{\star}(\bx)$ has lower generalization error than that of predicting $g_{\star}(\by)$. We train a predictor $\hat{z}=f_\theta(x)+g_\theta(y)$ to minimize mean-square error (MSE) under a training distribution with $p_x = p_y = .01$. We then measure the MSE on shifted distributions, where we hold the mixing probability of one of $\{\bx,\by\}$ fixed, and vary the other in the range $\{0.1, 0.2, 0.5, 0.9, 0.99\}$. Corroborating our theory, MSE declines less with shift in $p_x$ than with shift in $p_y$ (\Cref{fig:comb_fig}). \Cref{sub_app:regression} shows similar results with predictor $\hat{z}=h_\theta(x,y)$ (using concatenated features $(\bx, \by)$) and contains further implementation details. 
\newcommand{\birdtype}{$\mathtt{birdtype}$}
\newcommand{\background}{$\mathtt{background}$}

\icmlpar{Waterbird \& Functional Map of World datasets. } We next test our hypothesis on the paradigmatic Waterbird dataset \citep{sagawa2019distributionally}. The predictive task is to classify images of birds as waterbirds or landbirds, against a background of either land or water. We consider \birdtype{} (resp.  \background)  as the complex (resp. simple) feature. This choice is both intuitive and consistent with our  definition of simplicity: \Cref{sub_app:waterbird} shows that, in the absense of distribution shift, the auxiliary task of predicting \background{} has lower generalization error than that of predicting \birdtype. We then test (\Cref{fig:comb_fig}) the prediction accuracy of \birdtype{} under shifts in the proportions of \background{} and \birdtype{}, finding prediction accuracy degrades less for the former than the latter. See \Cref{sub_app:waterbird} for details. \Cref{sub_app:fmow} applies the same methodology to the Functional Map of World (FMoW) dataset~\citep{koh2021wilds}, with similar findings.

\newcommand{\ORop}{\mathsf{OR}}
\newcommand{\XORop}{\mathsf{XOR}}
\newcommand{\bald}{\mathtt{bald}}
\newcommand{\paleskin}{\mathtt{pale}\_\mathtt{skin}}
\newcommand{\biglips}{\mathtt{big\_lips}}
\newcommand{\narroweyes}{\mathtt{narrow\_eyes}}

\icmlpar{Logical operators on CelebA dataset.}
The CelebA dataset \citep{liu2015deep} consists of celebrity faces labeled with the presence or absence of $40$ different  attributes (e.g., baldness, mustache). Here we re-purpose CelebA to learn logical operators  $\ORop$ and $\XORop$ for two attributes. We first train and test a multi-head binary classifier that detects presence of $40$ different attributes, with one head per attribute, on images from the CelebA ``standard training set'' (CelebA-STS). We select $\bald$ as ``simple'' due to its low generalization error and $\biglips$ as ``complex'' due to its larger generalization error.  In \Cref{fig:comb_fig}, we predict targets $f_{\ORop} = \bald~\ORop~\biglips$ and $f_{\XORop} = \bald ~\XORop~\biglips$, training on CelebA-STS, but testing on distributions where the proportions of $\bald$ and $\biglips$ are varied (\Cref{sub_app:celeba}). Results show greater resilience to shift in the simpler feature $\bald$ than the complex feature $\biglips$. Further details are deferred to \Cref{sub_app:celeba}, where we perform the same experiment for $(\paleskin,\narroweyes)$ with similar findings.

\newcommand{\mass}{\mathtt{mass}}
\newcommand{\dampness}{\mathtt{dampness}}

\icmlpar{Imitation Learning for Pusher Control.} In Pusher Control simulator, we learn an agent that controls a robotic arm to push an object to its goal location. We fix the goal and starting object locations across episodes. We vary object $\mass$ $m$ ($m \sim \mathcal{N}(60,15)$) and joint $\dampness$ $d$ ($d \sim \mathcal{N}(0.5, 0.1)$) of the robot. The agent observes  $\{m,d\}$ at beginning of every episode and aims to learn an optimal policy condition on these variables. To determine the simpler feature, we measure generalization error on the auxiliary task of predicting next-step dynamics where one of $\{m,d\}$ is held fixed and the other is drawn from a distribution that is fixed across training and testing. This methodology ascribes $m$ as the simpler feature and $d$ as complex (\Cref{sub_app:robotic}). We train a policy $\pi_\theta(a|s,m,d)$ using $1000$ expert trajectories where each trajectory contains a new $m$ and $d$ sampled from $\mathcal{N}(60,15)$ and $\mathcal{N}(0.5,0.1)$ 
respectively. We then shift distribution of $m$ and $d$ by increasing their standard deviation, one at a time while keeping the distribution of the other factor fixed, and test policy $\pi_\theta(a|s,m,d)$. We show the results in \Cref{fig:comb_fig} and observe that the success rate of $\pi_\theta(a|s,m,d)$ deteriorates less when we shift distribution of $m$ while keeping distribution of $d$ fixed. Thus, we show that the policy is more resilient to distribution shift in the simpler feature. \Cref{sub_app:robotic} contains further details, including the precise distributions used to determine $m$ as the simpler feature.

\section{Discussion}
This paper sheds new light on the issue of spurious correlation
that arises when considering out of distribution generalization. We
discover that predictive models are more resilient to distribution
shift in simpler features, which we capture via notions of statistical
capacity in our experiments and via generalization when predicting the
feature itself in our experiments. We find that, in most of our
experiments, this latter operational notion is predictive of how deep
learning models behave under heterogeneous distribution shift.
We hope that our work inspires future efforts
toward a fine-grained theoretical and experimental understanding of
distribution shift in modern machine learning.




\section*{Acknowledgements}
MS acknowledges support from Amazon.com Services LLC grant; PO\# 2D-06310236.
AA and PA acknowledge supported from a DARPA Machine Common Sense grant, a MURI grant from the Army Research Office under the Cooperative Agreement Number W911NF-21-1-0097, and an MIT-IBM grant. The authors thank Adam Block for his assistance in navigating the relevant learning theory literature. 
\newpage
\bibliographystyle{plainnat}
\bibliography{refs}
\newpage 
\tableofcontents
\appendix
\newcommand{\lawP}{\mathsf{P}}
\newcommand{\lawQ}{\mathsf{Q}}

\newcommand{\dP}{\rmd \lawP}
\newcommand{\dQ}{\rmd \lawQ}
\newcommand{\Dphi}[1][\phi]{\mathrm{D}_{#1}}
\section{Discussion of Assumptions}
\label{app:assumptions}
In this section we provide additional details about our main
assumptions, conditional completeness. As this is closely related to
orthogonal ML, we also provide some additional context and comparisons. 

\subsection{On Conditional Completeness}\label{app:on_conditiona_completeness}
Our results hinge on the conditional completeness assumption, where
$\cG$ is expressive enough to capture the conditional bias functions
$\beta_f$. To clarify when this assumption may hold, we discuss an
example.

\paragraph{Partially Linear Regression.}
First, let us consider a paradigmatic model in econometrics,
statistics, and causal inference. This model, known as the partial
linear regression (PLR)
model~\citep{chernozhukov2017double,robinson1988root}, specifies a
joint distribution over tuples $(\zvar,\xvar,\yvar)$ via the
structural equations:
\begin{align*}
\zvar &= \langle\xvar,\fst\rangle + h_1(\yvar) + \varepsilon \\
\xvar &= h_0(\yvar) + \tau \\
\yvar & \sim P,  \EE{\varepsilon \mid \xvar,\yvar} = 0, \EE{\tau \mid \yvar} = 0
\end{align*}
Here $\xvar$ belongs to a (finite dimensional) vector space, say
$\R^d$, and $\fst$ is a linear function, so we use $\fst$ both to
describe the mapping and the vector itself. In the learning setting
for PLR, we are given access to function class $\cH_0,\cH_1$ such that
$h_0 \in \cH_0, h_1 \in \cH_1$, and we assume that $\fst$ has bounded
norm, say $\|\fst\| \leq B$.

This model is amenable to our techniques whenever $\cH_1$ consists of
linear projections of $\cH_0$, i.e.,
\begin{align*}
  \cH_0 := \{ \yvar \mapsto \langle v, h_1(\yvar) \rangle: v \in \R^d, h_1 \in \cH_1\}.
\end{align*}
Clearly we can apply ERM with $\cF$ as the linear class and $\cG =
\cH_0$. But we must verify that conditional completeness holds. This follows because, for any $f$, we have
\begin{align*}
  \beta_f(\yvar) := \EE{\langle f - \fst, \xvar\rangle \mid \yvar} = \langle f - \fst, \EE{\xvar\mid\yvar}\rangle = \langle f - \fst, h_0(\yvar)\rangle \in \cG.
\end{align*}
Thus, our results demonstrate favorable guarantees for estimating
$\fst$ via ERM in this setting.


\begin{remark}[Compatibility of conditional completeness and boundedness]\label{rem:boundedness_cc} If conditional-completeness were stipulated as a global condition, i.e. for all $(f,g) \in \cF \times \cG$, $g - \upbeta_f \in \cG$, then necessarily $(f,g - k \upbeta_f) \in \cF \times \cG$ for all $k \in \N$. Therefore, $\cG$ would not in general consist only of functions which are uniformly bounded (except in the special case where $\upbeta_f = 0$ for all $f \in \cF$). By imposing the restriction $\Rtrain(f,g) \le \gamma^2$, we avoid this pathology because, unles $\upbeta_f \equiv 0$,  $\lim_{k \to \infty}\Rtrain(f, g- k\upbeta_f) = \infty$.
\end{remark}

\subsection{Proof of \Cref{lem:lin_shift,lem:nu_x_indep}}\label{sec:lem:excess_decomp_lin}

\begin{proof}[Proof of \Cref{lem:nu_x_indep}]
Suppose that if $\bx \perp \by$ under $\Ptrain$. Then $\upbeta_f(y) =  \Exp[f(\bx) - \fst(\bx) \mid \by = y] = \Exp[f(\bx) - \fst(\bx)]$ is a constant in $y$. 
\begin{align*}
\nu_1  &:= \sup_{f \in \cF} \frac{\Etest[(f-\fst - \upbeta_f)^2]}{\Etrain[(f-\fst - \upbeta_f)^2]}\\
&\le \sup_{f \in \cF, c \in \R} \frac{\Etest[(f-\fst - c)^2]}{\Etrain[(f-\fst - c)^2]}\\
&\le \sup_{A \subset \cX}\frac{\Ptest[\bx \in A]}{\Ptrain[\bx \in A]} = \nu_x,
\end{align*}
as the functions $f-\fst - c$ are functions of $\bx$ alone.
\end{proof}
\begin{proof}[Proof of \Cref{lem:lin_shift}]
 By linearity (assumption (a) of \Cref{lem:lin_shift}, we can write $f(\xvar) = \langle v, \xvar \rangle$ and $\fst(\xvar) = \langle v^\star, \xvar \rangle$ for some $v,v^\star \in \mathbb{H}$. Then, setting $\upbeta_x(\yvar) = \Etrain[\xvar \mid \yvar]$,
\begin{align*}
&\Exp_{\envtest}[(f(\xvar) - \fst(\xvar) - \upbeta_{f}(\xvar))^2] = \Exp_{\envtest}[\langle v - \vst, \xvar - \upbeta_x(\yvar) \rangle^2]\\
&\le \nulin  \Exp_{\envtrain}[\langle v - \vst, \xvar-\upbeta_x(\yvar)\rangle^2 ] \tag{\Cref{lem:lin_shift}, assumption (c)}\\
&= \nulin  \Exp_{\envtrain}[( f(\xvar)-\fst(\xvar) - \underbrace{\Exp_{\envtrain}[f(\xvar) - \fst(\xvar) \mid \yvar]}_{=\upbeta_f(\yvar)})^2 ],
\end{align*}
as needed.
\end{proof}

\subsection{Beyond Uniform Density Bounds}\label{sec:beyond_uniform_density}
In this section, we consider a generalization of  \Cref{eq:nu1,eq:nu2} to allow for additive slack. We show that this allows for generalizations of \Cref{lem:excess_decomp,cor:excess_decomp} which accomodate density ratios which are possibly unbouned, or even take the value $\infty$ with positive probability (\Cref{sec:unbounded_den_rat}). Finally, we show that our guarantees imply guarantees when the $\chi^2$ divergence (or more generally, power divergence) are bounded (\Cref{sec:f_div_stuff}), thereby establishing the proof of \Cref{lem:chi_sq}. We begin by restating \Cref{cor:additive_error}.
\coradd*

Though seemingly simple, the accomodation of additive slack is deceptively flexible. Given probability laws $\lawP,\lawQ$ on a space $\cX = \{X \in \cX\}$, recall their Radon-Nikodym derivative (see, e.g. \citet[Chapter 2]{polyanskiy2022information}) $\dP(X)/\dQ(X)$ (which may take value $\infty$). 
In the language of Radon-Nikodym derivatives, the worst-case density ratios $\nu_{x,y}$ and $\nu_y$ in \Cref{def:nux} can be defined as 
\begin{align}
\nu_{x,y} := \sup_{\bx,\by} \frac{\rmd \Ptest(\bx,\by)}{\rmd \Ptrain(\bx,\by)}, \quad \nu_y := \sup_{\by} \frac{\rmd \Ptest(\by)}{\rmd \Ptrain(\by)}.
\end{align}
The following corollary, whose proof we give in \Cref{sec:deferred_change_of_measure}, shows that we can replace the dependence on the worst-case density ratios in \Cref{lem:excess_decomp} with a bound that depends only on the \emph{tails} of those density ratios:
\begin{corollary}\label{defn:cor_dens_ratio_tail} Recall the boundedness assumption \Cref{asm:bounded}, such that all of $|f|,|g|,|\fst|,|\gst|$ are uniformly at most $B$ in magntinude. For any functions $f \in \cF$ and $g \in \cG$, it holds that 
\begin{align*}
\Rtest(f,g) \le \inf_{t_1, t_2 > 0} 2(t_{1}\Rtrain[f] + t_2\Rtrain(f,g)+16B^2\Delta_{x,y}(t_1) + 16B^2\Delta_{y}(t_2)),
\end{align*}
where we define
\begin{align*}
\Delta_{x,y}(t) &:= \Pr_{(\bx,\by)\sim \Ptest}\left[\frac{\rmd \Ptest(\bx,\by)}{\rmd \Ptrain(\bx,\by)} > t\right], \quad \Delta_{y}(t) := \Pr_{\by \sim \Ptest}\left[\frac{\rmd \Ptest(\by)}{\rmd \Ptrain(\by)} > t\right].
\end{align*}
\end{corollary}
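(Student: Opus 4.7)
The plan is to reduce the corollary to an application of \Cref{cor:additive_error} by establishing, for each of the two relevant non-negative test functions, a ``truncated change of measure'' inequality whose multiplicative factor is $t_i$ and whose additive slack captures the tail mass of the density ratio.

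First I would record the uniform boundedness consequences of \Cref{asm:bounded}: since $|f|,|\fst| \le B$ and $\upbeta_f(\by) = \Etrain[(f-\fst)(\bx)\mid\by]$ is an average of quantities bounded by $2B$, the function $h_1(\bx,\by) := (f(\bx)-\fst(\bx)-\upbeta_f(\by))^2$ is a measurable function of $(\bx,\by)$ bounded above by $(4B)^2 = 16B^2$. Analogously, $h_2(\by) := (g(\by)-\gst(\by)+\upbeta_f(\by))^2$ is a function of $\by$ alone, again bounded by $16B^2$.

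Next I would prove the elementary truncation lemma: for any measurable $h\ge 0$ on a probability space, any $t>0$, and any two measures $\lawP,\lawQ$ with $\lawP$ absolutely continuous w.r.t.\ $\lawQ$,
\[
\int h\,\dP \le t\int h\,\dQ + \big(\sup h\big)\cdot \lawP\!\Big[\tfrac{\dP}{\dQ} > t\Big],
\]
by splitting the integral according to whether $\dP/\dQ \le t$ or $>t$; the first region contributes at most $t\int h\,\dQ$ and on the second region $\int \mathbf{1}\{\dP/\dQ > t\}\,\dP = \lawP[\dP/\dQ>t]$, which absorbs the uniform bound on $h$. I would apply this with $(\lawP,\lawQ) = (\Ptest,\Ptrain)$ for the joint distribution of $(\bx,\by)$ and $h=h_1$, yielding
\[
\Etest[h_1] \le t_1\,\Etrain[h_1] + 16B^2\,\Delta_{x,y}(t_1),
\]
and with the marginal laws of $\by$ and $h=h_2$, yielding
\[
\Etest[h_2] \le t_2\,\Etrain[h_2] + 16B^2\,\Delta_{y}(t_2).
\]
These are exactly the two hypotheses of \Cref{cor:additive_error} with $(\nu_1,\nu_2,\Delta_1,\Delta_2) = (t_1,t_2,16B^2\Delta_{x,y}(t_1),16B^2\Delta_y(t_2))$.

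I would then invoke \Cref{cor:additive_error} to conclude $\Rtest(f,g) \le 2(t_1\Rtrain[f] + t_2\Rtrain(f,g) + 16B^2\Delta_{x,y}(t_1) + 16B^2\Delta_y(t_2))$, and take the infimum over $t_1,t_2>0$ (valid since the left-hand side is independent of $t_1,t_2$) to obtain the stated bound. There is no genuine obstacle here; the only subtlety worth flagging is that the second truncation must be performed against the marginal of $\by$ rather than the joint, which works because $h_2$ depends only on $\by$, and it is precisely this reduction that lets the bound benefit from $\Delta_y$ being potentially much smaller than $\Delta_{x,y}$.
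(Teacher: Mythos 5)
Your proposal is correct and follows essentially the same route as the paper: the paper also first establishes the truncated change-of-measure inequality (its \Cref{lem:h_X_change_of_measure}, proved by splitting on the event $\{\rmd\lawP/\rmd\lawQ \le t\}$), applies it with $M = 16B^2$ to $h_1(\bx,\by) = (f-\fst-\upbeta_f)^2$ under the joint law and to $h_2(\by) = (g-\gst+\upbeta_f)^2$ under the $\by$-marginal, and then invokes \Cref{cor:additive_error}. No gaps.
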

\begin{remark} As in \Cref{sec:refined_measures}, the above bound can be refined in a function-class dependent fashion by replacing the density ratio tail-bounds in the definitions of $\Delta_{x,y}(t)$ and $\Delta_y(t)$ with the restriction of these density ratios to the $\upsigma$-algebra generated by the function classes $\{(\bx,\by) \mapsto f(\bx) - \fst(\bx) - \upbeta_f(\by): f \in \cF\}$ and $\{\by \mapsto g(\by) - \gst(\by) - \upbeta_f(\by):f\in \cF,g \in \cG\}$. When $\nu_1$ and $\nu_2$, as defined in \Cref{eq:nu1,eq:nu2}, are bounded, then taking $t_1 = \nu_1$ and $t_2 = \nu_2$ recovers the bounds obtained in \Cref{sec:refined_measures}.
\end{remark}

\subsubsection{Examples with unbounded density ratios.}\label{sec:unbounded_den_rat}
We give two simple examples demonstrating how the bound in
\Cref{defn:cor_dens_ratio_tail} can be finite even when $\nu_{x,y}$ and $\nu_y$ are not. Our first illustrative example shows that 
\Cref{defn:cor_dens_ratio_tail}  can be finite even if there are values of $\by$ for which the density ratios are infinte.
\begin{example}[Infinite Density Ratios] Consider a discrete setting where, for simplicity, $\bx = 1$ is deterministic, and $\by \in [n+1] = \{1,\dots,n+1\}$. Suppose that $\by$ under $\Ptrain$ is distributed uniformly on $[n]$, and uniformly on $[n+1]$ under $\Ptest$. For discrete distributions, density ratios are just ratios of probabilities:
\begin{align}
\frac{\Ptest(\bx,\by)}{ \Ptrain(\bx,\by)} = \frac{\rmd \Ptest(\by)}{\rmd \Ptrain(\by)} = \begin{cases} \frac{n}{n+1} & \by \in [n]\\
\infty & \by = [n+1].
\end{cases}
\end{align}
Thus, the density ratios are not uniformly bounded, and may indeed take the value $\infty$. Still, $\Pr_{\by \sim \Ptest}[\by = n+1] = \frac{1}{n}$, so \Cref{defn:cor_dens_ratio_tail} with $t_1 = t_2 = \frac{n}{n+1} \le 1$yields 
\begin{align}
\Rtest(f,g) \le  2(\Rtrain[f] + \Rtrain(f,g))+ \frac{64B^2}{n+1},
\end{align}
which is not only not infinite, but indeed decays with $n$.
\end{example}
Our second example shows the sample can happen even if the density ratios are finite, but unbounded.
\begin{example} Let $\gamma_1,\gamma_2 \in (0,1)$. Again, consider deterministic $\bx = 1$ and discrete $\by \in \Z_{\ge 0}$ with geometric distributions. $\Ptrain[\by = k] = (1-\gamma_1)\gamma_{1}^k, \quad  \Ptest[\by = k] = (1-\gamma_2)\gamma_{2}^k$. 
Then, 
\begin{align*}
\frac{\rmd \Ptest(\by= k)}{\rmd \Ptrain(\by = k)} = \frac{1-\gamma_2}{1-\gamma_1} \cdot \left(\frac{\gamma_2}{\gamma_1}\right)^k, \quad k \in \Z_{\ge 0} \label{eq:geom_ratio}
\end{align*}
When $\gamma_2 > \gamma_1$, $\sup_{\by} \frac{\rmd \Ptest(\by)}{\rmd \Ptrain(\by)} = \infty$. Still, \Cref{defn:cor_dens_ratio_tail} is non vacuous, yielding 
\begin{align*}
\Rtest(f,g) \le  \inf_{t > 0} 2t(\Rtrain[f] + \Rtrain(f,g))+ 64 B^2 \Ptest\left[ \frac{1-\gamma_2}{1-\gamma_1} \cdot \left(\frac{\gamma_2}{\gamma_1}\right)^{\by} > t \right].
\end{align*}
 With some algebra\footnote{The missing steps are as follows. We have $\Ptest\left[ \frac{1-\gamma_2}{1-\gamma_1} \cdot \left(\frac{\gamma_2}{\gamma_1}\right)^{\by} > t \right] = \Ptest\left[ \by > \frac{\log(\frac{1-\gamma_1}{1-\gamma_2}) + \log t}{\log \frac{\gamma_2}{\gamma_1}} \right]$, which can be bounded using the distribution of $\by$ under $\Ptest[\by > u] \le \Ptest[\by \ge u] = \Ptest[\by \ge \ceil{u}] \le (\gamma_2)^{u} = \exp(u \log \gamma_2)$.}, we can compute
\begin{align*}
\Rtest(f,g) &\le  \inf_{t > 0} 2t(\Rtrain[f] + \Rtrain(f,g))+ 64 B^2 \left(t\cdot \frac{1-\gamma_1}{1-\gamma_2}\right)^{-\alpha}, \quad \alpha := \frac{\log (1/\gamma_2)}{\log \gamma_2/\gamma_1} > 0
\end{align*}
An interesting feature of this example is that, even thought the density ratio in \Cref{eq:geom_ratio} appears to grow exponentially $k$, the tradeoff in $t$ is \emph{polynomial} due to the exponential decay of $\by \sim \Ptest$.
\end{example}

\subsubsection{Consequences for certain $f$-divergences, and proof of \Cref{lem:chi_sq}}\label{sec:f_div_stuff} We show that \Cref{defn:cor_dens_ratio_tail} can be instantiated for a subclass of $f$-divergences, which include the $\chi^2$-divergence (with arguments ordered appropriately) as a special case. To avoid confusion with our function class $f$, we shall replace $f$ with the functions $\phi: \R_{ > 0 } \to \R_{\ge 0}$. 

\begin{definition}[$\phi$-divergence, Chapter 2 in \cite{polyanskiy2022information}] Given measures $\lawP,\lawQ$ on the same probability space $\cX = \{X\}$, and $\phi: \R_{> 0} \to \R$, we define 
\begin{align*}
\Dphi(\lawQ,\lawP) := \int_{X} \phi\left(\frac{\rmd \lawQ(X)}{\rmd \lawP(X)}\right) \rmd \lawP(X)
\end{align*}
where $\rmd \lawQ(X)$ and $\rmd \lawP(X)$ denote the Radon-Nikodym derivatives of $\lawQ$ and $\lawP$, respectively, evaluated at $X \in \cX$.  By convention, it is typically required that $\phi$ is convex, $\phi(1) = 0$, and that $\phi(0)$ is defined via $\phi(0) = \lim_{t \to 0^+}\phi(t)$.
\end{definition}

Observe that if the function $\phi$ satisfies $\phi + M$ is non-negative for some $M \in \R$, Markov's inequality implies
\begin{align*}
\Pr_{X \sim \lawP}\left[\phi\left(\frac{\rmd \lawQ(X)}{\rmd \lawP(X)}\right) >  u\right] \le \frac{\Dphi(\lawQ,\lawP)}{u}, \quad u \ge M.
\end{align*}
And, if in addition $\phi(u)$ is strictly decreasing, $\left\{\frac{\rmd \lawP(X)}{\rmd \lawQ(X)} >  t\right\} = \left\{\frac{\rmd \lawQ(X)}{\rmd \lawP(X)} < \frac{1}{t}\right\} =\left\{\phi(\frac{\rmd \lawQ(X)}{\rmd \lawP(X)}) < \phi(\frac{1}{t})\right\}$. Thus, 
\begin{align}
\Pr_{X \sim \lawP}\left[\frac{\rmd \lawP(X)}{\rmd \lawQ(X)} >  t\right] \le \frac{\Dphi(\lawQ,\lawP)}{\phi(1/t)}, \quad \phi(1/t) \ge M.
\end{align}
Then, \Cref{defn:cor_dens_ratio_tail} directly implies the following consequence.
\begin{corollary}\label{cor:phi_div} Consider any non-negative and strictly decreasing functions $\phi_1,\phi_2:\R_{> 0} \to [-M,\infty)$; the other axioms of the $\phi$-divergence need not be met. Then, for any $t_1, t_2 > 0$ for which $\phi_1(1/t_1),\phi_2(1/t_2) \ge M$,
\begin{align*}
\Rtest(f,g) &\le 2(t_{1}\Rtrain[f] + t_2\Rtrain(f,g))\\
&\quad +32B^2\left(\frac{\Dphi[\phi_1](\Ptrain(\bx,\by),\Ptest(\bx,\by))}{\phi_1(1/t_1)} +\frac{\Dphi[\phi_2](\Ptrain(\by),\Ptest(\by))}{\phi_2(1/t_2)}\right),
\end{align*}
where $\Dphi[\phi_1](\Ptrain(\bx,\by),\Ptest(\bx,\by)$ denotes the $\phi_1$-divergence between the joint distribution of $(\bx,\by)$ under $\lawQ = \Ptrain$ and $\lawP = \Ptest$, and $\Dphi[\phi_2](\Ptrain(\by),\Ptest(\by))$ the $\phi_2$-divergence between the marginal of $\by$ under these measures. 
\end{corollary}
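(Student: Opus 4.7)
\textbf{Proof plan for \Cref{cor:phi_div}.} The statement is an essentially immediate consequence of \Cref{defn:cor_dens_ratio_tail} once the density-ratio tail probabilities $\Delta_{x,y}(t_1)$ and $\Delta_y(t_2)$ are bounded by the corresponding $\phi$-divergences. The plan is to first invoke \Cref{defn:cor_dens_ratio_tail} at the chosen $(t_1, t_2)$, which already yields
\begin{align*}
\Rtest(f,g) \;\le\; 2\bigl(t_{1}\Rtrain[f] + t_2\Rtrain(f,g)\bigr) + 32 B^{2}\bigl(\Delta_{x,y}(t_1) + \Delta_{y}(t_2)\bigr),
\end{align*}
and then to argue that $\Delta_{x,y}(t_1) \le D_{\phi_1}(\Ptrain(\bx,\by), \Ptest(\bx,\by))/\phi_1(1/t_1)$, and symmetrically for $\Delta_{y}(t_2)$, whence the final bound is immediate by addition.

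The key step is the tail bound on the density ratios, which is where strict monotonicity of $\phi_i$ enters. Viewing $\lawQ = \Ptrain$ and $\lawP = \Ptest$ (so that the $\phi$-divergence is an expectation under $\Ptest$, matching the measure appearing in $\Delta_{x,y}$), I would pass through
\begin{align*}
\left\{\tfrac{\rmd \Ptest}{\rmd \Ptrain} > t_1\right\} \;=\; \left\{\tfrac{\rmd \Ptrain}{\rmd \Ptest} < 1/t_1\right\} \;=\; \left\{\phi_1\!\bigl(\tfrac{\rmd \Ptrain}{\rmd \Ptest}\bigr) > \phi_1(1/t_1)\right\},
\end{align*}
using that $\phi_1$ is strictly decreasing to swap the inequality direction. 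Since $\phi_1$ takes values in $[-M,\infty)$, the random variable $\phi_1(\rmd\Ptrain/\rmd\Ptest) + M$ is non-negative under $\Ptest$, and the threshold $\phi_1(1/t_1) + M \ge 0$ by assumption, so Markov's inequality applies cleanly and yields the desired bound (matching the form stated in the preceding display of the paper). The same reasoning with $\phi_2$ and the marginals over $\by$ handles $\Delta_{y}(t_2)$.

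There is no real obstacle beyond keeping the roles of the two measures $\lawP, \lawQ$ straight: the tail probability we need is under $\Ptest$, but the density ratio in the event involves $\Ptest$ in the numerator, so one must set $\lawP = \Ptest$ and $\lawQ = \Ptrain$ in the $\phi$-divergence to make the Markov step go through directly. The condition $\phi_i(1/t_i) \ge M$ in the hypothesis is exactly what guarantees that the Markov denominator is strictly positive (after the implicit shift by $M$), so no further case analysis is needed. Assembling the two tail bounds and substituting into the conclusion of \Cref{defn:cor_dens_ratio_tail} gives the stated inequality.
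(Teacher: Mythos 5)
Your proposal is correct and follows essentially the same route as the paper: invoke \Cref{defn:cor_dens_ratio_tail} at $(t_1,t_2)$ and bound each tail probability $\Delta_{x,y}(t_1)$, $\Delta_y(t_2)$ by rewriting the event $\{\rmd\Ptest/\rmd\Ptrain > t\}$ as $\{\phi(\rmd\Ptrain/\rmd\Ptest) > \phi(1/t)\}$ via strict monotonicity and applying Markov's inequality with $\lawP=\Ptest$, $\lawQ=\Ptrain$. Your version of the event identity even has the inequality pointing the right way (the paper's display contains a sign typo), so no gaps here.
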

We show now describe two special cases of interest.
\begin{example}[Power Divergences]\label{exm:power_div} A special case is when $\phi_i(t) = \frac{1}{t_i^{\alpha_i}} - 1$, $\alpha_i > 0$,\footnote{Note that this ensures that $\phi_i(\cdot)$ is strictly decreasing, $\phi_i + 1 \ge 0$, as well as the $f$-divergence axioms $\phi_i(1) = 0$ and $\phi_i$ is convex.} then for any $t_1,t_2 \ge 1$,
\begin{align*}
\Rtest(f,g) &\le 2(t_{1}\Rtrain[f] + t_2\Rtrain(f,g))\\
&\quad +32B^2\left(\frac{\Dphi[\phi_1](\Ptrain(\bx,\by),\Ptest(\bx,\by))}{t_1^{\alpha_1}} +\frac{\Dphi[\phi_2](\Ptrain(\by),\Ptest(\by))}{t_2^{\alpha_2}}\right).
\end{align*}
\end{example}

We obtain \Cref{lem:chi_sq} as a special case:
\begin{proof}[Proof of \Cref{lem:chi_sq}] An archetypical example of the special case of power-divergences described above is $\phi_1(u) = \phi_2(u) = \frac{1}{u} -1$. Then, it can be shown that $\Dphi(\lawQ,\lawP) = \chi^2(\lawP,\lawQ)$, where $\chi^2(\cdot,\cdot)$ denotes the $\chi^2$ divergence (note the reversed order of the arguments).  Thus, specializing \Cref{exm:power_div} further yields that, for any $t_1,t_2 \ge 1$, 
\begin{align*}
\Rtest(f,g) &\le 2(t_{1}\Rtrain[f] + t_2\Rtrain(f,g))\\
&\quad +32B^2\left(\frac{\chi^2(\Ptest(\bx,\by),\Ptrain(\bx,\by))}{t_1} +\frac{\chi^2(\Ptest(\by),\Ptrain(\by))}{t_2}\right).
\end{align*}
When $\Rtrain[f],\Rtrain(f,g)$ are sufficiently small relative to the above $\chi^2(\cdot,\cdot)$ divergences, we can minimize over $t_1,t_2$ without the $t \ge 1$ constraint:
\begin{align*}
\Rtest(f,g) &\le 8B\sqrt{\Rtrain[f]\cdot \chi^2(\Ptest(\bx,\by),\Ptrain(\bx,\by))} + 8B \sqrt{\Rtrain(f,g) \cdot \chi^2(\Ptest(\by),\Ptrain(\by))}. 
\end{align*}
\end{proof}

\subsubsection{Proof of \Cref{defn:cor_dens_ratio_tail}}\label{sec:deferred_change_of_measure}
We begin with a standard change-of-measure bound.
\begin{lemma}[Change of Measure]\label{lem:h_X_change_of_measure} For any measurable, bounded function $h:\cX \to [0,M]$,
\begin{align*}
\Exp_{X \sim \lawP}[h(X)] \le \inf_{t > 0 } M\Pr_{X\sim \lawP}\left[\left\{\frac{\rmd \lawP(X)}{\rmd \lawQ(X)} > t\right\}\right] + t\Exp_{X \sim \lawQ}[h(X)].
\end{align*}
\end{lemma}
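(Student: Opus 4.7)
The plan is to split the expectation over $\lawP$ according to whether the Radon–Nikodym derivative $\frac{\rmd \lawP}{\rmd \lawQ}$ exceeds the threshold $t$ or not, and bound each piece separately. Fixing an arbitrary $t > 0$, I would write
\begin{align*}
\Exp_{X \sim \lawP}[h(X)] = \Exp_{X \sim \lawP}\!\left[h(X)\,\mathbf{1}\!\left\{\tfrac{\rmd \lawP(X)}{\rmd \lawQ(X)} > t\right\}\right] + \Exp_{X \sim \lawP}\!\left[h(X)\,\mathbf{1}\!\left\{\tfrac{\rmd \lawP(X)}{\rmd \lawQ(X)} \le t\right\}\right].
\end{align*}

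For the first (``high ratio'') term, since $0 \le h \le M$ pointwise, I can bound $h(X)$ by $M$ to obtain $\Exp_{X \sim \lawP}[h(X)\mathbf{1}\{\rmd \lawP/\rmd \lawQ > t\}] \le M\,\Pr_{X \sim \lawP}[\rmd \lawP(X)/\rmd \lawQ(X) > t]$, which is exactly the first term on the right-hand side of the claim. For the second (``low ratio'') term, I would pass from $\lawP$ to $\lawQ$ via the Radon–Nikodym derivative and use the event cutoff to bound the ratio by $t$:
\begin{align*}
\Exp_{X \sim \lawP}\!\left[h(X)\,\mathbf{1}\!\left\{\tfrac{\rmd \lawP(X)}{\rmd \lawQ(X)} \le t\right\}\right]
= \Exp_{X \sim \lawQ}\!\left[h(X)\,\tfrac{\rmd \lawP(X)}{\rmd \lawQ(X)}\,\mathbf{1}\!\left\{\tfrac{\rmd \lawP(X)}{\rmd \lawQ(X)} \le t\right\}\right]
\le t\,\Exp_{X \sim \lawQ}[h(X)],
\end{align*}
where the last step uses $h \ge 0$ to drop the indicator after bounding the ratio by $t$. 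Summing the two inequalities and taking the infimum over $t > 0$ yields the stated bound.

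I do not anticipate any real obstacle here; the result is a standard change-of-measure / importance-weighting split, and the only (trivial) measure-theoretic subtlety is to ensure that $\lawP \ll \lawQ$ on the event $\{\rmd \lawP/\rmd \lawQ \le t\}$ so that the Radon–Nikodym substitution in the second step is valid, which is automatic since that event is defined precisely where the derivative exists and is bounded. The proof will therefore be just a couple of lines long.
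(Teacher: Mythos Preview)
Your proposal is correct and follows essentially the same approach as the paper's proof: split the $\lawP$-expectation according to the event $\{\rmd\lawP/\rmd\lawQ \le t\}$, bound the ``high ratio'' piece by $M$ times its $\lawP$-probability, and change measure on the ``low ratio'' piece to get $t\,\Exp_{\lawQ}[h]$. The only cosmetic difference is that the paper first argues for a generic event $\cE$ and then specializes to $\cE = \{\rmd\lawP/\rmd\lawQ \le t\}$, whereas you fix this event from the outset.
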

\begin{proof}[Proof of \Cref{lem:h_X_change_of_measure}] Fix an event $\cE \subset \cX$, and suppose that $0 \le h(\cdot) \le M$. 
\begin{align*}
\Exp_{X \sim \lawP}[h(X)] &\le \Exp_{X \sim \lawP}[h(X)\I\{\cE\}] +\Exp_{X \sim \lawP}[h(X)\I\{\cE^c\}] \\
&\le \Exp_{X \sim \lawP}[h(X)\I\{\cE\}] +M\Pr[\cE^c] \\
&= M\Pr[\cE^c] + \int h(X)\I\{\cE\}\rmd\lawP(X)\\
&= M\Pr[\cE^c] + \int h(X)\I\{\cE\}\rmd\lawQ(X)\cdot \frac{\rmd \lawP(X)}{\rmd \lawQ(X)} \\
&\le M\Pr[\cE^c] +\left(\int h(X)\I\{\cE\}\rmd\lawQ(X)\right)\cdot \sup_{X \in \cE}\left(\frac{\rmd \lawP(X)}{\rmd \lawQ(X)}\right) \\
&\le M\Pr[\cE^c] + \Exp_{X \sim \lawQ}[h(X)]\cdot \sup_{X \in \cE}\left(\frac{\rmd \lawP(X)}{\rmd \lawQ(X)}\right).
\end{align*}
To conclulde, we take $\cE:=\{\frac{\rmd \lawP(X)}{\rmd \lawQ(X)} \le t\}$.
\end{proof}
\begin{proof}[Proof of \Cref{defn:cor_dens_ratio_tail}] 
We aim to establish the following:
\begin{align}
\Etest[(f-\fst - \upbeta_f)^2] &\le \nu_1 \Etrain[(f-\fst - \upbeta_f)^2] + \Delta_1 \label{eq:nu1_del}\\
\Etest[(g-\gst - \upbeta_f)^2] &\le \nu_2\Etrain[(g-\gst - \upbeta_f)^2] + \Delta_2 \label{eq:nu2_del},
\end{align}
In view of \Cref{cor:additive_error}, it suffices to check that for any $t_1,t_2 > 0$, \Cref{eq:nu1_del,eq:nu2_del} hold with $(\nu_1,\nu_2) \gets (t_1,t_2)$, and $(\Delta_1,\Delta_2) \gets (16B^2\Delta_{x,y}(t_1), 16B^2\Delta_{y}(t_2)))$. Consider the functions of the form $h_1(\bx,\by) = (f(\bx) - \fst(\bx) - \upbeta_f(\by))^2$ and $h_2(\by) = (g(\by) - \gst(\by) - \upbeta_f(\by))^2$. By \Cref{asm:bounded}, it holds that the image of both functions lies in $[0,16B^2]$. The result now follows by applying \Cref{lem:h_X_change_of_measure} with $M \gets 16B^2$ to the functions $h_1$ (resp $h_2$) with $t \gets t_1$ (resp. $t \gets t_2$).
\end{proof}

\subsection{Correspondence with Double ML}
As we have mentioned, our results have a similar flavor to those in
the literature on Neyman orthogonalization. In this section, we expand
on this comparison, following the treatment
in~\citep{foster2019orthogonal}. As terminology, we refer to these
idea broadly as orthogonal (statistical) learning.

The orthogonal learning setup describes a similar situation where a
pair $(\fst,\gst)$ is unknown, but weare primarily interested in
$\fst$, referring to $\gst$ as a nuisance function. For example, we
may have $\E{\zvar \mid \xvar,\yvar} = \fst(\xvar) + \gst(\yvar)$ as
in our setup, but orthogonal learning is more general in this
respect. Unlike our setting however, orthogonal learning requires some
auxiliary mechanism or data to learn $\ghat$ such that $\E{
  (\ghat(\yvar) - \gst(\yvar))^2} \lesssim \rate_n(\cG)$. Given this
initial estimate, orthogonal learning describes an algorithm and
conditions under which one can learn $\fhat$ satisfying
\begin{align*}
  \E{(\fhat(\xvar) - \fst(\xvar))^2} \lesssim \rate_n(\cF) + \rate_n(\cG)^2.
\end{align*}
Here $\rate_n(\cdot)$ should be thought of as the standard ``fast"
rate for learning with the function class, i.e., $\rate_n(\cF) \asymp
\frac{\log |\cF|}{n}$ when $|\cF| < \infty$. This guarantee naturally
leads to a distribution shift bound of the form:
\begin{align*}
  \Rtest(\hat{f},\hat{g}) \lesssim \nu_x\left(\rate_n(\cF) + \rate_n(\cG)^2\right) + \nu_y\rate_n(\cG).
\end{align*}
As with our bound, the complexity of the class $\cG$ does not interact
with distribution shifts on $\nu_x$ in a significant way. Indeed, the
error rate for $\hat{f}$ has a quadratic dependence on that for
$\hat{g}$, and this is typically lower order when considering
distribution shift settings.  Thus, at a conceptual level, orthogonal
learning can provide a similar robustness to heterogeneous distribution
shifts as our results.

Quantitatively, the bound should be compared with
our~\Cref{thm:main_guarante}. The general takeaway is that our bound
is worse in at least two respects. First, the quadratic dependence on
the rate for $\cG$ cannot exploit localization in our setup, so our
bound is weaker when $\cG$ is small. This is why we need
hypercontractivity conditions to obtain favorable rates when $\cG$ is
finite/parametric, which is not required for orthogonal
learning. Second, when considering distribution shift, we incur a
dependence on $\nu_{x,y}$ rather than just $\nu_x$. This arises from
the identifiability issues that are inherent in our setting, which can
be resolved in the orthogonal learning setting due to the auxiliary
mechanism for estimating $\gst$.

On the other hand, our results compare favorable at a qualitative
level. Most importantly, our bound applies to ERM directly while
orthogonal learning requires algorithmic modifications, which in turn
require more modeling of the data generating process. Additionally,
while we do not believe the assumptions are formally comparable, we
view ours as somewhat more practical. Specifically, it is rather
uncommon that auxiliary information for estimating the nuisance
parameter is available; yet in canonical settings for orthogonal
learning, we can show that conditional completeness holds. One such
example of the latter is the PLR model above.


\newcommand{\Term}{\mathrm{Term}}
\newcommand{\rquad}{r_{\mathrm{quad}}}

\section{Proof of Main Technical Results}\label{sec:proof_main_props}
This section provides the proofs of the most significant technical results in the paper. Specifically, \Cref{sec:lem:excess_decomp} give the proofs of the excess error decompositions, \Cref{lem:excess_decomp}. \Cref{sec:lem:excess_decomp_lin} establishes \Cref{lem:lin_shift,lem:nu_x_indep}, which refine upper bounds bounds on the distribution-shift term $\nu_1$.  Next, we prove \Cref{prop:generic_sum_regret} which upper bounds $\Rtrain[f,g]$, and thus, in view of \Cref{lem:excess_decomp}, $\Rtrain[g;f]$.  \Cref{sec:prop:main_reg} gives the proof of \Cref{prop:main_reg} which provides refined control of $\Rtrain[f]$; the key step is an (empirical) excess-risk decomposition, \Cref{lem:reg_decomp}, which we prove in \Cref{sec:prop_main_reg_supporting}. Finally, \Cref{sec:prop:dudley_holder} establishes our H\"older inequality for Rademacher complexities of Hadamard product classes (\Cref{prop:dudley_holder}). Lastly, \Cref{sec:lem:Dudley_ub} derives the standard Dudley integral bound from the aforementioned proposition for product classes.

\subsection{Proof of \Cref{lem:excess_decomp}}\label{sec:lem:excess_decomp} Write $\upbeta = \upbeta_f$ for simplicity.
For any environment $\env$ and any triplet  $(f,g,\upbeta)$, the polarization identity yields
\begin{align*}
\Renv(f,g) &= \Exp_{\env}[(f+g - \fst - \gst)^2]\\
&= \Exp_{\env}[(f - \fst - \upbeta + g - \gst + \upbeta)^2]\\
&= \Exp_{\env}[(f - \fst - \upbeta)^2] + \Exp_{\env}[(g - \gst + \upbeta)^2] + 2\Exp_{\env}[(f-\fst - \upbeta)(g-\gst + \upbeta)]\\
&= \Renv[f] + \Renv[g;f] + 2\Exp_{\env}[(f-\fst - \upbeta)(g-\gst + \upbeta)].
\end{align*}
For any $\env$ (in particular, $\env = \envtest$), we have 
\begin{align*}
\Exp_{\env}[(f-\fst - \upbeta)(g-\gst + \upbeta)] &\le \Exp_{\env}[(f - \fst - \upbeta)^2]^{1/2} \cdot \Exp_{\env}[(g - \gst + \upbeta)^2]^{1/2} \\
&\le \Exp_{\env}[(f - \fst - \upbeta)^2] + \Exp_{\env}[(g - \gst + \upbeta)^2] = \Renv[f] + \Renv[g;f].
\end{align*}
Hence,
\begin{align*}
\Renv(f,g) \le  2(\Renv[f] + \Renv[g;f]).
\end{align*}
When $\env = \envtrain$, the fact that $\upbeta(\yvar) = \Ezero[(f-\fst)(\xvar) \mid \yvar]$ implies
\begin{align*}
\Etrain[(f-\fst - \upbeta)(g-\gst + \upbeta)] &= \Etrain [((f-\fst)(\xvar) - \upbeta(\yvar))\cdot(g-\gst + \upbeta)(\yvar) \mid \yvar]\\
&= \Etrain [ (\Etrain[(f-\fst)(\xvar) \mid \yvar] - \upbeta(\yvar))\cdot (g-\gst + \upbeta)(\yvar)] = 0.
\end{align*}
Thus,
\begin{align*}
\Rtrain(f,g) = \Rtrain[f] + \Rtrain[g;f].
\end{align*}

This proves the first two parts of the lemma. For the last part, we have
\begin{align*}
\Rtest(f,g) &\le 2\Rtest[f] + 2\Rtest[g;f]\tag{\Cref{lem:excess_decomp}} \\
&\overset{(i)}{\le} 2\nu_{x,y}\Rtrain[f] + 2\nu_y\Rtrain[g;f]  \numberthis \label{eq:nuxy_appear}\\
&\overset{(ii)}{\le} 2\nu_{x,y}\Rtrain[f] + 2\nu_y\Rtrain(f,g),
\end{align*}
where $(i)$ invokes \Cref{def:nux}, and
where in $(ii)$, $\Rtrain[f] \ge 0$ and $\Rtrain(f,g) = \Rtrain[f] + \Rtrain[g;f]$ implies $\Rtrain[g;f] \le \Rtrain(f,g)$.
\qed

\subsection{Proof of \Cref{prop:generic_sum_regret}}\label{sec:prop:generic_sum_regret}
\newcommand{\hhatn}{\hat{h}_n}
This section proves \Cref{prop:generic_sum_regret}, which we use to upper bound $\Rtrain[f,g]$, and thus, by way of \Cref{lem:excess_decomp}, $\Rtrain[g;f]$.
We begin by establishing the following more or less standard guarantee (see, e.g. \cite{liang2015learning}) for a generic function class $\cH$. which controls the so-called ``basic inequality'' in square-loss learning (see, e.g.  \citet[Chapter 13]{wainwright2019high}). 
\begin{lemma}\label{lem:basic_ineq_lb} Let $\cH$ be a functions from $\cW \to [-B,B]$ containing the zero function $h_0(w) \equiv 0$.  Fix $\sigma > 0$, $\tau \ge 1$. Then, for any probability measure $P$, $\bw_1,\dots,\bw_n \iidsim \Pr$ and i.i.d. standard normal random variables $\bxi_1\dots,\bxi_n$,  the following holds with probability $1 - \delta$
\begin{align*}
\sup_{h \in \cH} \frac{1}{4}\|h\|_{\cL_2(P)}^2 - \|h(\bw_{1:n})\|_{2,n}^2 +   \frac{2\tau\sigma}{n} \sum_{i=1}^n \xivar_i h(\bw_i)   \lesssim  \gamma_{n,\delta,\sigma}(\cH)^2,
\end{align*}
where $C > 0$ is a universal constant and 
\begin{align*}
\gamma_{n,\sigma,\tau}(\cH,\delta)^2  = \delnst(\cH,B)^2 +  \tau^2\rhonst( \cH,\sigma)^2 +  \frac{(\tau^2\sigma^2 + B^2)\log(1/\delta)}{n}. 
\end{align*}
\end{lemma}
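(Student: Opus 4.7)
The plan is to establish two offset-style uniform inequalities---one comparing empirical and population squared $\cL_2$ norms, and one bounding the Gaussian perturbation by the empirical squared norm---and then combine them algebraically. Both are proved by dyadic peeling over shells defined via the critical radii; for convenience I will pass to the star hull of $\cH$ if needed, which preserves $\delnst$ and $\rhonst$ up to universal constants and grants the monotonicity $r \mapsto \Raden(\cH(r))/r$ (and analogously for $\Gauss_n$) that peeling exploits.

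The first ingredient is a one-sided empirical-vs-population squared-norm inequality: on an event $\cE_1$ of probability $\ge 1-\delta/2$, every $h \in \cH$ satisfies
\begin{align*}
\|h(\bw_{1:n})\|_{2,n}^2 \ge \tfrac{1}{2}\|h\|_{\cL_2(P)}^2 - C\bigl(\delnst(\cH,B)^2 + B^2\log(1/\delta)/n\bigr).
\end{align*}
I will prove this by peeling over population-localized shells $\{h: 2^{k-1}r_0 \le \|h\|_{\cL_2(P)} \le 2^k r_0\}$ with $r_0 \asymp \delnst(\cH,B) + B\sqrt{\log(1/\delta)/n}$. At scale $r = 2^k r_0$, symmetrization followed by Ledoux--Talagrand contraction (using that $t \mapsto t^2$ is $(2B)$-Lipschitz on $[-B,B]$) bounds the expected uniform deviation on $\cH(r)$ by $4B\Raden(\cH(r))$; the star-shape property gives $\Raden(\cH(r)) \le r\delnst(\cH,B)/(2B)$ for $r \ge \delnst(\cH,B)$, which after an AM-GM split contributes $r^2/8 + O(\delnst(\cH,B)^2)$. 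Talagrand--Bousquet concentration adds a tail $O(r\sqrt{B^2\log(1/\delta)/n} + B^2\log(1/\delta)/n)$, also split via AM-GM. A union bound over $k \ge 0$ with failure probabilities $\delta \cdot 2^{-k-2}$ inflates $\log(1/\delta)$ by only a universal constant.

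The second ingredient, proved analogously by dyadic peeling in the empirical $\ell_2$-norm conditional on $\bw_{1:n}$, is that on an event $\cE_2$ of probability $\ge 1-\delta/2$, every $h \in \cH$ satisfies
\begin{align*}
\frac{2\tau\sigma}{n}\sum_{i=1}^n \xivar_i h(\bw_i) \le \tfrac{1}{4}\|h(\bw_{1:n})\|_{2,n}^2 + C\bigl(\tau^2\rhonst(\cH,\sigma)^2 + \tau^2\sigma^2\log(1/\delta)/n\bigr).
\end{align*}
Conditional on $\bw_{1:n}$, the process is centered Gaussian with variance proxy $\|h(\bw_{1:n})\|_{2,n}^2/n$, so Borel--TIS concentration combined with the star-shape bound $\Gauss_n(\cH[r,\bw_{1:n}]) \le r\rhonst(\cH,\sigma)/(2\sigma)$ for $r \ge \rhonst(\cH,\sigma)$ gives at scale $r$ an expected-supremum contribution of $\tau r\rhonst(\cH,\sigma) \le r^2/8 + O(\tau^2\rhonst(\cH,\sigma)^2)$ and a Gaussian tail $O(\tau\sigma r\sqrt{\log(1/\delta)/n}) \le r^2/8 + O(\tau^2\sigma^2\log(1/\delta)/n)$.

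Combining on $\cE_1 \cap \cE_2$: ingredient (b) replaces the Gaussian sum with $\tfrac{1}{4}\|h(\bw_{1:n})\|_{2,n}^2 + O(\tau^2\rhonst(\cH,\sigma)^2 + \tau^2\sigma^2\log(1/\delta)/n)$, reducing the quantity inside the supremum to $\tfrac{1}{4}\|h\|_{\cL_2(P)}^2 - \tfrac{3}{4}\|h(\bw_{1:n})\|_{2,n}^2 + O(\gamma_{n,\sigma,\tau}(\cH,\delta)^2)$; ingredient (a) then replaces $-\tfrac{3}{4}\|h(\bw_{1:n})\|_{2,n}^2$ by $-\tfrac{3}{8}\|h\|_{\cL_2(P)}^2 + O(\delnst(\cH,B)^2 + B^2\log(1/\delta)/n)$, yielding $-\tfrac{1}{8}\|h\|_{\cL_2(P)}^2 + O(\gamma_{n,\sigma,\tau}(\cH,\delta)^2) \le O(\gamma_{n,\sigma,\tau}(\cH,\delta)^2)$ since the leading term is non-positive. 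The main obstacle will be careful constant tracking through the two AM-GM splits so that the $\tfrac{1}{4}$ coefficient on $\|h\|_{\cL_2(P)}^2$ is respected exactly; the fact that the two peelings proceed in different norms (population for (a), empirical for (b)) is inconsequential since they occur on independent high-probability events.
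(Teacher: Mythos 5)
Your decomposition and combination are the same as the paper's: the paper splits the supremand as $\tfrac{1}{4}\bigl(\|h\|_{\cL_2(P)}^2 - 2\|h(\bw_{1:n})\|_{2,n}^2\bigr) + \bigl(-\tfrac12\|h(\bw_{1:n})\|_{2,n}^2 + \tfrac{2\tau\sigma}{n}\sum_i\xivar_i h(\bw_i)\bigr)$ and controls the two pieces by \Cref{lem:quad_lb} and \Cref{lem:gauss_offset}, which are exactly your ingredients (a) and (b) up to how the $\|h(\bw_{1:n})\|_{2,n}^2$ budget is apportioned; your final constant accounting ($\tfrac14 - \tfrac38 = -\tfrac18 \le 0$) is fine. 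Your ingredient (b) is also sound as written: conditioning on $\bw_{1:n}$, the localization is in the empirical norm, which matches the definition of $\rhonst$, and peeling there is interchangeable with the star-hull scaling argument the paper's \Cref{lem:offset_master} uses.

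The one genuine gap is in your proof of ingredient (a). You peel over \emph{population}-localized shells $\{h : \|h\|_{\cL_2(P)} \asymp r\}$ and then invoke ``the star-shape property gives $\Raden(\cH(r)) \le r\,\delnst(\cH,B)/(2B)$.'' But $\delnst(\cH,B)$ is defined in \Cref{defn:crit_radi} as a fixed point of $\sup_{w_{1:n}}\Raden(\cH[r,w_{1:n}])$, i.e.\ with \emph{empirical} localization $\frac1n\sum_i h(w_i)^2 \le r^2$. The quantity your peeling produces is $\Exp_{\bw_{1:n}}\Raden(\cH(r)[\bw_{1:n}])$, and $\cH(r)[\bw_{1:n}]$ is \emph{not} contained in $\cH[r,\bw_{1:n}]$: a function with $\Exp[h^2]\le r^2$ can have much larger empirical norm. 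Bounding the empirical radius of the population shell with high probability is precisely the uniform law you are in the middle of proving, so the direct substitution is circular. The paper avoids this by never forming $\Raden(\cH(r)[\bw_{1:n}])$: its \Cref{lem:sasha_lem} symmetrizes the quadratic deviation into an \emph{offset} Rademacher process $\sup_h \frac{cB}{n}\sum_i\epsvar_i h(\bw_i) - \tfrac12\|h(\bw_{1:n})\|_{2,n}^2$, which is then controlled conditionally on the design entirely within the empirical localization (\Cref{lem:Rad_offset}); \Cref{lem:quad_lb} then upgrades this to the one-sided norm comparison via the star-shape rescaling. To repair your argument you must either route ingredient (a) through such an offset/self-normalized formulation, or first prove a separate bridge from the population-localized to the worst-case empirically-localized complexity.
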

\begin{proof} We have
\begin{align*}
&\frac{1}{4}\|h\|_{\cL_2(P)}^2 - \|h(\bw_{1:n})\|_{2,n}^2 +   \frac{2\tau\sigma}{n} \sum_{i=1}^n \xivar_i h(\bw_i)\\
&=\frac{1}{4}\underbrace{\left(\|h\|_{\cL_2(P)}^2 - 2\|h(\bw_{1:n})\|_{2,n}^2\right)}_{\mathrm{Term}_1} + \underbrace{\left(\frac{-1}{2}\|h(\bw_{1:n})\|_{2,n}^2+\frac{2\tau\sigma}{n} \sum_{i=1}^n \xivar_i h(\bw_i)\right)}_{\mathrm{Term}_2}
\end{align*}
By \Cref{lem:quad_lb} and \Cref{lem:gauss_offset}, respecitively, the following holds with probability at least $1 -\delta$,
\begin{align*}
\mathrm{Term}_1 &\lesssim \left(\delnst(\cH,B)^2  + \frac{ B^2 \log(1/\delta)}{n} \right)\\
\mathrm{Term}_2 &\lesssim \tau^2\left(\frac{\sigma^2\log(1/\delta)}{n} + \rhonst(\cH,\sigma)^2\right).
\end{align*}
Summing concludes.
\end{proof}
\newcommand{\gammatrain}{\gamma_{\mathsf{train}}}
\begin{proof}[Proof of \Cref{prop:generic_sum_regret}] Let $\bw_i = (\bx_i,\by_i)$ and $\bw = (\bx,\by)$, and set $\hst := \fst + \gst$ and $\hat{h}_n := \fhatn + \ghatn$. Note that $\hhatn - \hst \in \Hcent$. It  follows from the so-called ``basic ineqality'' \citet[Eq. 13.36]{wainwright2019high} that
\begin{align*}
\|(\hhatn - \hst)(\bw_{1:n})\|_{2,n}^2 - 2\xivar_i (\hat{h}_n-\hst)(\bw_i) \le 0,
\end{align*}
Thus, by adding and subtracting $\frac{1}{4}\Exp[\hat{h}_n(\bw)^2]$ and rearranging
\begin{align*}
\Exp[(\hat{h}_n - \hst)(\bw)^2] \le 4\left( - \Exp[(\hat{h}_n - \hst)(\bw)^2] +  \| (\hhatn - \hst)(\bw_{1:n})\|_{2,n}^2 + 2\xivar_i (\hat{h}_n-\hst)(\bw_i)\right).
\end{align*}
Passing to the supremum over all $h \in \Hcent := \cF + \cG - (\fst+\gst)$ and invoking \Cref{lem:basic_ineq_lb} shows that
\begin{align*}
\Rtrain(\fhatn,\ghatn) \lesssim 
 \gamma_n(\delta)^2 := \delnst(\Hcent,B)^2 +  \rhonst(\Hcent,\sigma)^2 + \frac{(B^2 + \sigma^2)\log(1/\delta)}{n}.
\end{align*}
\end{proof}

\subsection{Proof of \Cref{prop:main_reg}}\label{sec:prop:main_reg}

\newcommand{\Rcross}{\hat{\cR}^{\mathrm{cross}}_n}
This section proves \Cref{prop:main_reg}, which we use to bound $\Rtrain[f]$. It is considerably more involved than the proof of \Cref{prop:generic_sum_regret}, as we need to argue that the ``large'' class $\cG$ does not heavily obfuscate recovery in the class $\cF$. Throughout, let us use $\bw = (\bx,\by)$, $\bw_i = (\bx_i,\by_i)$, and $\cW = \cX \times \cY$. Recall the sets 
\begin{align*}
\Fcent := \{f - \upbeta_f - \fst: f \in \cF\}, \quad \Gcent := \{g + \upbeta_f - \fst: f \in \cF, g \in \cG\}.
\end{align*}
We begin by uniformly bounding the elements of $\Fcent$ and $\Gcent$.
\begin{lemma}\label{lem:cent_bound}  For any $h \in \Fcent \cup \Gcent$, $\sup_{w \in \cH} |h(w)| \le 4B$, where $B$ is as in \Cref{asm:bounded}.
\end{lemma}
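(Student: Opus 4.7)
The plan is to apply the boundedness assumption (\Cref{asm:bounded}) together with the definition of $\upbeta_f$ and the triangle inequality. First, I would note that since $f,\fst \in \cF$ and $g,\gst \in \cG$, \Cref{asm:bounded} directly gives $|f(\bx)|,|\fst(\bx)|,|g(\by)|,|\gst(\by)| \le B$ almost surely.

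Next, I would bound the bias term $\upbeta_f$. By its definition in \Cref{eq:bet_f}, for any $y \in \cY$,
\begin{align*}
|\upbeta_f(y)| = \bigl|\Etrain[(f-\fst)(\bx) \mid \by = y]\bigr| \le \Etrain\bigl[|f(\bx)| + |\fst(\bx)| \,\big|\, \by = y\bigr] \le 2B,
\end{align*}
so $\|\upbeta_f\|_\infty \le 2B$.

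Finally, the triangle inequality yields the two cases. For $h = f - \upbeta_f - \fst \in \Fcent$,
\begin{align*}
|h(w)| \le |f(\bx)| + |\upbeta_f(\by)| + |\fst(\bx)| \le B + 2B + B = 4B.
\end{align*}
For $h = g - \gst + \upbeta_f \in \Gcent$,
\begin{align*}
|h(w)| \le |g(\by)| + |\gst(\by)| + |\upbeta_f(\by)| \le B + B + 2B = 4B.
\end{align*}
Taking the supremum over $w \in \cW$ concludes the proof. There is no real obstacle here; the lemma is a routine bookkeeping step that establishes the uniform $4B$ envelope used later for invoking standard empirical process inequalities on $\Fcent$ and $\Gcent$.
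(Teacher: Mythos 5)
Your proof is correct and follows exactly the paper's (one-line) argument: invoke \Cref{asm:bounded} for the $B$ bounds on $f,\fst,g,\gst$, use Jensen/triangle inequality on the conditional expectation defining $\upbeta_f$ to get $|\upbeta_f|\le 2B$, and sum. The write-up is a faithful expansion of the paper's terse proof, with no gaps.
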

\begin{proof} Follows directly from \Cref{asm:bounded} and the fact that $\upbeta_f(y) := \Exp[(f-\fst)(\xvar) \mid \yvar = y]$.
\end{proof}

In \Cref{sec:prop_main_reg_supporting}, we prove the following, which shows that the conditional completeness allows us to decompose $\Rtrain[\fhatn]$ across these two terms, the first of which represents a standard excess risk in terms of $\cF$, and the latter of which measures the contamination due to errors in $\cG$. This bound can be thought of as a careful refinement of the standard ``basic inequality'' \citep[Eq. 13.36]{wainwright2019high}.
\begin{lemma}\label{lem:reg_decomp} If $(\cF,\cG)$ satisfies $\gamma$-conditional completeness, then for any empirical risk minimizer $(\fhatn,\ghatn)$ for which $\Rtrain(\fhatn,\ghatn) \le \gamma^2$, the following bound holds deterministically:
\begin{align*}
\Rtrain[\fhatn] &\le  8\sup_{h_1 \in \Fcent} \left(\frac{1}{4}\Exp[h_{1}(\bw)^2]  - \frac{1}{2} \|h_{1}(\bw_{1:n})\|_{2,n}^2 -  2\sigma \cdot \frac{1}{n}\sum_{i=1}^n\xivar_i h_{1}(\bw_i)\right) \tag{$\Term_1$}\\
&\quad+ 32\sup_{h_1 \in \Fcent, h_2 \in \Gcent(\gamma)}\left(- \frac{1}{2} \Exp[h_1(\bw)^2]  - \frac{4}{n}\sum_{i=1}^n h_1(\bw_i) \cdot h_2(\bw_i)\right). \tag{$\Term_2$}
\end{align*}
\end{lemma}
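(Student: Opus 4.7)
The plan is to derive a deterministic empirical bound on $\|\hat h_1\|_{2,n}^2$ (where $\hat h_1 := \fhatn - \fst - \upbeta_{\fhatn}\in\Fcent$) by combining two ERM ``basic'' inequalities whose comparators are supplied by conditional completeness, and then promote this empirical bound to a population bound on $\Rtrain[\fhatn]=\Exp[\hat h_1^2]$ via the offset-process structures of $\Term_1$ and $\Term_2$. Throughout I will write $\hat h_2 := \ghatn - \gst + \upbeta_{\fhatn}\in\Gcent$.

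First I will invoke conditional completeness on $(\fhatn,\ghatn)$, which is valid by the hypothesis $\Rtrain(\fhatn,\ghatn)\le \gamma^2$; this yields $\tilde g := \ghatn - \upbeta_{\fhatn}\in\cG$, so that $(\fst,\tilde g)\in\cF\times\cG$ is a legitimate ERM comparator. Expanding $\Lossn(\fhatn,\ghatn)\le\Lossn(\fst,\tilde g)$ via $A^2-B^2=(A-B)(A+B)$, then substituting the identities $\fhatn - \fst = \hat h_1 + \upbeta_{\fhatn}$ and $\ghatn - \gst = \hat h_2 - \upbeta_{\fhatn}$, will produce an empirical inequality of the form
\[
\|\hat h_1\|_{2,n}^2 + 2\langle\hat h_1,\hat h_2\rangle_n - 2\sigma\langle\xivar,\hat h_1\rangle_n \le \mathcal{E}(\upbeta_{\fhatn}),
\]
where $\mathcal{E}(\upbeta_{\fhatn})$ is a nuisance remainder involving inner products of $\upbeta_{\fhatn}$ with $\hat h_2$ and with $\xivar$. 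To control $\mathcal{E}$ I will use a second basic inequality obtained from the CC-valid comparator $(\fhatn,\tilde g)$, namely
\[
2\langle\upbeta_{\fhatn},\hat h_1+\hat h_2\rangle_n - 2\sigma\langle\xivar,\upbeta_{\fhatn}\rangle_n \le \|\upbeta_{\fhatn}\|_{2,n}^2.
\]
Adding these two inequalities with appropriate weights, and exploiting the conditional-mean-zero structure $\Exp[\hat h_1\mid\by]=0$ intrinsic to $\Fcent$, will cancel all $\upbeta_{\fhatn}$-dependent terms and leave a clean deterministic bound of the shape $\|\hat h_1\|_{2,n}^2 \lesssim -\langle\hat h_1,\hat h_2\rangle_n + \sigma\langle\xivar,\hat h_1\rangle_n$.

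Next I will convert this empirical inequality to a population bound. Applying the sup definition of $\Term_1$ at $h_1=\hat h_1\in\Fcent$ gives $\tfrac14\Exp[\hat h_1^2]\le\Term_1 + \tfrac12\|\hat h_1\|_{2,n}^2 + 2\sigma\langle\xivar,\hat h_1\rangle_n$. The containment $\hat h_2\in\Gcent(\gamma)$ follows from $\Exp[\hat h_2^2]=\Rtrain[\ghatn;\fhatn]\le\Rtrain(\fhatn,\ghatn)\le\gamma^2$, which is \Cref{lem:excess_decomp}, so evaluating the sup in $\Term_2$ at $(\hat h_1,\hat h_2)$ yields $-\langle\hat h_1,\hat h_2\rangle_n\le \tfrac14\Term_2 + \tfrac18\Exp[\hat h_1^2]$. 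I will substitute the empirical bound on $\|\hat h_1\|_{2,n}^2$ into the $\Term_1$ inequality, then use the $\Term_2$ inequality to dispatch the resulting cross term, apply AM-GM to absorb any residual $\sigma\langle\xivar,\hat h_1\rangle_n$ contribution back into $\Term_1$, and finally collect $\Exp[\hat h_1^2]$ on the left-hand side to arrive at the claimed inequality (with the constants $8$ and $32$ emerging from the weight choices in the combination of the two basic inequalities and in the AM-GM absorption steps).

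The main obstacle will be the bias remainder $\mathcal{E}(\upbeta_{\fhatn})$: because conditional completeness delivers only the one-sided shift $\ghatn-\upbeta_{\fhatn}\in\cG$---and not the symmetric $\ghatn+\upbeta_{\fhatn}\in\cG$---a single basic inequality cannot make the $\upbeta_{\fhatn}$-dependent terms drop out, and the second CC-based basic inequality together with the orthogonality $\Exp[\hat h_1\mid\by]=0$ is essential for the cancellation. A secondary, bookkeeping-level difficulty will be tracking the constants through the two combinations so that they collapse to exactly $8$ and $32$.
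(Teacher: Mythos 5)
Your overall architecture matches the paper's: invoke conditional completeness to manufacture an ERM comparator of the form $(\fst,\ \ghatn\pm\upbeta_{\fhatn})$, extract a deterministic empirical inequality in $\hat h_1:=\fhatn-\fst-\upbeta_{\fhatn}$ and $\hat h_2:=\ghatn-\gst+\upbeta_{\fhatn}$, then add and subtract a small multiple $\eta\,\Exp[\hat h_1(\bw)^2]$ and pass to the two suprema (the paper takes $\eta=1/8$, which is where $8$ and $32$ come from). Your second half---the containment $\hat h_2\in\Gcent(\gamma)$ via \Cref{lem:excess_decomp} and the absorption of the empirical terms into $\Term_1$ and $\Term_2$---is exactly the paper's argument.

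The gap is in the first half. The paper needs only \emph{one} basic inequality: with the comparator $g_0=\ghatn+\upbeta_{\fhatn}$ one has $g_0-\gst=\hat h_2$ pointwise, so $\Lossn(\fhatn,\ghatn)-\Lossn(\fst,g_0)=\frac1n\sum_i\big[(\hat h_1+\hat h_2-\sigma\xivar)_i^2-(\hat h_2-\sigma\xivar)_i^2\big]$ telescopes exactly to $\|\hat h_1(\bw_{1:n})\|_{2,n}^2+\frac2n\sum_i\hat h_1(\bw_i)\hat h_2(\bw_i)-\frac{2\sigma}{n}\sum_i\xivar_i\hat h_1(\bw_i)$ with \emph{no} remainder. (The statement of \Cref{asm:conditional_completeness} says $g-\upbeta_f\in\cG$ while the paper's own proof uses $g+\upbeta_f\in\cG$; this sign slip in the source is what sends you down the two-inequality route.) Your repair does not close as described. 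Writing $\upbeta=\upbeta_{\fhatn}$, your first inequality leaves the remainder $\mathcal{E}=4\|\upbeta\|_{2,n}^2+\frac{4\sigma}{n}\sum_i\xivar_i\upbeta(\yvar_i)-\frac4n\sum_i\upbeta(\yvar_i)\hat h_2(\yvar_i)$, while your second basic inequality, rearranged, yields a \emph{lower} bound on $\frac{\sigma}{n}\sum_i\xivar_i\upbeta(\yvar_i)$; adding any nonnegative multiple of it to the first inequality leaves the noise--bias term $\frac{\sigma}{n}\sum_i\xivar_i\upbeta(\yvar_i)$ with a strictly negative coefficient on the wrong side, and this term appears in neither $\Term_1$ nor $\Term_2$, so it cannot be absorbed. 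Separately, the orthogonality $\Etrain[\hat h_1(\bw)\mid\by]=0$ is a population identity; it does not annihilate the empirical inner products $\frac1n\sum_i\upbeta(\yvar_i)\hat h_1(\bw_i)$, so it cannot be used in a bound that, as the lemma asserts, must hold deterministically. The fix is simply to choose the comparator so that the loss difference telescopes, which is the only place conditional completeness enters.
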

\begin{align*}
\end{align*}
In the remainder of the proof, we apply various learning-theoretic tools to upper bound the right-hand side of \Cref{lem:reg_decomp}. These tools, and their proofs, are detailed in \Cref{app:tech_tools}. While the tools themselves are more-or-less standard, deriving from the offset-Rademacher arguments in \cite{liang2015learning}, their application to the refined decomposition in \Cref{lem:reg_decomp} yields the novelty of \Cref{prop:main_reg}.

\begin{proof}[Proof of \Cref{prop:main_reg}]
We prove the variant of the lemma  of the lemma involving $\delcrossn$, and explain how to modify the proof to obtain dependence on $\delcrossnbar$ at the end.

The first term in the above display can be bounded directly from \Cref{lem:basic_ineq_lb} with $\tau$ set to $1$, yielding 
\begin{align}\label{eq:term_one_bound}
\Term_1 \lesssim \delnst(\Fcent,B)^2 +  \rhonst( \Fcent,\sigma)^2 +  \frac{(\sigma^2+ B^2)\log(1/\delta)}{n}. 
\end{align}
To bound the second term, we use a localization argument. Define the doubly-localized term
\begin{align*}
\Psi(r,\gamma) := \sup_{h_1 \in \Fcent(r), h_2 \in \Gcent(\gamma)}\left(- \frac{r^2}{2}   - \frac{4}{n}\sum_{i=1}^n h_1(\bw_i) \cdot h_2(\bw_i)\right).
\end{align*}
Following the same argument as in \Cref{lem:offset_master}, one can check that
\begin{align*}
\Term_2 = 32\sup_{r > 0}\Psi(r,\gamma), 
\end{align*}
and that 
\begin{align}
\Term_2 \le 32\cdot\inf\left\{r^2:\Psi(r,\gamma) \le \frac{r^2}{2}\right\}.  \label{eq:Term_Two_Thing}
\end{align}
Hence, let us exhibit an $r$ for which $\Psi(r,\gamma) \le 0$ with high probability. Define the shorthand $\cH_{r,\gamma} := \Fcent(r) \odot \Gcent(\gamma)$.  For an $r > 0$ to be chosen, \Cref{lem:rad_uniform_convergence} implies that with probability $1 - \delta/4$,
\begin{align*}
\Psi(r,\gamma)  \le c\left(\Exp_{\bw_{1:n}}[\Raden(\cH_{r,\gamma}[\bw_{1:n}])] + r\sqrt{\frac{ \log(1/\delta)}{n}} + \frac{ B\log(1/\delta)}{n}\right) - \frac{r^2}{8}
\end{align*}
where $c \ge 1$ is a universal constant. By AM-GM, we have
\begin{align*}
\Psi(r,\gamma)  &\le c\left(\Exp_{\bw_{1:n}}[\Raden(\cH_{r,\gamma}[\bw_{1:n}])] + \frac{ r^2}{16c} + \frac{(8c + B)\log(1/\delta)}{n}\right)  - \frac{r^2}{8}\\
&= c\left(\Exp_{\bw_{1:n}}[\Raden(\cH[\bw_{1:n}])] - \frac{ r^2}{16c} + \frac{(8c + B)\log(1/\delta)}{n}\right) \numberthis \label{eq:Psi_last_line}
\end{align*}
We now compute
\begin{align*}
&\Exp_{\bw_{1:n}\sim P}[\Raden(\cH_{r,\gamma}[\bw_{1:n}])] - \frac{ r^2}{16c} \\
&= \Exp_{\bw_{1:n}}\Exp_{\epsvar_{1:n}}\left[\sup_{h \in \cH_{r,\gamma}}\frac{1}{n}\sum_{i=1}^n \epsvar_i h(\bw_i)\right] - \frac{ r^2}{16c}\\
&= \Exp_{\bw_{1:n}}\Exp_{\epsvar_{1:n}}\left[\sup_{h_1 \in \Fcent(r), h_2 \in \Gcent(\gamma) }\frac{1}{n}\sum_{i=1}^n \epsvar_i h_1(\bw_i)h_2(\bw_i)\right] - \frac{ r^2}{16c} \tag{Definition of $\cH_{r,\gamma}$}\\
&= \Exp_{\bw_{1:n}}\Exp_{\epsvar_{1:n}}\left[\sup_{h_1 \in \Fcent(r), h_2 \in \Gcent(\gamma) }\frac{1}{n}\sum_{i=1}^n \epsvar_i h_1(\bw_i)h_2(\bw_i) - \frac{1}{16c}\Exp[h_1(\bw)^2]\right]  \tag{Localization of $\Fcent(r)$}\\
&\le \frac{1}{16c}T_1 + \frac{1}{32c}T_2, 
\end{align*}
where  we define
\begin{align*}
 T_1&:= \Exp_{\bw_{1:n}}\Exp_{\epsvar_{1:n}}\left[\sup_{h_1 \in \Fcent(r), h_2 \in \Gcent(\gamma) }\frac{16c}{n}\sum_{i=1}^n \epsvar_i h_1(\bw_i)h_2(\bw_i) - \frac{1}{2}\|h_1(\bw_{1:n})\|_{2,n}^2\right]\\
  T_2 &:= 
\Exp_{\bw_{1:n}}\left[\sup_{h_1 \in \Fcent } \|h_1(\bw_{1:n})\|_{2,n}^2 - 2\Exp[h_1(\bw)^2] \right].
 \end{align*}
\paragraph{Bounding $T_1$.} To being, we remove the localization of $\Fcent$ in the term $T_1$, upper bounding
\begin{align*}
T_1 \le T_1' := \Exp_{\bw_{1:n}}\Exp_{\epsvar_{1:n}}\left[\sup_{h_1 \in \Fcent, h_2 \in \Gcent(\gamma) }\frac{16c}{n}\sum_{i=1}^n \epsvar_i h_1(\bw_i)h_2(\bw_i) - \frac{1}{2}\|h_1(\bw_{1:n})\|_{2,n}^2\right]
\end{align*}

 For any fixed $w_{1:n} \in \cW^n$, consider the process
\begin{align*}
\frac{1}{n}\sum_{i=1}^n \epsvar_i h_1(w_i)h_2(w_i).
\end{align*}
Introduce the class 
\begin{align}
\tilde{\cH}[\rho] := \{h_1 \cdot h_2: h_1 \in \Fcent, h_2 \in \Gcent(\gamma), \|h_1(w_{1:n})\|_{2,n} \le \rho \},
\end{align}
which localizes $h_1$ at empirical $\cL_2$-norm $\rho$. Note that that, for any $\tilde{h} \in \tilde{\cH}[\rho]$, we can write $\tilde{h} = h_1 \cdot h_2$ where
\begin{align*}
\|\tilde{h}\|_{2,n}^2 = \frac{1}{n}\sum_{i=1}^n h_1(w_i)^2 h_2(w_i)^2 \le \frac{4B^2}{n}\sum_{i=1}^n h_1(w_i)^2 = 4B^2\|h_1(w_{1:n})\|_{2,n}^2 \le 4B^2 \rho^2,
\end{align*}
where the first inequality is by \Cref{lem:cent_bound} and second by definition of $\tilde \cH[\rho]$. Again by \Cref{lem:cent_bound}, $\max_i \sup_{\tilde h \in \tilde \cH[\rho]} \le 4B^2$. It follows by \Cref{lem:rademacher_concentration} that the following holds with probability $1 - \delta$ 
\begin{align*}
\bZ[\rho,w_{1:n}] &:= \sup_{h_1 \in \Fcent: \|h_1(w_{1:n})\|_{2,n} \le \rho}\sup_{h_2 \in \Gcent(\gamma)}\frac{1}{n}\sum_{i=1}^n \epsvar_i \tilde{h}(w_i)\\
&=\sup_{\tilde{\cH}[\rho] }\frac{1}{n}\sum_{i=1}^n \epsvar_i \tilde{h}(w_i)  \\
&\lesssim \Exp\left[\sup_{\tilde{\cH}[\rho] }\frac{1}{n}\sum_{i=1}^n \epsvar_i \tilde{h}(w_i)\right] + B\rho\sqrt{\log(1/\delta)/n} + \frac{B^2}{n}.
\end{align*}
Applying \Cref{lem:offset_master} with the classes
\begin{align*}
\bbV = \cH[w_{1:n}], \quad \bu_i = (\epsvar_i, i), \quad \Phi := \{\bu_i \mapsto \epsvar_i h_2(w_i): h_2 \in \Gcent(\gamma) \}
\end{align*}
and constants
\begin{align*}
c_1 \lesssim 1, \quad c_2 \lesssim B, \quad c_2 \lesssim B^2, \sigma = 1, \quad \tau  \lesssim 1,
\end{align*}
we conclude
\begin{align}
T_1' &\le \sup_{w_{1:n}}\Exp_{\epsvar_{1:n}}\left[\sup_{h_1 \in \Fcent, h_2 \in \Gcent(\gamma) }\frac{16c}{n}\sum_{i=1}^n \epsvar_i h_1(w_i)h_2(w_i) - \frac{1}{2}\|h_1(w_i)\|_{2,n}^2\right] \lesssim \updelta_n^2 + \frac{B^2}{n}, \label{eq:T1_intermed}
\end{align}
where
\begin{align*}
\updelta_n^2 := \inf\left\{\rho^2: \sup_{w_{1:n}}\Exp[\bZ[\rho,w_{1:n}]] \le \frac{\rho^2}{2}\right\}.
\end{align*}
Note that 
\begin{align*}
\sup_{w_{1:n}}\Exp[\bZ[\rho,w_{1:n}]] := \sup_{w_{1:n}}\Raden(\Fcent[\rho,w_{1:n}] \odot \Gcent(\gamma)[w_{1:n}] ),
\end{align*}
so that
\begin{align*}
\updelta_n^2 &= \inf\{\rho^2: \sup_{w_{1:n}}\Raden(\Fcent[\rho,w_{1:n}] \odot \Gcent(\gamma)[w_{1:n}] )\le \frac{\rho^2}{2}\}\\
&:= \delcrossn^2(\Fcent;\Gcent(\gamma)), \tag{\Cref{defn:cross_crit_rad}}
\end{align*}
so that by \Cref{eq:T1_intermed},
\begin{align}
T_1 \le T_1' &\lesssim\delcrossn^2(\Fcent;\Gcent(\gamma)) + \frac{B^2}{n}. \label{eq:T1_final}
\end{align}
\paragraph{Bounding $T_2$.}

This second term can be bounded by \Cref{lem:sasha_lem} and is at most
\begin{align}
T_2 \lesssim \frac{B^2}{n} + \delnst(\Fcent,4B)^2\lesssim \frac{B^2}{n} + \delnst(\Fcent,B)^2 \label{eq:T2_final}
\end{align}
where the first inequality also uses \Cref{lem:cent_bound} to bound $\sup_{w}|h(w)| \le 4B$ for $h \in \Fcent(r)$, and the second uses \Cref{lem:star_shaped} to remove the factor of $4$. Hence, with probability $1 - \delta/4$, the following inequality holds for any fixed $r > 0$:

\paragraph{Concluding the proof}
Combining \Cref{eq:T1_final,eq:T2_final,eq:Psi_last_line} gives that with probability $1-\delta$,
\begin{align*}
\Psi(r,\gamma) &\lesssim  T_1 +T_2  + \frac{(1+B)\log(1/\delta)}{n}\\
&\lesssim  \frac{B^2 + (1+B)\log(1/\delta)}{n} +  \delnst(\Fcent,B)^2 +  \delcrossn^2(\Fcent;\Gcent(\gamma))\\
&\lesssim  \frac{(1+B^2)\log(1/\delta)}{n} +  \delnst(\Fcent,B)^2 +  \delcrossn^2(\Fcent;\Gcent(\gamma)).
\end{align*}
Hence, if for a sufficiently large constant $c'$, we take
\begin{align*}
r^2 := c'\left(\frac{(1+B^2)\log(1/\delta)}{n} +  \delnst(\Fcent,B)^2 +  \delcrossn^2(\Fcent;\Gcent(\gamma))\right),
\end{align*}
then $\Psi(r,\gamma)  \le \frac{r^2}{2}$ with probability at least $1 - \delta$. Therefore by \Cref{eq:Term_Two_Thing}, we conclude that with probability $1 - \delta$,
\begin{align*}
\Term_2 \lesssim \frac{(1+B^2)\log(1/\delta)}{n} +  \delnst(\Fcent,B)^2 +  \delcrossn^2(\Fcent;\Gcent(\gamma)). 
\end{align*}
Combining with the bound on $\Term_1$ due to \Cref{eq:term_one_bound} concludes the proof.

\subsection{A modification of \Cref{prop:main_reg} }
For sharper rates with finite function classes (\Cref{sec:finite_function_classes}), we modify \Cref{prop:main_reg} as follows.
\begin{restatable}[Population-Localized Cross-Critical Radius]{definition}{pop_local}\label{defn:pop_local} 
\begin{align}
     \delcrossnbar(\Fcent;\cH) := \inf\left\{r: \Exp_{\bw_{1:n}}\Raden((\Fcent(r) \odot \cH) [\bw_{1:n}] )\le \frac{r^2}{2}\right\}.
     \end{align}
\end{restatable}


\begin{proposition}\label{prop:main_reg_mod} Suppose that $(\cF,\cG)$ satisfy $\gamma$-conditional completeness. Then, whenever $\Rzero[\ghatn;\fhatn] \le \gamma$, the following holds with probability at least $1 - \delta$,
    \begin{align*}
    \Rzero[\fhatn] &\lesssim \delcrossnbar^2(\Fcent;\Gcent(\gamma))^2+ \delnst(\Fcent,B)^2 +  \rhonst( \Fcent,\sigma)^2 +  \frac{(\sigma^2 + B^2)\log(1/\delta)}{n}. 
    \end{align*}
    \end{proposition}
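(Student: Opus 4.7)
The proof mirrors that of \Cref{prop:main_reg} and differs only in how the cross-term $\Term_2$ from \Cref{lem:reg_decomp} is controlled. The plan is to start from the deterministic decomposition $\Rtrain[\fhatn] \lesssim \Term_1 + \Term_2$ given by \Cref{lem:reg_decomp}, which applies whenever $\Rtrain(\fhatn,\ghatn) \le \gamma^2$ (inherited from the hypothesis $\Rzero[\ghatn;\fhatn] \le \gamma$ via \Cref{lem:excess_decomp} after rescaling). The treatment of $\Term_1$ is unchanged: a direct application of \Cref{lem:basic_ineq_lb} to the class $\Fcent$ yields, with probability $1-\delta/2$,
\[
\Term_1 \lesssim \delnst(\Fcent,B)^2 + \rhonst(\Fcent,\sigma)^2 + \frac{(\sigma^2+B^2)\log(1/\delta)}{n}.
\]

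For $\Term_2$, I would follow \Cref{prop:main_reg} in reducing (via \Cref{lem:offset_master}) to finding the smallest $r$ for which
\[
\Psi(r,\gamma) := \sup_{h_1 \in \Fcent(r),\,h_2 \in \Gcent(\gamma)}\left(-\frac{r^2}{2} - \frac{4}{n}\sum_{i=1}^n h_1(\bw_i)h_2(\bw_i)\right) \le \frac{r^2}{2}.
\]
Apply \Cref{lem:rad_uniform_convergence} to the product class $\cH_{r,\gamma} := \Fcent(r) \odot \Gcent(\gamma)$ (uniformly bounded by $16B^2$ via \Cref{lem:cent_bound}) to obtain, with probability $1-\delta/2$,
\[
\Psi(r,\gamma) \lesssim \Exp_{\bw_{1:n}}\bigl[\Raden(\cH_{r,\gamma}[\bw_{1:n}])\bigr] + r\sqrt{\tfrac{\log(1/\delta)}{n}} + \tfrac{B^2\log(1/\delta)}{n} - \tfrac{r^2}{8}.
\]
This is where the argument diverges from \Cref{prop:main_reg}: rather than splitting the expected Rademacher complexity into an empirically-localized piece ($T_1$) and an empirical-to-population conversion ($T_2$) in order to invoke $\delcrossn$, observe that $\cH_{r,\gamma}$ \emph{already} matches the setting of $\delcrossnbar$---namely, population localization of $\Fcent$ combined with \emph{expected} Rademacher complexity. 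So the definition of $\delcrossnbar$ gives $\Exp_{\bw_{1:n}}[\Raden(\cH_{r,\gamma}[\bw_{1:n}])] \le r^2/2$ at $r = \delcrossnbar(\Fcent;\Gcent(\gamma))$.

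The main technical obstacle will be closing the critical-radius argument with correct constants: at $r = \delcrossnbar$, the display above only yields $\Psi(r,\gamma) \lesssim r^2/2 - r^2/8 + (\textnorm{lower order})$, whose leading term has the wrong sign. The standard resolution is the star-hull trick: passing to $\mathrm{star}(\Fcent)$ does not affect its Rademacher critical radius up to universal constants (and does not change the bound on $\Term_1$) but makes the map $r \mapsto \Exp[\Raden(\cH_{r,\gamma}[\bw_{1:n}])]/r^2$ non-increasing. Inflating $r$ to an appropriate universal constant multiple of $\delcrossnbar$ then pushes the Rademacher term down to $r^2/16$, and AM-GM absorbs $r\sqrt{\log(1/\delta)/n}$ into $\tfrac{r^2}{16}+O(\log(1/\delta)/n)$. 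This forces $\Psi(r,\gamma)\le r^2/2$, giving $\Term_2 \lesssim \delcrossnbar(\Fcent;\Gcent(\gamma))^2 + \frac{(1+B^2)\log(1/\delta)}{n}$. Summing with the bound on $\Term_1$ and a union bound over the two failure events completes the proof.
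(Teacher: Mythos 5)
Your proof is correct and follows the same skeleton as the paper's argument for \Cref{prop:main_reg}: \Cref{lem:reg_decomp} for the decomposition, \Cref{lem:basic_ineq_lb} for $\Term_1$, and a uniform-convergence-plus-fixed-point argument for $\Term_2$. Where you diverge is in the last step for the cross term, and your route is actually more direct than the paper's. The paper's written proof of this modification is a one-sentence remark instructing the reader to rerun the $T_1/T_2$ decomposition with the random-design master lemma (\Cref{lem:random_design_master_lemma}) in place of \Cref{lem:offset_master}; that decomposition exists in the original proof only to convert the population localization $\Fcent(r)$ into the empirical localization $\Fcent[\rho,\bw_{1:n}]$ required by $\delcrossn$. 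You correctly observe that for $\delcrossnbar$ no such conversion is needed: the quantity $\Exp_{\bw_{1:n}}[\Raden((\Fcent(r)\odot\Gcent(\gamma))[\bw_{1:n}])]$ produced at \Cref{eq:Psi_last_line} is \emph{already} the defining quantity of \Cref{defn:pop_local}, so the entire $T_1/T_2$ machinery (and \Cref{lem:sasha_lem}) can be dropped. Your acknowledged constant-matching issue and its star-hull resolution are at the same level of rigor as the paper's own treatment, which makes the identical implicit identification between the critical radius of $\Fcent$ and of its star hull. One small imprecision, inherited from the paper's own statement: the hypothesis $\Rzero[\ghatn;\fhatn]\le\gamma$ does not literally yield $\Rtrain(\fhatn,\ghatn)\le\gamma^2$ via \Cref{lem:excess_decomp} without squaring conventions being fixed, but this mismatch is present in \Cref{prop:main_reg} as stated and is not a defect of your argument.
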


\paragraph{Modification to obtain dependence on $\delcrossnbar$. } To obtain a dependence on $\delcrossnbar$, we change our bound the term $T_1$ above to use \Cref{lem:random_design_master_lemma} instead of \Cref{lem:offset_master}. The details are very similar. 
\end{proof}

\subsection{Proof of \Cref{lem:reg_decomp}}\label{sec:prop_main_reg_supporting}

This section establishes the generalized excess-risk decomposition which forms the basis of the argument in the previous section, and which decouples - via conditional-completeness - the recovery of $f \in \cF$ with conflation by $g \in \cG$.

Recall $\bw = (\bx,\by)$ and for $f \in \cF$, define 
\begin{align*}
h_f(\bw) := (f- \fst)(\bx) - \upbeta_f(\by),
\end{align*}
and note that $h_f \in \Fcent$.  Then, for any $f \in \cF$, $g \in \cF$, and $g_0 \in \cG$, we have
\begin{align*}
&\Lossn(f,g) - \Lossn(\fst,g_0) \\
&= \frac{1}{n} \sum_{i=1}^n (f(\xvar_i)+ g_0(\yvar_i) - \bz_i)^2 - (\fst(\xvar_i)+ g_0(\yvar_i) - \bz_i)^2\\
&= \frac{1}{n} \sum_{i=1}^n ((f - \fst)(\xvar_i) - \sigma \xivar_i + (g - \gst ) (\yvar_i))^2 - (- \sigma \xivar_i + (g_0 - \gst)(\yvar_i))^2\\
&= \frac{1}{n} \sum_{i=1}^n ((f - \fst)(\xvar_i) - \upbeta_f(\yvar_i)  - \sigma \xivar_i + (g - \gst + \upbeta_f) (\yvar_i))^2 - (- \sigma \xivar_i + (g_0 - \gst)(\yvar_i))^2\\
&= \frac{1}{n} \sum_{i=1}^n ((f - \fst)(\xvar_i) - \upbeta(\yvar_i)  - \sigma \xivar_i + (g - \gst + \upbeta_f) (\yvar_i))^2 - (- \sigma \xivar_i + (g - \gst + \upbeta_f)(\yvar_i))^2\\
&\quad + \frac{1}{n}\sum_{i=1}^n (- \sigma \xivar_i + (g - \gst + \upbeta_f)(\yvar_i))^2 - (- \sigma \xivar_i + (g_0 - \gst)(\yvar_i))^2\\
&= \frac{1}{n} \sum_{i=1}^n ((f - \fst)(\xvar_i) - \upbeta_f(\yvar_i))^2   - 2\sigma \cdot \frac{1}{n}\sum_{i=1}^n\xivar_i ((f - \fst)(\xvar_i) - \upbeta_f(\yvar_i)) \\
&\qquad + \frac{2}{n}\sum_{i=1}^n ((f - \fst)(\xvar_i) - \upbeta_f(\yvar_i))\cdot(g - \gst + \upbeta_f)(\yvar_i))\\
&\quad + \underbrace{\frac{1}{n}\sum_{i=1}^n (- \sigma \xivar_i + (g - \gst + \upbeta_f)(\yvar_i))^2 - (- \sigma \xivar_i + (g_0 - \gst)(\yvar_i))^2}_{\mathrm{Remainder}(g_0;f,g)}
\end{align*}
Applying the definition of $h_f$, the above admits the more compact form
\begin{align*}
\Lossn(f,g) - \Lossn(\fst,g_0) &:= \frac{1}{n} \|h_f(\bw_{1:n})\|_{2,n}^2  - 2\sigma \cdot \frac{1}{n}\sum_{i=1}^n\xivar_i h_f(\bw_i) + \frac{2}{n}\sum_{i=1}^n h_f(\bw_i) \cdot(g - \gst + \upbeta_f)(\yvar_i))\\
&\quad +  \mathrm{Remainder}(g_0;f,g). 
\end{align*}
By $\gamma$-conditional completeness, we have that if $\Rtrain[f,g] \le \gamma^2$, then we may select $\tilde{g} = g+ \upbeta_f \in \cG$ so that $\mathrm{Remainder}(g_0;f,g) = 0$. Similarly, if we now consider $\hat{f}_n,\hat{g}_n$ to be empirical risk minimizers of $\Lossn(f,g)$, it must hold that $\Lossn(\fhatn,\ghatn) - \inf_{g_0 \in \cG}\Lossn(\fst,g_0) \le 0$. Thus,
\begin{align*}
0 &\ge \Lossn(f,g) - \inf_{g_0 \in \cG}\Lossn(\fst,g_0) \\
&\ge \Lossn(f,g) - \Lossn(\fst,\tilde g)\\
&\ge \frac{1}{n} \|h_{\fhatn}(\bw_{1:n})\|_{2,n}^2  - 2\sigma \cdot \frac{1}{n}\sum_{i=1}^n\xivar_i h_{\fhatn}(\bw_i) + \frac{2}{n}\sum_{i=1}^n h_{\fhatn}(\bw_i) \cdot(g - \gst + \upbeta_{\fhatn})(\yvar_i))\\
&= \frac{1}{n} \|h_{\fhatn}(\bw_{1:n})\|_{2,n}^2  - 2\sigma \cdot \frac{1}{n}\sum_{i=1}^n\xivar_i h_{\fhatn}(\bw_i) + \frac{2}{n}\sum_{i=1}^n h_{\fhatn}(\bw_i) \cdot(g - \gst + \upbeta_{\fhatn})(\yvar_i))
\end{align*}
Note that $\Exp[h_{\fhatn}(\bw)^2]$ is precisely equal to $\Rtrain[\fhatn]$. Adding and substracting an $\eta$ multiple of this term for $\eta$ tunable,
\begin{align*}
\eta\Rtrain[\fhatn] &\ge \frac{1}{n} \|h_{\fhatn}(\bw_{1:n})\|_{2,n}^2 - \eta\Exp[h_{\fhatn}(\bw)^2]  - 2\sigma \cdot \frac{1}{n}\sum_{i=1}^n\xivar_i h_{\fhatn}(\bw_i) \\
&+ \frac{2}{n}\sum_{i=1}^n h_{\fhatn}(\bw_i) \cdot(g - \gst + \upbeta_{\fhatn})(\yvar_i)).
\end{align*}
Rearranging,
\begin{align*}
\Rtrain[\fhatn] &\le \eta^{-1}\left(\eta\Exp[h_{\fhatn}(\bw)^2]  - \frac{1}{n} \|h_{\fhatn}(\bw_{1:n})\|_{2,n}^2 -  2\sigma \cdot \frac{1}{n}\sum_{i=1}^n\xivar_i h_{\fhatn}(\bw_i)\right)\\
&\quad+ \eta^{-1}\left(  - \frac{1}{2n}\sum_{i=1}^n h_{\fhatn}(\bw_i) \cdot(g - \gst + \upbeta_{\fhatn})(\yvar_i))\right)\\
&= \eta^{-1}\left(2\eta\Exp[h_{\fhatn}(\bw)^2]  - \frac{1}{n} \|h_{\fhatn}(\bw_{1:n})\|_{2,n}^2 -  2\sigma \cdot \frac{1}{n}\sum_{i=1}^n\xivar_i h_{\fhatn}(\bw_i)\right)\\
&\quad+ \eta^{-1}\left(- \frac{\eta}{n} \Exp[h_{\fhatn}(\bw)^2]  - \frac{1}{2n}\sum_{i=1}^n h_{\fhatn}(\bw_i) \cdot(g - \gst + \upbeta_{\fhatn})(\yvar_i))\right).
\end{align*}
Finally, note that $h_{\fhatn} \in \Fcent$. As established above, $g -\upbeta_{\fhatn} \in \cG$ by conditional completeness, so $g - \gst + \upbeta_{\fhatn} \in \Gcent$. In fact, the condition $\Rtrain(\fhatn,\ghatn) \le \gamma^2$ implies via \Cref{lem:excess_decomp} that $\Exp[(g -\upbeta_{\fhatn} - \gst)^2] \le \gamma^2$, so that $g - \gst + \upbeta_{\fhatn} \in \Gcent(\gamma)$.
 Thus, we may pass to a supremum on the right-hand side equations:
\begin{align*}
\Rtrain[\fhatn] &\le  \eta^{-1}\sup_{h_1 \in \Fcent} \left(2\eta\Exp[h_{1}(\bw)^2]  - \frac{1}{n} \|h_{1}(\bw_{1:n})\|_{2,n}^2 -  2\sigma \cdot \frac{1}{n}\sum_{i=1}^n\xivar_i h_{1}(\bw_i)\right)\\
&\quad+ \eta^{-1}\sup_{h_1 \in \Fcent, h_2 \in \Gcent}\left(- \frac{\eta}{n} \Exp[h_1(\bw)^2]  - \frac{1}{2n}\sum_{i=1}^n h_1(\bw_i) \cdot h_2(\bw_i)\right).
\end{align*}
Selecting $\eta = 1/8$ conclues.
\newcommand{\Gcomp}{\mathscr{G}}
\newcommand{\Rcomp}{\mathscr{R}}
\newcommand{\Gcomphat}{\hat{\Gcomp}}
\newcommand{\Rcomphat}{\hat{\Rcomp}}
\subsection{Proof of \Cref{prop:dudley_holder}}\label{sec:prop:dudley_holder}

\newcommand{\nrmpn}[2]{\|#1\|_{#2,n}}
\newcommand{\nrmtwon}[1]{\|#1\|_{2,n}}
\newcommand{\nrminfn}[1]{\|#1\|_{\infty,n}}

This section establishes the H\"older-style inequality for Rademacher complexities of product classes. For completeness, we begin by reproducing a standard bound on the Rademacher complexity of finite function classes.
\begin{lemma}\label{lem:finite_expectation_upper_bound} Let $\bbV \subset \R^n$ be a finite set. Then, 
\begin{align*}
 \Rcomp_n(\bbV) \le \rad_2(\bbV)\min\{1,\sqrt{2 \log|\bbV|/n}\}
\end{align*}
The same bounds also hold for $\Gcomphat_n,\Gcomp_n$, and more generally, whenever the variables $\epsvar_i$ in the definition of the Rademacher complexities are replaced by arbitrary $1$-subGaussian  variables.\footnote{Recall a variable $\epsvar$ is $1$-subGaussian if, for all $\lambda \ge 0$, $\log \Exp[\exp(\lambda \epsvar)] \le \lambda^2/2$.} 
\end{lemma}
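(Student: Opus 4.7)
The plan is to invoke the standard maximal inequality for sub-Gaussian variables combined with Cauchy--Schwarz, and note that the argument only uses the $1$-subGaussianity of the $\epsvar_i$.

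First I would show the $\sqrt{2\log|\bbV|/n}$ bound. Fix any $v \in \bbV$ and consider the linear combination $S_v := \sum_{i=1}^n \epsvar_i v_i$. Since the $\epsvar_i$ are independent and $1$-subGaussian, $S_v$ is $\|v\|_2^2$-subGaussian. In particular, $\|v\|_2^2 = n\|v\|_{2,n}^2 \le n\,\rad_2(\bbV)^2$ uniformly in $v \in \bbV$. Applying the standard maximal inequality for a finite collection of sub-Gaussian variables,
\begin{align*}
\E\sup_{v \in \bbV} S_v \le \sqrt{2 n\,\rad_2(\bbV)^2 \log|\bbV|} \eq \sqrt{2n\log|\bbV|}\,\rad_2(\bbV),
\end{align*}
and dividing by $n$ yields $\Rcomp_n(\bbV) \le \rad_2(\bbV)\sqrt{2\log|\bbV|/n}$.

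For the competing bound of $\rad_2(\bbV)$, I would use Cauchy--Schwarz pointwise before taking the expectation: for any realization of $\epsvar$,
\begin{align*}
\sup_{v\in \bbV}\sum_{i=1}^n \epsvar_i v_i \le \sup_{v \in \bbV}\|\epsvar\|_2 \|v\|_2 \leq \|\epsvar\|_2\cdot \sqrt{n}\,\rad_2(\bbV).
\end{align*}
Taking expectations and using $\E\|\epsvar\|_2 \le (\E\|\epsvar\|_2^2)^{1/2} = (\sum_i \E \epsvar_i^2)^{1/2} \le \sqrt{n}$ (since $1$-subGaussianity implies $\E \epsvar_i^2 \le 1$), we get $\E\sup_v S_v \le n\,\rad_2(\bbV)$, and dividing by $n$ gives $\Rcomp_n(\bbV) \le \rad_2(\bbV)$.

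Taking the minimum of the two bounds concludes the proof. Since both arguments only invoked the $1$-subGaussianity of the $\epsvar_i$ (to obtain the sub-Gaussianity of $S_v$ and to control $\E\|\epsvar\|_2$), the same bound applies verbatim to $\Gcomp_n$, $\Gcomphat_n$, and any process built from $1$-subGaussian multipliers. There is no real obstacle here; this is a short textbook computation, and the only mild care needed is to ensure the $\rad_2(\bbV)$ bound goes through for general sub-Gaussian (not merely Rademacher) multipliers, which is handled by the second-moment estimate $\E\epsvar_i^2 \le 1$.
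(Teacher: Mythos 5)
Your proof is correct and follows essentially the same route as the paper's: the $\sqrt{2\log|\bbV|/n}$ bound via the sub-Gaussian maximal inequality applied to the $\|v\|_{2,n}^2/n$-subGaussian variables $\frac{1}{n}\sum_i \epsvar_i v_i$, and the competing $\rad_2(\bbV)$ bound via pointwise Cauchy--Schwarz together with the second-moment estimate $\E[\epsvar_i^2]\le 1$ extracted from $1$-subGaussianity. No substantive differences.
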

\begin{proof}
Let us bound $\Rcomphat_n$, with $\epsvar_i$ replaced by arbitrary $1$-subGaussian random variables. 
Recall the definition of a $1$-subGaussian variable $\epsvar$: $\log \Exp[\exp(\lambda \epsvar)] \le \frac{1}{2}\lambda^2$ (it is standard that Gaussian random variables and Rademacher variables satisfy this inequality). By Taylor expanding $\log \Exp[\exp(\lambda \epsvar)] = \log (1+\sum_{i\ge 1} (\lambda\Exp[\epsvar^i]/i!) $, it follows that $\Exp[\epsvar] =0$, and $\Exp[\epsvar^2] \le 1$.  Hence, by Cauchy-Schwartz,
\begin{align*}
 \Rcomp_n(\bbV) = \Exp[\sup_{v \in \bbV}\frac{1}{n}\sum_{i=1}^n \epsvar_i v_i] \le \sup_{v \in \bbV}\|v\|_{2,n} \cdot \sqrt{\frac{1}{n}\Exp\sum_{i=1}^n \epsvar_i^2} \le \rad_2(\bbV).
\end{align*}
The second bound is a consequence of standard sub-Gaussian maximal inequality (see, e.g. \citet[Theorem 2.5]{} in Lugosi) and the fact that  $\frac{1}{n}\sum_{i=1}^n \epsvar_i v_i$ is $\frac{1}{n}\|v\|_{2,n}^2$-subGaussian (e.g.,  the discussion in \citet[Chapter 2.3]{boucheron2013concentration}).
\end{proof}

\newcommand{\radinf}{\rad_{\infty}}
\newcommand{\radtwon}{R_{2}}

    We now turn to the proof of \Cref{prop:dudley_holder}. We first state two useful lemmas. The first is a direct consequence of H\"older's inequality and the fact that $\|\cdot\|_{p,n} \le \|\cdot\|_{p',n}$ for $p' \ge p $.
    \begin{lemma}[Variant of H\"older's inequality]\label{lem:holder_odot} For any $p,q \ge 2$ satisfying  $1/p+1/q \le 1/2$, 
    \begin{align}
    \forall v,u \in \R^n, \quad \|v\odot u\|_{2,n} \le \|v\|_{p,n}\cdot\|u\|_{q,n} \label{eq:Q_holder}
    \end{align}
    \end{lemma}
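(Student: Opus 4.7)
The plan is to reduce \Cref{eq:Q_holder} to a single application of the standard H\"older inequality followed by a power-mean monotonicity step, exactly as hinted in the preamble to the lemma. The first thing I would note is that the normalized norm $\|\cdot\|_{r,n}$ is precisely the $L^r$-norm against the uniform probability measure on $[n]$; consequently the quoted monotonicity $\|\cdot\|_{r,n} \le \|\cdot\|_{r',n}$ for $r \le r'$ is the standard power-mean inequality on a probability space, and I would invoke it as a black box.

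Next I would dispatch the boundary case $1/p + 1/q = 1/2$. Squaring the desired inequality, it suffices to show $\tfrac{1}{n}\sum_{i=1}^n v_i^2 u_i^2 \le \bigl(\tfrac{1}{n}\sum_i |v_i|^p\bigr)^{2/p}\bigl(\tfrac{1}{n}\sum_i |u_i|^q\bigr)^{2/q}$, which is exactly ordinary H\"older applied to the vectors $(v_i^2)_i,(u_i^2)_i$ with conjugate exponents $(p/2,q/2)$: these are valid conjugates since $2/p + 2/q = 1$, and each exponent is at least $1$ because $p,q \ge 2$. Taking square roots then yields \Cref{eq:Q_holder} in the boundary case.

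For the general case $1/p + 1/q \le 1/2$, I would pick $\tilde p \in [2,p]$ and $\tilde q \in [2,q]$ with $1/\tilde p + 1/\tilde q = 1/2$. Such a pair exists by a short interpolation: starting from $(1/p,1/q)$, whose coordinates sum to at most $1/2$, increase the coordinates up to values in $[0,1/2]$ summing to exactly $1/2$, which is feasible because $p,q \ge 2$ forces $1/p,1/q \le 1/2$. The boundary case then gives $\|v\odot u\|_{2,n} \le \|v\|_{\tilde p,n}\|u\|_{\tilde q,n}$, and two applications of power-mean monotonicity, $\|v\|_{\tilde p,n} \le \|v\|_{p,n}$ and $\|u\|_{\tilde q,n} \le \|u\|_{q,n}$, close out the argument.

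There is no real obstacle here: the entire proof is a textbook manipulation, consistent with the paper's own description of it as a direct consequence of H\"older. The only minor subtlety is verifying the existence of $(\tilde p,\tilde q)$ in the strict-inequality case, and that reduces to the elementary observation that the simplex segment $\{(a,b)\in[0,1/2]^2:a+b=1/2\}$ meets the upper-right rectangle $[1/p,1/2]\times[1/q,1/2]$ under the hypothesis.
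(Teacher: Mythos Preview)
Your proof is correct and follows essentially the same approach as the paper, which simply states the lemma as ``a direct consequence of H\"older's inequality and the fact that $\|\cdot\|_{p,n} \le \|\cdot\|_{p',n}$ for $p' \ge p$.'' Your treatment spells out the two ingredients explicitly, but there is no substantive difference.
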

    The second bounds the Rademacher complexity of Hadamard products in terms of a finite cover. It is stated with a factor of $2$ for convenience when applied below.
    \begin{lemma}\label{claim:Dcomp_reduce_to_discrete} Let $\bbV,\bbU \subset \R^n$, $p,q \ge 2$ satisfy $1/p+1/q \le 1/2$, $\updelta_1,\updelta_2 \ge 0$, and let $\bbV'$ be a $2\updelta_1$-net of $\bbV$ in $\|\cdot\|_{p,n}$ and $\bbU'$ an $2\updelta_2$-net of $\bbU$ in $\|\cdot\|_{q,n}$. Then, 
    \begin{align*}
    \Rcomp_n(\bbV \odot \bbU) \le \Rcomp_n(\bbV' \odot \bbU') + 2(\rad_p(\bbV)\updelta_1  + \rad_q(\bbU)\updelta_2),
    \end{align*}
    The above bound also holds for the $\Gcomp_n$, and more generaly, any analogous complexity using suprema over $1$-subGaussian random variables. 
    \end{lemma}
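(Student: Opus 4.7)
The plan is to use a linearization trick: for every pair $(v,u) \in \bbV \times \bbU$, decompose the Hadamard product $v \odot u$ into a term involving only cover points plus two ``residual'' terms, one localizing $v$ and the other localizing $u$. The cover-point term will contribute $\Rcomp_n(\bbV' \odot \bbU')$, while the residuals will be controlled by H\"older's inequality (\Cref{lem:holder_odot}) coupled with the subGaussian $\ell_2$-radius bound (\Cref{lem:finite_expectation_upper_bound}).

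Concretely, I would pick (measurable) nearest-point maps $\pi_{\bbV}: \bbV \to \bbV'$ and $\pi_{\bbU}: \bbU \to \bbU'$, so that $\|v - \pi_{\bbV}(v)\|_{p,n} \le 2\updelta_1$ and $\|u - \pi_{\bbU}(u)\|_{q,n} \le 2\updelta_2$ by the net property. Then algebraically
\begin{align*}
v \odot u \;=\; \pi_{\bbV}(v) \odot \pi_{\bbU}(u) \;+\; (v - \pi_{\bbV}(v)) \odot u \;+\; \pi_{\bbV}(v) \odot (u - \pi_{\bbU}(u)).
\end{align*}
Taking the supremum over $(v,u) \in \bbV \times \bbU$ inside the Rademacher expectation and using subadditivity of the supremum bounds $\Rcomp_n(\bbV \odot \bbU)$ by the sum of three Rademacher complexities: the first over $\bbV' \odot \bbU'$, the second over the set $\bbS_1 := \{(v - \pi_{\bbV}(v)) \odot u : v \in \bbV, u \in \bbU\}$, and the third over $\bbS_2 := \{\pi_{\bbV}(v) \odot (u - \pi_{\bbU}(u)) : v \in \bbV, u \in \bbU\}$.

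For the two residual sets, I would invoke \Cref{lem:finite_expectation_upper_bound} to get $\Rcomp_n(\bbS_i) \le \rad_2(\bbS_i)$ (this is where the $1$-subGaussian generality of \Cref{lem:finite_expectation_upper_bound} carries the claim beyond Rademacher variables). Applying the H\"older-type inequality \Cref{lem:holder_odot}, each element of $\bbS_1$ satisfies $\|(v - \pi_{\bbV}(v)) \odot u\|_{2,n} \le \|v - \pi_{\bbV}(v)\|_{p,n} \cdot \|u\|_{q,n} \le 2\updelta_1 \cdot \rad_q(\bbU)$, and analogously $\rad_2(\bbS_2) \le \rad_p(\bbV') \cdot 2\updelta_2 \le \rad_p(\bbV) \cdot 2\updelta_2$, where the last inequality uses that $\bbV' \subseteq \bbV$ by the internal-cover convention in the paper.

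Summing the three bounds yields the claim; the same argument goes through verbatim with Rademacher $\epsvar_i$ replaced by any $1$-subGaussian variables, since the only probabilistic input is the $\ell_2$-radius bound of \Cref{lem:finite_expectation_upper_bound} which already holds in that generality. There is no real obstacle here: the hypothesis $\zero \in \bbV \cap \bbU$ is not actually used in this step (it will matter in the iterated chaining later), and the argument is simply a careful accounting of cross-terms where H\"older appears as the glue between the $\ell_p$ control on $\bbV$ and the $\ell_q$ control on $\bbU$.
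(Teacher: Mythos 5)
Your proof is correct and follows essentially the same route as the paper's: a telescoping decomposition of $v \odot u$ around net points, the H\"older bound of \Cref{lem:holder_odot} to control the $\|\cdot\|_{2,n}$ norm of the residual terms, and the radius bound of \Cref{lem:finite_expectation_upper_bound} to absorb them — the paper merely merges your two residual sets into a single $\ell_2$-ball before applying that last bound, and your observation that $\zero \in \bbV \cap \bbU$ is not needed here is also right. One remark: what you derive is $2(\rad_q(\bbU)\updelta_1 + \rad_p(\bbV)\updelta_2)$, which is exactly what the paper's own proof obtains and what is used downstream in \Cref{prop:dudley_holder}; the pairing of the radii with the $\updelta$'s in the printed statement of the lemma is a typo, so your version is the correct one.
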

    \begin{proof}[Proof of \Cref{claim:Dcomp_reduce_to_discrete}]
    Observe that, for any $(v,u) \in \bbV \times \bbU$, there exists a $(v',u') \in \bbV' \times \bbU'$ with $\|v - v'\|_{p,n} \le \updelta_1$ and $\|u - u'\|_{q,n} \le \updelta_2$. Hence, by \Cref{eq:Q_holder} followed by \Cref{eq:del_bounds_thing_veps},
    \begin{align*}
    \|v \odot u - (v')\odot(u')\|_{2,n} &\le  \|v' \odot (v - v')\|_{2,n} + \|v \odot (u-u)'\|_{2,n} \\
    &\le 2(\rad_q(\bbU)\updelta_1 + \rad_p(\bbV)\updelta_2)
    \end{align*}
    Hence,
    \begin{align*}
    \Rcomp_n(\bbV \odot \bbU) &= \Exp\sup_{(v,u) \in \bbV \times \bbU} \frac{1}{n}\sum_{i=1}^n \epsvar_i u_i v_i\\
    &\le \Exp\sup_{(v,u) \in \bbV_{j_1} \times \bbU_{k_1}} \frac{1}{n}\sum_{i=1}^n \epsvar_i u_i v_i  - \Exp\sup_{(v,u) \in \bbV_{1} \times \bbU_{j_1}}\inf_{(v',u') \in \bbV \times \bbU} \frac{1}{n}\sum_{i=1}^n \epsvar_i (u_i v_i - u_i' v_i')\\
    &\le \Rcomp_n(\bbV_{j_1} \odot \bbU_{k_1})  - \Exp\sup_{w:\|w\|_{2,n} \le 2(\rad_q(\bbU)\updelta_1 + \rad_p(\bbV)\updelta_2)} \frac{1}{n}\sum_{i=1}^n \epsvar_i w_i\\
    &\le \Rcomp_n(\bbV_{j_1} \odot \bbU_{k_1}) + 2(\rad_q(\bbU)\updelta_1 + \rad_p(\bbV)\updelta_2) \tag{\Cref{lem:finite_expectation_upper_bound}},
    \end{align*}
    as needed.
    \end{proof}

        We now turn to the proof of the main result of this section.
        \begin{proof}[Proof of \Cref{prop:dudley_holder}] 
         Recall that $\rad_p(\bbV)$ and $\rad_q(\bbU)$ denote the radii of $\bbV$ and $\bbU$ in the $\|\cdot\|_{p,n}$ and $\|\cdot\|_{q,n}$ norms, respectively, assuming $\zero \in \bbU \cap \bbV$. The only properties of Rademacher variables we use are those assumed by \Cref{lem:finite_expectation_upper_bound}, i.e. $1$-subGaussianity, so our bound holds for Gaussian complexity and other subGaussian ensembles.

          We begin with the classical construction of Dudley's integral. Fix $\updelta_1 \le \rad_p(\bbV),\updelta_2 \le \rad_q(\bbU)$ 
         \begin{align*}
         j_1 := \sup\{j:2^{-j}\rad_p(\bbV) \ge \updelta_1\}, \quad k_1 := \sup\{k:2^-k \rad_q(\bbU) \ge \updelta_2\}
         \end{align*}

         For each $j \in [j_1]\cup\{0\}$, let $\bbV_j$ denote a minimal $2^{-j}\rad_p(\bbV)$ covering of $\bbV$ in $\|\cdot\|_{p,n}$. Note that since $\zero \in \bbV$, we can take $\bbV_0 = \{\bzero\}$, so $|\bbV_0| = 0$.  Define $\pi_{j_1}(v;\bbV) = v$ for $v \in \bbV_{j_1}$, and recursively set $\pi_{j-1}(v;\bbV) \in \argmin_{v' \in \bbV_{j-1}} \|v' - \pi_j(v;\bbV)\|_{p,n}$. Set $\Delta_0(v;\bbV) = \pi_0(v;\bbV)$, and for $j \ge 1$, set $\Delta_j(v;\bbV) := \pi_{j}(v;\bbV) - \pi_{j-1}(v;\bbV)$ for $j \in [j_1]$. 
        Repeat the construction to construct $\bbV_k$,  projection $\pi_k(\bbU)$, and remainders $\Delta_k(u;\bbU)$ analogously, but replacing $\|\cdot\|_{p,n}$ the its conjugate $\|\cdot\|_{q,n}$. The for all $u \in \bbU_{j_1}, v \in \bbU_{k_1}$.
        \begin{align}
        v = \sum_{j=0}^{j_{1}}\Delta_j(v;\bbV), \quad
        u = \sum_{k=0}^{k_1}\Delta_k(u;\bbU). \label{eq:delta_decomp}
        \end{align}

        Lastly, as a shorthand, set
        \begin{align*}
        \veps_{j}(\bbV) := \rad_p(\bbV)2^{-j}, \quad \veps_{k}(\bbU) = 2^{-k}\rad_q(\bbU),
        \end{align*}
        noting that 
        \begin{align}\label{eq:del_bounds_thing_veps}
        \updelta_1 \le \veps_{j_1}(\bbV) \le 2\updelta_1, \quad \updelta_2\le  \veps_{k_1}(\bbU)\le 2\updelta_2
        \end{align}

        By \Cref{claim:Dcomp_reduce_to_discrete}, it suffices to bound $\Rcomphat_n(\bbV_{j_1}\odot \bbU_{k_1})$. For, $(u,v) \in \bbV_{j_1}\times \bbU_{k_1}$, 
        \begin{align*}
        \Rcomphat_n(\bbV_{j_1}\odot \bbU_{k_1}) &= \sup_{(u,v) \in \bbV_{j_1}\times \bbU_{k_1}} \frac{1}{n}\sum_{i=1}^n \epsvar_i u\odot v \\
        &= \sup_{(u,v) \in \bbV_{j_1}\times \bbU_{k_1}}\frac{1}{n}\sum_{i=1}^n \sum_{j=0}^{j_1}\sum_{k=0}^{k_1} \epsvar_i \Delta_j(v;\bbU) \odot \Delta_k(u;\bbU)\\
        &\le \sum_{j=0}^{j_1}\sum_{k=0}^{k_1} \sup_{(u,v) \in \bbV_{j_1}\times \bbU_{k_1}}\frac{1}{n}\sum_{i=1}^n  \epsvar_i \Delta_j(v;\bbU) \odot\Delta_k(u;\bbU)\\
        &\le \sum_{j=0}^{j_1}\sum_{k=1}^{k_0} \Rcomphat_n(\bbW_{j,k})
        \end{align*}
        where we define $\bbW_{j,k} := \{\Delta_j(v;\bbU) \odot\Delta_k(u;\bbU): v \in \bbV_{j_1},u \in \bbU_{k_1}\}$. Taking expectations yields
        \begin{align}
        \Rcomp_n(\bbV_{j_1}\odot \bbU_{k_1}) \le \sum_{j=0}^{j_1}\sum_{k=0}^{k_1} \Rcomp_n(\bbW_{j,k}) \label{eq:radcomp_decomp}
        \end{align}

        From our construnction, we can bound
        \begin{align*}
        \log |\bbW_{j,k}| \le \log( |\bbV_j||\bbV_{j-1}||\bbU_k||\bbU_{k-1}|) &\le \log( |\bbV_j|^2|\bbU_k|^2) \le \log(2|\bbV_j|^2|\bbU_k|^2) \\
        &= 2(\mnum_p(\bbV;\eps_j(\bbV)) + \mnum_q(\bbU;\eps_k(\bbU)))
        \end{align*}
        where we use $1 \le |\bbV_{j-1}| \le |\bbV_{j}| = \mnum_p(\bbV;\eps_j(\bbV))$, and similarly for the sets $\bbU_k$. Moreover, \Cref{eq:Q_holder} 
        \begin{align*}
        \rad_2(\bbW_{j,k}) &= \sup\{\|\Delta_j(v;\bbU) \odot\Delta_k(u;\bbU)\|_{2,n}: v \in \bbV_{j_1},u \in \bbU_{k_1}\}\\
        &\le \sup_{v \in \bbV_{j}} \|\Delta_j(v;\bbU)\|_{p,n} \cdot \sup_{u \in \bbU_{k}}\|\Delta_k(u;\bbU)\|_{q,n}\\
        &\le \eps_j(\bbV)\eps_k(\bbU)
        \end{align*}
        Thus, \Cref{lem:finite_expectation_upper_bound} yields
        \begin{align*}
        \Rcomp_n(\bbW_{j,k}) &\le \eps_j(\bbV)\eps_k(\bbU)\sqrt{\frac{2}{n}} \cdot\sqrt{( 2(\mnum_p(\bbV;\eps_j(\bbV)) + \mnum_q(\bbU;\eps_k(\bbU)))}\\
        &\le \frac{2}{\sqrt{n}}\left(\eps_j(\bbV)\eps_k(\bbU)\sqrt{\mnum_p(\bbV;\eps_j(\bbV)} + \eps_j(\bbV)\eps_k(\bbU)\sqrt{\mnum_q(\bbU;\eps_k(\bbU))}\right)\\
        &\le \frac{2}{\sqrt{n}}\left(\rad_q(\bbU)2^{-k}\eps_j(\bbV)\sqrt{\mnum_p(\bbV;\eps_j(\bbV)} + \rad_p(\bbV)2^{-j}\eps_k(\bbU)\sqrt{\mnum_q(\bbU;\eps_k(\bbU))}\right)\\
        \end{align*}
        Hence, \Cref{eq:radcomp_decomp} and evaluating convergent sums yields
        \begin{align*}
        \Rcomp_n(\bbV_{j_1}\odot \bbU_{k_1}) &\le \frac{2}{\sqrt{n}}\sum_{j=0}^{j_1}\sum_{k=0}^{k_1}  \left(\rad_q(\bbU)2^{-k}\eps_j(\bbV)\sqrt{\mnum_p(\bbV;\eps_j(\bbV)} + \rad_p(\bbV)2^{-j}\eps_k(\bbU)\sqrt{\mnum_q(\bbU;\eps_k(\bbU))}\right)\\
        &\le  \frac{4\rad_q(\bbU)}{\sqrt{n}}\sum_{j=1}^{j_1}\eps_j(\bbV)\sqrt{\mnum_p(\bbV;\eps_j(\bbV))} +\frac{4\rad_p(\bbV)}{\sqrt{n}}\sum_{k=1}^{k_1}\eps_k(\bbU)\sqrt{\mnum_q(\bbU;\eps_k(\bbU))}
        \end{align*}
        where in the second-to-last line, we use that we have $\mnum_p(\bbV;\eps_0(\bbU)) = \log |\bbV_0| = 0$. 

        To simplify, we invoke the following claim. 
        \begin{claim}[Sum-to-Integral Coversion]\label{claim:to_integral} Let $\phi$ be a non-increasing function, and let $\eps_j = 2^{-j}R$ for some $R > 0$. Then, for $j_a \le j_b$,
        \begin{align*}
        \sum_{j=  j_a}^{j_b}\eps_j\phi(\eps_j) \le 2\int_{\eps_{j_b +1}}^{\eps_{j_a}} \phi(\eps)\rmd \eps = \int_{\eps_{j_b}}^{2\eps_{j_a}} \phi(\eps/2)\rmd \eps.
        \end{align*} 
        \end{claim}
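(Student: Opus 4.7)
The plan is to prove this classical sum-to-integral comparison via a direct Riemann-sum argument exploiting the fact that the grid points $\eps_j = 2^{-j}R$ form a geometric sequence with $\eps_j - \eps_{j+1} = \eps_{j+1} = \eps_j/2$, i.e., consecutive grid spacings are exactly half of the left endpoint.

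First, I would fix $j \in \{j_a,\ldots,j_b\}$ and use the monotonicity assumption on $\phi$. Since $\phi$ is non-increasing, for every $\eps \in [\eps_{j+1},\eps_j]$ we have $\phi(\eps) \ge \phi(\eps_j)$, so integrating yields
\begin{align*}
\int_{\eps_{j+1}}^{\eps_j}\phi(\eps)\,\rmd\eps \ge \phi(\eps_j)(\eps_j - \eps_{j+1}) = \phi(\eps_j)\cdot\frac{\eps_j}{2},
\end{align*}
where the equality uses the geometric-spacing identity $\eps_j - \eps_{j+1} = \eps_j/2$. Rearranging gives the per-term bound $\eps_j \phi(\eps_j) \le 2\int_{\eps_{j+1}}^{\eps_j}\phi(\eps)\,\rmd\eps$.

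Summing this inequality over $j = j_a,\ldots,j_b$, the integrals on the right-hand side telescope into a single integral over $[\eps_{j_b+1},\eps_{j_a}]$, yielding
\begin{align*}
\sum_{j=j_a}^{j_b}\eps_j\phi(\eps_j) \le 2\sum_{j=j_a}^{j_b}\int_{\eps_{j+1}}^{\eps_j}\phi(\eps)\,\rmd\eps = 2\int_{\eps_{j_b+1}}^{\eps_{j_a}}\phi(\eps)\,\rmd\eps,
\end{align*}
which is the first stated inequality. The second equality in the claim, $2\int_{\eps_{j_b+1}}^{\eps_{j_a}}\phi(\eps)\,\rmd\eps = \int_{\eps_{j_b}}^{2\eps_{j_a}}\phi(\eps/2)\,\rmd\eps$, is merely a change of variables $\eps \mapsto \eps/2$ (recalling again $\eps_{j+1} = \eps_j/2$), so it follows immediately.

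There is no real obstacle here; the only thing to be slightly careful about is the direction of integration (since $\eps_{j+1} < \eps_j$) and confirming that the geometric structure $\eps_j = 2^{-j}R$ is what forces the constant $2$ on the right-hand side---for a general decreasing sequence one would instead obtain a factor depending on $\sup_j \eps_j/(\eps_j-\eps_{j+1})$.
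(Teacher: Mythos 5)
Your proof is correct and follows essentially the same route as the paper: the per-term bound $\eps_j\phi(\eps_j) \le \tfrac{\eps_j}{\eps_j-\eps_{j+1}}\int_{\eps_{j+1}}^{\eps_j}\phi(\eps)\,\rmd\eps = 2\int_{\eps_{j+1}}^{\eps_j}\phi(\eps)\,\rmd\eps$ via monotonicity and the dyadic spacing $\eps_j-\eps_{j+1}=\eps_j/2$, followed by summing the adjacent integrals and a change of variables $\eps\mapsto\eps/2$. Nothing is missing.
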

        \begin{proof} The first inequality follows since $\phi$ is non-increasing, and the second line uses a change of variables
        \begin{align*}
        \sum_{j=  j_a}^{j_b}\eps_j\phi(\eps_j) &\le \sum_{j=  j_a}^{j_b}\frac{\eps_j}{\eps_j - \eps_{j+1}} \int_{\eps_{j+1}}^{\eps_j}\phi(\eps)\rmd \eps = 2\sum_{j=  j_a}^{j_b}\int_{\eps_{j+1}}^{\eps_j}\phi(\eps)\rmd \eps \le 2\int_{\eps_{j_b +1}}^{\eps_{j_a}} \phi(\eps)\rmd \eps\\
        &= \int_{2\eps_{j_b +1}}^{2\eps_{j_a}} \phi(\eps/2)\rmd \eps = \int_{\eps_{j_b}}^{2\eps_{j_a}} \phi(\eps/2)\rmd \eps.
        \end{align*}
        \end{proof}

        In particular, since metric entropies are non-increasing in their scale factors, 
        \begin{align*}
        \sum_{j=1}^{j_1}\eps_j(\bbV)\sqrt{\mnum_p(\bbV;\eps_j(\bbV))} &\le \sum_{j=1}^{j_1}\frac{\eps_j(\bbV)}{\eps_{j}(\bbV) - \eps_{j+1}(\bbV)}\int_{\eps_{j+1}(\bbV)}^{\eps_j(\bbV)}\sqrt{\mnum_p(\bbV;\veps)}\rmd \veps\\
        &= 2\sum_{j=1}^{j_1}\int_{\eps_{j+1}(\bbV)}^{\eps_j(\bbV)}\sqrt{\mnum_p(\bbV;\veps)}\rmd \veps = 2\int_{\eps_{j_1+1}(\bbV)}^{\eps_1(\bbV)}\sqrt{\mnum_p(\bbV;\veps)}\rmd \veps\\
        &= 2\int_{\eps_{j_1}(\bbV)/2}^{\rad_p(\bbV)/2}\sqrt{\mnum_p(\bbV;\veps)}\rmd \veps \\
        &= \int_{\eps_{j_1}(\bbV)}^{\rad_p(\bbV)}\sqrt{\mnum_p(\bbV;\veps/2)}\rmd \veps \le \int_{\updelta_1}^{\rad_p(\bbV)}\sqrt{\mnum_p(\bbV;\veps/2)}\rmd \veps,
        \end{align*}
        where the last inequality uses \Cref{eq:del_bounds_thing_veps}.

        Invoking a similar bound for the analogus $\bbU$-term, we conclude
        \begin{align*}
        \Rcomp_n(\bbV_{j_1}\odot \bbU_{k_1}) &\le  \frac{4\rad_q(\bbU)}{\sqrt{n}}\int_{\updelta_1}^{\rad_p(\bbV)}\sqrt{\mnum_p(\bbV;\veps/2)}\rmd \veps +\frac{4\rad_p(\bbV)}{\sqrt{n}}\int_{\updelta_2}^{\rad_q(\bbU)}\sqrt{\mnum_q(\bbU;\veps/2)}\rmd \veps
        \end{align*}
        Combining with \Cref{claim:Dcomp_reduce_to_discrete} and taking the infinum over valid $\updelta_1,\updelta_2$,
        \begin{align*}
        \Rcomp_n(\bbV\odot \bbU) &\le  \rad_q(\bbU)\underbrace{\inf_{\updelta_1 \le \rad_p(\bbV)}\left(2\updelta_1 + \tfrac{4}{\sqrt{n}}\int_{\updelta_1}^{\rad_p(\bbV)}\sqrt{\mnum_p(\bbV;\veps/2)}\rmd \veps\right)}_{\Dudfun_{n,p}(\bbV)} \\
        &\qquad+ \rad_p(\bbV) \underbrace{\inf_{\updelta_2 \le \rad_q(\bbU)} \left(2\updelta_2 + \tfrac{4}{\sqrt{n}}\int_{\updelta_2}^{\rad_q(\bbU)}\sqrt{\mnum_q(\bbU;\veps/2)}\rmd \veps\right)}_{\Dudfun_{n,p}(\bbU)}
        \end{align*}
    \end{proof}

\subsection{Derivation of \Cref{lem:Dudley_ub} from \Cref{prop:dudley_holder}}\label{sec:lem:Dudley_ub}
We consider the Rademacher complexity, as \Cref{prop:dudley_holder} guarantees the same holds of the Gaussian complexity. Let $\bbU = \{w \mapsto (1,1,\dots,1) \in \R^n\}$. Applying  \Cref{prop:dudley_holder} with the square-H\"older conjugates $p = 2$ and $q = \infty$. As the construction of $\bbU$ ensures $\bbV \odot \bbU = \bbV$, this yields
\begin{align*}
\Raden(\bbV) = \Raden(\bbV\odot \bbU) \le \rad_\infty(\bbU)\Dudfun_{n,2}(\bbV) + \rad_2(\bbV)\Dudfun_{n,\infty}(\bbU).
\end{align*}
Notice that $\rad_\infty(\bbU) = \|(1,1,\dots,1)\|_{\infty} = 1$. Moreover,  the covering number of $\bbU$ is $1$, so its log-covering numbers are zero. Thus, the integral in $\Dudfun_{n,\infty}(\bbU)$ vanishes. This concludes the demonstration that
\begin{align}
\Raden(\bbV) \le \Dudfun_{n,2}(\bbV) \label{eq:Raden_Dudfun}
\end{align}
As a consequence,
\begin{align*}
\delnst(\cH,c) &:= \inf\left\{r: \Raden(\cH[r,w_{1:n}]) \le \frac{r^2}{2c}\right\}\\
&\le \inf\left\{r: \Dudfun_{n,2}(\cH[r,w_{1:n}]) \le \frac{r^2}{2c}\right\} \tag{\Cref{eq:Raden_Dudfun}}\\
&:= \deldud(\cH,c).
\end{align*}

\section{Technical Tools}\label{app:tech_tools}
This section enumerates the accompanying technical results applied in the proofs in \Cref{sec:proof_main_props}. Whereas \Cref{sec:proof_main_props} highlights conceptually novel arguments, this section massages more standard material into the most convenient form for adoption in the prior section. 

The results in this section are stated at the following level of generality: Throughout, let $\cH: \cW \to \cR$ denote a class of functions, and $P$ be a measure over $\cW$, with $\Var$ and $\Exp$ its corresponding expectation and variance functionals with respect to $P$. We say $\cH$ contains zero if the function $h_0(w) \equiv 0$ lies in $\cH$. Many definitions results below involve star-hulls and convex-hulls. 

\begin{definition} Let $\bbV \subset \R^n$. We let $\conv(\bbV)$ denote its convex hull and $\starhull(\bbV) := \{t\cdot v : t \in [0,1], v \in \bbV\}$. Similarly, for a function class $\cH:[0,1] \to \R$, we let $\starhull(\cH):=\{t \cdot h, t \in [0,1], h \in \cH\}$, \emph{convex hull} as $\conv(\cH)$ as the minimal convex set containing $\cH$.
\end{definition}

\subsection{Basic Empirical Process Results}

\paragraph{Properties of Rademacher and Gaussian complexities.} We recall a couple standard facts about the Rademacher complexity. First is that Rademacher complexity is invariant under the convex hull operation, and also under the star-hull operation if the set contains zero.

\begin{lemma}[Convex Hulls]\label{lem:rad_conv_hull} If $\bbV' \subset \bbV$, $\Raden(\bbV') \le \Raden(\bbV)$. Moreover, $\Raden(\bbV) = \Raden(\conv(\bbV))$, and if in addition, $\zero \in \bbV$, $\Raden(\bbV) = \Raden(\starhull(\bbV))$.
\end{lemma}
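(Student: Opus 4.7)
The plan is to handle the three claims in the order they are stated, each following directly from the definition $\Raden(\bbV) = \frac{1}{n}\Exp_{\epsvar}\sup_{v \in \bbV}\sum_{i=1}^n \epsvar_i v_i$ together with elementary properties of the supremum.

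First, for monotonicity, I would simply observe that if $\bbV' \subset \bbV$, then pointwise in $\epsvar$, $\sup_{v \in \bbV'}\sum_i \epsvar_i v_i \le \sup_{v \in \bbV}\sum_i \epsvar_i v_i$; taking expectations gives $\Raden(\bbV') \le \Raden(\bbV)$.

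Next, for the convex hull equality, one direction is immediate from monotonicity since $\bbV \subset \conv(\bbV)$. For the reverse, I would fix a realization $\epsvar \in \{-1,+1\}^n$ and use that any $u \in \conv(\bbV)$ has the form $u = \sum_{j} \lambda_j v^{(j)}$ with $\lambda_j \ge 0$, $\sum_j \lambda_j = 1$, $v^{(j)} \in \bbV$; then
\[\textstyle\sum_i \epsvar_i u_i = \sum_j \lambda_j \sum_i \epsvar_i v^{(j)}_i \le \sup_{v \in \bbV}\sum_i \epsvar_i v_i,\]
so passing to the supremum over $u$ and taking expectations yields $\Raden(\conv(\bbV)) \le \Raden(\bbV)$.

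For the star-hull statement under the hypothesis $\zero \in \bbV$: again $\bbV \subset \starhull(\bbV)$ gives one direction by monotonicity. For the reverse, fix $\epsvar$ and any $u = tv$ with $t \in [0,1]$, $v \in \bbV$; since $\zero \in \bbV$, we have $\sup_{v' \in \bbV}\sum_i \epsvar_i v'_i \ge 0$, and therefore
\[\textstyle\sum_i \epsvar_i (tv)_i = t\sum_i \epsvar_i v_i \le \max\!\big(0, \sum_i \epsvar_i v_i\big) \le \sup_{v' \in \bbV}\sum_i \epsvar_i v'_i.\]
Taking supremum over $u \in \starhull(\bbV)$ and then expectation gives $\Raden(\starhull(\bbV)) \le \Raden(\bbV)$.

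There is no substantive obstacle here — the lemma is a one-line consequence of each respective supremum identity, and I expect the proof to fit in a short paragraph. The only thing worth being slightly careful about is the role of $\zero \in \bbV$ in the star-hull step, which is exactly what makes $\sup_{v'\in\bbV}\sum_i \epsvar_i v'_i$ nonnegative and lets us absorb the factor $t \in [0,1]$ into the supremum.
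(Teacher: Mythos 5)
Your proof is correct and follows essentially the same elementary route as the paper: monotonicity of the supremum, linearity over convex combinations for the convex-hull equality, and the role of $\zero \in \bbV$ for the star hull. The only cosmetic difference is in the last step, where the paper deduces the star-hull claim from the sandwich $\bbV \subset \starhull(\bbV) \subset \conv(\bbV)$ (valid since $tv = tv + (1-t)\zero$) rather than your direct absorption of the factor $t$ into a nonnegative supremum; both arguments are equally valid one-liners.
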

\begin{proof}Recall $\Raden(\bbV) := \frac{1}{n}\Exp_{\epsvar}\sup_{v \in \bbV}\sum_{i=1}^n \epsvar_iv_i$. It is then clear that if $\bbV' \subset \bbV$, then $\Raden(\bbV') \le \Raden(\bbV)$. Then is establishes the first point. The second follows because the maximum of a linear function $v \mapsto \sum_{i=1}^n \epsvar_iv_i$ occurs on the extreme points of $\bbV$, which are the same as those of $\conv(\bbV)$. Lastly, if $\bbV$ contains $\zero$, $\conv(\bbV) \supset \starhull(\bbV) \supset \bbV$.
\end{proof}
Next, we state a classical Lipschitz contraction for Rademacher complexity.
\begin{lemma}[Rademacher Contraction, Lemma 29 in \cite{rakhlinnotes} ]\label{lem:rad_lipschitz} Let $\phi$ be any $L$-Lipschitz function, and given $\bbV \subset \R^n$, let $\phi(\bbV) := \{\phi(v): v \in \bbV\}$. Then, $\Raden(\phi(\bbV)) \le L\Raden(\bbV)$. 
\end{lemma}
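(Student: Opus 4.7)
The plan is to invoke the classical Ledoux--Talagrand style desymmetrization argument, proceeding coordinate-by-coordinate. Although the paper cites \cite{rakhlinnotes} for this result, here is how I would derive it from scratch.

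First I would reduce to the case of a single Rademacher variable. By Fubini, it suffices to show that for any fixed values of $\epsvar_2,\dots,\epsvar_n$ and any $L$-Lipschitz $\phi_1:\R\to\R$,
\begin{align*}
\Exp_{\epsvar_1}\sup_{v\in\bbV}\left[\epsvar_1\phi_1(v_1) + \tsum_{i\ge 2}\epsvar_i v_i\right] \leq \Exp_{\epsvar_1}\sup_{v\in\bbV}\left[L\epsvar_1 v_1 + \tsum_{i\ge 2}\epsvar_i v_i\right],
\end{align*}
because iterating this bound once per coordinate (applying it to the identity function in the other coordinates, which is trivially $1$-Lipschitz) yields the full statement. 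The case of general $\phi:\R^n\to\R$ ``acting coordinatewise'' (the form stated) follows by applying the reduction to each coordinate in turn; if instead $\phi$ is a joint Lipschitz map $\R^n\to\R^n$ one has to be a bit more careful, but the stated lemma only needs the coordinatewise version because $\phi(\bbV):=\{\phi(v):v\in\bbV\}$ is written as a set of $n$-tuples.

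Second, for the single-coordinate reduction, I would unfold the expectation over $\epsvar_1\in\{\pm 1\}$ and write
\begin{align*}
2\Exp_{\epsvar_1}\sup_{v\in\bbV}\left[\epsvar_1\phi_1(v_1) + g(v)\right] = \sup_{v,v'\in\bbV}\left[\phi_1(v_1)-\phi_1(v'_1) + g(v)+g(v')\right],
\end{align*}
where $g(v):=\sum_{i\ge 2}\epsvar_i v_i$ collects the remaining terms. By the $L$-Lipschitz property of $\phi_1$, and WLOG swapping $v,v'$ so that $v_1\ge v'_1$, we have $\phi_1(v_1)-\phi_1(v'_1)\leq L(v_1-v'_1)$. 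Substituting and regrouping gives the matching bound with $\phi_1$ replaced by $x\mapsto Lx$, which is precisely the right-hand side above.

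The main obstacle is not the analytic step (which is a one-line Lipschitz comparison) but the bookkeeping of the ``WLOG swap'' inside the supremum: one must verify that symmetrizing over $\epsvar_1$ genuinely allows the swap because both orderings $(v,v')$ and $(v',v)$ achieve the same value of $g(v)+g(v')$ after relabeling. This step is routine but is where most textbook presentations dedicate the most care. Once this single-coordinate inequality is in hand, iteration over $i=1,\dots,n$ is mechanical and yields $\Raden(\phi(\bbV))\le L\,\Raden(\bbV)$, as claimed.
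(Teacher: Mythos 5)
Your argument is correct: it is the standard desymmetrization proof of the Ledoux--Talagrand contraction principle, which is precisely the argument in the source the paper cites (Lemma 29 of \cite{rakhlinnotes}); the paper itself gives no proof beyond that citation, so there is nothing to diverge from. One cosmetic point: for the coordinate-by-coordinate iteration to close, the single-coordinate inequality must be established with $g$ an \emph{arbitrary} function of $v$ (after earlier steps the remaining coordinates carry $\phi_i(v_i)$ or $Lv_i$ rather than $v_i$), which your unfolding and Lipschitz-comparison step in fact delivers verbatim, and your restriction to coordinatewise $\phi$ is the right reading of the lemma, since the contraction principle fails for general Lipschitz maps of $\R^n$.
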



The following lemma is standard (see, e.g. \citet[Chapter 14]{wainwright2019high} or, examine the proof of \citet[Lemma 30]{rakhlinnotes}).
\begin{lemma}\label{lem:star_shaped} Let $\cH$ be star-shaped. Then $\cH(cr) \subset c\cH(r)$ for all $c \ge 1$. Hence, for $c \ge 1$, it holds that $\delnst(\cH,cB) \le c \delnst(\cH,B)$, and similarly $\rhonst(\cH,c\sigma) \le c \rhonst(\cH,\sigma)$.
\end{lemma}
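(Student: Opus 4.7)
The plan is to derive both conclusions from a single scaling argument. First, for the inclusion $\cH(cr) \subset c\cH(r)$, I would take an arbitrary $h \in \cH(cr)$ and set $h' := h/c$. Since $1/c \in [0,1]$ (using $c \ge 1$), star-shapedness of $\cH$ gives $h' \in \cH$, and the computation $\Exp[(h')^2] = \Exp[h^2]/c^2 \le cr/c^2 = r/c \le r$ places $h'$ in $\cH(r)$, so $h = c h' \in c\cH(r)$.

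Next, the identical scaling applied to the empirical $L^2$ norm $\tfrac{1}{n}\sum_i h(w_i)^2$ shows the analogous empirical inclusion $\cH[cr, w_{1:n}] \subset c\,\cH[r, w_{1:n}]$ for every fixed $w_{1:n}$ and every $c \ge 1$. For the Rademacher critical radius bound, I set $r_\star := \delnst(\cH, B)$, so by definition $\sup_{w_{1:n}} \Raden(\cH[r_\star, w_{1:n}]) \le r_\star^2/(2B)$. I would then test the candidate radius $r := cr_\star$ against the defining condition for $\delnst(\cH, cB)$: using set-monotonicity and positive homogeneity of the Rademacher complexity (both recorded in \Cref{lem:rad_conv_hull}, and immediate from the definition),
\begin{align*}
\sup_{w_{1:n}} \Raden(\cH[cr_\star, w_{1:n}]) \le \sup_{w_{1:n}} \Raden(c\,\cH[r_\star, w_{1:n}]) = c \sup_{w_{1:n}} \Raden(\cH[r_\star, w_{1:n}]) \le \frac{c r_\star^2}{2B} = \frac{(cr_\star)^2}{2(cB)}.
\end{align*}
Hence $cr_\star$ satisfies the infimum condition defining $\delnst(\cH, cB)$, and $\delnst(\cH, cB) \le c\,\delnst(\cH, B)$ follows.

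The Gaussian case proceeds by the same two lines verbatim, since $\Gauss_n$ is also monotone under set inclusion and positive-homogeneous of degree one in its argument. There is no real obstacle here; the only subtlety is that scaling by $1/c$ is admissible only when $c \ge 1$, and this same condition is what allows $r/c \le r$ in the first step, so the hypothesis is used in exactly the right place.
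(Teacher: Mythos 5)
Your proof is correct and is exactly the standard scaling argument; the paper itself gives no proof of this lemma, citing it as standard, so there is nothing to diverge from. The only cosmetic point is that you implicitly assume the defining condition of $\delnst(\cH,B)$ holds at the infimum $r_\star$ itself — this is harmless, since running your chain of inequalities for every $r$ in the defining set and then taking the infimum gives the same conclusion without any attainment assumption.
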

The following lemma shows a similar property for the Dudley functional, this time without the constraint that $\cH$ is star-shaped.
\begin{lemma}\label{lem:dudley_star_prop} For any class $\cH$ and $c \ge 1$, it holds that $\Dudfun_{n,2}(\cH[r,w_{1:n}]) \le c^{-1}\Dudfun_{n,2}(\cH[cr,w_{1:n}])$. Thus, for any $c \ge 1$ and $B > 0$, $\deldud(\cH,cB) \le c\deldud(\cH,B)$. 
\end{lemma}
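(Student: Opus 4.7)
The plan is two-fold: establish the scaling of the Dudley functional under dilations, and then combine with a star-shape inclusion to read off the critical-radius bound.

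First, I will show that $\Dudfun_{n,2}(\cdot)$ is positively $1$-homogeneous: for any $t > 0$ and any $\bbV \subset \R^n$, $\Dudfun_{n,2}(t\bbV) = t\,\Dudfun_{n,2}(\bbV)$. Indeed, under dilation we have $\rad_2(t\bbV) = t\,\rad_2(\bbV)$ and covering numbers transform as $\mnum_2(t\bbV;\eps) = \mnum_2(\bbV;\eps/t)$, so that the change of variables $\eps' = \eps/t$ and $\updelta' = \updelta/t$ in the defining integral pulls out exactly a factor of $t$.

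Second, I will use the localization inclusion already exposed in \Cref{lem:star_shaped}: provided $\cH$ is star-shaped (or after passing to $\starhull(\cH)$, which inflates metric entropies by only an absorbable constant factor), one has $\cH[cr, w_{1:n}] \subseteq c\cdot\cH[r, w_{1:n}]$. Combined with monotonicity of $\Dudfun_{n,2}$ under set inclusion and the homogeneity above,
\[
\Dudfun_{n,2}(\cH[cr, w_{1:n}]) \;\le\; \Dudfun_{n,2}(c\cdot\cH[r, w_{1:n}]) \;=\; c\,\Dudfun_{n,2}(\cH[r, w_{1:n}]),
\]
which is the scaling statement needed. (The stated inequality in the lemma appears to have a sign/direction typo relative to what the subsequent deduction requires; I propose to prove the equivalent upper-bound form just displayed, from which the corollary on $\deldud$ follows.)

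The deduction of $\deldud(\cH,cB) \le c\,\deldud(\cH,B)$ is then routine: letting $r_0 := \deldud(\cH,B)$, we have $\sup_{w_{1:n}}\Dudfun_{n,2}(\cH[r_0, w_{1:n}]) \le r_0^2/(2B)$. Applying the scaling bound with $r = r_0$ yields
\[
\sup_{w_{1:n}}\Dudfun_{n,2}(\cH[cr_0, w_{1:n}]) \;\le\; c \cdot \tfrac{r_0^2}{2B} \;=\; \tfrac{(cr_0)^2}{2cB},
\]
certifying $cr_0$ as feasible in the infimum defining $\deldud(\cH,cB)$, so $\deldud(\cH,cB) \le cr_0 = c\,\deldud(\cH,B)$.

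The main nuisance will be set-inclusion monotonicity of $\Dudfun_{n,2}$: the paper's convention uses internal $\eps$-covers (the cover must lie within the set being covered), so an internal cover of a superset is not automatically an internal cover of the subset, and projecting cover points to the subset costs at most a factor of two in scale. The cleanest way to dispatch this is to pass to $\starhull(\cH)$ throughout, invoking the standard fact that star-hulls inflate log-covering numbers by only a universal constant, which is absorbable into the constants of the downstream theorems.
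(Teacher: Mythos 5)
Your proposal is correct, and it takes a genuinely different route from the paper's. The paper argues by a direct change of variables $\veps\mapsto \veps/c$, $\updelta\mapsto\updelta/c$ inside the Dudley integral together with anti-monotonicity of the metric entropy in the scale parameter; you instead combine positive homogeneity of $\Dudfun_{n,2}$ with the star-shape inclusion $\cH[cr,w_{1:n}]\subseteq c\cdot\cH[r,w_{1:n}]$ and set-inclusion monotonicity. Two of your observations are genuine catches rather than cosmetic differences. First, the displayed inequality in the lemma statement is indeed reversed relative to what the deduction of $\deldud(\cH,cB)\le c\,\deldud(\cH,B)$ requires; the paper's own chain in fact terminates in a ``$\ge$'', and you prove the direction that is actually used. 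Second, and more substantively, the claim cannot hold for literally ``any class $\cH$'': if $\cH[r,w_{1:n}]$ is a singleton while $\cH[cr,w_{1:n}]$ is rich, then $\Dudfun_{n,2}(\cH[cr,w_{1:n}])>0=c\,\Dudfun_{n,2}(\cH[r,w_{1:n}])$. The paper's proof obscures this by silently identifying $\mnum_2(\cH[r,w_{1:n}];\cdot)$ with $\mnum_2(\cH[cr,w_{1:n}];\cdot)$ in its final line (and its justification of step $(i)$ also states the anti-monotonicity of covering numbers backwards). Your route makes the needed structural hypothesis explicit---star-shapedness, or passage to $\starhull(\cH)$ at the cost of absorbable constants from the internal-cover convention and the star-hull entropy inflation---which is exactly what is required to make the statement true and is harmless downstream, since the lemma is only invoked up to $\lesssim$ (e.g.\ for $\deldud(\Fcent,2B)^2\lesssim\deldud(\Fcent,B)^2$). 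The price of your route is those constant factors; the price of the paper's route is that, as written, it is not correct without a star-shape (or external-cover / fixed-ambient-entropy) convention.
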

\begin{proof} 
We then have, recalling \Cref{defn:dudfunc} and using $\rad_2(\cH[r,w_{1:n}])) = r$ that
\begin{align*}
\Dudfun_{n,2}(\cH[r,w_{1:n}]) &:= \inf_{\updelta \le r}\left(2\updelta + \frac{4}{\sqrt{n}}\int_{\updelta}^{r}\sqrt{\mnum_2(\bbV;\veps/2)}\rmd \veps\right)\\
&= \inf_{\updelta \le r}\left(2\frac{c\updelta}{c} + \frac{4}{c\sqrt{n}}\int_{c\updelta}^{cr}\sqrt{\mnum_2(\bbV; \veps/2c)}\rmd \veps\right)\\
&= \frac{1}{c}\inf_{\updelta \le cr}\left(2\updelta + \frac{4}{\sqrt{n}}\int_{\updelta}^{cr}\sqrt{\mnum_2(\bbV; \veps/2c)}\rmd \veps\right)\\
&\overset{(i)}{\ge} \frac{1}{c}\inf_{\updelta \le cr}\left(2\updelta + \frac{4}{\sqrt{n}}\int_{\updelta}^{cr}\sqrt{\mnum_2(\bbV; \veps/2)}\rmd \veps\right)\\
&= \frac{1}{c}\Dudfun_{n,2}(\cH[cr,w_{1:n}]),
\end{align*}
where in $(i)$ we use anti-monotonicity of covering numbers $\mnum_2(\bbV; \veps/2c) \le \mnum_2(\bbV; \veps/2)$ for $c \ge 1$. Recall from \Cref{defn:dudfunc} the definition
\begin{align*}
\deldud(\cH,cB) &:= \inf\left\{r: \sup_{w_{1:n}}\Dudfun_{n,2}(\cH[r,w_{1:n}]) \le \frac{r^2}{2cB} \right\}\\
&= \inf\left\{cr: \sup_{w_{1:n}}\Dudfun_{n,2}(\cH[cr,w_{1:n}]) \le \frac{cr^2}{2B} \right\}\\
&= c\inf\left\{r: c^{-1}\sup_{w_{1:n}}\Dudfun_{n,2}(\cH[cr,w_{1:n}]) \le \frac{r^2}{2B} \right\}\\
&\le c\inf\left\{r: \sup_{w_{1:n}}\Dudfun_{n,2}(\cH[r,w_{1:n}]) \le \frac{r^2}{2B} \right\}\\\
&= c \deldud(\cH,cB),
\end{align*}
where the inequality above follows from the first part of the lemma.
\end{proof}

\begin{lemma}\label{lem:dud_translation} Let $\cH:\cW \to \R$ be an arbitrary class of functions, and let $h_0:\cW \to \R$ be arbitrary. Then, the class $\cH + h_0 := \{h + h_0: h \in \cH\}$ satisfies, for all $n \in \N$,  $c > 0$, $q \ge 1$, and $w_{1:n} \in \cW^n$ the equalities,
\begin{align*}
\deldud(\cH + h_0,c) = \deldud(\cH,c), \quad \Dudfun_{n,q}((\cH + h_0)[w_{1:n}]) = \Dudfun_{n,q}(\cH[w_{1:n}]).
\end{align*}
In particular, recalling the notation of \Cref{sec:proof_thm_main_guarantee}, for any $c > 0$, $\deldud(\Fcent,c) = \deldud(\Fbet,c)$ and $\sup_{w_{1:n}}\Dudfun_{n,\infty}((\Gcentil)[w_{1:n}])^2 = \sup_{w_{1:n}}\Dudfun_{n,\infty}(\cG[w_{1:n}])^2$.
\end{lemma}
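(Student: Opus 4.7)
The argument reduces to a single translation-invariance observation: for any $\bbV \subset \R^n$ and any fixed $v_0 \in \R^n$, the affine bijection $v \mapsto v + v_0$ preserves all $\ell_q$-distances, so $\bbV' \subset \bbV$ is an $\veps$-cover of $\bbV$ in $\|\cdot\|_{q,n}$ iff $\bbV' + v_0 \subset \bbV + v_0$ is an $\veps$-cover of $\bbV + v_0$. Consequently $\mnum_q(\bbV + v_0; \veps) = \mnum_q(\bbV; \veps)$ for every $\veps > 0$. I instantiate this with $\bbV := \cH[w_{1:n}]$ and $v_0 := h_0[w_{1:n}]$, noting that $(\cH + h_0)[w_{1:n}] = \bbV + v_0$ as subsets of $\R^n$, whence $\mnum_q((\cH + h_0)[w_{1:n}]; \veps) = \mnum_q(\cH[w_{1:n}]; \veps)$ for all $\veps > 0$.

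\textbf{Dudley and critical radius equalities.} The Dudley functional has translation-invariant integrand by the above. The upper limit $\rad_q(\cdot)$ is not itself translation-invariant, but the integrand $\sqrt{\mnum_q(\bbV; \veps/2)}$ vanishes as soon as $\veps/2$ exceeds the proper Chebyshev radius of $\bbV$, which is a translation-invariant quantity. Hence, once both $\rad_q(\bbV)$ and $\rad_q(\bbV + v_0)$ exceed twice this Chebyshev radius---which is automatic in the paper's applications since the centered classes $\Fcent, \cG - \gst$ each contain the zero function---both integrals coincide with the translation-invariant $\int_\updelta^{\infty}\sqrt{\mnum_q(\bbV; \veps/2)}\rmd\veps$, and the infima over the feasible $\updelta$ agree. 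This yields $\Dudfun_{n,q}((\cH + h_0)[w_{1:n}]) = \Dudfun_{n,q}(\cH[w_{1:n}])$. To pass to $\deldud(\cH + h_0, c) = \deldud(\cH, c)$, I would apply the pointwise Dudley equality to the localized subclasses after identifying, via change of variable on evaluation vectors, the empirical $\ell_2$-localization of $\cH + h_0$ with the corresponding translate of a subset of $\cH$; the two infima defining the critical radii then match verbatim.

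\textbf{Main obstacle.} The principal technical subtlety is the interaction between the paper's $\rad_q$ (measured from the origin, hence translation-sensitive) and the translation by $h_0$. The cleanest resolution is to observe that the integrand in Dudley's integral is supported on scales below twice the (translation-invariant) Chebyshev radius, so the precise upper limit does not affect the integral value whenever both endpoints exceed this threshold---a condition that holds automatically in the targeted applications where $\zero$ lies in the class. A secondary minor subtlety is that empirical $\ell_2$-localization is not literally translation-invariant as a subset of the function class; this is handled by phrasing the identity at the level of evaluated vectors $\cH[w_{1:n}] \subset \R^n$, where the translation bijection $v \mapsto v + h_0[w_{1:n}]$ acts transparently.
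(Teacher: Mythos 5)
Your core step---that $v\mapsto v+h_0[w_{1:n}]$ is an $\ell_{q,n}$-isometry of evaluation vectors, hence $\mnum_q((\cH+h_0)[w_{1:n}];\veps)=\mnum_q(\cH[w_{1:n}];\veps)$ for all $\veps$---is exactly the content of the paper's one-sentence proof, and it is correct. To your credit, you go further and flag the two places where translation-invariance is \emph{not} automatic: the radius $\rad_q$ entering both the integration limit and the constraint on $\updelta$, and the origin-centered empirical localization entering $\deldud$. These are real obstacles, but neither of your patches closes them, and in fact the exact equalities claimed by the lemma fail for general $\cH,h_0$. On the radius: you assert that $\rad_q(\bbV)\ge 2r_{\mathrm{C}}(\bbV)$ (twice the Chebyshev radius) holds automatically when $\zero\in\bbV$. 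The implication runs the other way: $\zero\in\bbV$ gives $\rad_q(\bbV)\le\mathrm{diam}(\bbV)\le 2r_{\mathrm{C}}(\bbV)$, so containing zero forces your sufficient condition to \emph{fail} outside degenerate cases. Concretely, with $n=1$, $q=2$, $\bbV=\{-1,0,1\}$ one gets $\Dudfun_{n,2}(\bbV)=2$ (the infimum sits at $\updelta=\rad_2(\bbV)=1$, where the integral is over an empty interval even though the metric entropy there is still $\log 3$), whereas $\Dudfun_{n,2}(\bbV+10)=4$, since the translate can no longer truncate the integral at $\veps=1$. So $\Dudfun_{n,q}$ genuinely changes under translation.

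On the critical radius, "phrasing the identity at the level of evaluated vectors" does not repair the localization mismatch: $(\cH+h_0)[r,w_{1:n}]$ is the translate by $h_0[w_{1:n}]$ of the intersection of $\cH[w_{1:n}]$ with a ball of radius $r$ centered at $-h_0[w_{1:n}]$, which is a \emph{different} subset of $\cH[w_{1:n}]$ than $\cH[r,w_{1:n}]$ and has different metric entropies. (Take $\cH=\{0,h_1,h_2\}$ with $\|h_1[w_{1:n}]\|_{2,n}=\|h_2[w_{1:n}]\|_{2,n}=10$, $\|h_1[w_{1:n}]-h_2[w_{1:n}]\|_{2,n}=0.1$, $h_0=-h_1$, $r=1$: the first localized class is a singleton, the second has two points at distance $0.1$.) The honest reading of the lemma is that only the metric-entropy identity is exact; that identity is what the paper's downstream machinery actually consumes (via $\Dudfunbar$, whose integration limit is a fixed $r$ or $B$ and whose entropies are taken at the level of the whole class), and the stated equalities of $\Dudfun_{n,q}$ and $\deldud$ hold only up to the radius/localization bookkeeping, e.g.\ up to constant factors under \Cref{asm:bounded}. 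You deserve credit for noticing what the paper's proof silently skips, but the repair you propose does not go through as written.
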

\begin{proof} The proof is immediate from the fact that translation by a single element leaves the covering numbers, and hence metric entropies, unchanged. 
\end{proof}

\subsection{Deviation Inequalities for Empirical Processes} The following is a standard maximal inequality for empirical processes.
\begin{lemma}[Empirical Process Inequality, Theorem 2.3 in \cite{bousquet2002bennett}]\label{lem:saras_lemma} Let $\cH: \cW \to [-B,B]$ and let $P$ be a measure on $\cW$ such that $\sup_{h \in \cH}|\Exp_{\bw \sim P}[h(\bw)]| = 0$. Let $\bZ := \frac{1}{n}\sup_{h \in \cH}\sum_{i=1}^n h(\bw_i)$, and let $r^2 := \sup_{h \in \cH}\Exp_{\bw \sim P}[h(\bw)^2]$. Then, for any choice of parameter $\epsilon > 0$.
\begin{align*}
\Pr_{\bw_{1:n} \sim P}\left[ \bZ \ge (1+\epsilon)\Exp[\bZ]  \ge r \sqrt{\frac{2\log(1/\delta)}{n}}  + \left(\frac{1}{\epsilon} + \frac{1}{3} \right)\frac{B}{n} \log(1/\delta)\right] \le \delta.
\end{align*}
\end{lemma}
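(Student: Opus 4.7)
The plan is to derive this from the sharp Bennett/Talagrand-type deviation inequality of Bousquet, which is precisely Theorem 2.3 of \cite{bousquet2002bennett}. That theorem states that under our hypotheses (uniformly bounded, centered function class $\cH$ with $\sup_{h} \Exp[h^2] = r^2$ and $\|h\|_\infty \le B$), the supremum $\bZ$ satisfies the Bennett-style one-sided deviation bound
\[
\Pr\!\left[\bZ \ge \Exp[\bZ] + \sqrt{\tfrac{2t}{n}\,(r^2 + 2B\,\Exp[\bZ]/n)} + \tfrac{Bt}{3n}\right] \le e^{-t}
\]
for any $t > 0$. Setting $t = \log(1/\delta)$ is the starting point; the remainder is purely algebraic manipulation to separate the variance term and the mean term, producing the form stated in the lemma.

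The key algebraic step is to decouple the $\Exp[\bZ]$ that appears \emph{inside} the square root from the $\Exp[\bZ]$ we want to see outside. Using the subadditivity $\sqrt{a+b} \le \sqrt{a} + \sqrt{b}$, the square-root term splits as
\[
\sqrt{\tfrac{2tr^2}{n}} + \sqrt{\tfrac{4Bt\,\Exp[\bZ]}{n^2}} \;=\; r\sqrt{\tfrac{2t}{n}} + 2\sqrt{\tfrac{Bt\,\Exp[\bZ]}{n^2}}.
\]
Then I would invoke the weighted AM-GM inequality $2\sqrt{xy} \le \epsilon\, x + y/\epsilon$ with $x = \Exp[\bZ]$ and $y = Bt/n^2$, which yields the cross term $2\sqrt{Bt\Exp[\bZ]/n^2} \le \epsilon \Exp[\bZ] + \tfrac{Bt}{\epsilon n^2} \cdot n = \epsilon \Exp[\bZ] + \tfrac{Bt}{\epsilon n}$ (absorbing the $1/n$ factor correctly). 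Collecting everything and combining with the deterministic $Bt/(3n)$ tail term gives exactly
\[
r\sqrt{\tfrac{2\log(1/\delta)}{n}} \;+\; \left(\tfrac{1}{\epsilon} + \tfrac{1}{3}\right)\tfrac{B\log(1/\delta)}{n}
\]
added onto $(1+\epsilon)\Exp[\bZ]$, matching the displayed bound.

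The main (and only) substantive obstacle is the appeal to Bousquet's inequality itself, whose proof is nontrivial and involves an entropy method argument; we take this as a black box since the lemma cites it directly. Once that is in hand, the manipulations above are routine: the only care needed is tracking factors of $n$ in the AM-GM step so that the linear-in-$1/n$ tail term emerges with the correct constant $1/\epsilon + 1/3$, and verifying that the symmetric assumption $\sup_h |\Exp h| = 0$ (i.e., the class is centered) is used to legitimately identify the variance proxy as $r^2 = \sup_h \Exp[h^2]$ without an additional $-(\Exp h)^2$ correction. No further structural assumptions on $\cH$ are needed, since Bousquet's bound applies to any uniformly bounded centered class.
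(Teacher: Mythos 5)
Your proposal is correct and matches the paper's treatment: the paper gives no proof of this lemma beyond citing Theorem 2.3 of Bousquet, and your derivation---taking Bousquet's Bennett-type bound as a black box, splitting the square root by subadditivity, and applying weighted AM-GM to the cross term to produce the $(1+\epsilon)\Exp[\bZ]$ and $(\tfrac{1}{\epsilon}+\tfrac{1}{3})\tfrac{B}{n}\log(1/\delta)$ terms---is exactly the intended route. The only blemish is the intermediate bookkeeping of the factor of $n$ inside the square root (the cross term should read $2\sqrt{tB\,\Exp[\bZ]/n}$ for the normalized process), but your final expression is the correct one.
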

By examining the proof of  Theorem 2.3  in \cite{bousquet2002bennett} from Theorem 2.1 in that same work, one can check that the concusion of  \Cref{lem:saras_lemma} holds verbtaim in the folllowing more general setup: the class of  functions $\tilde\cH: \cW \times [n] \to [-B,B]$ are index-dependent,  the process is $\bZ := \frac{1}{n}\sup_{\tilde h \in \tilde \cH}\sum_{i=1}^n \tilde h(\bw_i, i)$, and where we define $r^2 := \frac{1}{n}\Exp[\sum_{i=1}^n \tilde h(\bw_i,i)^2]$ as the average variance. A special case of this generalization applies to Rademacher processes $\bZ := \frac{1}{n}\sup_{h \in \cH}\sum_{i=1}^n \epsvar_i h(w_i)$, where $\epsvar_i$ plays the roll of the random variable $\bw_i$, and where $\tilde{h}(\epsvar_i,i) = \epsvar_i h(w_i)$. 
\begin{lemma}\label{lem:rademacher_concentration}Let $\cH: \cW \to [-B,B]$.  Fix any $w_{1:n} \in \cW$, and let $\bZ := \frac{1}{n}\sup_{h \in \cH}\sum_{i=1}^n \epsvar_{i} h(w_i)$, and let $r^2 := \sup_{h \in \cH}\|h(w_{1:n})\|_{2,n}^2$. Then, for any choice of parameter $\epsilon > 0$.
\begin{align*}
\Pr_{\bw_{1:n} \sim P}\left[ \bZ \ge (1+\epsilon)\Exp[\bZ]  \ge r \sqrt{\frac{2\log(1/\delta)}{n}}  + \left(\frac{1}{\epsilon} + \frac{1}{3} \right)\frac{B}{n} \log(1/\delta)\right] \le \delta.
\end{align*}
\end{lemma}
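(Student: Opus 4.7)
The plan is to derive Lemma~\ref{lem:rademacher_concentration} as a direct consequence of the generalization of Lemma~\ref{lem:saras_lemma} that is described in the paragraph immediately preceding the statement. That paragraph explicitly tells us that the Bousquet-type inequality holds for index-dependent function classes $\tilde{\cH}: \cW' \times [n] \to [-B,B]$ governing suprema of the form $\frac{1}{n}\sup_{\tilde{h} \in \tilde{\cH}} \sum_{i=1}^n \tilde{h}(\mathbf{w}_i',i)$, where the variance parameter is taken to be $r^2 := \frac{1}{n}\sup_{\tilde h \in \tilde \cH}\sum_{i=1}^n \Exp[\tilde h(\mathbf{w}_i',i)^2]$. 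So the entire task is a bookkeeping exercise: cast the Rademacher process into that template.

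First, I would fix the deterministic sequence $w_{1:n} \in \cW^n$ and treat the Rademacher variables $\epsvar_1,\dots,\epsvar_n$ as the i.i.d.\ random variables playing the role of $\mathbf{w}_i'$. I would let $\cW' = \{-1,+1\}$ and define the index-dependent class $\tilde\cH: \cW' \times [n] \to [-B,B]$ by
\[
\tilde h(\epsilon,i) \;:=\; \epsilon\cdot h(w_i), \qquad h \in \cH.
\]
Because each $h \in \cH$ is valued in $[-B,B]$ and $\epsilon \in \{-1,+1\}$, we indeed have $|\tilde h(\epsilon,i)| \le B$, and $\Exp_{\epsvar}[\tilde h(\epsvar,i)] = h(w_i)\Exp[\epsvar] = 0$, so the zero-mean hypothesis of the generalized Bousquet inequality is met. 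The associated process is exactly $\bZ = \frac{1}{n}\sup_{h \in \cH}\sum_{i=1}^n \epsvar_i h(w_i)$.

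Next, I would compute the variance parameter. Since $\epsvar_i^2 \equiv 1$,
\[
\frac{1}{n}\sum_{i=1}^n \Exp[\tilde h(\epsvar_i,i)^2] \;=\; \frac{1}{n}\sum_{i=1}^n h(w_i)^2 \;=\; \|h(w_{1:n})\|_{2,n}^2,
\]
which is at most $r^2$ by the definition given in the lemma. So the variance parameter required by the generalized Lemma~\ref{lem:saras_lemma} is dominated by $r^2$ uniformly in $\tilde h$.

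Finally, invoking the index-dependent generalization of Lemma~\ref{lem:saras_lemma} with boundedness parameter $B$, variance parameter $r^2$, and inflation parameter $\epsilon>0$ yields precisely the stated deviation inequality. There is no substantive obstacle here; the only mild subtlety is correctly identifying the variance parameter for the Rademacher process (the $\epsvar_i^2 = 1$ identity is what converts the second-moment condition into the quantity $\|h(w_{1:n})\|_{2,n}^2$) and noting that the statement seems to contain a typographical doubling of ``$\ge$''—the intended conclusion is the standard Bousquet-style bound with the two deviation terms added to $(1+\epsilon)\Exp[\bZ]$, which is exactly what the cited result delivers.
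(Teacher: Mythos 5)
Your proposal is correct and is exactly the argument the paper intends: the lemma is obtained by specializing the index-dependent generalization of Bousquet's inequality (the paragraph following Lemma~\ref{lem:saras_lemma}) with $\tilde h(\epsvar_i,i)=\epsvar_i h(w_i)$, checking mean-zeroness via $\Exp[\epsvar_i]=0$ and identifying the variance proxy via $\epsvar_i^2\equiv 1$. Your observation that the doubled ``$\ge$'' in the displayed statement is a typo for the standard Bousquet-style sum of deviation terms is also correct.
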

We shall also need a related lemma that bounds deviations in terms of Rademacher complexity.
\begin{lemma}[Uniform Convergence, Theorem 2.1 in \cite{bartlett2005local}]\label{lem:rad_uniform_convergence} Let $\cH: \cW \to [-B,B]$ be a family of uniformly bounded functions with $|h|\le B$ and $\sup_{h \in \cH}\Var[h^2] \le r^2$ and let $P$ be a measure over $\cW$. Then, with probability at least $1 - \delta$, any 
\begin{align*}
\sup_{h \in \cH}\frac{1}{n}\sum_{n=1}^n \Exp_{\bw \sim P}[h(\bw)] - h(\bw_i)  \le \left(6\Exp_{\bw_{1:n}\sim P}[\Raden(\cH[\bw_{1:n}])] + r\sqrt{\frac{2 \log(1/\delta)}{n}} + \frac{11 B\log(1/\delta)}{n}\right)
\end{align*}
In particular, if $\cH = \cH(r)$ for some $r$, then the above holds for $\nu^2 = r^2$.
\end{lemma}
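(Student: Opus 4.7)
\textbf{Proof plan for \Cref{lem:rad_uniform_convergence}.} Since this is the standard Bartlett--Bousquet--Mendelson uniform convergence inequality, the plan is to combine the Talagrand/Bousquet-type concentration inequality already recorded in \Cref{lem:saras_lemma} with a classical symmetrization step. Concretely, introduce the centered class $\tilde{\cH} := \{\tilde h_h : h \in \cH\}$ where $\tilde h_h(w) := \Exp_{\bw \sim P}[h(\bw)] - h(w)$, and set
\[
  \bZ := \sup_{h \in \cH} \frac{1}{n}\sum_{i=1}^n \tilde h_h(\bw_i) = \sup_{h \in \cH} \frac{1}{n}\sum_{i=1}^n \Exp[h] - h(\bw_i).
\]
Each $\tilde h_h$ is bounded by $2B$, has mean zero, and satisfies $\Exp[\tilde h_h(\bw)^2] = \Var[h(\bw)] \le r^2$ (interpreting the hypothesis as a bound on $\Var[h]$, the relevant second moment for $\tilde h_h$).

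\textbf{Step 1 (concentration).} Apply the extension of \Cref{lem:saras_lemma} to $\tilde{\cH}$, with variance proxy $r^2$, uniform bound $2B$, and free parameter $\varepsilon > 0$. This gives, with probability at least $1-\delta$,
\[
  \bZ \le (1+\varepsilon)\Exp[\bZ] + r\sqrt{\frac{2\log(1/\delta)}{n}} + \left(\frac{1}{\varepsilon} + \frac{1}{3}\right)\frac{2B \log(1/\delta)}{n}.
\]

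\textbf{Step 2 (symmetrization).} Bound $\Exp[\bZ]$ by Rademacher complexity via the standard symmetrization inequality: introducing i.i.d.\ ghost samples $\bw_i'$ and Rademacher variables $\epsvar_i$,
\[
  \Exp[\bZ] = \Exp\sup_{h}\frac{1}{n}\sum_{i} \Exp[h(\bw_i')]-h(\bw_i) \le \Exp\sup_{h}\frac{1}{n}\sum_{i} \epsvar_i(h(\bw_i')-h(\bw_i)) \le 2\,\Exp_{\bw_{1:n}}[\Raden(\cH[\bw_{1:n}])],
\]
where the first inequality uses Jensen, and the second the triangle inequality for suprema.

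\textbf{Step 3 (combine and choose $\varepsilon$).} Substituting and choosing $\varepsilon = 2$ yields a coefficient of $2(1+\varepsilon) = 6$ in front of $\Exp[\Raden]$, matching the stated bound; the variance term $r\sqrt{2\log(1/\delta)/n}$ passes through unchanged; and the linear-in-$B$ term becomes a universal multiple of $B\log(1/\delta)/n$, which (after the $2B$ inflation from centering and mild slack to absorb absolute constants) gives the stated coefficient $11$. No step is truly hard; the only nuance is the careful book-keeping of constants across the centering ($B \mapsto 2B$), the Talagrand parameter $\varepsilon$, and the factor of $2$ from symmetrization, and this is precisely why it is standard practice to cite the result rather than re-derive it.
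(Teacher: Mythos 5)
Your proof is correct and is precisely the canonical derivation of the cited result (Theorem 2.1 of Bartlett–Bousquet–Mendelson): apply Bousquet's concentration inequality (the paper's \Cref{lem:saras_lemma}) to the centered class $\{\Exp[h]-h\}$, then control $\Exp[\bZ]$ by desymmetrization; the paper itself supplies no proof and simply cites the reference, so there is nothing to compare beyond noting that your reconstruction is the standard one. Your constant bookkeeping also checks out: with $\varepsilon=2$ the Rademacher coefficient is $2(1+\varepsilon)=6$ and the boundedness term carries coefficient $2B\left(\tfrac{1}{2}+\tfrac{1}{3}\right)=\tfrac{5B}{3}\le 11B$, so the stated bound follows a fortiori (and, as you note, the hypothesis $\sup_h\Var[h^2]\le r^2$ should be read as $\sup_h\Var[h]\le r^2$, which is how the lemma is invoked elsewhere in the paper).
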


\subsection{Fixed-Design Guarantees}
This section concerns various measures of complexity for a function class when its arguments (``design points'') $w_{1:n}$ are treated as deterministic. We begin with the following general lemma, which abstracts away the function class $\cH[w_{1:n}]$ evaluated on the design points with a set $\bbV \subset \R^{n}$. This lemma measure ``offset complexities'', were a mean zero process involving $v \in \bbV$ is offset by norms $-\|v\|^2$. This lemma implies important consequences of this lemma for Gaussian and Rademacher compelxities.
\begin{lemma}[Fixed-Design Master Lemma]\label{lem:offset_master} Let $\bbV \subset \R^n$ be a containing $\zero \in \R^n$, with  $\bbV[r] := \{v \in \bbV: \|v\|_{2,n} \le r\}$, and  let $\Phi:\cU \to \R$ be an arbitrary function classes (possibly even of cardinality one). Let $\bu_1\dots,\bu_n$ be a random variables taking values in $\cU$, and define the processes
\begin{align*}
\bZ(r) &:=  \sup_{v \in \bbV[r]}\sup_{\phi \in \Phi}  \frac{1}{n}\sum_{i=1}^n v_i \phi(\bu_i) \tag{localized maximal process}\\
\mathbf{Y}(\tau) &:= \sup_{v \in \bbV}\sup_{\phi \in \Phi} \left\{\frac{2\tau}{n}\sum_{i=1}^n v_i \phi(\bu_i) - \frac{1}{2}\|v\|_{2,n}^2\right\}. \tag{offset maximal process}
\end{align*} 
Lastly, define a modification of $\bY$ which replaces the offset by $\|v\|_{2,n}^2$ with the offset by $r$:
\begin{align*}
\tilde{\bY}(r;\tau):= \sup_{v \in \bbV[r]}\sup_{\phi \in \Phi} \left\{\frac{2\tau}{n}\sum_{i=1}^n v_i \phi(\bu_i)\right\} - \frac{r^2}{2} = 2\tau\bZ(r) - \frac{r^2}{2}.
\end{align*}
Then, the following are true.
\begin{itemize}
    \item[(a)] With probability one,
    \begin{align*}
    \bY(\tau) = \sup_{r > 0} \tilde{\bY}(r;\tau).
    \end{align*}
    \item[(b)] With probability one, 
    \begin{align*}
    \bY(\tau) \le \inf\left\{r^2: \tilde{\bY}(r;\tau) \le \frac{r^2}{2}\right\}
    \end{align*}
    \item[(c)]
    Suppose that, for any choice of $r > 0$, $\bZ(r)$ satisfies the following concentration inequality with parameters $c_1 \ge 1$ and $c_2,c_3 > 0$:
\begin{align}
\Pr\left[ \mathbf{Z}(r) \le c_1 \Exp[\mathbf{Z}(\tau)] + c_2r \sqrt{\frac{ \log(1/\delta)}{n}}  + \frac{c_3\log(1/\delta)}{n}\right] \le \delta \label{eq:Z_prob},
\end{align}
\end{itemize}
Then, for any $\tau \ge 1$ and $\sigma > 0$, the following holds probability $1-\delta$, the following holds 
\begin{align*}
\mathbf{Y}(\sigma\tau) \lesssim  c_1^2\tau^2\bar\updelta_n(\sigma)^2 + \frac{(\tau^2\sigma^2c_2^2 +  (c_3/c_2)^2)\log(1/\delta)}{n},
\end{align*}
where 
\begin{align*}
\bar\updelta_{n}(\sigma) := \inf\left\{r: \Exp[\mathbf{Z}(r)] \le \frac{r^2}{2\sigma}\right\}.
\end{align*}
Thus, by itegrating,
\begin{align*}
\Exp \mathbf{Y}(\sigma\tau) \lesssim  c_1^2\tau^2\bar\updelta_n(\sigma)^2 + \frac{(\tau^2\sigma^2(c_2^2 + 1) + c_3)}{n},
\end{align*}
\end{lemma}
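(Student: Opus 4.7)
The plan is to prove the three claims in order, with part (c) containing the substantive work. For part (a), I will partition the outer supremum in $\bY(\tau)$ by the value $r = \|v\|_{2,n}$: any $v \in \bbV$ of norm $r$ lies in $\bbV[r]$ with offset $\|v\|_{2,n}^2/2 = r^2/2$, while any $v \in \bbV[r]$ has $\|v\|_{2,n}^2/2 \le r^2/2$, so $\tilde\bY(r;\tau) \le \bY(\tau)$; combining the two directions yields the identity. For part (b), given any $r^\star$ with $\tilde\bY(r^\star;\tau) \le r^{\star 2}/2$, equivalently $2\tau\bZ(r^\star) \le r^{\star 2}$, I will show $\tilde\bY(r;\tau) \le r^{\star 2}$ uniformly in $r$. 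The case $r \le r^\star$ uses monotonicity of $\bZ$ (since $\bbV[r]$ is nested in $r$); the case $r > r^\star$ uses the sub-multiplicativity $\bZ(r)/r \le \bZ(r^\star)/r^\star$, which holds once $\bbV$ is star-shaped about $\zero$ (the intended applications of the lemma are, or one passes to $\starhull(\bbV)$ at a constant cost). Combining with (a) then gives (b).

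The substance of the lemma lies in part (c). My plan is to exhibit, with probability at least $1-\delta$, an $r$ satisfying the target condition $2\sigma\tau\bZ(r) \le r^2$ from (b), and then conclude $\bY(\sigma\tau) \le r^2$. Plugging the concentration hypothesis \eqref{eq:Z_prob} into this condition reduces it to controlling three contributions: $c_1\Exp[\bZ(r)]$, $c_2 r\sqrt{\log(1/\delta)/n}$, and $c_3 \log(1/\delta)/n$. The expectation term is bounded using the definition of $\bar\updelta_n(\sigma)$ together with the sub-multiplicativity established for (b): for $r \ge \bar\updelta_n(\sigma)$, $\Exp[\bZ(r)] \le r \bar\updelta_n(\sigma)/(2\sigma)$. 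One AM-GM step absorbs all linear-in-$r$ contributions into a share of $r^2/2$, leaving a residual of order $\tau^2 c_1^2 \bar\updelta_n(\sigma)^2 + \sigma^2\tau^2 c_2^2 \log(1/\delta)/n + \sigma\tau c_3 \log(1/\delta)/n$. A second AM-GM of the form $2(\sigma\tau c_2)(c_3/c_2) \le (\sigma\tau c_2)^2 + (c_3/c_2)^2$ converts the last term into the claimed $(c_3/c_2)^2 \log(1/\delta)/n$, matching the stated rate.

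The expectation bound follows from the standard tail integration $\Exp \bY(\sigma\tau) = \int_0^\infty \Pr[\bY(\sigma\tau) > t]\,dt$, using $\int_0^1 \log(1/\delta)\,d\delta = 1$ to digest the $\log(1/\delta)$ factors into constants. The chief obstacle is part (b), which implicitly requires that $r \mapsto \bZ(r)/r$ be non-increasing; this rests on a star-shape assumption that must be built into $\bbV$ (or imposed by passing to $\starhull(\bbV)$) and is the essential structural property powering the critical-radius machinery. The second AM-GM producing the $(c_3/c_2)^2$ factor in part (c) is the only other non-routine manipulation, but it is essential whenever $c_2$ and $c_3$ scale differently.
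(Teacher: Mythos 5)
Your proposal is correct and follows essentially the same route as the paper's proof: stratify on $\|v\|_{2,n}$ for (a), use star-shapedness (equivalently, sub-multiplicativity of $r\mapsto\bZ(r)$) for (b), and for (c) exhibit a deterministic $r$ at which the concentration bound plus two AM-GM steps forces $\tilde{\bY}(r;\sigma\tau)\le r^2/2$, then invoke (b). One small refinement: the passage to $\starhull(\bbV)$ is not merely "at a constant cost" but exact, since $\zero\in\bbV$ and the process is linear in $v$, so the suprema over $\bbV[r]$, $\conv(\bbV[r])$, and $\starhull(\bbV)[r]$ all coincide — which is precisely how the paper justifies assuming star-shapedness without loss of generality.
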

\begin{proof} We prove the lemma in parts. First, however, we observe that we may assume without loss of generality that $\bbV$ is star-shaped. Note that $\starhull(\bbV[r]) = \starhull(\bbV)[r]$.
Then, by the same logic as in the proof of \Cref{lem:rad_conv_hull}, the inclusion $0 \in \bbV[r]$ implies $\conv(\bbV[r]) \supset \starhull(\bbV[r]) = \starhull(\bbV)[r] \supset \bbV[r]$, which establishes
\begin{align*}
\bZ(r) &:=  \sup_{v \in \bbV[r]}\sup_{\phi \in \Phi}  \frac{1}{n}\sum_{i=1}^n v_i \phi(\bu_i) =  \sup_{v \in \conv(\bbV[r])}\sup_{\phi \in \Phi}  \frac{1}{n}\sum_{i=1}^n v_i \phi(\bu_i)\\
&\ge  \sup_{v \in \starhull(\bbV)[r]}\sup_{\phi \in \Phi}  \frac{1}{n}\sum_{i=1}^n v_i \phi(\bu_i) \ge\sup_{v \in \bbV[r]}\sup_{\phi \in \Phi}  \frac{1}{n}\sum_{i=1}^n v_i \phi(\bu_i) = \bZ(r),
\end{align*}
so 
\begin{align*}
\bZ(r) =  \sup_{v \in \starhull(\bbV)[r]}\sup_{\phi \in \Phi}  \frac{1}{n}\sum_{i=1}^n v_i \phi(\bu_i) .
\end{align*}
Similarly, one can show that 
\begin{align*}
\mathbf{Y}(\tau) &:= \sup_{v \in \starhull(\bbV)}\sup_{\phi \in \Phi} \left\{\frac{2\tau}{n}\sum_{i=1}^n v_i \phi(\bu_i) - \frac{1}{2}\|v\|_{2,n}^2\right\}, \quad \tilde{\bY}(r;\tau):= \sup_{v \in \starhull(\bbV)[r]}\sup_{\phi \in \Phi} \left\{\frac{2\tau}{n}\sum_{i=1}^n v_i \phi(\bu_i)\right\} - \frac{r^2}{2}.
\end{align*}
Hence, we can apply the entire lemma to $\starhull(\bbV)$, and then convert back to $\bbV$ by the above reduction.

\paragraph{Part (a).} As $r^2 \ge \|v\|_{2,n}^2$ for all $v \in \bbV[r]$, it is immediate that $\bY(\tau) \ge \sup_{r > 0} \tilde{\bY}(r;\tau)$. We prove the other direction. Fix an arbitrary $\epsilon > 0$ and suppose that that $v \in \bbV$ and $\phi \in \Phi$ satisfy
\begin{align*}
\frac{2\tau}{n}\sum_{i=1}^n v_i \phi(\bu_i) - \frac{1}{2}\|v\|_{2,n}^2 \ge \bY(\tau) - \epsilon.
\end{align*}
Letting $r := \|v\|_{2,n}$, it holds that 
\begin{align*}
\bY(\tau) - \epsilon \le  \frac{2\tau}{n}\sum_{i=1}^n v_i \phi(\bu_i) - \frac{r^2}{2} \le \tilde{\bY}(r;\tau) \le \sup_{r > 0}\tilde{\bY}(r;\tau).
\end{align*}
As $\epsilon$ was arbitrary, $\bY(\tau) \le \tilde{\bY}(r;\tau)$. 

\paragraph{Part (b).} Suppose that $r$ satisfies
\begin{align*}
\tilde{\bY}(r;\tau) \le 0.
\end{align*}
Fix $v \in \bbV$. Since $\bbV$ is star-shaped,  either $v \in \bbV[r]$, or $\alpha v \in \bbV[r]$ for  $\alpha = r/\|v\|_{2,n} < 1$. In the first case, 
\begin{align*}
\sup_{\phi \in \Phi} \frac{2\tau}{n}\sum_{i=1}^n v_i \phi(\bu_i) - \frac{1}{2}\|v\|_{2,n}^2 &\le \sup_{\phi \in \Phi} \frac{2\tau}{n}\sum_{i=1}^n v_i \phi(\bu_i)   \\
&= \sup_{\phi \in \Phi} \frac{2\tau}{n}\sum_{i=1}^n v_i \phi(\bu_i)  - r^2/2 + r^2/2 \\
&\le \underbrace{\tilde{\bY}(r;\tau)}_{\le \frac{r^2}{2}} + \frac{r^2}{2} \le \frac{r^2}{2}.
\end{align*}
In the second case, recalling $\alpha = r/\|v\|_{2,n} < 1$, 
\begin{align*}
\sup_{\phi \in \Phi} \frac{2\tau}{n}\sum_{i=1}^n v_i \phi(\bu_i)  - \frac{1}{2}\|v\|_{2,n}^2 &= \frac{1}{\alpha}\left(\sup_{\phi \in \Phi} \frac{2\tau}{n}\sum_{i=1}^n \alpha v_i \phi(\bu_i)  - \frac{1}{2\alpha} r^2\right)\\
&\le \frac{1}{\alpha}\left(\sup_{\phi \in \Phi} \frac{2\tau}{n}\sum_{i=1}^n \alpha v_i \phi(\bu_i)  - \frac{1}{2}r^2\right) + \frac{r^2}{2\alpha}(1- \frac{1}{\alpha})\\
&\le \frac{r^2}{2\alpha}+\frac{r^2}{2\alpha}(1- \frac{1}{\alpha}) = \frac{r^2}{2}\left(2\alpha^{-1} - \alpha^{-2}\right) \le \frac{r^2}{2}
\end{align*}
where the second inequality uses  $\max_{x}(2x - x^2) \le 1$
This concludes the proof of part (b).

\paragraph{Part (c).} Applying the AM-GM inequality twice to \Cref{eq:Z_prob}, the following holds with probability $1 - \delta$, 
\begin{align*}
 \mathbf{Z}(r) &\le c_1 \Exp[\mathbf{Z}(r)] + \frac{r^2}{4\tau \sigma} +  \frac{(2 \tau \sigma c_2^2 + c_3)\log(1/\delta)}{n}\\
 &= c_1 \Exp[\mathbf{Z}(r)] + \frac{r^2}{4\tau \sigma} +  \frac{(2 \tau^2 \sigma^2 c_2^2 + 
 \tau \sigma c_3)\log(1/\delta)}{n \cdot \tau \sigma}\\
 &\le c_1 \Exp[\mathbf{Z}(r)] + \frac{r^2}{4\tau \sigma} +  \frac{(3\tau^2 \sigma^2  c_2^2 + c_3^2/2c_2^2)\log(1/\delta)}{n\cdot\tau \sigma}.
\end{align*}
Consequently, setting $\alpha := 8c_1 \tau \ge 1$, 
\begin{align*}
\tilde{\bY}(r;\sigma \tau) = 2\sigma \tau\bZ(r) - \frac{r^2}{2} &\le  2c_1\sigma  \tau \Exp[\mathbf{Z}(r)] - \frac{r^2}{4} +  \frac{(3\tau^2 \sigma^2  c_2^2 + c_3^2/2c_2^2)\log(1/\delta)}{n}\\
&=  \frac{\sigma \alpha}{4} \left(\Exp[\mathbf{Z}(r)] - \frac{r^2}{\alpha 2\sigma}\right) + \frac{(3\tau^2 \sigma^2  c_2^2 + c_3^2/2c_2^2)\log(1/\delta)}{n} - \frac{r^2}{8}\\
\end{align*}
To conclude it suffices to show that the above expression is non-positive for the choice
\begin{align*}
r^2 :=  \max\left\{\frac{(24\tau^2 \sigma^2  c_2^2 + 4(c_3/c_2)^2)\log(1/\delta)}{n}, \alpha^2 (\bar{\updelta}_n(\sigma)^2+\epsilon)\right\}.
\end{align*}
Note that this choice makes the second term in the previous display vanishes, so 
\begin{align*}
\tilde{\bY}(r;\sigma \tau) &\le \frac{\sigma \alpha}{4} \left(\Exp[\mathbf{Z}(r)] - \frac{r^2}{2\alpha \sigma}\right)\\
&= \frac{\sigma \alpha}{4} \left(\Exp[\mathbf{Z}(\alpha \bar{\updelta}_n(\sigma)^2)] - \frac{\alpha \bar{\updelta}_n(\sigma)^2}{2\sigma}\right)\\
&\overset{(i)}{\le} \frac{\sigma \alpha}{4} \left(\alpha \Exp[\mathbf{Z}(\bar{\updelta}_n(\bbV,\Phi,\sigma)^2)] - \frac{\alpha \bar{\updelta}_n(\sigma)^2}{2\sigma}\right) \\
&= \frac{\sigma \alpha^2}{4} \left(\Exp[\mathbf{Z}(\bar{\updelta}_n(\sigma)^2)] - \frac{\bar{\updelta}_n(\sigma)^2}{2\sigma}\right) \le 0 \le \frac{r^2}{2}
\end{align*}
where $(i)$ uses that $\bbV$ is star-shaped, so $\Exp[\mathbf{Z}(\mu r)] \le \mu \Exp[\mathbf{Z}(r)]$ for any $\mu \ge 1$. The proof now follows by subsituting in $\alpha = 8c_1\tau$.
\end{proof}

\paragraph{Consequences of the Master Lemma.} Our first consequence is for Gaussian complexities.
\begin{lemma}[Offset Gaussian Complexity Bound]\label{lem:gauss_offset} Let $\cH: \cW \to \R$ be a function class containing the zero function. Fix any $\delta \in (0,1)$ and $ \sigma > 0$ and $\tau \ge 1$. Then, there exists a constant $c > 0$ such that
\begin{align*}
\sup_{w_{1:n}}\Pr_{\xivar_{1:n}}\left[\sup_{h \in \cH}   \frac{2\sigma \tau}{n}\sum_{i=1}^n \xivar_ih(w_i)  - \frac{1}{2n}\|h(w_{1:n})\|_{2,n}^2 > c\tau^2\left(\frac{\sigma^2\log(1/\delta)}{n} + \rhonst(\cH,\sigma)^2\right)\right] \le \delta,
\end{align*}
where above $\xivar_{1:n}$ are i.i.d. standard Normal. 
\end{lemma}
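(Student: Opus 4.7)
The statement is a direct specialization of the Fixed-Design Master Lemma (\Cref{lem:offset_master}) to Gaussian processes, so I would not reprove anything from scratch. Fix an arbitrary $w_{1:n} \in \cW^n$ and make the following identifications in \Cref{lem:offset_master}: take $\bbV = \cH[w_{1:n}]$ (which contains $\zero$ since $\cH$ contains the zero function), take $\cU = \R$, let $\Phi = \{\phi_0\}$ with $\phi_0(u) = u$ the singleton identity class, and draw $\bu_i = \xivar_i \iidsim \cN(0,1)$. Under these identifications, the localized maximal process $\bZ(r) = \sup_{v \in \bbV[r]} \tfrac{1}{n}\sum_i \xivar_i v_i$ is exactly the empirical localized Gaussian complexity of $\cH[r, w_{1:n}]$, and the offset process $\bY(\sigma\tau)$ is precisely the quantity we wish to control.

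The one thing to verify is the concentration hypothesis \eqref{eq:Z_prob} for $\bZ(r)$. I would argue this via Gaussian concentration (Borell–TIS): viewed as a function of the Gaussian vector $(\xivar_1,\dots,\xivar_n) \in \R^n$, the map $\bZ(r)$ is the supremum of linear functionals $v \mapsto \tfrac{1}{n}\langle \xivar, v\rangle$ over $v \in \bbV[r]$, where every such $v$ satisfies $\|v\|_2 = \sqrt{n}\,\|v\|_{2,n} \le r\sqrt{n}$. Hence $\bZ(r)$ is $r/\sqrt{n}$-Lipschitz in $\xivar$, so
\[
\Pr\!\left[\bZ(r) > \Exp[\bZ(r)] + r\sqrt{\tfrac{2\log(1/\delta)}{n}}\right] \le \delta.
\]
This matches \eqref{eq:Z_prob} with $c_1 = 1$, $c_2 = \sqrt{2}$, and $c_3 = 0$.

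Plugging these constants into part (c) of \Cref{lem:offset_master} at the scale $\sigma\tau$ yields, with probability at least $1-\delta$,
\[
\bY(\sigma\tau) \lesssim \tau^2 \bar\updelta_n(\sigma)^2 + \frac{\tau^2\sigma^2 \log(1/\delta)}{n},
\]
where $\bar\updelta_n(\sigma) := \inf\{r : \Exp[\bZ(r)] \le r^2/(2\sigma)\}$. By definition, $\Exp[\bZ(r)] = \Gauss_n(\cH[r, w_{1:n}])$, so $\bar\updelta_n(\sigma) \le \rhonst(\cH,\sigma)$ after taking the sup over $w_{1:n}$ in the definition of the worst-case critical radius. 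Since the resulting bound no longer depends on the particular design points, the desired $\sup_{w_{1:n}}$ statement follows at once.

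\textbf{Main obstacle.} There is essentially no hard step here; the lemma is packaged as a thin wrapper around \Cref{lem:offset_master}. The only mild care needed is in keeping track of constants: ensuring that the $\tau \ge 1$ hypothesis of the Master Lemma is preserved and that the localized Gaussian complexity $\Gauss_n(\cH[r,w_{1:n}])$ is matched to the critical radius $\rhonst(\cH,\sigma)$ in the correct scaling (namely $r^2/(2\sigma)$ rather than $r^2/(2\sigma\tau)$), which is why the lemma pays a $\tau^2$ rather than $\tau$ overhead in the leading term.
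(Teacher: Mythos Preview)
Your proposal is correct and follows exactly the same route as the paper: both invoke \Cref{lem:offset_master} with $\bbV = \cH[w_{1:n}]$, $\Phi$ the singleton identity, and $\bu_i = \xivar_i$, and verify the concentration hypothesis \eqref{eq:Z_prob} via Gaussian-Lipschitz concentration with constants $c_1 = 1$, $c_2 = \sqrt{2}$, $c_3 = 0$. Your write-up is in fact more explicit than the paper's terse version about matching $\bar\updelta_n(\sigma)$ to $\rhonst(\cH,\sigma)$ and about why the bound is uniform in $w_{1:n}$.
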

\begin{proof} Recall the set $\cH[r,w_{1:n}] = \{h \in \cH: \|h\|_{2,n}^2 \le r^2\}$. Then, the random variable
\begin{align*}
\bZ(r) := \frac{1}{n}\sup_{h \in \cH[r;w_{1:n}]}\sum_{i=1}^n \xivar_ih(w_i) 
\end{align*}
satisfies, by Gaussian-Lipschitz concentration (e.g. \citet[Chapter 2.3]{boucheron2013concentration} or \citet[Chapter 3]{wainwright2019high}),
\begin{align}
\Pr[\bZ(r_0) \ge \Exp_{\bxi} \bZ(r_0) + r_0\sqrt{2\log(1/\delta)/n} ] \le \delta. \label{eq:event_Zrnot}
\end{align}
The bound now follows from \Cref{lem:offset_master}, where $\bu_i = \bx_i$ and $\Phi$ is a singleton consisting of the identity function. 
\end{proof}
We establish a similar guarantee for Rademacher variables. 
\begin{lemma}[Offset Rademacher Complexity Bound]\label{lem:Rad_offset} Let $\cH: \cW \to [-B,B]$ be a function class containing zero, and let $\epsvar_{1:n}$ be i.i.d. Rademacher random variables. Then, for any $\delta \in (0,1/2)$, $\sigma > 0$ and $\tau \ge 1$, the following holds with probability $1 - \delta$
\begin{align*}
\sup_{h \in \cH}   \frac{2\sigma \tau}{n}\sum_{i=1}^n \epsvar_ih(w_i)  - \frac{1}{2n}\sum_{i=1}^n h(w_i)^2 \lesssim \frac{(\tau^2 \sigma^2 + B^2) \log(1/\delta)}{n} + \tau^2\delnst(\cH,\sigma)^2
\end{align*}
In particular, by integrating, 
\begin{align*}
\Exp_{\epsvar_{1:n}}\left[\sup_{h \in \cH}   \frac{2\sigma \tau}{n}\sum_{i=1}^n \epsvar_ih(w_i)  - \frac{1}{2n}\sum_{i=1}^n h(w_i)^2\right] \lesssim \frac{\tau^2\sigma^2 + B^2}{n} + \tau^2 \delnst(\cH,\sigma)^2
\end{align*}
\end{lemma}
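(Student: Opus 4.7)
The plan is to mirror the proof of \Cref{lem:gauss_offset} verbatim, merely swapping the Gaussian concentration input for a Rademacher (Talagrand-Bousquet-type) one, and then invoke \Cref{lem:offset_master}(c). Concretely, fix an arbitrary $w_{1:n} \in \cW^n$, and for each $r > 0$ define the localized supremum
\[
\bZ(r) \;:=\; \frac{1}{n}\sup_{h \in \cH[r,w_{1:n}]} \sum_{i=1}^n \epsvar_i h(w_i),
\]
so that $\cH[r,w_{1:n}]$ is the empirical-$\ell_2$ localization of $\cH$ around $w_{1:n}$. The process $\bZ(r)$ fits exactly into the template of \Cref{lem:offset_master} with the choice $\bu_i = \epsvar_i$ and $\Phi = \{\mathrm{id}\}$ (a singleton), upon noting that $\frac{1}{n}\|h(w_{1:n})\|_{2,n}^2$ here plays the role of the $\|v\|_{2,n}^2$ offset in the master lemma via the identification $v_i = h(w_i)$ with $v \in \cH[w_{1:n}] \subset \R^n$.

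Next I would supply the required concentration input \Cref{eq:Z_prob}. Since every $h \in \cH$ is bounded by $B$ and $\sup_{h \in \cH[r,w_{1:n}]}\|h(w_{1:n})\|_{2,n}^2 \le r^2$ by construction, \Cref{lem:rademacher_concentration} (applied with $\epsilon = 1$, say) yields, with probability at least $1-\delta$,
\[
\bZ(r) \;\le\; 2\,\Exp_{\epsvar_{1:n}}[\bZ(r)] + r\sqrt{\frac{2\log(1/\delta)}{n}} + \frac{cB\log(1/\delta)}{n},
\]
for an absolute constant $c$. This is exactly the hypothesis of \Cref{lem:offset_master}(c) with $c_1 = 2$, $c_2 = \sqrt{2}$, and $c_3 \lesssim B$.

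The conclusion of \Cref{lem:offset_master}(c), applied with this $\sigma$ and $\tau$, then gives
\[
\sup_{h \in \cH}\left\{\frac{2\sigma\tau}{n}\sum_{i=1}^n \epsvar_i h(w_i) - \frac{1}{2}\|h(w_{1:n})\|_{2,n}^2\right\}
\;\lesssim\; \tau^2 \bar\updelta_n(\sigma)^2 + \frac{(\tau^2\sigma^2 + B^2)\log(1/\delta)}{n},
\]
with probability at least $1-\delta$, where $\bar\updelta_n(\sigma) = \inf\{r : \Exp \bZ(r) \le r^2/(2\sigma)\}$. The last step is to observe that $\Exp \bZ(r) = \Raden(\cH[r,w_{1:n}]) \le \sup_{w_{1:n}}\Raden(\cH[r,w_{1:n}])$, so $\bar\updelta_n(\sigma) \le \delnst(\cH,\sigma)$ by the very definition of the Rademacher critical radius (\Cref{defn:crit_radi}); this yields the stated tail bound uniformly in $w_{1:n}$. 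Finally, the ``in particular'' expectation bound follows by the routine tail-integration identity $\Exp[X_+] \lesssim A + B$ whenever $\Pr[X > A + B\log(1/\delta)] \le \delta$ for all $\delta$.

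I do not anticipate a real obstacle: every ingredient is already in the excerpt. The only care-points are (i) casting the empirical-norm offset $\frac{1}{2}\|h(w_{1:n})\|_{2,n}^2$ into the $\frac{1}{2}\|v\|_{2,n}^2$ form required by \Cref{lem:offset_master}, and (ii) keeping the constants $c_1,c_2,c_3$ consistent so that the final bound absorbs them into the $\lesssim$. Both are bookkeeping rather than substantive.
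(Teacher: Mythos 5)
Your proposal is correct and follows the same route as the paper's own proof: localize to $\cH[r,w_{1:n}]$, feed \Cref{lem:rademacher_concentration} into the concentration hypothesis of \Cref{lem:offset_master}(c), and identify the resulting critical radius with $\delnst(\cH,\sigma)$. The paper's version is simply terser; your bookkeeping of the constants and of the identification $v_i = h(w_i)$ is consistent with what the master lemma requires.
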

\begin{proof}
Recall the set $\cH[r,w_{1:n}] = \{h \in \cH: \|h\|_{2,n}^2 \le r^2\}$ and let
\begin{align*}
\bZ(r) := \frac{1}{n}\sup_{h \in \cH[r;w_{1:n}]}\sum_{i=1}^n \xivar_ih(w_i) 
\end{align*}
By \Cref{lem:rademacher_concentration}, for some universal constant $c' > 0$,
\begin{align*}
\Pr\left[\tilde\bZ(r_0) > c'\left(\Exp[\tilde\bZ(r_0)] + r_0\sqrt{ \log(1/\delta)/n} + \frac{B\log(1/\delta)}{n} \right)\right] \le \delta.
\end{align*}
The bound now follows from \Cref{lem:offset_master}.
\end{proof}

\subsection{Random-Design Complexities}

The following is an analogue of \Cref{lem:offset_master} for random design. It's proof is nearly identical, with the key difference between that localization occurs based on the empirical $\cL_2$-norm $\Exp[h(\bw)]^2$ and not $\|h(w_{1:n})\|_{2,n}^2$. \footnote{This remark is under the identification $\bbV := \cH[w_{1:n}]$. We further not that \Cref{lem:random_design_master_lemma} implies \Cref{lem:offset_master} by choosing  the measure $P$ to be a dirac-delta. However, to avoid confusion of the subtle differences in localization, we state these two lemmas separately. }
\begin{lemma}[Random-Design Master Lemma]\label{lem:random_design_master_lemma} Let $P$ be a measure over random variables $(\bu,\bw)$, and let $\Phi:\cU \to \R$ and $\cH: \cW \to \R$ be function classes, and recall $\cH(r) := \{h \in \cH: \Exp_{\bw \sim P}h(\bw)^2 \le r^2\}$.
For $(\bu_1,\bw_1),\dots,(\bu_n,\bw_n) \iidsim P$, define the processes
\begin{align*}
\bZ(r) &:= \sup_{h \in \cH(r)}\sup_{\phi \in \Phi}\frac{1}{n}\sum_{i=1}^n h(\bw_i)\phi(\bu_i)\\
\bY(\tau) &:= \sup_{h \in \cH}\sup_{\phi \in \Phi}\frac{1}{n}\sum_{i=1}^n h(\bw_i)\phi(\bu_i) -  \Exp[\bh(\bw)^2]\\
\tilde{\bY}(r;\tau)&:= 2\tau\bZ(r) - \frac{r^2}{2}.
\end{align*}
Then, the conclusions of the fixed-design master lemma \Cref{lem:offset_master} hold verbatim with the above definitions.
\end{lemma}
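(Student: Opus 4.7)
The plan is to mimic the proof of \Cref{lem:offset_master} essentially verbatim, with the single substitution that the empirical norm $\|v\|_{2,n}$ (and the evaluation-vector set $\bbV = \cH[w_{1:n}]$) is replaced throughout by the population norm $\sqrt{\Exp_{\bw\sim P}[h(\bw)^2]}$ acting directly on $\cH$. The random variables $\bu_{1:n},\bw_{1:n}$ play the role that $\bu_{1:n}$ alone played before, but this only enters trivially, since every step of the original proof conditions on the design points implicitly through sup/inf operations that now live inside a single expectation.

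First I would perform the star-hull reduction: observe that $\starhull(\cH)(r) \supseteq \cH(r)$ since $\Exp[(\alpha h)^2] = \alpha^2\Exp[h^2]$, and that $\bZ, \bY, \tilde{\bY}$ are all unchanged when $\cH$ is replaced by $\starhull(\cH)$ (exactly as in the original proof, because the suprema occur at extreme points, and $\zero\in\cH$ by assumption that the zero function lies in $\cH$---this is the same remark about convex hulls, applied pointwise). So we may assume $\cH$ is star-shaped.

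For part (a), fix $\epsilon>0$ and choose $(h,\phi)$ nearly attaining $\bY(\tau)$. Setting $r^2 := \Exp[h(\bw)^2]$ puts $h \in \cH(r)$, so
\[
\tilde{\bY}(r;\tau) \;\ge\; \frac{2\tau}{n}\sum_{i=1}^n h(\bw_i)\phi(\bu_i) - \frac{r^2}{2} \;=\; \frac{2\tau}{n}\sum_{i=1}^n h(\bw_i)\phi(\bu_i) - \frac{1}{2}\Exp[h(\bw)^2] \;\ge\; \bY(\tau) - \epsilon;
\]
the reverse inequality is immediate. Part (b) follows the same dichotomy as in \Cref{lem:offset_master}: for arbitrary $h\in\cH$, either $h\in\cH(r)$ or $\alpha h\in\cH(r)$ with $\alpha = r/\sqrt{\Exp[h^2]} < 1$ (by star-shapedness); the identity $\Exp[(\alpha h)^2]=\alpha^2\Exp[h^2]$ replaces its empirical analog, and the one-line convexity estimate $2\alpha^{-1}-\alpha^{-2}\le 1$ closes the argument unchanged.

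For part (c), the concentration hypothesis \eqref{eq:Z_prob} is an abstract input, so it transfers without modification to the population-localized process $\bZ(r) := \sup_{h\in\cH(r),\phi\in\Phi}\tfrac{1}{n}\sum_i h(\bw_i)\phi(\bu_i)$. The same AM--GM chain, together with the star-shaped scaling $\Exp[\bZ(\mu r)] \le \mu\,\Exp[\bZ(r)]$ for $\mu\ge 1$ (which now follows from $\mu h \in \cH(\mu r)$ for $h \in \cH(r)$ when $\cH$ is star-shaped and $\mu\ge 1$ is reinterpreted via $\mu^{-1}\cdot(\mu h)$), produces the bound $\bY(\sigma\tau) \lesssim c_1^2\tau^2\bar{\updelta}_n(\sigma)^2 + (\tau^2\sigma^2 c_2^2 + (c_3/c_2)^2)\log(1/\delta)/n$ with $\bar{\updelta}_n(\sigma) := \inf\{r:\Exp[\bZ(r)]\le r^2/(2\sigma)\}$.

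Since no step of the original argument actually used properties of $\|\cdot\|_{2,n}$ beyond (i) being a seminorm, (ii) scaling as $\|\alpha v\|=\alpha\|v\|$, and (iii) cutting out the localization sets $\bbV[r]$, and all three properties hold for $\sqrt{\Exp[h(\bw)^2]}$, there is no genuine obstacle. The only mildly subtle point is verifying the star-shaped scaling of $\Exp[\bZ(\cdot)]$ in part (c); this reduces to the same star-hull argument as above and presents no new difficulty.
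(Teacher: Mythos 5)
Your proposal is correct and matches the paper's intent exactly: the paper itself gives no separate proof of this lemma, stating only that the argument is "nearly identical" to the fixed-design master lemma with population localization $\Exp[h(\bw)^2]$ replacing the empirical norm $\|h(w_{1:n})\|_{2,n}^2$, which is precisely the substitution you carry out. Your identification of the three properties of the norm actually used (seminorm, exact scaling under dilation, and compatibility with the localization sets), together with the check that the star-shaped scaling $\Exp[\bZ(\mu r)]\le\mu\,\Exp[\bZ(r)]$ survives because $\cH(\mu r)\subset\mu\cH(r)$ for star-shaped $\cH$, supplies all the detail the paper omits.
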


Next, we establish two lemmas which give control on the complexities of relevant random-design (i.e. $\bw_{1:n} \sim P$)quantities involving quadratic terms such as $\Exp[h(\bw)^2]$. 
\begin{lemma}[Quadratic Loss Symmetrization]\label{lem:sasha_lem} Let $\cH: \cW \to [-B,B]$ be a  function class containing zero, let $P$ be a distribution over $\cW$, and let $\eta > 0$ be arbitrary. Consider the (very similar) terms
\begin{align*}
T_1(\eta) &:= \Exp_{\bw_{1:n} \iidsim P}\left[\sup_{h \in \cH}\left\{\|h(\bw_{1:n})\|_{2,n}^2 - (1+\eta)\Exp_{\bw' \sim P}[h(\bw')^2]\right\} \right]\\
T_2(\eta) &:= \Exp_{\bw_{1:n} \iidsim P}\left[\sup_{h \in \cH}\left\{\Exp_{\bw' \sim P}[h(\bw')^2]- (1+\eta)\|h(\bw_{1:n})\|_{2,n}^2 \right\} \right]\\
T_3(\eta) &:= \Exp_{\bw_{1:n},\bw_{1:n}' \iidsim P}\left[\sup_{h \in \cH}\left\{\|h(\bw_{1:n})\|_{2,n}^2- (1+\eta)\|h(\bw_{1:n}')\|_{2,n}^2 \right\} \right],
\end{align*}
as well as the term 
\begin{align*}
T_4(\eta) := \Exp_{\bw_{1:n}}\left[\sup_{h \in \cH}\Exp_{\epsvar_{1:n}}\left[\frac{B(1+\eta)}{n}\sum_{i=1}^n \epsvar_i h(\bw_i)\right] - \frac{1}{2}\Exp[\|h(\bw_{1:n})\|_{2,n}^2]\right]
\end{align*}
Then,
\begin{align*}
\max\{T_1(\eta), T_2(\eta),T_3(\eta),T_4(\eta)\} \lesssim  \frac{\eta(1+\eta^{-1})^{-2}}{n}\left(\frac{B^2}{n} + \delnst(\cH,B)^2\right).
\end{align*}
\end{lemma}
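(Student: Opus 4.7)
\textbf{Proof plan for \Cref{lem:sasha_lem}.}

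The strategy is a standard chain of reductions in empirical process theory: reduce $T_1, T_2, T_3$ to an offset Rademacher complexity of the same structure as $T_4$, then bound $T_4$ directly via \Cref{lem:Rad_offset}. The exact $\eta$-dependence of the prefactor is not critical for downstream applications (e.g.\ in the proof of \Cref{prop:main_reg}, $\eta$ is taken to be a fixed constant), so the target is a bound of the form $C(\eta)\bigl(B^2/n + \delnst(\cH, B)^2\bigr)$ for some explicit function $C(\eta)$.

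\textbf{Step 1 (Ghost sample reduction).} Use the identity $\Exp_{\bw \sim P}[h(\bw)^2] = \Exp_{\bw'_{1:n} \iidsim P}[\|h(\bw'_{1:n})\|_{2,n}^2]$ and Jensen's inequality to push the supremum inside the ghost-sample expectation. This immediately yields $T_1(\eta) \le T_3(\eta)$ and $T_2(\eta) \le T_3(\eta)$, so it suffices to control $T_3(\eta)$ and $T_4(\eta)$.

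\textbf{Step 2 (Symmetrization of $T_3$).} The weights $1$ and $(1+\eta)$ in $T_3$ are asymmetric, so a direct swap symmetrization of $\bw_i \leftrightarrow \bw'_i$ does not immediately apply. Instead, exploit the algebraic decomposition
\begin{align*}
&\|h(\bw_{1:n})\|_{2,n}^2 - (1+\eta)\|h(\bw'_{1:n})\|_{2,n}^2 \\
&\quad= \bigl(\|h(\bw_{1:n})\|_{2,n}^2 - \Exp[h^2]\bigr) + (1+\eta)\bigl(\Exp[h^2] - \|h(\bw'_{1:n})\|_{2,n}^2\bigr) - \eta\,\Exp[h^2],
\end{align*}
and note that $0 \in \cH$ implies $\inf_h \Exp[h^2] = 0$, so the $-\eta\,\Exp[h^2]$ term can be dropped after taking supremum. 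Apply standard Gin\'e--Zinn symmetrization to each of the two centered deviations, introducing Rademacher multipliers $\epsvar_i$, and invoke Lipschitz contraction (\Cref{lem:rad_lipschitz}) with the map $t\mapsto t^2$ (which is $2B$-Lipschitz on $[-B,B]$) to replace $h(\bw_i)^2$ by $h(\bw_i)$ in the resulting Rademacher sum at the cost of a factor of $4B$. Crucially, retain a fraction of the $\Exp[h^2]$ offset via an AM--GM split, so that the resulting object is an offset Rademacher complexity of the same form as $T_4(\eta)$, up to $\eta$-dependent multiplicative constants.

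\textbf{Step 3 (Bounding $T_4$).} Identify $T_4(\eta)$ as the offset Rademacher complexity
\begin{align*}
T_4(\eta) = \Exp_{\bw_{1:n}}\Exp_{\epsvar_{1:n}}\Bigl[\sup_{h \in \cH} \tfrac{B(1+\eta)}{n}\textstyle\sum_{i=1}^n \epsvar_i h(\bw_i) - \tfrac{1}{2}\|h(\bw_{1:n})\|_{2,n}^2\Bigr],
\end{align*}
and apply \Cref{lem:Rad_offset} with parameters $\sigma, \tau$ chosen so that $2\sigma\tau = B(1+\eta)$ (e.g.\ $\sigma = B$, $\tau = (1+\eta)/2$), using \Cref{lem:star_shaped} to pass to the star-hull of $\cH$ (which leaves Rademacher complexities invariant) to handle the case $\tau < 1$. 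This yields $T_4(\eta) \lesssim (1+\eta)^2 \bigl(B^2/n + \delnst(\cH, B)^2\bigr)$.

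\textbf{Main obstacle.} The delicate step is Step 2: preserving the $\Exp[h^2]$ offset through the symmetrization (so that the Rademacher process retains the quadratic ``ballast'' necessary for localization) and through the subsequent Lipschitz contraction (which must act on the Rademacher sum alone, leaving the offset intact). Without this care, the output would be the unlocalized Rademacher complexity $B\cdot\Raden(\cH)$ rather than the local critical radius $\delnst(\cH, B)^2$, which would defeat the purpose of the lemma. The prefactor involving $\eta$ is obtained by tracking the constants in the AM--GM split, but its precise algebraic form is inessential to the applications.
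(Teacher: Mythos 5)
Your overall route---unify $T_1,T_2,T_3$ via Jensen and a ghost sample, symmetrize to an offset Rademacher complexity, and finish with \Cref{lem:Rad_offset}---is the same as the paper's, which outsources the symmetrization step to Lemma 14 of \cite{liang2015learning} and then applies \Cref{lem:Rad_offset} with $\sigma=B$ and $\tau=2(1+\eta^{-1})$. Your Steps 1 and 3 are essentially fine (modulo the fact that the choice $\tau=(1+\eta)/2$ violates the hypothesis $\tau\ge 1$ of \Cref{lem:Rad_offset} when $\eta<1$; take $\tau=\max\{1,(1+\eta)/2\}$ and use monotonicity of the offset complexity in the coefficient of the Rademacher sum).

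The genuine gap is in Step 2, which is self-contradictory as written. After the (correct) algebraic decomposition you instruct the reader to drop the $-\eta\,\Exp[h^2]$ term on the grounds that it is nonpositive. But that term is the \emph{only} negative quadratic offset in the expression; once it is discarded, symmetrizing the two remaining centered deviations and applying contraction yields the unlocalized quantity $\lesssim (2+\eta)B\,\Exp[\Raden(\cH[\bw_{1:n}])]$, which for a finite class of log-cardinality $d$ scales as $B^2\sqrt{d/n}$ rather than the claimed $\delnst(\cH,B)^2\asymp B^2 d/n$. Your later instruction to ``retain a fraction of the $\Exp[h^2]$ offset via an AM--GM split'' cannot repair this: there is no $\Exp[h^2]$ left to split once it has been dropped, and no manipulation of the two mean-zero deviation terms manufactures a negative quadratic ballast. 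The correct argument---and this is precisely the content of the cited Lemma 14 of \cite{liang2015learning}---keeps the negative quadratic term inside the supremum throughout: one symmetrizes the exchangeable pair $(\bw_i,\bw_i')$ while carrying a swap-invariant offset of the form $-\tfrac{\eta}{2}\left(h(\bw_i)^2+h(\bw_i')^2\right)$, and uses $|h(\bw_i)^2-h(\bw_i')^2|\le 2B\,|h(\bw_i)-h(\bw_i')|$ so that the offset survives as the empirical term $-c\,\|h(\bw_{1:n})\|_{2,n}^2$ required by \Cref{lem:Rad_offset}. As written, your Step 2 loses the localization, which is the entire point of the lemma; you correctly identify this as the ``main obstacle'' but the concrete steps you give commit exactly the error you warn against.
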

\begin{proof} By \citet[Lemma 14]{liang2015learning} (modifying the constant of $4B$ to $2B$ to account for the fact that we consider the uncentered $h \in \cH$, and not centered $h - \hst \in \cH$, and reparameterizing $\eta \gets \eta/2$), it holds that
\begin{align*}
T_2(\eta) \le \frac{\eta}{2n}\Exp_{\bw_{1:n} \sim P}\left[\sum_{i=1}^n  \frac{2B(2+\eta)}{\eta}\epsvar_i h(\bw_i) - h(\bw_i)^2 \right]
\end{align*}
The same argument can be modified to show that $T_1(\eta)$ satisfies the same upper bound, as $T_1(\eta)$ satisfies the same intermediate inequality obtained via Jensen's inequality (the third line of the proof in \citet[Lemma 14]{liang2015learning}), and the same argument extends to $T_3(\eta)$ because this expresssion is precisely the consequence of applying Jensen's inequality. Thus, 
\begin{align*}
\max\left\{T_1(\eta),T_2(\eta),T_3(\eta)\right\} &\le \frac{\eta}{2n}\Exp_{\bw_{1:n} \sim P}\left[\sum_{i=1}^n  \frac{2B(1+2\eta)}{\eta}\epsvar_i h(\bw_i) - h(\bw_i)^2 \right]\\
&\le \frac{2\eta}{n}\Exp_{\bw_{1:n} \sim P}\left[\sum_{i=1}^n 4B(2+\eta^{-1})\epsvar_i h(\bw_i) - \frac{1}{2} h(\bw_i)^2 \right]\\
&\overset{(i)}{\lesssim} \frac{\eta(2+\eta^{-1})^2}{n}\left(\frac{B^2}{n} + \delnst(\cH,B)^2\right),\\
&\lesssim \frac{\eta(1+\eta^{-1})^2}{n}\left(\frac{B^2}{n} + \delnst(\cH,B)^2\right),
\end{align*}
where the inequality $(i)$ is by \Cref{lem:Rad_offset} with $\sigma = B$, and $\tau = 2(1+\eta^{-1})$.  The bound on $T_4(\eta)$ follows from a similar application of \Cref{lem:Rad_offset}.
\end{proof}

\begin{lemma}[Quadratic Lower Bound]\label{lem:quad_lb} Let $\cH:\cW \to [-B,B]$ be a function class containing zero, and let $P$ be a measure over $\cW$. Then, there is a universal constant $c> 0$ such that for any $\delta > 0$, it holds that
\begin{align*}
P\left[\sup_{h \in \cH} \|h\|_{\cL_2(P)}^2 - 2\|h(\bw_{1:n})\|_{2,n}^2 \le c\rquad(\cH,\delta)^2\right] \le 1-\delta,
\end{align*}
where
\begin{align*}
\rquad(\cH,\delta)^2 := \left(\delnst(\cH,B)^2  + \frac{ B^2 \log(1/\delta)}{n} \right).
\end{align*}
\end{lemma}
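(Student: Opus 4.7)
The plan is to first establish the quadratic lower bound on the population-localized subclass $\cH(\rquad)$ via standard uniform convergence machinery, and then lift it to all of $\cH$ by exploiting star-shapedness. Since $\zero \in \cH$, I will argue that passing to $\starhull(\cH)$ changes neither $\rquad(\cH,\delta)$ nor the event in question (using \Cref{lem:rad_conv_hull,lem:star_shaped}), and so assume WLOG that $\cH$ itself is star-shaped. Throughout, write $\rquad = \rquad(\cH,\delta)$.

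For the localized step, I will apply the uniform convergence bound \Cref{lem:rad_uniform_convergence} to the \emph{squared} class $\cH^{2}(\rquad) := \{w\mapsto h(w)^2 : h \in \cH(\rquad)\}$. These functions lie in $[0,B^{2}]$ and each has variance at most $\Exp[h^{4}] \le B^{2}\Exp[h^{2}] \le B^{2}\rquad^{2}$. Coupling this with Rademacher contraction (\Cref{lem:rad_lipschitz}) for the map $t\mapsto t^{2}$, which is $2B$-Lipschitz on $[-B,B]$, I will get that with probability at least $1-\delta$,
\[
\sup_{h\in\cH(\rquad)}\bigl(\Exp[h(\bw)^{2}]-\|h(\bw_{1:n})\|_{2,n}^{2}\bigr)\lesssim B\,\Exp\Raden(\cH(\rquad)[\bw_{1:n}]) + B\rquad\sqrt{\tfrac{\log(1/\delta)}{n}} + \tfrac{B^{2}\log(1/\delta)}{n}.
\]
An AM-GM step absorbs the middle term into $\tfrac{\rquad^{2}}{8}$ plus an $O(B^{2}\log(1/\delta)/n)$ slack, which is already built into $\rquad^{2}$.

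The main obstacle, and the delicate part of the proof, is tuning $\rquad$ so that the Rademacher term satisfies $B\,\Exp\Raden(\cH(\rquad)[\bw_{1:n}]) \le \rquad^{2}/8$. The subtlety is that $\delnst(\cH,B)$ is defined through the \emph{empirically} localized class $\cH[r,w_{1:n}]$, while the present bound uses the \emph{population}-localized class $\cH(\rquad)$. I will handle this by repeating the offset-trick used in the proof of \Cref{lem:gauss_offset}: since $h\in\cH(\rquad)$ obeys $\Exp[h^{2}]\le \rquad^{2}$, I can replace the free $-\tfrac{\rquad^{2}}{2}$ by $-\tfrac{1}{2}\Exp[h(\bw)^{2}]$ inside the supremum and drop the localization, reducing the problem to bounding
\[
\Exp_{\bw_{1:n}}\Exp_{\epsvar_{1:n}}\!\sup_{h\in\cH}\Bigl\{\tfrac{B}{n}\!\sum_{i=1}^{n}\epsvar_{i}h(\bw_{i})-\tfrac{1}{2}\Exp[h(\bw)^{2}]\Bigr\}.
\]
This is a random-design offset Rademacher process which the random-design master lemma \Cref{lem:random_design_master_lemma} (specialized with $\bu_{i}=\epsvar_{i}$, $\Phi=\{\mathrm{id}\}$, $\sigma=B$, $\tau=1$, taking expectations) bounds by $O(B^{2}\delnst(\cH,B)^{2} + B^{2}/n)$. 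Thus any $\rquad^{2}$ of the form declared in the lemma statement is a valid calibration.

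Finally, to lift the conclusion from $\cH(\rquad)$ to all of $\cH$, I will use a standard rescaling argument. For $h\in\cH$ with $\Exp[h^{2}]>\rquad^{2}$, set $\alpha = \rquad/\|h\|_{\cL_{2}(P)}<1$; star-shapedness gives $\alpha h\in\cH(\rquad)$. Applying the previous display to $\alpha h$ and dividing by $\alpha^{2}$ yields $\|h(\bw_{1:n})\|_{2,n}^{2} \ge \Exp[h^{2}] - \tfrac{\rquad^{2}}{4\alpha^{2}} = \tfrac{3}{4}\Exp[h^{2}]$, whence $\Exp[h^{2}] - 2\|h(\bw_{1:n})\|_{2,n}^{2} \le -\tfrac{1}{2}\Exp[h^{2}] \le 0$, and the conclusion is automatic. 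Combining the two regimes ($\Exp[h^{2}]\le\rquad^{2}$ and $\Exp[h^{2}]>\rquad^{2}$) gives the claimed uniform quadratic lower bound with a universal constant $c$.
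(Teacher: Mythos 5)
Your overall architecture matches the paper's: reduce to the star-shaped case, prove the bound on the population-localized class $\cH(\rquad)$ via a concentration-plus-contraction argument for the squared class, calibrate $\rquad$ against a critical radius, and lift to all of $\cH$ by the rescaling $\alpha = \rquad/\|h\|_{\cL_2(P)}$. The first and last steps are correct and essentially identical to what the paper does (the paper uses \Cref{lem:saras_lemma} on the class $\{\Exp[h^2]-h^2\}$ rather than \Cref{lem:rad_uniform_convergence} on $\{h^2\}$, but these are interchangeable).

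The gap is in the calibration step, precisely at the point you yourself flag as delicate. You reduce to bounding the population-offset process
\begin{align*}
\Exp_{\bw_{1:n}}\Exp_{\epsvar_{1:n}}\sup_{h\in\cH}\Bigl\{\tfrac{B}{n}\tsum_{i=1}^{n}\epsvar_{i}h(\bw_{i})-\tfrac{1}{2}\Exp[h(\bw)^{2}]\Bigr\},
\end{align*}
and assert that \Cref{lem:random_design_master_lemma} bounds this by $O(B^{2}\delnst(\cH,B)^{2}+B^{2}/n)$. It does not: the random-design master lemma controls this quantity in terms of the \emph{population-localized} critical radius $\bar\updelta_{n}(\sigma)=\inf\{r:\Exp_{\bw_{1:n}}[\Raden(\cH(r)[\bw_{1:n}])]\le r^{2}/2\sigma\}$, whereas $\rquad$ in the lemma statement is built from the worst-case \emph{empirically-localized} radius $\delnst(\cH,B)=\inf\{r:\sup_{w_{1:n}}\Raden(\cH[r,w_{1:n}])\le r^{2}/2B\}$. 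Since $\cH(r)[\bw_{1:n}]\not\subset\cH[r,\bw_{1:n}]$ (a function with small population norm can have large empirical norm on a given sample), there is no direct comparison between these two radii; you have only relocated the population-versus-empirical mismatch from the Rademacher complexity to the critical radius. Closing this requires an extra symmetrization pass: split the population offset as an empirical offset plus a term of the form $\sup_{h}\{\frac{1}{2}\|h(\bw_{1:n})\|_{2,n}^{2}-\Exp[h^{2}]\}$, bound the empirical-offset process by \Cref{lem:Rad_offset} (which is where $\delnst(\cH,B)$ legitimately enters), and bound the residual by a ghost-sample symmetrization that again reduces to an empirical-offset process. This is exactly what the paper packages into \Cref{lem:sasha_lem} (the $T_1$ and $T_4$ terms), which is the lemma your calibration step actually needs in place of \Cref{lem:random_design_master_lemma}.
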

\begin{proof} In view of \Cref{lem:rad_conv_hull}, the fact that $\cH$ contains zero means we may assume without loss of generality that $\cH$ is star-shaped (indeed, apply the lemma to $\starhull(\cH)$, and note that $\delnst(\cH,B) = \delnst(\star(\cH),B)$).
Introduce the class of function $\tilde \cH_r := \{\Exp[h^2] - h^2: h \in \cH(r)\}$ (here, we use subscript $r$ to distinguish from the standard localization notation). Then, $\tilde \cH_r: \cW \to [-B^2,B^2]$, and $\Exp[\tilde h(\bw)^2] \le B^2 r^2$ for $\tilde h \in \tilde \cH_r$. Note that
\begin{align*}
\sup_{h \in \tilde \cH_r} \frac{1}{n}\sum_{i=1}^n\tilde{h}(\bw_i) = \sup_{h \in \cH(r)}\|h\|_{\cL_2(P)}^2 - \|h(\bw_{1:n})\|_{2,n}^2
\end{align*}
Hence, by \Cref{lem:saras_lemma} and AM-GM, the following holds with probability $1 - \delta$ and for a universal constant $c > 0$ and any $\tau \ge 0$:
\begin{align*}
&\sup_{h \in \cH(r)}\|h\|_{\cL_2(P)}^2 - \|h(\bw_{1:n})\|_{2,n}^2 \\
&\le c\left(\Exp\left[\sup_{h \in \tilde \cH_r}\|h\|_{\cL_2(P)}^2 - \|h(\bw_{1:n})\|_{2,n}^2\right] + \tau r^2 + \frac{(\tau^{-1}+1)B^2 \log(1/\delta)}{n} \right)\\
&\le c\left(\Exp\left[\sup_{h \in \cH(r)}(1-\tau)\|h\|_{\cL_2(P)}^2 - \|h(\bw_{1:n})\|_{2,n}^2\right] + 2\tau r^2 + \frac{(\tau^{-1}+1)B^2 \log(1/\delta)}{n} \right),
\end{align*}
where the second inequality uses $\|h\|_{\cL_2(P)}^2 \le r^2$. Let $\tau \le 1/2$ and let $\eta$ be such that $(1-\tau) = \frac{1}{1+\eta}$. By \Cref{lem:sasha_lem},
\begin{align*}
\Exp\left[\sup_{h \in \cH(r)}(1-\tau)\|h\|_{\cL_2(P)}^2 - \|h(\bw_{1:n})\|_{2,n}^2\right] &= \frac{1}{1+\eta}\Exp\left[\sup_{h \in\cH(r)}\|h\|_{\cL_2(P)}^2 - (1+\eta)\|h(\bw_{1:n})\|_{2,n}^2\right] \\
&\lesssim \frac{\eta(1+\eta^{-1})^{-2}}{(1+\eta)}\left(\frac{B^2}{n} + \delnst(\cH,B)\right)\\
&= (1-\eta^{-1})\left(\frac{B^2}{n} + \delnst(\cH,B)^2\right)\\
&\lesssim \frac{1}{\tau}\left(\frac{B^2}{n} + \delnst(\cH,B)^2\right),
\end{align*}
where in the last line, we use that $\eta = 1 - (1-\tau)^{-1} \gtrsim \tau$ for $\tau \le 1/2$. In sum, there is a universal constant $c'$ such that, for all $\tau \le 1/2$, the following holds with probability $1- \delta$:
\begin{align*}
\sup_{h \in \cH(r)}\|h\|_{\cL_2(P)}^2 - \|h(\bw_{1:n})\|_{2,n}^2 &\le c'\left(\frac{1}{\tau}\left(\frac{B^2}{n} + \delnst(\cH,B)\right)^2 + \tau r^2 + \frac{(\tau^{-1}+1)B^2 \log(1/\delta)}{n} \right). 
\end{align*}
By making $\tau$ a sufficiently small universal constant, we can ensure that there is a universal constants $c'',c'''$ such that, whenever 
\begin{align*}
r^2 = c''\left( \delnst(\cH,B)^2 +  \frac{B^2 \log(1/\delta)}{n} \right)\lesssim \rquad(\cH,\delta).
\end{align*}
we have that with probability $1 - \delta$,
\begin{align}
\sup_{h \in \cH(r)}\|h\|_{\cL_2(P)}^2 - \|h(\bw_{1:n})\|_{2,n}^2 \le \frac{r^2}{2}. \label{eq:on_H_r}
\end{align}
We claim that in fact, with probability $1- \delta$, it holds that the above holds for all $h \in \cH$, that is
\begin{align*}
\sup_{h \in \cH}\|h\|_{\cL_2(P)}^2 - 2\|h(\bw_{1:n})\|_{2,n}^2 \le \frac{r^2}{2}
\end{align*}
Indeed, it suffices to check \Cref{eq:on_H_r} implies the inequality for $h \notin \cH(r)$. Since $\cH$ is star-shaped, there exists some $\alpha$ such that $\alpha h \in \cH(r)$ and in fact $\|\alpha h\|_{\cL_2(P)}^2 = r^2$. Then, on \Cref{eq:on_H_r}
\begin{align*}
\|\alpha h\|_{\cL_2(P)}^2 - \|\alpha h(\bw_{1:n})\|_{2,n}^2 \le \frac{r^2}{2} = \frac{\|\alpha h\|_{\cL_2(P)}^2}{2}
\end{align*}
so by rearranging
\begin{align*}
\| h\|_{\cL_2(P)}^2 \le  2\|h(\bw_{1:n})\|_{2,n}^2.
\end{align*}

\end{proof}

\section{Rates for finite function classes.}
\label{sec:finite_function_classes}

In this section, we establish sharper bounds 
for finite function classes $\cH_1$ and $\cH_2$. Because errors for finite function classes already attain the $\BigOh{1/n}$ parametric rate, we require an additional assumption to achieve improvement. Specifically, we need a hypercontractivity condition which states that higher moments of $\Exp[h(\bw)^q]$ for $h \in \cH$ are controller by lower order moments $\Exp[h(\bw^s)]$ for $s < q$. This is the first notion of hypercontractivity, defined below.
\begin{definition}[\Hycon] We say a class $\cH \subset \{\cW \to \R\}$ satisfies $(\kappa, \Pr, s, q)$-\hycon{} if, for all $h \in \cH$, $\Exp[|h(\bw)|^q]^{1/q} \le \kappa \Exp[h(\bw)^s]^{1/s}$.
\end{definition}
We achieve even faster rates under a stronger variant of hypercontractivity, defined below.
\begin{definition}[subGaussian \Hycon] We say a class $\cH \subset \{\cW \to \R\}$ satisfies $(\kappa,\Pr)$-subGaussian \hycon{} if, for all $h \in \cH$, $h(\bw) - \Exp[h(\bw)]$ is $\kappa^2 \Exp[h(\bw)^2]$ subGaussian. \footnote{This is equivalent to $\BigOh{\kappa}$-\hycon{} in the $\cL_2 \to \psi_2$ norms, where $\psi_2$ is the subGaussian (Orlicz) norm (see e.g. \citet[Exercise 2.18]{boucheron2013concentration}.) } 
\end{definition}
Under the various hypercontractivity assumptions, we attain the following bound, which is the formal statement of  \Cref{thm:finite_class_informal}.
\begin{theorem}\label{thm:finite_class_hyper} Let $\cF: \cX \to [-1,1]$ and $\cG:\cY \to [-1,1]$ be finite function classes, and suppose $1 \le d_1 \le d_2$ satisfy $ \log |\cF| \le d_1$ and $\log |\cG| \le d_2$. Define the class\footnote{note that $\Fcent := \starhull(\tilde \cF)$} $\tilde \cF := \{f - \upbeta_f - \fst: f \in \cF\}$, and casing on the hypercontractivity assumptions with parameter $\kappa$, define
\begin{align*}
\phi_n(d_1,d_2) := 
 \begin{cases}\left(\frac{d_2}{n}\right)^{\frac{2}{q_2}} +  \left(\frac{d_1}{d_2}\right)^{\frac{1}{q_1}} & \text{\normalfont general } \frac{1}{q_1}+\frac{1}{q_2} = 2, ~ \tilde{\cF} \text{ satisfies $(\kappa, \Ptrain, 2,q_1)$ \hycon}\\
\left(\frac{d_2}{n}\right)^{\frac{1}{2}} +  \left(\frac{d_1}{d_2}\right)^{\frac{1}{4}} & ~\tilde{\cF} \text{ satisfies $(\kappa, \Ptrain, 2,4)$ \hycon}\\
\frac{d_2}{n} \cdot  (d_1 + \log n) & \tilde{\cF} \text{ satisfies $(\kappa,\Ptrain)$-subGaussian \hycon}
\end{cases}
\end{align*}
Then, as long as $\sigma^2 \lesssim 1$, for any $\delta \in (e^{-10d_2},e^{-1})$, the following hold simultaneously with probability at least $1- \delta$:
\begin{align*}
 \Rtrain[\ghatn;\fhatn] &\le \Rtrain(\fhatn,\ghatn)  \lesssim \frac{d_2}{n}\\
\Rzero[\fhatn] &\lesssim  \kappa^2\phi_n(d_1,d_2) \cdot \frac{ d_2}{n} + \frac{d_1}{n}  +  \frac{\log(1/\delta)}{n}\\
\Rtest(\fhatn,\ghatn) &\lesssim \nu_{x,y}\cdot \frac{d_1 + \log(1/\delta)}{n} + (\nu_y + \nu_{x,y} \cdot \kappa^2\phi_n(d_1,d_2))\cdot\frac{d_2}{n}
\end{align*}
\end{theorem}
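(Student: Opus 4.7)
The plan is to feed finite-class critical-radius estimates into the main technical propositions from \Cref{sec:proof_thm_main_guarantee}, namely \Cref{prop:generic_sum_regret} for $\Rtrain(\fhatn,\ghatn)$ and the population-localized variant \Cref{prop:main_reg_mod} for $\Rtrain[\fhatn]$, and then invoke \Cref{cor:excess_decomp} to produce the test-risk bound. The novelty is entirely in controlling the cross-critical radius $\delcrossnbar(\Fcent;\Gcent(\gamma))$; the remaining critical radii reduce to textbook finite-class calculations.

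\textbf{Base rates.} Since $\log|\Hcent|\le d_1+d_2\le 2d_2$, standard Massart/subGaussian maximal inequalities give $\delnst(\Hcent,B)^2 \vee \rhonst(\Hcent,\sigma)^2 \lesssim d_2/n$ under $B,\sigma\lesssim 1$, so \Cref{prop:generic_sum_regret} yields $\Rtrain(\fhatn,\ghatn) \lesssim d_2/n$ for $\delta \ge e^{-10 d_2}$. I would then take $\gamma^2 \asymp d_2/n$ to trigger conditional completeness in \Cref{prop:main_reg_mod}. Analogously, $\delnst(\Fcent,B)^2 \lesssim d_1/n$ and $\rhonst(\Fcent,\sigma)^2 \lesssim \sigma^2 d_1/n$, so once the cross-critical radius is handled, \Cref{prop:main_reg_mod} produces the claimed $\Rtrain[\fhatn]$ bound.

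\textbf{Core step: the cross-critical radius.} To bound $\delcrossnbar(\Fcent;\Gcent(\gamma))$, I would apply \Cref{prop:dudley_holder} pathwise with square H\"older conjugates $(p,q)=(q_1,q_2)$ satisfying $1/q_1 + 1/q_2 = 1/2$, then take $\Exp_{\bw_{1:n}}$. For finite $\cH$, $\Dudfun_{n,q}(\cH[\bw_{1:n}]) \lesssim \rad_q(\cH[\bw_{1:n}])\sqrt{\log|\cH|/n}$, so I only need to control the expected empirical $q$-norm radii. Using $(\kappa,\Ptrain,2,q_1)$-\hycon{} of $\tilde{\cF}$, one has $\Exp[h_1^{q_1}]^{1/q_1}\le \kappa r$ for $h_1\in\Fcent(r)$; a Bernstein union bound over $|\cF|\le e^{d_1}$ yields
\[
\Exp\bigl[\rad_{q_1}(\Fcent(r)[\bw_{1:n}])\bigr] \lesssim \kappa r + (d_1/n)^{1/q_1},
\]
and the symmetric bound using only boundedness of $\cG$ gives $\Exp[\rad_{q_2}(\Gcent(\gamma)[\bw_{1:n}])] \lesssim \gamma^{2/q_2} + (d_2/n)^{1/q_2}$. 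Substituting into \Cref{prop:dudley_holder}, the dominant constraint for $\Exp\Raden \le r^2/2$ becomes
\[
r^2 \gtrsim \kappa^2\gamma^{4/q_2}\,d_2/n \;+\; \gamma^{2/q_2}(d_1/n)^{1/q_1}\sqrt{d_2/n},
\]
and plugging $\gamma^2 \asymp d_2/n$ together with the identity $1/q_1+1/q_2=1/2$ simplifies this to $\delcrossnbar^2 \lesssim \kappa^2\bigl((d_2/n)^{2/q_2} + (d_1/d_2)^{1/q_1}\bigr)\cdot d_2/n = \kappa^2\phi_n(d_1,d_2)\cdot d_2/n$. The $q_1=q_2=4$ case is a direct specialization. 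For subGaussian \hycon{} I would use $\Exp[h^q]^{1/q}\le \kappa\sqrt{q}\,\Exp[h^2]^{1/2}$ for every $q$ and optimize over $q_1$; choosing $q_1 \asymp d_1+\log n$ essentially trivializes the $(d_1/d_2)^{1/q_1}$ factor and yields $\phi_n \asymp \frac{d_2}{n}(d_1+\log n)$.

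\textbf{Assembly and main obstacle.} Plugging the two in-distribution bounds into \Cref{cor:excess_decomp} with $(\nu_1,\nu_2)\gets(\nu_{x,y},\nu_y)$ then directly produces the claimed $\Rtest$ expression, after absorbing $\sigma^2\lesssim 1$ and using $\log(1/\delta) \le 10 d_2$. The main obstacle is the subGaussian branch: the additive $(d_1/n)^{1/q_1}$ concentration term in the empirical-radius bound degrades as $q_1\to\infty$ while the multiplicative $\kappa\sqrt{q_1}$ factor grows, so the choice of $q_1$ must delicately balance the two. A plain per-function Bernstein bound on $\|h_1\|_{q_1,n}^{q_1}$ followed by a union bound is insufficient at the extreme exponents, and I expect the cleanest route is to establish uniform $\psi_2$-Orlicz concentration of $h \mapsto \|h\|_{2,n}$ on $\Fcent(r)$ and then convert to the $q_1$-norm radius via Stirling-type moment bounds for subGaussian variables.
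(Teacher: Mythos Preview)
Your overall scaffolding and the general $(q_1,q_2)$-hypercontractivity branch match the paper's proof. The paper also plugs finite-class critical-radius estimates into \Cref{prop:generic_sum_regret} and \Cref{prop:main_reg_mod}, then applies \Cref{lem:excess_decomp}. For the cross-critical radius, the paper does not route through \Cref{prop:dudley_holder}; it uses the one-line finite-class bound
\[
\Exp_{\bw_{1:n}}\Raden\bigl((\cH_1(r)\odot\cH_2(\gamma))[\bw_{1:n}]\bigr)\;\le\;\sqrt{\tfrac{2(d_1+d_2)}{n}}\;\Exp\bigl[\rad_{q_1}(\cH_1(r)[\bw_{1:n}])\cdot\rad_{q_2}(\cH_2(\gamma)[\bw_{1:n}])\bigr]
\]
(Massart plus H\"older), and then the same Bernstein empirical-radius estimates you describe. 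Since for a finite class $\Dudfun_{n,q}(\bbV)\lesssim\rad_q(\bbV)\sqrt{\log|\bbV|/n}$, your path through \Cref{prop:dudley_holder} collapses to the identical product-of-radii bound; this is a cosmetic difference.

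The real divergence is the subGaussian branch, and here your plan has a gap. Treating subGaussian as $(\kappa\sqrt{q_1},2,q_1)$-hypercontractivity and optimizing over $q_1$ does not recover the claimed rate: at $q_1\asymp d_1+\log n$ the second term of your cross-radius bound simplifies to $(d_2/n)\cdot(d_1/d_2)^{1/q_1}\approx d_2/n$, which is not $\lesssim\kappa^2(d_2/n)^2(d_1+\log n)$ and is not absorbed by the residual $d_1/n+\log(1/\delta)/n$ either. You correctly flag this obstacle, but your proposed fix---$\psi_2$-Orlicz concentration of $\|h\|_{2,n}$ followed by Stirling conversion to $\|h\|_{q_1,n}$---is both indirect and, as stated, does not obviously kill the additive term. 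The paper's argument is far simpler: it takes $(q_1,q_2)=(\infty,2)$ and bounds $\rad_\infty(\Fcent(r)[\bw_{1:n}])=\max_{h,i}|h(\bw_i)|$ by a union bound over the $n\cdot|\cF|\le n e^{d_1}$ subGaussian tails, giving $\rad_\infty\lesssim\kappa r\sqrt{d_1+\log n}$ with high probability and \emph{no} additive fluctuation. Pairing with $\rad_2(\Gcent(\gamma)[\bw_{1:n}])\lesssim\gamma$ and the $\sqrt{d_2/n}$ prefactor yields $\delcrossnbar^2\lesssim\kappa^2\gamma^2(d_2/n)(d_1+\log n)$ in two lines.
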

Notice that, as promised by \Cref{thm:finite_class_informal} $\phi_n(d_1,d_2)$  tends to $0$ as $n \to \infty$ and as the ratio of the class complexities $d_1/d_2$ tends to $0$, such that with high probability. Moreover, under subGaussian hypercontractivity, $\lim_{n\to \infty} \phi_n(d_1,d_2)$ for any $d_1,d_2$ fixed.

\begin{remark}[Extension to Parametric Classes]\label{rem:ext_to_par} Up to logarithmic factors in $n$, the above bound can be extended easily extended to infinite-cardinality ``parametric'' function classes (that is, function classes whose metric entropies scale as logarithmic in the scale $\epsilon$). The guarantee of \Cref{thm:finite_class_hyper} also holds under the more general assumption that $\cF$ and $\cG$ are contained in the respective convex hulls of function classes $\tilde \cF$ and $\tilde \cG$, where $\log|\tilde \cF| \le d_1$ and $\log |\tilde \cG| \le d_2$. This includes, for example, many natural linear classes.
\end{remark}

\subsection{Proof of \Cref{thm:finite_class_hyper}}
\newcommand{\delcrossngentil}{\tilde{\updelta}_{n,\mathtt{crs},\Pr}}
We start with localization for finite classes.
 \begin{lemma}[Localization for Finite Classes]\label{lem:finite_localization} Let $\cH$ be a finite function class uniformly bounded by $1$, and let $d = \log |\cH|$. Then, for any probability measure $\Pr$ over $\cW$,
    \begin{align*}
    \delnst^2(\cH,B) \lesssim \frac{B^2 d}{n}, \quad  \rhonst(\cH,\sigma) \lesssim \frac{d\sigma^2}{n}
    \end{align*}
    \end{lemma}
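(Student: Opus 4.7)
The plan is to apply the standard maximal inequality for finite sets, stated as \Cref{lem:finite_expectation_upper_bound} in the paper, directly to the empirically localized class $\cH[r,w_{1:n}] \subset \R^n$. By definition of empirical localization, every $v \in \cH[r,w_{1:n}]$ satisfies $\|v\|_{2,n} \le r$, so $\rad_2(\cH[r,w_{1:n}]) \le r$. Moreover, $|\cH[r,w_{1:n}]| \le |\cH[w_{1:n}]| \le |\cH|$ (evaluation can only collapse distinct functions), so $\log |\cH[r,w_{1:n}]| \le d$. Invoking \Cref{lem:finite_expectation_upper_bound} — which holds for any $1$-subGaussian weights, covering both the Rademacher and Gaussian cases — yields
\begin{align*}
\sup_{w_{1:n}} \Raden(\cH[r,w_{1:n}]) \;\le\; r \sqrt{\frac{2 d}{n}}, \qquad \sup_{w_{1:n}} \Gauss_n(\cH[r,w_{1:n}]) \;\le\; r \sqrt{\frac{2 d}{n}}.
\end{align*}

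The second step is simply to solve for the critical radii. Recalling $\delnst(\cH,B) = \inf\{r : \sup_{w_{1:n}}\Raden(\cH[r,w_{1:n}]) \le r^2/(2B)\}$, it suffices to find $r$ for which $r\sqrt{2d/n} \le r^2/(2B)$, i.e. $r \ge 2B\sqrt{2d/n}$. Thus $\delnst(\cH,B) \le 2B\sqrt{2d/n}$ and squaring gives $\delnst^2(\cH,B) \lesssim B^2 d/n$ as claimed. The identical argument applied to the Gaussian complexity, with $B$ replaced by $\sigma$, yields $\rhonst(\cH,\sigma) \le 2\sigma\sqrt{2d/n}$ and hence $\rhonst^2(\cH,\sigma) \lesssim \sigma^2 d/n$ (we read the second inequality in the statement as intending the squared quantity, consistent with scaling and with the first inequality).

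There is no genuine obstacle here — the lemma is essentially a one-line consequence of the finite-class maximal inequality combined with the definition of empirical localization. The only subtlety worth flagging is ensuring that the bound on $\log |\cH[r,w_{1:n}]|$ does not depend on $w_{1:n}$ (it does not, since it is dominated by $\log|\cH| = d$) and that the radius bound $\rad_2(\cH[r,w_{1:n}]) \le r$ comes from the definition of the localized class rather than from any star-hull argument. No additional properties of $\cH$ (e.g., star-shapedness, boundedness beyond the uniform bound by $1$) are required, and the argument works verbatim for any subGaussian complexity in place of Rademacher or Gaussian.
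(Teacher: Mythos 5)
Your proof is correct and follows essentially the same route as the paper: apply the finite-class maximal inequality (\Cref{lem:finite_expectation_upper_bound}) to the empirically localized class, bound its radius by $r$ and its log-cardinality by $d$, and solve the fixed-point inequality $r\sqrt{2d/n}\le r^2/2B$. Your reading of the second inequality as the squared quantity $\rhonst^2(\cH,\sigma)\lesssim \sigma^2 d/n$ matches what the paper's own proof establishes.
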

    \begin{proof} [Proof of \Cref{lem:finite_localization}]
    From \Cref{lem:rad_uniform_convergence} and finiteness of $\cH$, we have for any $w_{1:n} \in \cW^n$ that
    \begin{align*}
    \Raden(\cH[r;w_{1:n}]) \le \rad_2(\cH[r;w_{1:n}]\sqrt{2d/n} \le r\sqrt{2d/n}.
    \end{align*}
    It then follows that $\delnst^2(\cH,B) = \sup\{r^2: \Raden(\cH[r;w_{1:n}]) \le \frac{r^2}{2B} \} \lesssim \frac{B^2d}{n}$. The bound on $\rhonst$ similarly yields $\delnst^2(\cH) = \sup\{r^2: \Raden(\cH[r;w_{1:n}]) \le \frac{r^2}{2\sigma } \} \lesssim \frac{\sigma^2 d}{n}$. 
    \end{proof}

    We continue with a generic bound on the following cross-critical radius.
The next proposition is proved in \Cref{sec:prop:rad_bound_finite_hyper} below.
    \begin{proposition}\label{prop:rad_bound_finite_hyper} For $i \in \{1,2\}$, let $ \cH_i\subset \{\cW \to [-1,1]\}$  be finite function classes with $d_i = \log|\cH_i|$.   Assume for simplicity that $d_1 \le d_2$, and let $\gamma^2 \ge d_2/n$. Finally, let $\Pr$ be a distribution of $\cW$. Define the shorthand
    \begin{align*} 
    \updelta_n(\gamma) := \inf\left\{r: \Exp_{\bw_{1:n}}[\Raden((\cH_1(r) \odot \cH_2(\gamma))[\bw_{1:n}])] \le \frac{r^2}{2} \right\}.
    \end{align*}
    Then, it holds that
    \begin{itemize}
    \item Let $1/q_1 + 1/q_2 = 1/2$ be square H\"older conjugates.  If $\cH_1$ satisfies $(\kappa, \Pr, 2,q_1)$ \hycon,
    \begin{align*}
    &\updelta_n(\gamma)^2 \lesssim \kappa^2 \gamma^{4/q_2} \frac{d_2}{n} + \gamma^{2/q_2}\sqrt{\frac{d_2}{n}}\left(\frac{d_1}{n}\right)^{1/q_1}.
    \end{align*}
    In particular, if $\gamma^2 \simeq d_2/n$, 
    \begin{align*}
    \updelta_n(\gamma)^2 &\lesssim \kappa^2 \begin{cases}\left(\frac{d_2}{n}\right)^{1+\frac{2}{q_2}} + \frac{d_2}{n} \cdot \left(\frac{d_1}{d_2}\right)^{\frac{1}{q_1}}& \text{\normalfont general } \frac{1}{q_1}+\frac{1}{q_2} = 2\\
    \left(\frac{d_2}{n}\right)^{\frac{3}{2}} + \frac{d_2}{n} \cdot \left(\frac{d_1}{d_2}\right)^{\frac{1}{4}} & q_1 = q_2 = 4.\end{cases}\\
    \end{align*}
    \item  If $\cH_1$ satsfies $(\kappa,\Pr)$-subGaussian \hycon, 
    \begin{align*}
    \updelta_n(\gamma)^2 \lesssim \frac{\kappa^2\gamma^2 d_2 }{n}  \cdot (d_1 + \log n).
    \end{align*}
    In particular, if $\gamma^2 \simeq d_2/n$, the above scales as $\kappa^2 (d_2/n)^2 \cdot  (d_1 + \log n)$.
    \end{itemize}
    \end{proposition}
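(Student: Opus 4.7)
The plan is to combine the H\"older-style Dudley inequality for Hadamard products (Proposition \ref{prop:dudley_holder}) with the standard collapse of the Dudley integral for finite classes, then control the expected empirical radii via Bousquet-type concentration and hypercontractivity. First, I would apply Proposition \ref{prop:dudley_holder} to $\bbV = \cH_1(r)[\bw_{1:n}]$ and $\bbU = \cH_2(\gamma)[\bw_{1:n}]$ with $(p,q) = (q_1, q_2)$, which are square H\"older conjugates by the assumption $\tfrac{1}{q_1}+\tfrac{1}{q_2}=\tfrac{1}{2}$. Since both classes are finite with log-cardinality at most $d_i$, their metric entropy at every scale is at most $d_i$, so $\Dudfun_{n,q_i}(\cH_i[\bw_{1:n}]) \lesssim \rad_{q_i}(\cH_i[\bw_{1:n}])\sqrt{d_i/n}$. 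Using $d_1 \le d_2$, this yields
\begin{align*}
\Raden\bigl((\cH_1(r) \odot \cH_2(\gamma))[\bw_{1:n}]\bigr) \lesssim \rad_{q_1}(\cH_1(r)[\bw_{1:n}]) \cdot \rad_{q_2}(\cH_2(\gamma)[\bw_{1:n}]) \cdot \sqrt{d_2/n}.
\end{align*}

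Next I would bound the expected product of the empirical radii. Cauchy-Schwarz followed by Jensen's inequality (valid because $q_i \ge 2$) reduces the task to bounding $\Exp_{\bw_{1:n}}[\sup_{h \in \cH_i(\cdot)} \|h\|_{q_i,n}^{q_i}]^{1/q_i}$. For each of these, I would apply Bousquet's inequality (Lemma \ref{lem:saras_lemma}) to the finite class $\{|h|^{q_i}\}$, which is uniformly bounded by $1$ and whose variance $\Exp[|h|^{2q_i}]$ is dominated by the mean $\Exp[|h|^{q_i}]$ (since $|h| \le 1$). This gives $\Exp[\sup_h \|h\|_{q_i,n}^{q_i}] \lesssim \sup_{h\in \cH_i}\Exp[|h|^{q_i}] + d_i/n$. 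For $\cH_1(r)$, the $(\kappa,\Pr,2,q_1)$ hypercontractivity provides $\Exp[|h_1|^{q_1}]^{1/q_1} \le \kappa\|h_1\|_{\cL_2(\Pr)} \le \kappa r$; for $\cH_2(\gamma)$, the uniform bound $|h_2|\le 1$ gives $\Exp[|h_2|^{q_2}] \le \Exp[h_2^2] \le \gamma^2$, with the additive $d_2/n$ term absorbed by $\gamma^2 \ge d_2/n$. Combining yields
\begin{align*}
\Exp[\Raden] \lesssim \bigl(\kappa r + (d_1/n)^{1/q_1}\bigr) \cdot \gamma^{2/q_2} \cdot \sqrt{d_2/n}.
\end{align*}

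I would then solve the fixed point $\Exp[\Raden] \le r^2/2$, using AM-GM to absorb the $\kappa r \gamma^{2/q_2}\sqrt{d_2/n}$ cross term into $r^2/4 + O(\kappa^2 \gamma^{4/q_2} d_2/n)$. This produces precisely $\updelta_n(\gamma)^2 \lesssim \kappa^2 \gamma^{4/q_2} \tfrac{d_2}{n} + \gamma^{2/q_2}(d_1/n)^{1/q_1}\sqrt{d_2/n}$, and substituting $\gamma^2 \asymp d_2/n$ and simplifying via $1/q_1+1/q_2=1/2$ recovers the two specialized cases in the proposition. For the subGaussian case, instead of moment-control I would use that $(\kappa,\Pr)$-subGaussian hypercontractivity combined with a union bound over $\cH_1$ and the $n$ sample points gives $\|h_1\|_{\infty,n} \lesssim \kappa \|h_1\|_{\cL_2(\Pr)} \sqrt{d_1 + \log n}$ with high probability; the pointwise estimate $\|h_1 h_2\|_{2,n} \le \|h_1\|_{\infty,n} \|h_2\|_{2,n}$ plugs into the ordinary finite-class Rademacher bound $\Raden \lesssim \rad_2 \sqrt{d_2/n}$, and solving the fixed point yields $\updelta_n(\gamma)^2 \lesssim \kappa^2 \gamma^2 (d_2/n)(d_1 + \log n)$.

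The main obstacle will be the transfer of population moment bounds to empirical radii uniformly over the localized classes: the Bousquet-type concentration only provides $(1+\eta)$-type control on the $q_i$-th empirical moments, and one must carefully track the variance-to-mean ratio, which fortunately collapses because $|h| \le 1$ forces $\Exp[|h|^{2q_i}] \le \Exp[|h|^{q_i}]$. A secondary subtlety is that the two empirical radii are coupled through the shared sample $\bw_{1:n}$; this is resolved by Cauchy-Schwarz followed by passage to $q_i$-th powers, where the uniform Bousquet bound applies cleanly. In the subGaussian regime the corresponding difficulty is instead replaced by an $\infty$-norm concentration bound costing the extra $\sqrt{d_1 + \log n}$ factor, which is the price of avoiding the H\"older-Dudley decomposition entirely.
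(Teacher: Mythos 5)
Your proposal is correct and matches the paper's argument in all essentials: both reduce the Rademacher complexity of the product class to $\rad_{q_1}(\cH_1(r)[\bw_{1:n}])\cdot\rad_{q_2}(\cH_2(\gamma)[\bw_{1:n}])\cdot\sqrt{d_2/n}$ (you via \Cref{prop:dudley_holder} specialized to finite classes, the paper via the finite-class maximal inequality \Cref{lem:finite_expectation_upper_bound} plus H\"older on the radii --- the two coincide here), then control the expected empirical radii through Bernstein-type concentration of the $q_i$-th empirical moments combined with hypercontractivity and the bound $\radbar_{q_2}(\cH_2(\gamma))\le\gamma^{2/q_2}$, and finally solve the fixed point with AM-GM. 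Your subGaussian case, via an $\ell_\infty$ union bound over $\cH_1\times[n]$ feeding into the ordinary finite-class Rademacher bound, is exactly the paper's \Cref{lem:Gauss_hyper_con} combined with \Cref{lem:first_finite_rad_bound}.
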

Next, we recall standard localization bounds for finite function classes.
       \begin{proof}[Proof of \Cref{thm:finite_class_hyper}]  As $d_2 \ge d_1$, $\log|\cF + \cG| \le \log(|\cF| + |\cG|) \lesssim d_2$. Taking $B = 1$, $\sigma^2 \lesssim 1$, and $\delta = e^{-d_2}$, \Cref{lem:finite_localization} and \Cref{prop:generic_sum_regret} allow us to bound
    \begin{align*}
     \Rtrain[\ghatn;\fhatn] &\le \Rtrain(\fhatn,\ghatn)  \lesssim \frac{d_2}{n} \text{ w.p. } 1- e^{-d_2}
    \end{align*}
    By the same token, applying \Cref{prop:main_reg} and \Cref{lem:finite_localization}, and making similar simplifications ($B=1,\sigma^2 \lesssim 1$), the following holds with probabilty $1-\delta$
    \begin{align*}
    \Rzero[\fhatn] &\lesssim \delcrossnbar(\gamma)^2 + \frac{d_1 + \log(1/\delta)}{n}
    \end{align*}
    Bounding $\delcrossnbar(\gamma)^2$ by \Cref{prop:rad_bound_finite_hyper}, on the same event we have
    \begin{align*}
    \Rzero[\fhatn] &\lesssim \frac{\kappa^2\phi_n(d_1,d_2) d_2}{n} + \frac{d_1 + \log(1/\delta)}{n},
    \end{align*}
    where we recall $\phi_n(d_1,d_2)$ defined in the theorem statement.
    When both events hold, \Cref{lem:excess_decomp} entails
    \begin{align*} 
    \Rtest(\fhatn,\ghatn) &\lesssim \nu_{x,y}\left(\frac{\kappa^2\phi_n(d_1,d_2) d_2}{n}  + \frac{d_1}{n}  +  \frac{\log(1/\delta)}{n}\right) + \nu_y\frac{d_2}{n}.
    \end{align*}
    \end{proof}
\subsection{Proof of \Cref{prop:rad_bound_finite_hyper}}\label{sec:prop:rad_bound_finite_hyper}
\newcommand{\radpq}{\rad_{P,q}}
    Define the radius of a class $\cH:\cW \to \R$ be a class, and let $P$ be a measure over $\cW$. Define
    \begin{align*}
    \radbar_{q}(\cH) := \sup_{h \in \cH}\Exp[|h(\bw)|^q]^{1/q}, \quad \rad_{q}(\cH[w_{1:n}]) := \sup_{h \in \cH}\|h[w_{1:n}]\|_{q,n} 
    \end{align*}
    \paragraph{Part 1. Bounds on the empircal norms.} The next lemma bounds the magnitude of the empirical $q$-norm radius. 
    \begin{lemma}\label{lem:Bernstein_q_norm} Let $\cH \subset \{\cW \to [-1,1]\}$ be a finite class, and take $\delta \in (0,1)$. With probability at least $1 - \delta$, 
    \begin{align*}
    \rad_q(\cH[\bw_{1:n}])\le 2\radbar_q(\cH) + \left(\frac{\log |\cH|/\delta}{n}\right)^{\frac{1}{q}} \le 1 - \delta.
    \end{align*}
    In particular, if $\radbar_2(\cH) \le r$ and $\cH$ satisfes $(\kappa, \Pr,2,q)$ hypercontractivity, then with probability $1- \delta$
    \begin{align*}
     \rad_q(\cH[\bw_{1:n}])\le 2\kappa r + \left(\frac{\log 1/\delta}{n}\right)^{\frac{1}{q}}.
    \end{align*}
    \end{lemma}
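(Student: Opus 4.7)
} The plan is to apply Bernstein's inequality pointwise over the finite class and then union bound. Fix $h \in \cH$ and define the random variables $X_i^{(h)} := |h(\bw_i)|^q$ for $i \in [n]$. Since $|h| \le 1$ uniformly, we have $X_i^{(h)} \in [0,1]$; moreover, their common mean satisfies $\Exp[X_i^{(h)}] = \radbar_q(h)^q \le \radbar_q(\cH)^q$, and their variance obeys $\Var[X_i^{(h)}] \le \Exp[(X_i^{(h)})^2] \le \Exp[X_i^{(h)}] = \radbar_q(h)^q$, by the boundedness of $X_i^{(h)}$ in $[0,1]$. Thus the $X_i^{(h)}$ are well-suited for a Bernstein-type concentration bound.

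Next, I would apply Bernstein's inequality to each fixed $h$ to obtain
\begin{align*}
\Pr\left[\frac{1}{n}\sum_{i=1}^n X_i^{(h)} - \radbar_q(h)^q > t\right] \le \exp\left(-\frac{nt^2}{2\radbar_q(h)^q + 2t/3}\right),
\end{align*}
and take a union bound over $\cH$. Inverting, the following holds simultaneously for all $h \in \cH$ with probability at least $1-\delta$:
\begin{align*}
\|h[\bw_{1:n}]\|_{q,n}^q \le \radbar_q(h)^q + \sqrt{\frac{2\radbar_q(h)^q \log(|\cH|/\delta)}{n}} + \frac{2\log(|\cH|/\delta)}{3n}.
\end{align*}
Two applications of AM-GM (one on the middle term, bounding it by $\radbar_q(h)^q + \tfrac{\log(|\cH|/\delta)}{2n}$) then collapse the right-hand side to $2\radbar_q(h)^q + C\tfrac{\log(|\cH|/\delta)}{n}$ for a universal constant $C$.

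Finally, I would take $q$-th roots using the elementary subadditivity $(a+b)^{1/q} \le a^{1/q} + b^{1/q}$ valid for $a,b \ge 0$ and $q \ge 1$, absorb the constant $2^{1/q} \le 2$, and take the supremum over $h \in \cH$ to conclude
\begin{align*}
\rad_q(\cH[\bw_{1:n}]) \le 2\radbar_q(\cH) + \left(\frac{C\log(|\cH|/\delta)}{n}\right)^{1/q}.
\end{align*}
The second assertion of the lemma then follows directly: under $(\kappa,\Pr,2,q)$-hypercontractivity, we have $\radbar_q(h) \le \kappa \Exp[h(\bw)^2]^{1/2} \le \kappa \radbar_2(\cH) \le \kappa r$, so the first bound immediately yields $\rad_q(\cH[\bw_{1:n}]) \le 2\kappa r + (C\log(|\cH|/\delta)/n)^{1/q}$. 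There is no real obstacle here—this is a routine Bernstein-plus-union-bound estimate—so the main ``challenge'' is merely bookkeeping the constants and the $q$-th root conversion cleanly.
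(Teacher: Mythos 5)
Your proposal is correct and follows essentially the same route as the paper: Bernstein's inequality applied to the variables $|h(\bw_i)|^q$ with the variance bound $\Exp[|h|^{2q}] \le \Exp[|h|^q]$, a union bound over the finite class, AM-GM to absorb the cross term, and the subadditivity $(a+b)^{1/q} \le a^{1/q}+b^{1/q}$ to take $q$-th roots. The only difference is cosmetic: the paper tracks the AM-GM constants ($\tfrac12+\tfrac13\le 1$) so that the additive term is exactly $(\log(|\cH|/\delta)/n)^{1/q}$ with no extra factor $C$, whereas you leave an unspecified universal constant.
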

    \begin{proof} We observe that $\rad_q(\cH[\bw_{1:n}])^q\le  \radbar_q(\cH)^q + \Exp\sup_{h \in \cH}\frac{1}{n}\sum_{i=1}^n |h(\bw_i)|^q - \Exp[|h(\bw_i)^q|]$. As $\sup_h|h| \le 1$, $ \sup_{h \in \cH} \Exp[|h(\bw_i)|^{2q}] \le \sup_{h \in \cH} \Exp[|h(\bw_i)|^{q}] = \radbar_q(\cH)^q$. By Bernstein's inequality and a union bound, with  probability at least $1- \delta$,
    \begin{align*}
    \Exp\sup_{h \in \cH}\frac{1}{n}\sum_{i=1}^n |h(\bw_i)|^q - \Exp[|h(\bw_i)|^q] &\le \sqrt{\frac{2\radbar_q(\cH)^q \log(|\cH|/\delta)}{n}} + \frac{\log|\cH|/\delta}{3n}\\
    &\le \radbar_q(\cH)^q  +  \frac{\log|\cH|/\delta}{n}. \tag{AM-GM, and $\frac{1}{3}+\frac{1}{2} \le 1$}
    \end{align*}
    Hence, via the previous two displays, with probability $1-\delta$ it holds that
    \begin{align*}
    \rad_q(\cH[\bw_{1:n}])^q \le 2\radbar_q(\cH)^q + \frac{\log|\cH|/\delta}{n}.
    \end{align*}
    Taking the $q$-th root and using  of $(x+y)^{1/q} \le y^{1/q} + x^{1/q}$ for $q \ge 1$  and $x,y \ge 0$ concludes the proof. 
    \end{proof}
     When $\cH$ satisfies $(\kappa,\Pr)$-subGaussian hypercontractivity, we can improve this bound. 
    \begin{lemma}\label{lem:Gauss_hyper_con} Suppose $\cH\subset \{\cW \to [-1,1]\}$ be a finite class which satisfies $(\kappa,\Pr)$-subGaussian hypercontractivity. Then,
    \begin{align*}
    \Pr[\rad_{\infty}(\cH[\bw_{1:n}]) \le \kappa\sqrt{\log |\cH| + \log(n/\delta)} \cdot \radbar_2(\cH)] \ge 1-\delta.
    \end{align*}
    \end{lemma}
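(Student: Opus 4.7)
The bound is a direct consequence of subGaussian concentration applied separately to each pair $(h,i)$ and then union-bounded. The plan is as follows.

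First, I would unpack the definition of $\rad_\infty(\cH[\bw_{1:n}]) = \sup_{h \in \cH}\max_{i\in[n]}|h(\bw_i)|$, noting that $\cH$ is finite and we evaluate on $n$ random points, so there are only $n|\cH|$ scalar random variables to control. By the $(\kappa,\Pr)$-subGaussian hypercontractivity assumption, for each fixed $h \in \cH$, the centered variable $h(\bw) - \Exp[h(\bw)]$ is $\kappa^2 \Exp[h(\bw)^2]$-subGaussian, hence so is each $h(\bw_i) - \Exp[h(\bw)]$ since the $\bw_i$ are i.i.d.~copies of $\bw$. The standard subGaussian tail bound yields
\begin{align*}
\Pr\!\left[|h(\bw_i) - \Exp[h(\bw)]| \ge t\right] \le 2\exp\!\left(-\frac{t^2}{2\kappa^2 \Exp[h(\bw)^2]}\right) \le 2\exp\!\left(-\frac{t^2}{2\kappa^2 \radbar_2(\cH)^2}\right),
\end{align*}
using $\Exp[h(\bw)^2] \le \radbar_2(\cH)^2$ in the last step.

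Next, I would union bound over the $n|\cH|$ events and pick $t$ of the form $t \asymp \kappa \radbar_2(\cH)\sqrt{\log(n|\cH|/\delta)}$ so that the total failure probability is at most $\delta$. Since $\log(n|\cH|/\delta) = \log|\cH| + \log(n/\delta)$ (up to an absorbed constant from the factor of $2$), this gives
\begin{align*}
\sup_{h \in \cH}\max_{i\in[n]}|h(\bw_i) - \Exp[h(\bw)]| \;\lesssim\; \kappa\radbar_2(\cH)\sqrt{\log|\cH| + \log(n/\delta)}
\end{align*}
with probability at least $1-\delta$. Finally, adding back the centering via $|\Exp[h(\bw)]| \le \Exp[h(\bw)^2]^{1/2} \le \radbar_2(\cH)$ (Jensen) yields the same bound for $|h(\bw_i)|$ itself, since the $\kappa\sqrt{\log|\cH|+\log(n/\delta)}$ factor dominates the additive $1$ in all non-trivial regimes (and can be checked to do so by assuming, without loss of generality, that $\kappa \ge 1$ and $|\cH| \ge 2$; otherwise the bound is vacuous or handled by a trivial case analysis).

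There is essentially no obstacle here: the argument is a textbook maximal inequality for a finite collection of subGaussian variables, and the only subtlety is matching the exact constant in the stated bound. If the constants do not match cleanly, I would either state the bound with a $\lesssim$ or absorb the additive $\radbar_2(\cH)$ (from centering) and the $\sqrt{2\log 2}$ factor into the $\kappa\sqrt{\log|\cH|+\log(n/\delta)}$ term using $\kappa\ge 1$ and $\log(n/\delta)\ge 1$ (the latter holding whenever $\delta \le 1/e$, which is the regime of interest).
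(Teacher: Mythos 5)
Your proposal is correct and follows essentially the same route as the paper's own (very terse) proof: a pointwise subGaussian tail bound for each $h(\bw_i)$ followed by a union bound over the $n|\cH|$ pairs $(h,i)$. You are in fact more careful than the paper, which silently ignores the centering by $\Exp[h(\bw)]$ and does not track the constant; as you note, the stated bound only holds up to the constant-factor slack that the paper absorbs when it applies this lemma via $\lesssim$ downstream.
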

    \begin{proof} By Gaussian concentration and $(\kappa,\Pr)$-suBgaussian hypercontractivity, for any $h \in \cH$, $i \in [n]$ and $\delta > 0$, we have
    \begin{align*}
    \Pr\left[ |h(\bw)_i| \ge \kappa \radbar_2(\cH) \log(1/\delta) \right] \le  \Pr\left[ |h(\bw)_i| \ge \kappa \Exp[h(\bw_i)^2]^{1/2} \log(1/\delta) \right] \le \delta
    \end{align*}
    Union bounding over $h \in \cH$ and $i \in [n]$ concluds the proof.
    \end{proof}
	
	\paragraph{Part 2. Controlling the Rademacher Complexities} Next, we turn to bounding the Rademacher complexity in terms of empirical radii.
    \begin{lemma}\label{lem:first_finite_rad_bound}  Let $1/q_1 + 1/q_2 \le 1/2$ be squared H\"older conjugates. Then, 
    \begin{align*}
    \Radenp(\cH_1(r)\odot \cH_2(\gamma)) &\le \sqrt{2 (d_1 +d_2)}\Exp \left[\rad_{q_1}( \cH_{1,r}[\bw_{1:n}])\cdot\rad_{q_2}( \cH_{2,\gamma}[\bw_{1:n}])\right]
    \end{align*}
    \end{lemma}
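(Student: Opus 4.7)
The plan is to combine three ingredients that are already available in the paper: (i) the Massart-style bound on Rademacher complexity of finite sets in \Cref{lem:finite_expectation_upper_bound}, (ii) the elementary cardinality estimate for Hadamard products of finite classes, and (iii) the pointwise H\"older-type inequality recorded in \Cref{lem:holder_odot}. The entire argument will proceed conditionally on the sample $\bw_{1:n}$, and the only expectation over $\bw_{1:n}$ will be taken at the very end.

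First, I would observe that each element of $\cH_1(r) \odot \cH_2(\gamma)$ is indexed by a pair $(h_1,h_2) \in \cH_1(r) \times \cH_2(\gamma)$, so that $|\cH_1(r)\odot\cH_2(\gamma)| \le |\cH_1|\cdot|\cH_2|$ and therefore $\log |(\cH_1(r)\odot\cH_2(\gamma))[\bw_{1:n}]| \le d_1+d_2$. Applying the finite-set Massart bound from \Cref{lem:finite_expectation_upper_bound} to $\bbV := (\cH_1(r) \odot \cH_2(\gamma))[\bw_{1:n}] \subset \R^n$ then gives
\[
\Raden(\bbV) \le \rad_2(\bbV)\sqrt{2(d_1+d_2)/n}.
\]

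Second, I would upper bound the empirical $\ell_2$-radius $\rad_2(\bbV)$ using the squared H\"older exponents $(q_1,q_2)$. By \Cref{lem:holder_odot}, for every $h_1 \in \cH_1(r)$ and $h_2 \in \cH_2(\gamma)$ we have
\[
\|h_1[\bw_{1:n}] \odot h_2[\bw_{1:n}]\|_{2,n} \le \|h_1[\bw_{1:n}]\|_{q_1,n} \cdot \|h_2[\bw_{1:n}]\|_{q_2,n}.
\]
Because $(h_1,h_2)$ ranges independently over the Cartesian product $\cH_1(r)\times\cH_2(\gamma)$, the supremum factors, yielding $\rad_2(\bbV) \le \rad_{q_1}(\cH_1(r)[\bw_{1:n}]) \cdot \rad_{q_2}(\cH_2(\gamma)[\bw_{1:n}])$. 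Substituting this into the previous display and taking expectation over $\bw_{1:n} \sim \Pr$ gives the claim.

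No step is a real obstacle: the proof is a short chaining of known estimates, and the only subtlety is verifying that the pointwise H\"older bound lifts to the radii via the product structure of the index set. The point of the lemma is precisely that the Massart $\sqrt{d_1+d_2}$ price is paid only against the (small, hypercontractive) radii $\rad_{q_i}(\cdot)$, which makes it a clean seed for the refined bounds of \Cref{prop:rad_bound_finite_hyper}; there, each empirical radius is in turn controlled in expectation by \Cref{lem:Bernstein_q_norm} (polynomial hypercontractivity) or \Cref{lem:Gauss_hyper_con} (subGaussian hypercontractivity).
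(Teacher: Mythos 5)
Your proof is correct and is essentially the paper's own argument: the paper's (very terse) proof likewise invokes the finite-class Massart bound of \Cref{lem:finite_expectation_upper_bound} together with the cardinality estimate $\log|\cH_1(r)\odot\cH_2(\gamma)|\le d_1+d_2$, with the H\"older factorization of $\rad_2$ into $\rad_{q_1}\cdot\rad_{q_2}$ left implicit; you have simply spelled out that last step. Note that your derivation yields the factor $\sqrt{2(d_1+d_2)/n}$, which is the version actually used downstream in \Cref{prop:rad_bound_finite_hyper}; the displayed statement appears to be missing the $1/n$.
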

    \begin{proof} This is a direct consequence of  \Cref{lem:finite_expectation_upper_bound}, and the fact that $\log|\cH_1(r) \odot \cH_2(\gamma)| \le \log |\cH_1| + \log|\cH_2| = d_1 + d_2$. 
    \end{proof}

    \paragraph{Part 3. Conclusion the proof.} We can now conclude.
    \begin{proof}[\Cref{prop:rad_bound_finite_hyper}]  Let us start with the case that $\cH_1$ satisfies $(\kappa, \Pr, 2, q_1)$ hypercontractivity.  Uing boundedness of $|h| \le 1$ and $q_2 \le 1$ we get 
    \begin{align*}
    \radbar_{q_2}(\cH_{2,\gamma})^{q_2} = \sup_{h \in \cH_{2,\gamma}}\Exp[|h(\bw)|^{q_2}] \le \sup_{h \in \cH_{2,\gamma}}\Exp[|h(\bw)|^{2}] \le \gamma^2,
    \end{align*} 
    so $\radbar_{q_2}(\cH_{2,\gamma}) \le \gamma^{2/q_2}$. Hence, as $d_2/n \le \gamma^2$ by assumption,
    \begin{align}
    \radbar_{q_2}(\cH_{2,\gamma}) + (\frac{d_2}{n})^{\frac{1}{q_2}} &\le \gamma^{2/q_2} + (d_2/n)^{1/q_2} \le2 \gamma^{2/q_2} \label{eq:rad_bound_h2}.
    \end{align}
    Next, by \Cref{lem:Bernstein_q_norm}, hypercontractivity of $ \cH_1$, and the above bound implies that, for all $\delta > 0$,  both 
    \begin{align*}\rad_{q_1}(\cH_{1,r}[\bw_{1:n}])\le 2\kappa_1 r + \left(\frac{d_1 + \log 1/\delta}{n}\right)^{\frac{1}{q_1}}
    \end{align*} and 
    \begin{align*}
    \rad_{q_2}(\cH_{1,r}[\bw_{1:n}]) &\le 2\gamma^{2/q_2} + \left(\frac{d_2 + \log 1/\delta}{n}\right)^{\frac{1}{q_2}},
    \end{align*}
    hold with probability at least $1 - \delta$.  Taking the product, integrating the tail over $\delta$, and invoking \Cref{eq:rad_bound_h2} implies
    \begin{align*}
    \Exp \left[\rad_{q_1}( \cH_{1,r}[\bw_{1:n}])\cdot\rad_{q_2}( \cH_{2,\gamma}[\bw_{1:n}])\right] &\lesssim \gamma^{2/q_2} \left(r \cdot \kappa   + \left(\frac{d_1}{n}\right)^{1/q_1}\right).
    \end{align*}
    Note the resulting constant does not depend on $q_1$ or $q_2$, as $0 \le 1/q_1,1/q_2 \le 1$ are both bounded. 
    Consequently, \Cref{lem:first_finite_rad_bound} and $d_1 \le d_2$ entails
    \begin{align*}
    \Radenp(\bar\cH_1(r)\odot \bar\cH_2(\gamma)) \lesssim \sqrt{\frac{d_2}{n}}\gamma^{2/q_2} \left(r \cdot \kappa   + \left(\frac{d_1}{n}\right)^{1/q_1}\right),
    \end{align*}
    Thus, 
    \begin{align*}
    &\inf \left\{r^2:\Radenp(\bar\cH_1(r)\odot \bar\cH_2(\gamma)) \le \frac{r^2}{2}\right\} \lesssim  \kappa^2 \gamma^{4/q_2} \frac{d_2}{n} + \gamma^{2/q_2}\sqrt{\frac{d_2}{n}}\left(\frac{d_1}{n}\right)^{1/q_1}.
    \end{align*}
    Next, let's consider the subGaussian hypercontractive case. Here, we replace \Cref{lem:Bernstein_q_norm} with \Cref{lem:Gauss_hyper_con}. A similar computation yields 
    \begin{align*}
    \Exp \left[\rad_{\infty}( \cH_{1,r}[\bw_{1:n}])\cdot\rad_{2}( \cH_{2,\gamma}[\bw_{1:n}])\right] \lesssim \kappa r \gamma\sqrt{d_1 + \log n}.
    \end{align*}
    Hence,  \Cref{lem:first_finite_rad_bound} (with $d_1 \le d_2$) gives
    \begin{align*}
    \Radenp(\bar\cH_1(r)\odot \bar\cH_2(\gamma)) \lesssim \sqrt{\frac{d_2}{n}} \kappa r \gamma\sqrt{d_1 + \log n}.
    \end{align*}
    We may then conclude
    \begin{align*}
    &\inf \left\{r^2:\Radenp(\bar\cH_1(r)\odot \bar\cH_2(\gamma)) \le \frac{r^2}{2}\right\} \lesssim   \frac{\kappa^2\gamma^2 d_2 }{n} \cdot \kappa (d_1 + \log n).
    \end{align*}

    \end{proof}

\newcommand{\Xiglobn}{\Xi_{n,\mathtt{glob}}}
\newcommand{\Xilocn}{\Xi_{n,\mathtt{loc}}}

\newcommand{\deldudbar}{\bar{\updelta}_{n,\Dudfun}}
\newcommand{\Dudfunbar}[1][q]{\bar{\Dudfun}}
\section{Formal Guarantees for Nonparametric Classes (formal statement of \Cref{thm:nonpar})}\label{sec:instantiating_the_rates}
In this section,  give a formal statements of our main results. After giving further preliminaries in, we state give an formal statement of \Cref{thm:nonpar}, \Cref{thm:nonpar_formal}, and a formal version of \cref{thm:double_ml}, \Cref{thm:double_ml_formal}. This is done in  \Cref{sec:rates_instant}, which explicitly defines the $\rate$ functions. Below that, we derive these two results from a yet-more-general bound, \Cref{thm:deldudbar}, which replaces the dependence of Dudley integrals on the centered classes in \Cref{sec:analysis_overview} with terms dependending only on the complexit of $\cF$, $\cG$, and, optionally, on a class $\upbeta_{\cF}$ of biases. The remaind of the section is dedicated to proofs. 

\paragraph{Further Preliminaries} Recall the definition of the normalized $q$-norms: For $v  = (v_1,\dots,v_n) \in \R^n$ and $q \in [1,\infty)$, we have
\begin{align*}\|v\|_{q,n} = \left(\frac{1}{n}\sum_{i=1}^n |v_i|^q\right)^{1/q}, \quad \|v\|_{\infty,n} = \|v\|_{\infty} = \max_{ i \in [n]}|v_i|.
\end{align*} 
We now define the radii and metric entropies in these norms, with a definition that expands upon \Cref{defn:metric_entropy}.
\begin{definition}[$q$-norms, radii, and metric entropies]\label{defn:entropies}
 Given a subset $\bbV \subset \R^n$ and $q \in [1,\infty]$, define the the \emph{radius} $\rad_q(\bbV) = \max_{v \in \bbV}\|v\|_{q,n}$, define the \emph{covering number} $\covnum(\bbV,\|\cdot\|_{q,n},\veps)$ as the cardinality of minimal-cardinality $\veps$-cover of $\bbV$ in the norm $\|\cdot\|_{q,n}$, and define the \emph{metric entropy} $\mnum_q(\bbV,\veps)= \log \covnum(\bbV,\|\cdot\|_{q,n},\veps)$ as the logarithmic of the covering number. For a function class $\cH:\cW \to \R$, we define its $q$-norm \emph{metric entropy } as 
\begin{align*}
\mnum_q(\cH, \veps):= \sup_{n\in \N}\sup_{w_{1:n} \in \cW^{n}} \mnum_q(\cH[w_{1:n}],\veps).
\end{align*}
\end{definition}

\subsection{Instantiating the Rates}\label{sec:rates_instant}

Throughout, we make the following mild compactness assumption, which holds whenever \Cref{thm:nonpar,thm:nonpar_formal} is non-vacuous.
\begin{assumption}\label{asm:F_compact} For all $\veps > 0$, $\mnum_{2}(\cF,\veps) < \infty$.
\end{assumption}
We also introduce a strictly optional second assumption, but codifies a way in which $\cF$ is ``simpler'' than $\cG$, and enables further simplifications when it holds.
\begin{assumption}\label{asm:F_simpler} For all $\veps > 0$, $\mnum_{\infty}(\cF,\veps) \le\mnum_{\infty}(\cG,\veps)$.
\end{assumption}

Lastly, we define the class 
\begin{align}
\upbeta_{\cF} := \{\upbeta_f : f \in \cF\}.
\end{align}

Next, we formally define families of function classes we all \emph{entropy families}, which are characterized by upper bounds of their metric entropies. These entropy families formally capture the entropy rates depicted in \Cref{thm:nonpar}.
\begin{restatable}[Entropy Families]{definition}{entclasses}\label{defn:entclasses} Let $\cH:\cW \to \R$, and let $q \in [1,\infty]$, and let $\btau = (\tau_0,\tau_1,\tau_2) \in \R^3$ denote a vector of parameters. We say that $\cH \in \entclass_q(p,\btau;R)$ if $\rad_q(\cH) \le R$ and either
\begin{itemize}
\item  $p = 0$, and for all $\veps > 0$, $\mnum_q(\cH) \le \tau_0 + \tau_1\log(\tau_2/\veps)$ or
\item $p > 0$, and for all $\veps > 0$, $\mnum_q(\cH,\veps) \le \tau_0 +  \tau_1\veps^{-p}$. 
\end{itemize}
Notice that the sets $\entclass_{q}(p,\btau)$ are non-increasing in $q$, and non-decreasing in the coordinates of $\btau$, and (up to constants) non-increasing in $p$. 
\end{restatable}

We now define complexities measures that upper bound the localized and unlocalized Dudley integrals for function classes in a given entropy family.
\begin{restatable}[Key Complexities]{definition}{Xicomps}\label{eq:Xicomps} Let $\btau = (\tau_0,\tau_1,\tau_2) \in \R_{\ge 0}^3$, $p \in [0,\infty)$, and $R,c > 0$. We define the \emph{global complexity term}
\begin{align*}
\Xiglobn(p,\btau, R) &:= \frac{R\tau_0}{\sqrt{n}} +  \begin{cases} R\sqrt{\frac{\tau_1\log(e + \tau_2/R)}{n}} & p = 0\\
 \frac{R^{1-p/2}}{1-p/2}\sqrt{\frac{\tau_1}{ n}} & p \in (0,2)
    \\
    \sqrt{\frac{\tau_1}{n}}\log(e + R\sqrt{n/\tau_1}) & p = 2 \\
    (\frac{\tau_1}{n})^{\frac{1}{p}}(p/2-1)^{-\frac{2}{p}} & p > 2
    \end{cases} =  \BigOhTil{1} \cdot \begin{cases} n^{-\frac{1}{2}} & p \le 2\\
    n^{-\frac{1}{p}} & p > 2
\end{cases}
\end{align*}
and the \emph{local complexity term}
\begin{align*}
    \Xilocn( p,\btau,c) &:= \frac{c^2(1+\tau_0)^2}{n} + \begin{cases} \frac{c^2\tau_1\log(e+\sqrt{n}\tau_2/c)}{n} & p = 0\\
\left(\frac{c^2}{(1-p/2)^2} \cdot \frac{\tau_1}{ n }\right)^{\frac{2}{2+p}} & p \in (0,2) \\
c\sqrt{\frac{\tau_1\log(e+ c\sqrt{n}/\tau_1)}{n}} & p =2 \\
(p/2-1)^{-\frac{2}{p}}(\frac{\tau_1}{n})^{\frac{1}{p}} & p > 2.
\end{cases} = \BigOhTil{1}\cdot \begin{cases} n^{-\frac{2}{2+p}} & p \le 2\\
    n^{-\frac{1}{p}} & p > 2
\end{cases}
\end{align*}
\end{restatable}
Lastly, we define the \emph{rate functionals}.
    \begin{definition}[Rate Functionals] We define the following rate functionals:
    \begin{align*}
    \rateqn(\cH,c) &:= \inf_{p,\btau,R} \{\Xilocn( p,\btau,c) : \cH \in \entclass_q(p,\btau; R) \}\\
    \ratestn(\cH) &:= \inf_{p,\btau,R} \{\Xiglobn(p,\btau, R) : \cH \in \entclass_\infty(p,\btau; R) \}.
    \end{align*}
    \end{definition}
    That is, $\rateqn(\cH,c)$ is the smallest possibly local complexity term subject to $\cH$ being in the appropriate entropy family, and $\ratestn(\cH)$ is the smallest possible global complexity term, always taken with metric entropy in the $\infty$-norm.
We are now ready to state the formal versions of \Cref{thm:nonpar} and \Cref{thm:double_ml} with explicit rates. 
\begin{theorem}[Formal version of \Cref{thm:nonpar}]\label{thm:nonpar_formal} 
Suppose \Cref{asm:cov_shift,asm:well_spec,asm:conditional_completeness,asm:bounded} hold, as well as \Cref{asm:F_compact}. Let $\sigB := \max\{B,\sigma\}$, let $\nu_1,\nu_2$ be as in \Cref{eq:nu1,eq:nu2}, and let $c_2$ be a sufficiently small universal constant, and suppose that
\begin{align*}
\rateqn[2](\cG,\sigB) + \rateqn[2](\cF,\sigB) + \frac{\sigB^2\log(1/\delta)}{n} \le c_2 \gamma,
\end{align*} 
Then 
\begin{itemize} 
    \item[(a)] If \Cref{asm:F_simpler} holds, i.e. the metric entropies of $\cF$ are less than those of $\cG$, then  probability at least $1 - \delta$, \begin{align*}
\Rtest(\fhatn,\ghatn) \lesssim (\nu_{1}+\nu_2) \rateqn[2](\cF,\sigB) + \nu_1 \ratestn(\cG)^2  \\
+ \nu_2 \,\rateqn[2](\cG,\sigB) + (\nu_1+\nu_2)\cdot\frac{\sigB^2 \log(1/\delta)}{n}, 
\end{align*}
\item[(b)] As a consequence of (a), 
if we upper bound $\nu_1 \le \nu_{x,y}$ and $\nu_2 \le \nu_y$, we obtain (as $\nu_y \le \nu_{x,y}$)
\begin{align*}
\Rtest(\fhatn,\ghatn) \lesssim \nu_{x,y} \left(\rateqn[2](\cF,\sigB) + \ratestn(\cG)^2\right) + \nu_y \,\rateqn[2](\cG,\sigB) + \nu_{x,y}\frac{\sigB^2 \log(1/\delta)}{n}. 
\end{align*}
\item[(c)] More generally, suppose that \Cref{asm:F_simpler} need not hold. Then, recalling the class $\upbeta_{\cF}:= \{\upbeta_f: f \in \cF\}$, we obtain 
\begin{align*}
\Rtest(\fhatn,\ghatn) &\lesssim (\nu_{1}+\nu_2) \rateqn[2](\cF,\sigB) + \nu_1 \left(\ratestn(\cG)^2 + \min\left\{\ratestn(\cF)^2,\ratestn(\upbeta_{\cF})^2\right\}\right) \\
&\quad+ \nu_2 \,\rateqn[2](\cG,\sigB) + (\nu_1+\nu_2)\cdot\frac{\sigB^2 \log(1/\delta)}{n}. 
\end{align*}
\end{itemize}

\end{theorem}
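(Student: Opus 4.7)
The plan is to invoke \Cref{thm:main_guarante}, which already bounds $\Rtest(\fhatn,\ghatn)$ by $\nu_{1}\bigl( \deldud(\Fcent,\sigB)^2 + \sup_{w_{1:n}}\Dudfun_{n,\infty}(\Gcent[w_{1:n}])^2 \bigr) + \nu_{2}\deldud(\Hcent,\sigB)^2 + (\nu_1+\nu_2)\sigB^2\log(1/\delta)/n$, and to convert the three Dudley-type quantities into the stated rate functionals. The hypothesis \eqref{eq:nice_condition} of that theorem will be discharged automatically once Step~2 below controls $\deldud(\Hcent,\sigB)^2$ by $\rateqn[2](\cF,\sigB)+\rateqn[2](\cG,\sigB)$.

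\emph{Step~1 (reducing centered classes to native ones).} Writing $\Fcent\subset \cF-\fst-\upbeta_{\cF}$, $\Gcent\subset \cG-\gst+\upbeta_{\cF}$, and $\Hcent\subset \cF+\cG-(\fst+\gst)$, I combine translation invariance of the Dudley functional (\Cref{lem:dud_translation}) with the standard Minkowski subadditivity $\mnum_q(\cH_1+\cH_2,2\veps)\le \mnum_q(\cH_1,\veps)+\mnum_q(\cH_2,\veps)$ and $\sqrt{a+b}\le \sqrt{a}+\sqrt{b}$. A brief manipulation of the fixed-point defining $\deldud$ then yields, up to universal constants, $\deldud(\Fcent,\sigB)^2\lesssim \deldud(\cF,\sigB)^2+\deldud(\upbeta_{\cF},\sigB)^2$, the analogous decomposition of $\sup_{w_{1:n}}\Dudfun_{n,\infty}(\Gcent[w_{1:n}])^2$ into $\cG$- and $\upbeta_{\cF}$-pieces, and $\deldud(\Hcent,\sigB)^2\lesssim \deldud(\cF,\sigB)^2+\deldud(\cG,\sigB)^2$.

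\emph{Step~2 (integrating entropies into rate functionals).} For any $\cH\in \entclass_q(p,\btau;R)$, direct integration of $\sqrt{\mnum_q(\cH,\veps)}$ against \Cref{defn:dudfunc} yields the closed-form bound $\Dudfun_{n,q}(\cH[w_{1:n}])\lesssim \Xiglobn(p,\btau,R)$, and balancing $2\updelta$ against the truncated entropy integral gives $\deldud(\cH,c)^2\lesssim \Xilocn(p,\btau,c)$. These calculations split into the four regimes $p=0$ (parametric), $0<p<2$, $p=2$, and $p>2$, and are entirely mechanical. Taking infima over entropy parameters furnishes $\deldud(\cH,c)^2\lesssim \rateqn[2](\cH,c)$ and $\sup_{w_{1:n}}\Dudfun_{n,\infty}(\cH[w_{1:n}])^2\lesssim \ratestn(\cH)^2$, which I then apply uniformly to $\cH\in\{\cF,\cG,\upbeta_{\cF}\}$ to collapse the decompositions of Step~1.

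\emph{Step~3 (handling $\upbeta_{\cF}$ and the case split).} Since $\upbeta_f=\Etrain[(f-\fst)(\bx)\mid\by]$ is a conditional expectation of $f-\fst$, Jensen's inequality gives the pointwise contraction $|\upbeta_f(y)-\upbeta_{f'}(y)|\le \|f-f'\|_\infty$, so $\mnum_\infty(\upbeta_{\cF},\veps)\le \mnum_\infty(\cF,\veps)$ (an empirical $\ell_\infty$-cover of $\cF[w_{1:n}]$ transfers verbatim). This yields $\ratestn(\upbeta_{\cF})\le \ratestn(\cF)$ and, via $\|\cdot\|_{2,n}\le \|\cdot\|_{\infty,n}$, also lets $\rateqn[2](\upbeta_{\cF},\sigB)$ be absorbed into $\rateqn[2](\cF,\sigB)$ up to a lower-order $\ratestn(\cF)^2$ remainder. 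Taking the better of the direct and contraction-based bounds on $\ratestn(\upbeta_{\cF})$ delivers the $\min\{\ratestn(\cF)^2,\ratestn(\upbeta_{\cF})^2\}$ in part~(c); part~(a) follows by invoking \Cref{asm:F_simpler} to write $\ratestn(\cF)\le \ratestn(\cG)$ and absorbing the $\cF$-piece into $\ratestn(\cG)^2$; part~(b) is then the immediate specialization via $\nu_1\le \nu_{x,y}$ and $\nu_2\le \nu_y$. The main obstacle is precisely this $\cL_2$ handling of $\upbeta_{\cF}$: the $\cL_\infty$ contraction is pointwise and immediate, but to control $\rateqn[2](\upbeta_{\cF},\sigB)$ one must transfer from the $\cL_\infty$ contraction to the worst-case empirical $\ell_2$ entropy entering $\rateqn[2]$, which costs an $\cL_\infty$-to-$\cL_2$ comparison; this cost is cleanly absorbed into the $\cG$-rate under \Cref{asm:F_simpler} but is what forces the appearance of $\ratestn(\cF)$ alongside $\ratestn(\cG)$ in the general statement~(c).
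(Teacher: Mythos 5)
Your overall architecture matches the paper's: invoke \Cref{thm:main_guarante}, reduce the centered classes $\Fcent,\Gcent,\Hcent$ to $\cF,\cG,\upbeta_{\cF}$ via translation invariance and metric-entropy subadditivity (the paper's \Cref{lem:Hsum_bound,lem:all_the_deldudbar_bouds}), and then compute the Dudley functionals and critical radii for each entropy family by the four-regime integration (the paper's \Cref{lem:gl_comp,lem:dudley_local}). Steps 1 and 2 are essentially identical to the paper's route through the intermediate bound \Cref{thm:deldudbar}.

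There is, however, a genuine gap in your Step 3. You assert that the pointwise contraction $|\upbeta_f(y)-\upbeta_{f'}(y)|\le \sup_{x\in\cX}|f(x)-f'(x)|$ implies $\mnum_\infty(\upbeta_{\cF},\veps)\le\mnum_\infty(\cF,\veps)$ because ``an empirical $\ell_\infty$-cover of $\cF[w_{1:n}]$ transfers verbatim.'' It does not. The paper's metric entropy $\mnum_\infty(\cF,\veps)$ is the \emph{worst-case empirical} entropy, i.e.\ covers in the seminorm $\max_{i\in[n]}|f(x_i)-f'(x_i)|$ over the observed points only, whereas $\upbeta_f(y_i)=\Etrain[(f-\fst)(\bx)\mid\by=y_i]$ integrates $f$ over unobserved values of $\bx$. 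A cover that is tight on $x_{1:n}$ gives no control on these conditional expectations. The paper closes this hole in \Cref{lem:beta_F_conv}: for each $y_i$ it uses a Glivenko--Cantelli argument (\Cref{lem:unif_conv_L2}, which is why \Cref{asm:F_compact} appears as a hypothesis) to approximate $\Etrain[(f-f')^2\mid\by=y_i]$ by an empirical average over a finite auxiliary point set $\tilde x^{(i)}_{1:m_i}$, and then exploits the $\sup_{n}\sup_{w_{1:n}}$ in the definition of $\mnum$ applied to the union of those point sets; this also accounts for the $\inf_{\veps'<\veps}$ slack in the resulting entropy comparison. Your final bound is recoverable once this lemma is substituted in, but the step as you wrote it would fail. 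A secondary, more minor point: the $\min\{\ratestn(\cF)^2,\ratestn(\upbeta_{\cF})^2\}$ term in part (c) arises from bounding $\Dudfunbar_{n,\infty}(\Gcent)$ (since $\Gcent$ carries the additive $\upbeta_{\cF}$ shift), not from an $\cL_\infty$-to-$\cL_2$ comparison needed to control $\rateqn[2](\upbeta_{\cF},\sigB)$; the $\ell_2$ side is handled directly by the same entropy comparison $\mnum_2(\upbeta_{\cF},\veps)\le\inf_{\veps'<\veps}\mnum_2(\cF,\veps')$ with no extra remainder.
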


\begin{theorem}[Formal version of \Cref{thm:double_ml}]\label{thm:double_ml_formal} 
Suppose \Cref{asm:well_spec,asm:conditional_completeness,asm:bounded} hold, as well as \Cref{asm:F_compact}. Let $\sigB := \max\{B,\sigma\}$, let $\nu_1,\nu_2$ be as in \Cref{eq:nu1,eq:nu2}, and let $c_2$ be a sufficiently small universal constant, and suppose that
\begin{align*}
\rateqn[2](\cG,\sigB) + \rateqn[2](\cF,\sigB) + \frac{\sigB^2\log(1/\delta)}{n} \le c_2 \gamma,
\end{align*} 
Then 
\begin{itemize} 
    \item[(a)] If \Cref{asm:F_simpler} holds, i.e. the metric entropies of $\cF$ are less than those of $\cG$, then  probability at least $1 - \delta$, \begin{align*}
\Rtest[\fhatn] \lesssim  \rateqn[2](\cF,\sigB) +  \ratestn(\cG)^2 + \frac{\sigB^2 \log(1/\delta)}{n}. 
\end{align*}
\item[(b)] More generally, suppose that \Cref{asm:F_simpler} need not hold. Then, recalling the class $\upbeta_{\cF}:= \{\upbeta_f: f \in \cF\}$, we obtain 
\begin{align*}
\Rtest[\fhatn]\lesssim  \rateqn[2](\cF,\sigB) + \ratestn(\cG)^2 + \min\left\{\ratestn(\cF)^2,\ratestn(\upbeta_{\cF})^2\right\}+\frac{\sigB^2 \log(1/\delta)}{n}. 
\end{align*}
\end{itemize}
\end{theorem}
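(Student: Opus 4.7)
\textbf{Proof Proposal for Theorem \ref{thm:double_ml_formal}.}

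The plan is to invoke Theorem \ref{thm:main_guarantee_f}, which already controls $\Rtrain[\fhatn]$ by the Dudley radius of the centered class $\Fcent$ and the uniform $\ell_\infty$ Dudley integral of $\Gcent$, and then convert these Dudley quantities into the named rate functionals $\rateqn[2](\cdot,\sigB)$ and $\ratestn(\cdot)$ using only the metric-entropy bounds implicit in the $\entclass_q(p,\btau;R)$ families. Concretely, if the assumptions of Theorem \ref{thm:main_guarantee_f} are met (and verifying the side condition \eqref{eq:nice_condition} is straightforward from the hypothesis that $\rateqn[2](\cG,\sigB)+\rateqn[2](\cF,\sigB)+\sigB^2\log(1/\delta)/n \le c_2\gamma$), then the problem reduces to showing two estimates: (i) $\deldud(\Fcent,\sigB)^2 \lesssim \rateqn[2](\cF,\sigB)$, and (ii) $\sup_{w_{1:n}}\Dudfun_{n,\infty}(\Gcent[w_{1:n}])^2$ is bounded by $\ratestn(\cG)^2$ plus a small-order correction controlled by $\cF$ or $\upbeta_{\cF}$.

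The first step is a covering-number reduction. The class $\Fcent = \cF - \upbeta_{\cF} - \fst$ is a Minkowski sum, so $\mnum_2(\Fcent,2\veps) \le \mnum_2(\cF,\veps) + \mnum_2(\upbeta_{\cF},\veps)$, and by translation invariance (Lemma \ref{lem:dud_translation}) we can drop the constant $\fst$. The crucial observation is that the conditional expectation is an $\cL_2(\Ptrain)$-contraction in $f$: $\|\upbeta_{f}-\upbeta_{f'}\|_{\cL_2(\Ptrain)}^2 \le \|f-f'\|_{\cL_2(\Ptrain)}^2$. Since covering numbers pass to the sup over $w_{1:n}$ with a factor-of-two loss, one obtains $\mnum_2(\Fcent,\veps) \lesssim \mnum_2(\cF,\veps/4)$, so $\Fcent$ inherits $\cF$'s entropy family up to constants, and hence $\deldud(\Fcent,\sigB)^2 \lesssim \rateqn[2](\cF,\sigB)$. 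The analogous $\ell_\infty$-contraction $\|\upbeta_{f}-\upbeta_{f'}\|_\infty \le \|f-f'\|_\infty$ allows one to cover $\upbeta_{\cF}$ by an $\ell_\infty$ cover of $\cF$, which handles the reduction to $\ratestn(\cF)$ in part (b); alternatively one uses the entropy of $\upbeta_{\cF}$ directly to get $\ratestn(\upbeta_{\cF})$, and takes the minimum.

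For the second step, I would bound $\sup_{w_{1:n}}\Dudfun_{n,\infty}(\Gcent[w_{1:n}])^2$ by splitting $\Gcent = \cG + \upbeta_{\cF} - \gst$, so that $\mnum_\infty(\Gcent,2\veps) \le \mnum_\infty(\cG,\veps) + \mnum_\infty(\upbeta_{\cF},\veps)$ and translation invariance eliminates $\gst$. Applied to the definition of $\Dudfun_{n,\infty}$, the Dudley integral splits as a sum of two integrals, yielding the $\ratestn(\cG)^2 + \ratestn(\upbeta_{\cF})^2$ contribution. Under \Cref{asm:F_simpler} the bias contribution is absorbed into $\ratestn(\cG)$, giving part (a); otherwise, one uses the $\ell_\infty$-contraction noted above to replace $\ratestn(\upbeta_{\cF})$ by $\ratestn(\cF)$ when advantageous, yielding the $\min$ in part (b). The remaining work is the purely calculus-level step of integrating $\sqrt{\tau_0 + \tau_1\veps^{-p}}$ against the measure $\veps \in [\updelta, R]$ and optimizing over $\updelta$ to recover the piecewise definitions in \Cref{eq:Xicomps}; these are textbook Dudley integral evaluations.

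The main obstacle, conceptually, is the asymmetry in the two bounds: $\deldud(\Fcent,\sigB)$ gives a \emph{localized} rate of order $\rateqn[2](\cF,\sigB)$, while $\Dudfun_{n,\infty}(\Gcent)$ is an \emph{unlocalized} (global) quantity and contributes the slower $\ratestn(\cG)^2$. This asymmetry is intrinsic to the application of the H\"older-style Proposition \ref{prop:dudley_holder} in the proof of \Cref{cor:dun_fun}, where $\cG$ appears without localization; no amount of post-hoc manipulation can recover localization here, which is why the theorem statement retains the $\ratestn$ term and why the constant factor limitation mentioned in \Cref{sec:orthogonal} is unavoidable in this analysis. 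Verifying that the side condition \eqref{eq:nice_condition} follows from the assumed upper bound on $\rateqn[2](\cG,\sigB)+\rateqn[2](\cF,\sigB)+\sigB^2\log(1/\delta)/n$ is routine once one shows $\deldud(\Hcent,\sigB)^2$ is controlled by $\rateqn[2](\cF,\sigB)+\rateqn[2](\cG,\sigB)$ via the same Minkowski-sum covering-number argument applied to $\Hcent = \cF+\cG-(\fst+\gst)$.
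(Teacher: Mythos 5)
Your proposal is correct and follows essentially the same route as the paper: reduce to \Cref{thm:main_guarantee_f}, control $\deldud(\Fcent,\sigB)$ and $\sup_{w_{1:n}}\Dudfun_{n,\infty}(\Gcent[w_{1:n}])$ by Minkowski-sum metric-entropy bounds together with an entropy bound on $\upbeta_{\cF}$ in terms of $\cF$, and then evaluate the Dudley integrals for each entropy family. The one step you assert too casually is the passage from the Jensen contraction $\|\upbeta_f-\upbeta_{f'}\|_{\cL_2(\Ptrain)}\le\|f-f'\|_{\cL_2(\Ptrain)}$ to a bound on the \emph{worst-case empirical} metric entropy $\mnum_2(\upbeta_{\cF},\veps)$: evaluating $\upbeta_f$ at points $y_{1:n}$ produces a population $\cL_2$-norm with respect to a mixture of conditional laws of $\bx$, not an empirical norm on finitely many design points, so one must uniformly approximate that population norm by an empirical one (the paper does this via a Glivenko--Cantelli argument, \Cref{lem:unif_conv_L2}, which is precisely why \Cref{asm:F_compact} appears in the hypotheses); the same issue arises for the $\ell_\infty$ entropy of $\upbeta_{\cF}$ in part (b).
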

The proof of both theorems are derived from an intermediate bound, \Cref{thm:deldudbar}, in the following section.

\subsection{Proof of Main Theorems via Intermediate Dudley Bound}
Recall the Dudley functional from \Cref{defn:dudfunc}
We begin by defining an upper bound on the Dudley critical radius $\deldud$ of a function class $\cH$
\begin{definition}[Upper Bound on Dudley Critical Radius] We define
\begin{align*}
\Dudfunbar_{n,2}(\cH,r) := \inf_{\updelta \le r}\left(2\updelta + \frac{4}{\sqrt{n}}\int_{\updelta}^{r}\sqrt{\mnum_2(\cH;\veps/4)}\rmd \veps\right),
\end{align*}
and
\begin{align*}
\deldudbar(\cH,c) := \inf\{r: \Dudfunbar_{n,2}(\cH,r)  \le \frac{r^2}{2c}\}.
\end{align*}
Lastly, we define
\begin{align*}
\Dudfunbar_{n,q}(\cH) := \inf_{\updelta \le B}\left(2\updelta + \frac{4}{\sqrt{n}}\int_{\updelta}^{B}\sqrt{\mnum_q(\cH;\veps/2)}\rmd \veps\right), \quad B:= \sup_{w}|h(w)|.
\end{align*}
\end{definition}

Recall $\nu_1,\nu_2$ from \Cref{eq:nu1,eq:nu2}. We have the following theorems

\begin{theorem}\label{thm:deldudbar} Suppose that \Cref{asm:well_spec,asm:conditional_completeness,asm:bounded,asm:F_compact} hold. 
Recall the class $\upbeta_{\cF} := \{\upbeta_f : f \in \cF$\}. Then, for any $\delta \in (0,1)$, if $\deldudbar(\cF,\sigB)^2 + \deldudbar(\cG,\sigB)^2 +  \frac{\sigB^2\log(1/\delta)}{n} \le c_1 \gamma$.  Then it holds that with   probability at least $1 - \delta$, each hold 
\begin{itemize}
    \item[(a)] \textbf{Upper bound on} $\Rtest[\fhatn]$. It holds that
    \begin{align}
    \Rtest[\fhatn] &\lesssim \deldudbar(\cF,\sigB)^2 + \Dudfunbar_{n,\infty}(\cG)^2 + \min\{\Dudfunbar_{n,\infty}(\cF)^2,\Dudfunbar_{n,\infty}(\upbeta_\cF)^2\} + \frac{\sigB^2\log(1/\delta)}{n}.
    \end{align}
    In particular, if \Cref{asm:F_simpler} also holds, then 
    \begin{align*}
    \Rtest[\fhatn] &\lesssim \deldudbar(\cF,\sigB)^2 + \Dudfunbar_{n,\infty}(\cG)^2 + \frac{\sigB^2\log(1/\delta)}{n}.
    \end{align*}

    \item[(b)] \textbf{Upper bound on} $\Rtrain(\fhatn,\ghatn)$. Suppose in addition that \Cref{asm:cov_shift} holds. Then, it holds that 
    \begin{align*}
    \Rtest(\fhatn,\ghatn) &\lesssim (\nu_{1}+\nu_2)\deldudbar(\cF,\sigB)^2 + \nu_1 \left(\Dudfunbar_{n,\infty}(\cG)^2 + \min\{\Dudfunbar_{n,\infty}(\cF)^2,\Dudfunbar_{n,\infty}(\upbeta_\cF)^2\}\right) \\
    &\qquad + \nu_{2}\cdot\deldudbar(\cG,\sigB)^2 + \frac{(\nu_1+\nu_2)\sigB^2\log(1/\delta)}{n}.
    \end{align*}
    In particular, if \Cref{asm:F_simpler} also holds, then 
    \begin{align*}
    \Rtest(\fhatn,\ghatn) &\lesssim (\nu_{1}+\nu_2)\deldudbar(\cF,\sigB)^2 + \nu_1 (\Dudfunbar_{n,\infty}(\cG)^2 + \nu_{2}\cdot\deldudbar(\cG,\sigB)^2 + \frac{(\nu_1+\nu_2)\sigB^2\log(1/\delta)}{n}.
    \end{align*}
    And in addition, when we upper bound $\nu_1 \le \nu_{x,y}$ and $\nu_2 \le \nu_y$,
    \begin{align*}
    \Rtest(\fhatn,\ghatn) &\lesssim \nu_{x,y}(\deldudbar(\cF,\sigB)^2 + \Dudfunbar_{n,\infty}(\cG)^2) + \nu_{y}\deldudbar(\cG,\sigB)^2 + \frac{\nu_{x,y}\sigB^2\log(1/\delta)}{n}.
    \end{align*}
\end{itemize}
\end{theorem}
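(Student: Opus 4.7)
The plan is to deduce \Cref{thm:deldudbar} by combining \Cref{thm:main_guarantee_f} and \Cref{thm:main_guarante} with a \emph{translation} step that replaces every Dudley critical radius or Dudley integral evaluated on the centered classes $\Fcent$, $\Gcent$, $\Hcent$ by the corresponding bar-variant quantity on the original class $\cF$, $\cG$, or (optionally) on the bias class $\upbeta_{\cF}$. The upshot is that the inputs of the applied bound become directly computable from the user-supplied function classes.

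The structural observation that drives the translation is that, modulo fixed additive translations (which leave Dudley integrals invariant by \Cref{lem:dud_translation}), each centered class is contained in a pointwise sum of the original classes:
\begin{align*}
\Fcent \subseteq \cF - \upbeta_{\cF} - \fst, \qquad \Gcent \subseteq \cG + \upbeta_{\cF} - \gst, \qquad \Hcent \subseteq \cF + \cG - \fst - \gst.
\end{align*}
A product-of-covers argument yields, for $q \in \{2,\infty\}$, $\mnum_q(\Fcent, \veps/2) \le \mnum_q(\cF, \veps/4) + \mnum_q(\upbeta_{\cF}, \veps/4)$, and similarly for $\Gcent$ and $\Hcent$. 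This scale halving, on top of the $\veps/2$ already built into $\Dudfun$, is exactly why the bar-variant $\Dudfunbar_{n,2}$ uses $\veps/4$ rather than $\veps/2$ under the square root. Moreover, since $\upbeta_f(y) - \upbeta_{f'}(y) = \Exp[(f-f')(\bx) \mid \by = y]$ is a conditional expectation, $\|\upbeta_f - \upbeta_{f'}\|_{\infty} \le \|f - f'\|_{\infty}$, so $\mnum_{\infty}(\upbeta_{\cF}, \veps) \le \mnum_{\infty}(\cF, \veps)$. This justifies the $\min\{\Dudfunbar_{n,\infty}(\cF), \Dudfunbar_{n,\infty}(\upbeta_{\cF})\}$ option in the statement: whichever $\infty$-norm Dudley integral is smaller suffices.

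Turning metric-entropy sum bounds into Dudley-integral sum bounds is routine --- $\sqrt{a+b} \le \sqrt{a} + \sqrt{b}$ and choosing the defining $\updelta$ in each $\Dudfunbar$ to be the minimum of the per-summand infima --- together with a change of variables to absorb the $\veps/4$ vs.\ $\veps/2$ mismatch inside a constant. This yields
\begin{align*}
\Dudfun_{n,2}(\Fcent[r,w_{1:n}]) &\lesssim \Dudfunbar_{n,2}(\cF, r) + \Dudfunbar_{n,\infty}(\upbeta_{\cF}), \\
\sup_{w_{1:n}}\Dudfun_{n,\infty}(\Gcent[w_{1:n}]) &\lesssim \Dudfunbar_{n,\infty}(\cG) + \Dudfunbar_{n,\infty}(\upbeta_{\cF}), \\
\Dudfun_{n,2}(\Hcent[r,w_{1:n}]) &\lesssim \Dudfunbar_{n,2}(\cF, r) + \Dudfunbar_{n,2}(\cG, r),
\end{align*}
with an analogous $\cF$-option available on the first two lines via the $\mnum_\infty$ contraction. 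Comparing with the definitions of $\deldud(\Fcent,\sigB)$ and $\deldud(\Hcent,\sigB)$ and invoking \Cref{lem:dudley_star_prop} to absorb multiplicative constants into $\sigB$ gives $\deldud(\Fcent,\sigB)^2 \lesssim \deldudbar(\cF,\sigB)^2 + \Dudfunbar_{n,\infty}(\upbeta_{\cF})^2$ and $\deldud(\Hcent,\sigB)^2 \lesssim \deldudbar(\cF,\sigB)^2 + \deldudbar(\cG,\sigB)^2$.

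Finally we assemble the pieces. Part (a) follows by substituting the translated Dudley bounds into \Cref{thm:main_guarantee_f} and using \Cref{lem:Dudley_ub} to dominate $\delnst, \rhonst$ by $\deldud$ when needed. Part (b) combines part (a) with \Cref{prop:generic_sum_regret} applied to $\Hcent$ (giving $\Rtrain(\fhatn,\ghatn)$) and the distribution-shift decomposition $\Rtest(f,g) \le 2(\nu_1\Rtrain[f] + \nu_2 \Rtrain(f,g))$ from \Cref{cor:excess_decomp}. The main obstacle I anticipate is the localization step behind $\deldud(\Fcent,\sigB)$: since an element of $\Fcent$ involves the same $f$ appearing both in the predictor and in $\upbeta_f$, localizing $\Fcent$ at radius $r$ does not directly localize $\cF$ at radius $r$. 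The resolution is that the bias contribution enters only through the global $\infty$-norm integral $\Dudfunbar_{n,\infty}(\upbeta_{\cF})$, which requires no localization since $\upbeta_f$ is uniformly bounded, whereas the local $\ell_2$ structure of $\cF$ is preserved by $\deldudbar(\cF, \sigB)$, allowing the two contributions to be decoupled cleanly.
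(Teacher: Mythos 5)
Your overall architecture is the paper's: reduce to \Cref{thm:main_guarante,thm:main_guarantee_f}, then replace the centered-class quantities $\deldud(\Fcent,\sigB)$, $\deldud(\Hcent,\sigB)$, $\sup_{w_{1:n}}\Dudfun_{n,\infty}(\Gcent[w_{1:n}])$ by barred quantities on $\cF$, $\cG$, $\upbeta_{\cF}$ via product-of-covers metric-entropy comparisons (this is exactly \Cref{eq:deldudbar_bounds_thing} and \Cref{lem:Hsum_bound,lem:all_the_deldudbar_bouds}), and your treatment of $\Hcent$ and of $\Gcent$ --- including attributing the $\min\{\Dudfunbar_{n,\infty}(\cF),\Dudfunbar_{n,\infty}(\upbeta_\cF)\}$ term to the bias inside $\Gcent$ --- matches the paper. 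There are, however, two genuine gaps. First, your handling of $\Fcent$ does not deliver the stated rate. From your decomposition $\Dudfun_{n,2}(\Fcent[r,w_{1:n}])\lesssim \Dudfunbar_{n,2}(\cF,r)+\Dudfunbar_{n,\infty}(\upbeta_{\cF})$, the additive term is an unlocalized constant $D$, and the fixed-point definition of the critical radius converts it into $r^2 \gtrsim \sigB\, D$, i.e.\ $\deldud(\Fcent,\sigB)^2\lesssim \deldudbar(\cF,\sigB)^2+\sigB\,\Dudfunbar_{n,\infty}(\upbeta_{\cF})$ --- a \emph{first} power of $D$, not the square you assert. For a parametric class this is $n^{-1/2}$ where the theorem claims $n^{-1}$, so the claimed inequality $\deldud(\Fcent,\sigB)^2\lesssim\deldudbar(\cF,\sigB)^2+\Dudfunbar_{n,\infty}(\upbeta_\cF)^2$ does not follow from your argument. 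The paper avoids this entirely: it bounds the $\ell_2$ metric entropy of $\upbeta_{\cF}$ by that of $\cF$ (\Cref{lem:beta_F_conv}), so the bias stays inside the localized $\ell_2$ integral and one gets $\deldud(\Fcent,\sigB)\le 4\deldudbar(\cF,\sigB)$ with no extra term at all.

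Second, your justification of the entropy contraction for $\upbeta_{\cF}$ --- ``$\|\upbeta_f-\upbeta_{f'}\|_\infty\le\|f-f'\|_\infty$, hence $\mnum_\infty(\upbeta_\cF,\veps)\le\mnum_\infty(\cF,\veps)$'' --- conflates the population sup-norm with the paper's \emph{empirical} metric entropy (\Cref{defn:entropies}), which is a supremum over finite point sequences $w_{1:n}$. The quantity $\max_i|\upbeta_f(y_i)-\upbeta_{f'}(y_i)|$ is controlled by $\sup_{x\in\cX}|f(x)-f'(x)|$, not by $\max_i|f(x_i)-f'(x_i)|$ at the given sample points, so a cover of $\cF[x_{1:n}]$ does not directly induce a cover of $\upbeta_\cF[y_{1:n}]$. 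Bridging this is the content of \Cref{lem:beta_F_conv}, which approximates the conditional expectations by empirical averages over auxiliary point sets via a Glivenko--Cantelli argument (\Cref{lem:unif_conv_L2}); this is precisely where \Cref{asm:F_compact} is used, an assumption your proposal never invokes. The same issue affects the $\ell_2$ version of the contraction, which you would need (and do not state) to repair the $\Fcent$ bound along the paper's lines.
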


\begin{proof}[Proof of \Cref{thm:nonpar_formal,thm:double_ml_formal}] The proof of both theorems is a direct consequence of \Cref{thm:deldudbar}, stated below, and the  following two lemmas to bound the Dudley integrals and critical radii, whose computations are essentially standard but which we prove in the \Cref{sec:lem:comp_dud_bounds}.
\end{proof}
    \begin{lemma}\label{lem:gl_comp} Suppose that $\cH \in \entclass_q(p,\btau,R)$. Then, $\Dudfunbar_{n,q}(\cH) \lesssim \Xiglobn(p,\btau;R)$. Hence,
    \begin{align*}
    \Dudfunbar_{n,\infty}(\cH) \le \ratestn(\cH).
    \end{align*}
    \end{lemma}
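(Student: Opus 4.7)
Starting from the definition
\[
\Dudfunbar_{n,q}(\cH) = \inf_{\updelta \le B}\!\Bigl(2\updelta + \tfrac{4}{\sqrt{n}}\int_{\updelta}^{B}\!\!\sqrt{\mnum_q(\cH,\veps/2)}\,\rmd\veps\Bigr),
\]
where $B = \sup_w|h(w)| \le R$, my first move is to apply the sub-additivity $\sqrt{a+b}\le\sqrt{a}+\sqrt{b}$ to the assumed entropy bound from \Cref{defn:entclasses}. This cleanly separates the constant offset $\tau_0$ from the scale-dependent term:
\[
\sqrt{\mnum_q(\cH,\veps/2)} \;\le\; \sqrt{\tau_0} + \sqrt{\tau_1\cdot E_p(\veps)},
\]
where $E_0(\veps) = \log(2\tau_2/\veps)$ and $E_p(\veps) = (2/\veps)^p$ for $p > 0$. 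Integrating the first summand over $[0,R]$ contributes $R\sqrt{\tau_0}/\sqrt{n}$, which is absorbed into $\tfrac{R(1+\tau_0)}{\sqrt{n}} \lesssim \tfrac{R\tau_0}{\sqrt{n}} + \tfrac{R}{\sqrt{n}}$ (the latter being dominated by the scale-dependent terms in $\Xiglobn$).

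\textbf{Case analysis for the scale-dependent integral.} I would then treat the four cases of $p$ separately, each reducing to an elementary calculus computation. For $p = 0$, I use $\int_0^R \sqrt{\log(2\tau_2/\veps)}\,\rmd\veps \lesssim R\sqrt{\log(e+\tau_2/R)}$ via a standard change of variables. For $p \in (0,2)$, the integrand $\veps^{-p/2}$ is integrable down to $0$, so I take $\updelta = 0$ and obtain $\sqrt{\tau_1}\int_0^R \veps^{-p/2}\rmd\veps = \sqrt{\tau_1}\cdot R^{1-p/2}/(1-p/2)$ directly. For $p = 2$, the integral is logarithmically divergent, so I balance $2\updelta$ against $\tfrac{4}{\sqrt{n}}\cdot\sqrt{\tau_1}\log(R/\updelta)$ by choosing $\updelta \asymp \sqrt{\tau_1/n}$.

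\textbf{Main obstacle: the non-Donsker regime $p > 2$.} This is the subtle case. The integrand diverges polynomially, giving
\[
\tfrac{4}{\sqrt{n}}\int_\updelta^R \sqrt{\tau_1}\,\veps^{-p/2}\rmd\veps \;\le\; \tfrac{4\sqrt{\tau_1}}{(p/2-1)\sqrt{n}}\,\updelta^{1-p/2},
\]
and I optimize the sum $2\updelta + C\updelta^{1-p/2}$ in $\updelta$, with $C = 4\sqrt{\tau_1}/((p/2-1)\sqrt{n})$. Setting the derivative to zero yields the minimizer $\updelta^\star \asymp (\tau_1/n)^{1/p}$ (with a constant depending on $p/2-1$), and substituting back gives the bound of order $(\tau_1/n)^{1/p}(p/2-1)^{-2/p}$, matching the definition of $\Xiglobn$. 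The care here is in tracking the blow-up constant $(p/2-1)^{-2/p}$ that captures the transition at the Donsker threshold.

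\textbf{Conclusion.} Combining the offset contribution with each case gives $\Dudfunbar_{n,q}(\cH)\lesssim \Xiglobn(p,\btau;R)$ whenever $\cH\in\entclass_q(p,\btau;R)$. The second statement, $\Dudfunbar_{n,\infty}(\cH) \le \ratestn(\cH)$, follows immediately by specializing to $q=\infty$ and taking the infimum over all parameter tuples $(p,\btau,R)$ realizing $\cH\in\entclass_\infty(p,\btau;R)$, recalling that $\ratestn$ is defined as exactly this infimum of $\Xiglobn$.
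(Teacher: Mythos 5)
Your proposal is correct and follows essentially the same route as the paper's proof: split off the $\tau_0$ contribution as an additive $R\tau_0/\sqrt{n}$ term, then case on $p$ with the identical choices of truncation level ($\updelta=0$ for $p<2$, $\updelta\asymp\sqrt{\tau_1/n}$ for $p=2$, and the optimizer $\updelta\asymp(\tau_1/n)^{1/p}(p/2-1)^{-2/p}$ for $p>2$), and deduce the second claim by specializing to $q=\infty$ and taking the infimum defining $\ratestn$. No substantive differences.
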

     \begin{lemma}\label{lem:dudley_local} Suppose that $\cH \in \entclass_2(p,\btau,R)$, and let $c > 0$ be arbitrary. Then,  $\deldudbar(\cH,c)^2  \lesssim \Xilocn(p,\btau,c)$. Hence, 
    \begin{align*}
    \deldudbar(\cH,c)^2  \le \rateqn[2](\cH, c).
    \end{align*}
    \end{lemma}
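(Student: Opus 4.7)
The plan is to bound $\Dudfunbar_{n,2}(\cH,r)$ for each of the four entropy regimes ($p=0$, $p\in(0,2)$, $p=2$, $p>2$) by plugging the assumed metric-entropy upper bound into the definition, evaluating the resulting $\veps$-integral in closed form, and then reading off the smallest $r$ for which $\Dudfunbar_{n,2}(\cH,r) \le r^2/(2c)$. The key simplification throughout is $\sqrt{\tau_0 + u(\veps)} \le \sqrt{\tau_0} + \sqrt{u(\veps)}$, which lets us split the Dudley integral into a ``constant-entropy'' part contributing at most $r\sqrt{\tau_0/n}$ (eventually yielding the $c^2(1+\tau_0)^2/n$ piece of $\Xilocn$) and a ``polynomial-entropy'' (or logarithmic, when $p=0$) part.

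\textbf{Subcritical cases $p\in[0,2)$.} Here $\veps\mapsto \sqrt{\mnum_2(\cH,\veps/4)}$ is integrable at $0$, so I would send $\updelta\downarrow 0$ in the definition of $\Dudfunbar_{n,2}(\cH,r)$. For $p=0$, the bound $\int_0^r\sqrt{\tau_1\log(4\tau_2/\veps)}\,\rmd\veps \lesssim r\sqrt{\tau_1\log(\tau_2/r + 1)}$ is standard and gives
$\Dudfunbar_{n,2}(\cH,r) \lesssim r\sqrt{(\tau_0+\tau_1\log(e+\tau_2/r))/n}$;
setting this $\le r^2/(2c)$ and solving yields exactly the claimed bound, once one checks that the resulting $r$ makes $\tau_2/r \asymp \sqrt{n}\tau_2/c$ (up to logs). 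For $p\in(0,2)$, I would use $\int_0^r \veps^{-p/2}\rmd\veps = r^{1-p/2}/(1-p/2)$, producing $\Dudfunbar_{n,2}(\cH,r) \lesssim r\sqrt{\tau_0/n} + \sqrt{\tau_1/n}\cdot r^{1-p/2}/(1-p/2)$; balancing the dominant term against $r^2/(2c)$ gives $r^{2+p} \gtrsim c^2\tau_1/((1-p/2)^2 n)$, i.e.\ the $\Xilocn$ expression.

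\textbf{Critical case $p=2$.} The integral diverges logarithmically, so choosing $\updelta$ correctly is the only nontrivial step. I would take $\updelta$ of order $r^2/c$ (any smaller choice causes $2\updelta$ to be absorbed into $r^2/(2c)$ and the $\log(r/\updelta)$ factor to be of order $\log(c/r)$, which, after solving, becomes $\log(e+c\sqrt{n}/\tau_1)$). The resulting estimate $\Dudfunbar_{n,2}(\cH,r) \lesssim r^2/c + r\sqrt{\tau_0/n} + \sqrt{\tau_1/n}\log(c/r)$ balances to the claimed $c\sqrt{\tau_1\log(\ldots)/n}$ rate.

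\textbf{Supercritical case $p>2$.} Here the integrand blows up polynomially at $0$, and the standard trick is to choose $\updelta$ so that the two terms in $2\updelta + \frac{4}{\sqrt{n}}\int_\updelta^r\sqrt{\mnum_2(\cH,\veps/4)}\,\rmd\veps$ balance. Since $\int_\updelta^r \veps^{-p/2}\rmd\veps \lesssim \updelta^{1-p/2}/(p/2-1)$ (the $r$-dependence drops out for $p>2$), the optimum is $\updelta \asymp (p/2-1)^{-2/p}(\tau_1/n)^{1/p}$, at which value $\Dudfunbar_{n,2}(\cH,r) \lesssim \updelta + r\sqrt{\tau_0/n}$, independent of the polynomial-entropy rate in $r$. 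Setting this $\le r^2/(2c)$ gives the claim. The main point is that in this regime, the critical radius is dominated by the ``bias'' $\updelta$ and is essentially independent of $r$.

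The main obstacle is purely bookkeeping: ensuring the $\tau_0$-dependent and $\tau_1$-dependent contributions combine to match the precise form of $\Xilocn(p,\btau,c)$ in each case (in particular the $(1+\tau_0)^2$ factor, which arises by absorbing the $2\updelta$ term at its minimum feasible scale $\updelta\gtrsim 1/n$, and the factor $(p/2-1)^{-2/p}$ in the supercritical regime, which comes from the integrated tail of $\veps^{-p/2}$). None of the steps is conceptually subtle; the bound $\deldudbar(\cH,c)^2\le \rateqn[2](\cH,c)$ then follows immediately by taking the infimum over all $(p,\btau,R)$ with $\cH\in\entclass_2(p,\btau;R)$.
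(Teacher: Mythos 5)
Your proposal is correct and follows essentially the same route as the paper: the paper first observes that the computation of Lemma~\ref{lem:gl_comp} gives $\Dudfunbar_{n,2}(\cH,r)\lesssim \Xiglobn(p,\btau;r)$ and then solves the fixed-point inequality $\Xiglobn(p,\btau;r)\le r^2/(2c)$ case by case, which is exactly your plan of evaluating the entropy integral in each regime ($\updelta=0$ for $p<2$, a logarithmic cutoff for $p=2$, and the balanced cutoff $\updelta\asymp(p/2-1)^{-2/p}(\tau_1/n)^{1/p}$ for $p>2$) and then balancing against $r^2/(2c)$. The only cosmetic difference is your choice $\updelta\asymp r^2/c$ in the critical case $p=2$ versus the paper's $\updelta= r\sqrt{\tau_1/n}$, which changes the argument of the logarithm only up to constants and is immaterial at the stated level of precision.
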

\subsubsection{Proof of \Cref{thm:deldudbar}}
We sketch the proof of \Cref{thm:deldudbar}, deferring supporting proofs to \Cref{sec:deldudbar_support}. We focus on part (b) of the theorem, as part (a) follows from similar arguments.
From \Cref{thm:main_guarante}, it suffices to establish the following inequalities for $c \ge 1$:
\begin{equation}\label{eq:deldudbar_bounds_thing}
\begin{aligned}
    &\deldud(\Hsum,c) \lesssim \deldudbar(\cF,c) + \deldudbar(\cG,c)\\
    &\deldud(\Fcent,c) \lesssim \deldudbar(\cF,c)\\
    &\sup_{w_{1:n}}\Dudfun_{n,\infty}(\Gcent[w_{1:n}])\lesssim  \min\{\Dudfunbar_{n,\infty}(\cF),\Dudfunbar_{n,\infty}(\upbeta_\cF)\} + \Dudfunbar_{n,\infty}(\cG).
\end{aligned}
\end{equation}
We first upper bound all relevant ``non-barred'' Dudley integrals in terms of ``barred'' integrals.
\begin{lemma}\label{lem:Dudfun_bar_ub} For any class $\cH$, 
\begin{align*}
\deldudbar(\cH,c) \ge \deldud(\cH,c), \quad \text{and} \quad \Dudfunbar_{n,\infty}(\cH) \ge \sup_{w_{1:n}}\Dudfun_{n,\infty}(\cH[w_{1:n}]).
\end{align*}
\end{lemma}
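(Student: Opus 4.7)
The proof splits into two independent statements; both follow from comparing the definitions directly and then closing the small gap with a standard ``representative'' covering argument.

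For the second ($\infty$-norm) inequality, the plan is to fix an arbitrary $w_{1:n}$, set $r_* := \rad_\infty(\cH[w_{1:n}])$, and show that every feasible $\updelta_0 \le B$ in the definition of $\Dudfunbar_{n,\infty}(\cH)$ yields an upper bound on $\Dudfun_{n,\infty}(\cH[w_{1:n}])$ of the form $2\updelta_0 + \frac{4}{\sqrt n}\int_{\updelta_0}^{B}\sqrt{\mnum_\infty(\cH;\veps/2)}d\veps$. Two cases arise: if $\updelta_0 \le r_*$, we use $\updelta = \updelta_0$ in the inner infimum defining $\Dudfun_{n,\infty}(\cH[w_{1:n}])$, invoke $r_* \le B$ to enlarge the integration range, and apply $\mnum_\infty(\cH[w_{1:n}];\veps/2) \le \mnum_\infty(\cH;\veps/2)$ (immediate from the definition of $\mnum_\infty(\cH;\cdot)$ as a sup over sample sizes and design points). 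If $\updelta_0 > r_*$, we take $\updelta = r_*$, collapsing the integral, and bound $2r_* \le 2\updelta_0$. Taking the infimum over $\updelta_0$ and then the supremum over $w_{1:n}$ closes the inequality.

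For the first inequality, the plan is analogous but requires attention to the scale factor. Fix $w_{1:n}$ and set $r_* := \rad_2(\cH[r,w_{1:n}]) \le r$, the latter holding by the definition of empirical localization. Since the localized class is a subset of $\cH$ but is not itself an image of a net of $\cH$, I would use the standard ``representative trick'': given an $(\veps/4)$-net $\cN$ of $\cH[w_{1:n}]$ in $\|\cdot\|_{2,n}$, for each $v' \in \cN$ that lies within $\veps/4$ of some point of $\cH[r,w_{1:n}][w_{1:n}]$ designate one such point as its representative. By the triangle inequality, the set of representatives forms an $\veps/2$-net of $\cH[r,w_{1:n}][w_{1:n}]$, yielding $\mnum_2(\cH[r,w_{1:n}];\veps/2) \le \mnum_2(\cH[w_{1:n}];\veps/4) \le \mnum_2(\cH;\veps/4)$. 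This is precisely why $\Dudfunbar_{n,2}$ is defined with $\veps/4$ inside the entropy integrand rather than $\veps/2$.

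Combining these ingredients with the same two-case argument as above then yields $\Dudfun_{n,2}(\cH[r,w_{1:n}]) \le \Dudfunbar_{n,2}(\cH,r)$ uniformly in $w_{1:n}$. The conclusion $\deldud(\cH,c) \le \deldudbar(\cH,c)$ is then immediate: any $r$ feasible in the infimum defining $\deldudbar(\cH,c)$ (i.e.\ $\Dudfunbar_{n,2}(\cH,r) \le r^2/(2c)$) is, by the uniform bound just established, also feasible in the infimum defining $\deldud(\cH,c)$. I do not anticipate any serious obstacle beyond careful bookkeeping; the only delicate point is the $\veps/4$-vs-$\veps/2$ loss in the representative trick, which is precisely the slack baked into the definition of $\Dudfunbar_{n,2}$.
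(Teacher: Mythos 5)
Your proposal is correct and follows essentially the same route as the paper: the ``representative trick'' you describe is exactly the subset-covering fact the paper invokes (its Fact on internal covers of subsets, losing a factor of $2$ in scale), which is precisely why $\Dudfunbar_{n,2}$ carries $\veps/4$ in the integrand, and the radius/integration-range mismatch is absorbed by the same monotonicity/case argument before passing to the critical radii. No gaps.
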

Next, we give a technical lemma which allows us to relate the Dudley integral/critical radius of class $\cH$ in terms of classes which upper bound its metric entropy.
\begin{lemma}\label{lem:Hsum_bound} Fix $a \ge 1$.
\begin{itemize}
    \item[(a)] Suppose that $\cH,\cH_1,\cH_2$ satisfy the $\ell_2$-metric entropy inequality, for all $\veps > 0$,
\begin{align*}
\mnum_{2}(\cH,\veps) \le \mnum_{2}(\cH_1,\veps/a)+\mnum_{2}(\cH_2,\veps/a).
\end{align*}
Then, it holds that $\deldudbar(\cH,c) \le a\max_{i \in [2]}\deldudbar(\cH_i,2c/a)$. In particular, if $a \ge 2$,
\begin{align*}
\deldudbar(\cH,c) \le a\max_{i \in [2]}\deldudbar(\cH_i,c).
\end{align*}
\item[(b)] Suppose instead $\cH,\cH_1,\cH_2$ satisfy the $\ell_\infty$-metric entropy inequality, for all $\veps > 0$,
\begin{align*}
\mnum_{\infty}(\cH,\veps) \le \mnum_{\infty}(\cH_1,\veps/a)+\mnum_{\infty}(\cH_2,\veps/a).
\end{align*}Then, 
$\Dudfunbar_{n,\infty}(\cH) \le a\left(\sum_{i=1}^n \Dudfunbar_{n,\infty}(\cH_i)\right)$
\end{itemize}
\end{lemma}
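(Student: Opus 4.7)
The plan is to derive both bounds from a single chaining decomposition obtained in three steps: (i) applying $\sqrt{x+y}\le \sqrt{x}+\sqrt{y}$ to the metric-entropy hypothesis, (ii) rescaling the integration variable in the Dudley integrand via $u=\veps/a$, and (iii) splitting the infimum over $\updelta$ coordinate-wise. Concretely, the hypothesis of part (a) yields $\sqrt{\mnum_2(\cH,\veps/4)} \le \sqrt{\mnum_2(\cH_1,\veps/(4a))}+\sqrt{\mnum_2(\cH_2,\veps/(4a))}$; plugging this into the definition of $\Dudfunbar_{n,2}(\cH,r)$ and changing variables $u=\veps/a$ in each integral gives
\begin{align*}
\Dudfunbar_{n,2}(\cH,r) \le 2a\updelta' + \tfrac{4a}{\sqrt{n}}\sum_{i=1}^{2}\int_{\updelta'}^{r/a}\sqrt{\mnum_2(\cH_i,u/4)}\,\rmd u
\end{align*}
for any $\updelta' \le r/a$ (where we reparameterized the original $\updelta = a\updelta'$). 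Choosing $\updelta'=\updelta_1^{*}\vee\updelta_2^{*}$ for near-optimal $\updelta_i^{*}\le r/a$ of $\Dudfunbar_{n,2}(\cH_i,r/a)$, and using $2a(\updelta_1^{*}\vee\updelta_2^{*})\le 2a\updelta_1^{*}+2a\updelta_2^{*}$ together with monotonicity of the integral in its lower limit, produces the master inequality
\begin{align*}
\Dudfunbar_{n,2}(\cH,r) \le a\left[\Dudfunbar_{n,2}(\cH_1,r/a)+\Dudfunbar_{n,2}(\cH_2,r/a)\right].
\end{align*}
The same argument with $k=2$ in place of $k=4$ handles part (b), yielding $\Dudfunbar_{n,\infty}(\cH) \le a\sum_{i\in[2]}\Dudfunbar_{n,\infty}(\cH_i)$; minor boundary bookkeeping at the integration endpoint is harmless since the metric entropy vanishes beyond the radius of each class.

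To conclude part (a), set $r_i:=\deldudbar(\cH_i,2c/a)$, assume without loss of generality $r_1\le r_2$, and let $r^{\star}:=a r_2$. The master inequality specializes to $\Dudfunbar_{n,2}(\cH,r^{\star}) \le a[\Dudfunbar_{n,2}(\cH_1,r_2)+\Dudfunbar_{n,2}(\cH_2,r_2)]$. By definition of $r_2$, the second term is at most $ar_2^2/(4c)$; the first term requires bounding $\Dudfunbar_{n,2}(\cH_1,\cdot)$ at the larger scale $r_2\ge r_1$. For this I will first establish the scale-homogeneity property
\begin{align*}
\frac{\Dudfunbar_{n,2}(\cH,r')}{r'} \le \frac{\Dudfunbar_{n,2}(\cH,r)}{r} \qquad \text{for all } r' \ge r,
\end{align*}
via a short rescaling argument: substituting $u=\veps/\alpha$ with $\alpha=r'/r\ge 1$ in the integral defining $\Dudfunbar_{n,2}(\cH,r')$ and invoking monotonicity of covering numbers $\mnum_2(\cH,\alpha u/4)\le \mnum_2(\cH,u/4)$ (for $\alpha\ge 1$) yields $\Dudfunbar_{n,2}(\cH,r')\le \alpha\,\Dudfunbar_{n,2}(\cH,r)$. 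Applied with $\cH=\cH_1$, $r=r_1$, $r'=r_2$, this gives $\Dudfunbar_{n,2}(\cH_1,r_2) \le (r_2/r_1)\cdot ar_1^2/(4c) = ar_1r_2/(4c)\le ar_2^2/(4c)$. Summing and multiplying by $a$ produces $\Dudfunbar_{n,2}(\cH,r^{\star}) \le a^2 r_2^2/(2c) = (r^{\star})^2/(2c)$, certifying $\deldudbar(\cH,c)\le r^{\star} = a\max_{i\in[2]}\deldudbar(\cH_i,2c/a)$.

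The second conclusion of part (a) is immediate: since $c\mapsto \deldudbar(\cH_i,c)$ is non-decreasing (smaller $c$ tightens the constraint $\Dudfunbar_{n,2}(\cH_i,r)\le r^2/(2c)$, shrinking the admissible set of $r$), the hypothesis $a\ge 2$ gives $2c/a\le c$, whence $\deldudbar(\cH_i,2c/a)\le \deldudbar(\cH_i,c)$. The main subtlety is the scale-homogeneity step --- not stated explicitly earlier in the paper --- but it reduces to a one-line rescaling using only monotonicity of covering numbers in the scale argument, so the remainder of the proof is essentially bookkeeping in the Dudley integrals.
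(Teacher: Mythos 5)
Your proof is correct and follows essentially the same route as the paper's: subadditivity of $\sqrt{\cdot}$ applied to the entropy hypothesis, the change of variables $\veps \mapsto \veps/a$, $\updelta \mapsto a\updelta$ to obtain the master inequality $\Dudfunbar_{n,2}(\cH,r) \le a\sum_{i}\Dudfunbar_{n,2}(\cH_i,r/a)$, and then the scale-homogeneity $\Dudfunbar_{n,2}(\cH,\alpha r)\le \alpha\,\Dudfunbar_{n,2}(\cH,r)$ (which is exactly the paper's \Cref{lem:Dudfunbar_concave}, stated and proved immediately before this lemma) to pass to the critical radii. One small slip: your parenthetical justification that smaller $c$ ``tightens the constraint, shrinking the admissible set'' is backwards --- smaller $c$ enlarges the threshold $r^2/(2c)$, loosens the constraint, and shrinks the infimum --- but the monotonicity you actually invoke ($\deldudbar(\cH_i,\cdot)$ non-decreasing in $c$) and the resulting inequality $\deldudbar(\cH_i,2c/a)\le \deldudbar(\cH_i,c)$ for $a \ge 2$ are correct, so the argument stands.
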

To apply these, we require control over the metric entropy of $\upbeta_{\cF}$.
\begin{lemma}\label{lem:beta_F_conv} Let $\upbeta_{\cF} := \{\upbeta_{f} : f \in \cF\}$. Then, as long as $\mnum_2(\cF,\veps')$ is finite for all $\veps'$,
\begin{align*}
\mnum_2(\upbeta_{\cF},\veps) \le \inf_{\veps' < \veps}\mnum_2(\cF,\veps'), \quad \text{and}\quad \mnum_\infty(\upbeta_{\cF},\veps) \le \inf_{\veps' < \veps}\mnum_\infty(\cF,\veps')
\end{align*}
\end{lemma}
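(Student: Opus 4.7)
The plan is to exploit two simple ingredients: first, the bias operator $f \mapsto \upbeta_f$ is an $L^q$-contraction by Jensen's inequality, and second, the empirical metric entropy $\mnum_q(\cF,\cdot)$ controls $L^q$-covering numbers with respect to any probability measure on $\cX$, up to arbitrarily small slack in scale. Specifically, for any $f_1, f_2 \in \cF$, any $y \in \cY$, and $q \in \{2, \infty\}$,
\[
|\upbeta_{f_1}(y) - \upbeta_{f_2}(y)|^q = \bigl|\Etrain[(f_1 - f_2)(\bx) \mid \by = y]\bigr|^q \le \Etrain\bigl[|f_1 - f_2|^q(\bx) \mid \by = y\bigr],
\]
with the $q = \infty$ version using monotonicity of the essential supremum in place of Jensen. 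Combining these ingredients, I will reduce covering $\upbeta_\cF$ under $\|\cdot\|_{q,n}$ to covering $\cF$ under a population $L^q$ norm, then to covering $\cF$ in the empirical sense captured by $\mnum_q$.

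Concretely, I fix $q \in \{2, \infty\}$ and a sample $w_{1:n} = (x_i, y_i)_{i \in [n]}$, and introduce the probability measure $Q_n := \frac{1}{n}\sum_{i=1}^n \Ptrain(\bx \in \cdot \mid \by = y_i)$ on $\cX$. Averaging the above Jensen bound over $i \in [n]$ yields
\[
\|\upbeta_{f_1}[w_{1:n}] - \upbeta_{f_2}[w_{1:n}]\|_{q,n} \le \|f_1 - f_2\|_{L^q(Q_n)},
\]
so any $\veps$-cover of $\cF$ in $L^q(Q_n)$ pushes forward via $f \mapsto \upbeta_f[w_{1:n}]$ to an $\veps$-cover of $\upbeta_\cF[w_{1:n}]$ in $\|\cdot\|_{q,n}$ of equal cardinality. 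Hence
\[
\log N(\upbeta_\cF[w_{1:n}], \|\cdot\|_{q,n}, \veps) \le \log N(\cF, L^q(Q_n), \veps).
\]

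The remaining step is to show $\log N(\cF, L^q(Q_n), \veps) \le \mnum_q(\cF, \veps')$ for every $\veps' < \veps$. Under the standing hypothesis $\mnum_q(\cF, \veps') < \infty$ together with \Cref{asm:bounded}, the difference class $\cF - \cF$ is uniformly Glivenko--Cantelli in $L^q$, a classical consequence of finite uniform entropy. In particular, drawing $\tilde x_{1:m} \sim Q_n$ i.i.d., with positive probability for $m$ sufficiently large,
\[
\sup_{f_1, f_2 \in \cF} \bigl|\|f_1 - f_2\|_{q,m} - \|f_1 - f_2\|_{L^q(Q_n)}\bigr| \le \veps - \veps'.
\]
For such a realization, any empirical $\veps'$-cover of $\cF$ on $\tilde x_{1:m}$---of cardinality at most $\exp(\mnum_q(\cF, \veps'))$ by definition of $\mnum_q$---is automatically an $\veps$-cover of $\cF$ in $L^q(Q_n)$. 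Chaining the resulting inequalities and taking suprema over $n, w_{1:n}$ together with the infimum over $\veps' < \veps$ will yield both bounds in the lemma.

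The main technical wrinkle will be the uniform Glivenko--Cantelli step, since the empirical cover depends on the random sample $\tilde x$ and one therefore cannot apply the law of large numbers pair-by-pair to the cover. It is essential to invoke a \emph{uniform} deviation over the entire difference class $\cF - \cF$, not merely over pairs belonging to some fixed cover; the finite-entropy hypothesis $\mnum_q(\cF, \veps') < \infty$ together with boundedness is precisely what secures this uniform convergence. The strict inequality $\veps' < \veps$ absorbs the resulting stochastic slack, which explains why the lemma's conclusion involves $\inf_{\veps' < \veps}$ rather than equality at $\veps$.
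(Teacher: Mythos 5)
Your proof follows essentially the same route as the paper's: Jensen's inequality makes $f \mapsto \upbeta_f$ an $L^2$-contraction with respect to the mixture measure $\frac{1}{n}\sum_{i=1}^n \Ptrain(\bx \in \cdot \mid \by = y_i)$, and a Glivenko--Cantelli argument (the paper's \Cref{lem:unif_conv_L2}) converts population $L^2$ covers of $\cF$ into empirical ones at any strictly smaller scale, which is exactly where the $\inf_{\veps' < \veps}$ comes from. One caveat: for $q = \infty$ your appeal to a uniform Glivenko--Cantelli statement \emph{in $L^\infty(Q_n)$} is not a classical result and would need separate justification, since empirical sup-norms on i.i.d.\ samples do not approximate essential suprema uniformly over a class in the way empirical $L^2$ norms approximate population ones. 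The paper sidesteps this by keeping Jensen in $L^2$ throughout: it bounds $\|\upbeta_{f}[w_{1:n}]-\upbeta_{f'}[w_{1:n}]\|_{\infty}$ by $\max_i \Etrain[(f-f')^2\mid \by=y_i]^{1/2}$, approximates each conditional second moment by an empirical $L^2$ norm via the same $L^2$ Glivenko--Cantelli result, and only at the last step upper bounds the empirical $2$-norm by the empirical $\infty$-norm on the union of the resulting design points, which is what produces the $\mnum_\infty(\cF,\cdot)$ bound. With that adjustment your argument matches the paper's.
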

To prove \Cref{lem:beta_F_conv}, we require the following qualitative statement, which can be derived from a Glivenko-Cantelli Theorem (e.g. \citet[Theorem 2.8.1]{van1996weak}, with the substitution $\cF \gets (\cH-\cH)^2$).
\begin{proposition}[Uniform Covergence of $\cL_2$ measures]\label{lem:unif_conv_L2} Let $P$ be any measure over $\bW$, let $\cH$ be any class for which $\mnum_2(\cH,\veps)$ is finite for all $\veps$. Then, for all $t > 0$
\begin{align*}
\lim_{n \to \infty}\Pr_{\bw_{1:n}\sim P}\left[\sup_{h,h' \in \cH}\|h(\bw_{1:n})-h'(\bw_{1:n})\|_{2,n} - \Exp_{\bw \sim P}[(h(\bw)-h'(\bw))^2] \ge t \right] = 0. 
\end{align*}
\end{proposition}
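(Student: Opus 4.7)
The plan is to reduce the claim to a classical Glivenko--Cantelli theorem applied to the squared-difference class
\[
\cD \;:=\; \{(h-h')^2 : h, h' \in \cH\}.
\]
Since $\|h(\bw_{1:n})-h'(\bw_{1:n})\|_{2,n}^2 = \tfrac{1}{n}\sum_{i=1}^n (h(\bw_i)-h'(\bw_i))^2$ is the empirical mean of the element $d_{h,h'} := (h-h')^2 \in \cD$, while $\Exp[(h-h')^2]$ is its population mean, the proposition reduces to showing that $\cD$ is a Glivenko--Cantelli class for $P$, i.e.\ $\sup_{d\in\cD}\bigl|\tfrac{1}{n}\sum_i d(\bw_i) - \Exp[d(\bw)]\bigr|\to 0$ in probability. (The display as written compares an empirical $\ell_2$ \emph{norm} to an $\cL_2$ \emph{squared} quantity, which has mismatched units; I read this as a typo for the squared version, which is both the natural Glivenko--Cantelli form and what is actually needed in the one downstream invocation, \Cref{lem:beta_F_conv}.)

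The main step is to verify the entropy hypothesis required to invoke a Glivenko--Cantelli theorem on $\cD$. Under the paper's standing boundedness assumption $|h(w)|\le B$ (\Cref{asm:bounded}), every $d\in\cD$ maps into $[0,4B^2]$, and the factorization $a^2-b^2=(a-b)(a+b)$ together with $|h-h'+h_0-h_0'|\le 4B$ gives the elementary Lipschitz estimate
\[
\bigl| d_{h,h'}(w) - d_{h_0,h_0'}(w)\bigr| \;\le\; 4B\bigl(|h(w)-h_0(w)| + |h'(w)-h_0'(w)|\bigr)
\]
pointwise in $w$, hence also in $\|\cdot\|_{1,n}$. Pairing an $\veps/(8B)$-cover of $\cH$ in $\|\cdot\|_{2,n}$ with itself (and bounding $\|\cdot\|_{1,n}\le\|\cdot\|_{2,n}$ by Cauchy--Schwarz) therefore produces an $\veps$-cover of $\cD$ in $\|\cdot\|_{1,n}$, yielding the uniform-in-$n$ bound $\mnum_1(\cD,\veps) \le 2\,\mnum_2(\cH,\veps/(8B))$, which is finite for every $\veps>0$ by hypothesis.

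With these two ingredients in hand--uniform boundedness and finite empirical $\ell_1$-entropy at every scale, both uniform in $n$ and $w_{1:n}$--the classical uniform law of large numbers (the paper explicitly cites \citet[Theorem 2.8.1]{van1996weak}, applied with their class taken to be $\cD$) delivers $\sup_{d\in\cD}\bigl|\tfrac{1}{n}\sum_i d(\bw_i) - \Exp[d(\bw)]\bigr|\to 0$ in $P$-probability. Translating back via $d=d_{h,h'}$ gives the proposition. The only real obstacle is the bookkeeping needed to align the paper's normalized covering-number convention (a supremum over all $n$ and all $w_{1:n}$) with the input format of Theorem~2.8.1, which is routine precisely because our entropy bound is uniform in these parameters; the apparent squaring typo is innocuous, since on a bounded class the squared convergence implies the non-squared version through $|\sqrt{a}-\sqrt{b}|\le |a-b|^{1/2}$.
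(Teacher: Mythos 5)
Your proposal is correct and follows essentially the same route as the paper, which proves this proposition only by citing \citet[Theorem 2.8.1]{van1996weak} with the substitution $\cF \gets (\cH-\cH)^2$; you have simply filled in the details (the Lipschitz transfer of the $\ell_2$-entropy of $\cH$ to the $\ell_1$-entropy of the squared-difference class under the standing boundedness assumption) that the paper leaves implicit. Your reading of the display as a typo for the squared empirical norm is also consistent with how the proposition is actually invoked in the proof of \Cref{lem:beta_F_conv}.
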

\begin{proof}[Proof of \Cref{lem:beta_F_conv}] 
Fix $w_{1:n} = (x_i,y_i)_{1:n} \in \cW^n$ and a slack parameter $t > 0$. Introduce the measure $P$ to be the mixture distribution $P =\frac{1}{n} \sum_{i=1}^n \Ptrain[\bx = \cdot \mid \by = y_i]$. Recall $\cH_2 = \{\upbeta_f: f \in \cF\}$, 
\begin{align*} 
\|\upbeta_{f}[w_{1:n}] - \upbeta_{f'}[w_{1:n}]\|_{2,n}^2 &= \sum_{i=1}^n (\Etrain[f(\bx) - f'(\bx) \mid \by = y_i])^2\\
&\le \sum_{i=1}^n \Etrain[(f(\bx) - f'(\bx))^2 \mid \by = y_i] \\
&= \Exp_{\bx \sim P} \Etrain[(f(\bx) - f'(\bx))^2]
\end{align*}
\Cref{lem:unif_conv_L2} implies that there must exists some number $m \ge n$ and $\tilde x_{1:m} \in \cX^m$ such that, for all $f,f' \in \cF$,
\begin{align*}
\Exp_{\bx \sim P} \Etrain[(f(\bx) - f'(\bx))^2] \le t + \|f[\tilde x_{1:m}]  - f'[\tilde x_{1:m}]\|_{2,m}^2
\end{align*}
Hence, 
\begin{align*}
\mnum_{2}( \cH_2[w_{1:n}], \veps + t)\le \mnum_{2}( \cF[\tilde x_{1:m}], \veps ) \le \mnum_{2},\veps ) .
\end{align*}
As $t,w_{1:n}$ are arbitrary, the first bound follows. To prove the second, we apply a similar argument, bounding
\begin{align*}
\|\upbeta_{f}[w_{1:n}] - \upbeta_{f'}[w_{1:n}]\|_{\infty}^2 &\le \max_{i \in [n]}\Etrain[(f(\bx) - f'(\bx))^2 \mid y = y_i], 
\end{align*}
For each $y_i$, there exists some $m_i$ and a sequence $\tilde x_{1:m_i}^{(i)}$ such that for all $f,f' \in \cF$,
\begin{align*}
 \Etrain[(f(\bx) - f'(\bx))^2 \mid \by = y_i] \le t + \|f[\tilde x_{1:m}]  - f'[\tilde x_{1:m}]\|_{2,m}^2 \le t \max_{j \in [m_i]} |\tilde x_{j}^{(i)}]  - f'[\tilde x_{j}^{(i)}]|_{\infty}^2
\end{align*}
Hence, introduce the finite set $\tilde \cX := \bigcup_{i=1}^n \bigcup_{j=1}^{m_i}\{x_j^{(i)}\}$, we have that for all $f,f' \in \cF$,
\begin{align*}
\max_{i} \Etrain[(f(\bx) - f'(\bx))^2 \mid \by = y_i] \le t+ \max_{\tilde x \in \tilde \cX} |f(\tilde x)  - f'(\tilde x)|^2.
\end{align*}
The bound follows.
\end{proof}

\Cref{eq:deldudbar_bounds_thing} is now an immediate consequence of \Cref{lem:Dudfun_bar_ub} and the following lemma.

\begin{lemma}\label{lem:all_the_deldudbar_bouds} The following bounds hold:
\begin{itemize}
    \item[(a)] $\deldudbar(\Hsum,c) \le 2\max\{\deldudbar(\cF,c),\deldudbar(\cG,c)\}$
    \item[(b)] $\deldudbar(\Fcent,c) \le 4\deldudbar(\cF,c)$.
    \item[(c)]$\Dudfunbar_{n,\infty}(\Gcent) \le  \min\{\Dudfunbar_{n,\infty}(\cF),\Dudfunbar_{n,\infty}(\upbeta_\cF)\} + \Dudfunbar_{n,\infty}(\cG)$.
\end{itemize}
\end{lemma}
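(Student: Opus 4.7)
All three inequalities arise from the same three-step recipe: express the target class as a Minkowski sum (modulo a deterministic additive shift), strip the shift via the translation invariance of the Dudley functional (\Cref{lem:dud_translation}), invoke the standard subadditive covering bound $\mnum_q(\cH_1+\cH_2,\veps)\le \mnum_q(\cH_1,\veps/2)+\mnum_q(\cH_2,\veps/2)$, and then feed the resulting covering estimate into \Cref{lem:Hsum_bound} to convert it into a Dudley-integral estimate. The only nonstandard ingredient needed beyond this is a way to control the metric entropy of $\upbeta_{\cF}$ in terms of that of $\cF$, which is supplied by \Cref{lem:beta_F_conv}.

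Part (a) is immediate. Writing $\Hsum = (\cF-\fst)+(\cG-\gst)$ up to a fixed shift and applying \Cref{lem:dud_translation}, it suffices to bound $\deldudbar(\cF+\cG,c)$. The subadditive covering bound then matches the hypothesis of \Cref{lem:Hsum_bound}(a) with $a=2$, $\cH_1=\cF$, $\cH_2=\cG$, yielding the claimed factor of $2$. For part (b), write $\Fcent=(\cF-\fst) + (-\upbeta_{\cF})$ and strip $\fst$; the subadditive covering bound gives $\mnum_2(\Fcent,\veps)\le \mnum_2(\cF,\veps/2)+\mnum_2(\upbeta_{\cF},\veps/2)$. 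Here \Cref{lem:beta_F_conv} enters: combined with monotonicity of $\mnum_2$ in its scale argument, it yields $\mnum_2(\upbeta_{\cF},\veps/2)\le \mnum_2(\cF,\veps/4)$, so that $\mnum_2(\Fcent,\veps)\le 2\,\mnum_2(\cF,\veps/4)$. This is exactly the hypothesis of \Cref{lem:Hsum_bound}(a) with $a=4$ and $\cH_1=\cH_2=\cF$, which produces the factor of $4$.

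Part (c) parallels (b) but in the $\ell_\infty$ norm and uses the other half of the main lemma. Writing $\Gcent = (\cG-\gst)+\upbeta_{\cF}$, stripping $\gst$, and invoking the $\ell_\infty$ subadditive covering bound, \Cref{lem:Hsum_bound}(b) reduces $\Dudfunbar_{n,\infty}(\Gcent)$ to the sum $\Dudfunbar_{n,\infty}(\cG)+\Dudfunbar_{n,\infty}(\upbeta_{\cF})$, absorbing any harmless constants into the covering-scale argument. The $\min$ in the claim then comes from the two options available for the $\upbeta_{\cF}$ summand: either leave it as $\Dudfunbar_{n,\infty}(\upbeta_{\cF})$, or upgrade to $\Dudfunbar_{n,\infty}(\cF)$ using the $\ell_\infty$ half of \Cref{lem:beta_F_conv}; whichever is tighter for the downstream application.

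The main obstacle, and essentially the only nontrivial step, is the invocation of \Cref{lem:beta_F_conv}, whose statement only bounds the covering numbers of $\upbeta_{\cF}$ at scale $\veps$ by those of $\cF$ at strictly smaller scales $\veps'<\veps$. This forces a small constant slack in the covering-number scale, which is what leaves us with a leading constant of $4$ rather than $2$ in part (b) and what gets absorbed into constants in part (c). All other ingredients are standard: translation invariance, the union-of-covers bound for Minkowski sums at halved scale, and the covering-scale monotonicity of $\mnum_q$. Once these three tools are in hand, the argument is purely mechanical.
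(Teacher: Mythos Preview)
Your argument mirrors the paper's and is correct in outline. One caveat for part (b): you write $\Fcent=(\cF-\fst)+(-\upbeta_{\cF})$, but this is only an inclusion—each element of $\Fcent$ pairs $f$ with its own $\upbeta_f$, not an arbitrary $\upbeta_{f'}$—so the subadditive covering bound on the Minkowski sum does not directly yield cover points lying inside $\Fcent$ (recall the paper's covering numbers require the cover to be a subset of the set). The paper closes this via \Cref{fact:sub_cover}, absorbing an extra halving of scale before applying the sum bound; your route (downgrading $\veps/2$ to $\veps/4$ via monotonicity and \Cref{lem:beta_F_conv}) lands on the same final estimate $2\,\mnum_2(\cF,\veps/4)$ and hence the same constant $a=4$, but the first inequality as written is not justified. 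By contrast, in parts (a) and (c) the classes $\Hcent$ and $\Gcent$ genuinely are full Minkowski sums (since $f$ and $g$ range independently), so no such correction is needed there and your treatment is fine.
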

\begin{proof}[Proof of \Cref{lem:all_the_deldudbar_bouds}] For all points, we apply \Cref{lem:Hsum_bound}. For (a), we can verify by the triangle inequality that for any $\cH_1, \cH_2$, 
\begin{align}
\mnum_2(\cH_1 + \cH_2,\veps) \le \mnum_2(\cH_1 ,\veps/2) + \mnum_2(\cH_2,\veps/2), \label{eq:mnum_sum}
\end{align}
which yields part (a) when specializing to $\cH_1 = \cF$, $\cH_2 = \cG$, and applying \Cref{lem:Hsum_bound}. For part (b), we observe that $\Fcent \subset (\cF - \fst) - \upbeta_{\cF}$. Thus, \Cref{fact:sub_cover}, followed by \Cref{eq:mnum_sum} and finally \Cref{lem:beta_F_conv} imply
\begin{align*}
\mnum_2(\Fcent,\veps) &\le \mnum_2(\Fcent - \upbeta_{\cF} ,\veps/2) \tag{\Cref{fact:sub_cover}}\\
&\le \mnum_2(\cF - \fst ,\veps/4) + \mnum_2( \upbeta_{\cF} ,\veps/4) \tag{\Cref{eq:mnum_sum}}\\
&= \mnum_2(\cF  ,\veps/4) + \mnum_2( \upbeta_{\cF} ,\veps/4)\\
&\le \mnum_2(\Fcent ,\veps/4) + \inf_{b > 1}\mnum_2( \cF ,\veps/4b) \tag{\Cref{lem:beta_F_conv}}\\
&\le  \inf_{b > 1}\mnum_2(\Fcent ,\veps/4b) + \mnum_2( \cF ,\veps/4b).
\end{align*}
The result now follows from \Cref{lem:Hsum_bound} with $a \gets 4b$, and taking $b \to 1$.

The proof of part (c) is similar:
\begin{align*}
\mnum_2(\Fcent,\veps) &= \mnum_2(\cG - \gst + \upbeta_{\cF} ,\veps) \tag{\Cref{fact:sub_cover}}\\
&\le \mnum_2(\cG - \gst ,\veps/2) + \mnum_2( \upbeta_{\cF} ,\veps/2) \tag{\Cref{eq:mnum_sum}}\\
&= \mnum_2(\cG  ,\veps/2) + \mnum_2( \upbeta_{\cF} ,\veps/2)\\
&\le \mnum_2(\Fcent ,\veps/2) + \inf_{b > 1}\mnum_2( \cF ,\veps/2b) \tag{\Cref{lem:beta_F_conv}}\\
&\le  \inf_{b > 1}\mnum_2(\Fcent ,\veps/4b) + \mnum_2( \cF ,\veps/2b),
\end{align*}
and follows from similar steps as part (c). 
\end{proof}

\subsubsection{Proofs of supporting Lemmas for \Cref{thm:deldudbar}}\label{sec:deldudbar_support}

\begin{fact}\label{fact:sub_cover}[Exercise 4.2.10 in \cite{vershynin2018high}]  For any sets $\bbV' \subset \bbV$, $\veps$-covering number of $\bbV'$ in any norm is at most the $\veps/2$ covering number of $\bbV$. 
\end{fact}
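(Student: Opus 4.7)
The plan is to construct, from a minimal $\epsilon/2$-cover of $\bbV$, an $\epsilon$-cover of $\bbV'$ contained in $\bbV'$ and of no larger cardinality. First I would fix a minimal $\epsilon/2$-cover $\cC \subset \bbV$ in the given norm $\|\cdot\|$, of size $N := \covnum(\bbV, \|\cdot\|, \epsilon/2)$. For each $c \in \cC$, if the closed ball $B(c,\epsilon/2)$ intersects $\bbV'$, pick one representative $\pi(c) \in \bbV' \cap B(c,\epsilon/2)$; otherwise discard $c$. Define $\cC' := \{\pi(c) : c \in \cC, \bbV' \cap B(c,\epsilon/2) \neq \emptyset \}$, which is a subset of $\bbV'$ of cardinality at most $N$.

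The verification step is a one-line triangle inequality. For any $v' \in \bbV'$, since $v' \in \bbV$ there exists $c \in \cC$ with $\|v'-c\| \le \epsilon/2$. This witnesses that $\bbV' \cap B(c,\epsilon/2)$ is nonempty, so $\pi(c) \in \cC'$ is defined and satisfies $\|c - \pi(c)\| \le \epsilon/2$. Combining,
\[
 \|v' - \pi(c)\| \le \|v' - c\| + \|c - \pi(c)\| \le \epsilon,
\]
so $\cC'$ is indeed an $\epsilon$-cover of $\bbV'$ by elements of $\bbV'$.

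There is no substantial obstacle here: the argument uses only the triangle inequality, so it applies in any (semi)normed space and in particular to the $\ell_{q,n}$ norms invoked throughout the paper. The factor of $2$ in the statement is intrinsic to this ``shadowing'' construction, since a point of $\cC$ which is $\epsilon/2$-close to $\bbV'$ must be replaced by a nearby element of $\bbV'$, at worst doubling the covering radius.
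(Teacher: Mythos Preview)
Your proof is correct and is exactly the standard argument; the paper itself does not give a proof but simply cites the exercise in Vershynin, and your construction (replace each relevant center of the $\epsilon/2$-cover by a nearby point of $\bbV'$ and apply the triangle inequality) is the intended one. The only subtlety worth noting is that the paper's definition of a cover requires $\bbV' \subset \bbV$, which is precisely why the factor $2$ is needed and why you correctly push the centers back into $\bbV'$.
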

\begin{proof}[Proof of \Cref{lem:Dudfun_bar_ub}]  The proof of $\Dudfunbar_{n,\infty}(\cH) \ge \sup_{w_{1:n}}\Dudfun_{n,\infty}(\cH[w_{1:n}]).
$ is straightforward. To check $\deldudbar(\cH,c) \ge \deldud(\cH,c)$, we invoke \Cref{fact:sub_cover}:
\begin{align}
\sup_{w_{1:n}}\mnum_q(\cH[r,w_{1:n}];\veps/2) \le \sup_{w_{1:n}}\mnum_q(\cH[w_{1:n}];\veps/4) \le \mnum_q(\cH,\veps/4). \label{eq:cov_num_bounds_subset}
\end{align}
Thus,
\begin{align*}
\sup_{w_{1:n}}\Dudfun_{n,2}(\cH[r,w_{1:n}]) &:= \sup_{w_{1:n}}\inf_{\updelta \le R}\left(2\updelta + \frac{4}{\sqrt{n}}\int_{\updelta}^{R}\sqrt{\mnum_q(\cH[r,w_{1:n}];\veps/2)}\rmd \veps\right), \quad \text{where } R = \rad_2(\cH[r,w_{1:n}])\\
&\le \sup_{w_{1:n}}\inf_{\updelta \le r}\left(2\updelta + \frac{4}{\sqrt{n}}\int_{\updelta}^{r}\sqrt{\mnum_q(\cH[r,w_{1:n}];\veps/2)}\rmd \veps\right) \tag{$\rad_2(\cH[r,w_{1:n}]\le r$ by localization}\\
&\le\inf_{\updelta \le r}\left(2\updelta + \frac{4}{\sqrt{n}}\int_{\updelta}^{r} \sup_{w_{1:n}}\sqrt{\mnum_q(\cH[r,w_{1:n}];\veps/2)}\rmd \veps\right) \\
&\le\inf_{\updelta \le r}\left(2\updelta + \frac{4}{\sqrt{n}}\int_{\updelta}^{r} \sqrt{\mnum_q(\cH;\veps/4)}\rmd \veps\right) \tag{\Cref{eq:cov_num_bounds_subset}}\\
&:= \Dudfunbar_{n,2}(\cH,r). 
\end{align*}
Hence, 
\begin{align*}
\deldud(\cH,c) &:= \inf\left\{r: \sup_{w_{1:n}}\Dudfun_{n,2}(\cH[r,w_{1:n}]) \le \frac{r^2}{2c} \right\}\\
&\le \inf\left\{r: \Dudfunbar_{n,2}(\cH,r)  \le \frac{r^2}{2c}\right\} = \deldudbar(\cH,c).\tag*\qedhere
\end{align*}
\end{proof}
\begin{lemma}[Concavity of $\Dudfunbar$]\label{lem:Dudfunbar_concave} For all $a \ge 1$, $\Dudfunbar_{n,2}(\cH,ar) \le a\Dudfunbar_{n,2}(\cH,r)$. Hence, if $\Dudfunbar_{n,2}(\cH,r) \le \frac{r^2}{2c}$, then $\Dudfunbar_{n,2}(\cH,r') \le \frac{(r')^2}{2c}$ for $r' \ge r$. Moreover, if $b \ge 1$, $\deldudbar(\cH,br) \le b \deldudbar(\cH,r)$.
\end{lemma}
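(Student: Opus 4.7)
The plan is to establish the three claims in order, since the second and third are essentially consequences of the first. The lemma is essentially a ``sub-multiplicative'' property for the functional $\Dudfunbar_{n,2}(\cH,\cdot)$ and the critical radius $\deldudbar(\cH,\cdot)$, and the proof is a routine calculation leveraging the change of variables in Dudley's integral together with monotonicity of metric entropy.

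For the first claim, I would fix $a \ge 1$ and an arbitrary $\updelta \le r$. Note $a\updelta \le ar$, so
\begin{align*}
\Dudfunbar_{n,2}(\cH,ar) \le 2a\updelta + \frac{4}{\sqrt{n}}\int_{a\updelta}^{ar}\sqrt{\mnum_2(\cH;\veps/4)}\rmd \veps.
\end{align*}
Performing the change of variables $\veps = a u$ in the integral and then using that the metric entropy $u \mapsto \mnum_2(\cH;u/4)$ is non-increasing together with $a \ge 1$ (so $\mnum_2(\cH;au/4) \le \mnum_2(\cH;u/4)$) yields
\begin{align*}
\Dudfunbar_{n,2}(\cH,ar) \le 2a\updelta + \frac{4a}{\sqrt{n}}\int_{\updelta}^{r}\sqrt{\mnum_2(\cH;au/4)}\rmd u \le a\left(2\updelta + \frac{4}{\sqrt{n}}\int_{\updelta}^{r}\sqrt{\mnum_2(\cH;u/4)}\rmd u\right).
\end{align*}
Taking the infimum over $\updelta \le r$ on the right-hand side delivers $\Dudfunbar_{n,2}(\cH,ar) \le a\Dudfunbar_{n,2}(\cH,r)$.

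For the second claim, given $r' \ge r$, apply the first claim with $a = r'/r \ge 1$. Under the hypothesis $\Dudfunbar_{n,2}(\cH,r) \le r^2/(2c)$, this gives
\begin{align*}
\Dudfunbar_{n,2}(\cH,r') \le a \cdot \frac{r^2}{2c} = \frac{a r^2}{2c} \le \frac{a^2 r^2}{2c} = \frac{(r')^2}{2c},
\end{align*}
using $a \ge 1$ in the middle inequality.

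For the third claim (which I read as $\deldudbar(\cH, bc) \le b\deldudbar(\cH,c)$ for $b \ge 1$, matching the style of \Cref{lem:dudley_star_prop}), let $r^\star := \deldudbar(\cH,c)$, so $\Dudfunbar_{n,2}(\cH,r^\star) \le (r^\star)^2/(2c)$. Applying the first claim with $a = b \ge 1$ and $r = r^\star$ yields
\begin{align*}
\Dudfunbar_{n,2}(\cH,b r^\star) \le b\,\Dudfunbar_{n,2}(\cH,r^\star) \le b\cdot\frac{(r^\star)^2}{2c} = \frac{(b r^\star)^2}{2bc},
\end{align*}
which witnesses $br^\star$ as an admissible candidate in the infimum defining $\deldudbar(\cH,bc)$, giving $\deldudbar(\cH,bc) \le b r^\star$. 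No obstacle is anticipated beyond bookkeeping the change of variables and the direction of monotonicity of $\veps \mapsto \mnum_2(\cH;\veps)$.
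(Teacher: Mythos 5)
Your proof is correct and follows essentially the same route as the paper: a change of variables $\veps \mapsto a\veps$ in the Dudley integral, anti-monotonicity of the metric entropy, and restriction of the infimum, with the second and third claims deduced as corollaries (the paper merely remarks that these follow as in \Cref{lem:star_shaped}, whereas you spell them out). Your reading of the third claim as $\deldudbar(\cH,bc)\le b\,\deldudbar(\cH,c)$ is the intended one.
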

\begin{proof}[Proof of \Cref{lem:Dudfunbar_concave}]
\begin{align*}
 \Dudfunbar_{n,2}(\cH,ar) &= \inf_{\updelta \le ar} \left(2\updelta + \frac{4}{\sqrt{n}}\int_{\updelta}^{ar} \sqrt{\mnum_q(\cH;\veps/4)}\rmd \veps\right)\\
 &= \inf_{\updelta \le ar} \left(2\updelta + \frac{4a}{\sqrt{n}}\int_{\updelta/a}^{r} \sqrt{\mnum_q(\cH;a\veps/4)}\rmd \veps\right)\\
 &\le \inf_{\updelta \le ar} \left(2\updelta + \frac{4a}{\sqrt{n}}\int_{\updelta/a}^{r} \sqrt{\mnum_q(\cH;\veps/4)}\rmd \veps\right)\\
 &\le \inf_{\updelta \le r} \left(2a\updelta + \frac{4a}{\sqrt{n}}\int_{\updelta}^{r} \sqrt{\mnum_q(\cH;\veps/4)}\rmd \veps\right) = a\Dudfunbar_{n,2}(\cH,r).
 \end{align*}
 The rest of the result follows similarly to \Cref{lem:star_shaped}.
\end{proof}

\begin{proof}[Proof of \Cref{lem:Hsum_bound}] We prove part (a), the calculate for part (b) is near-identical.
\begin{align*}
\Dudfunbar_{n,2}(\cH,r)  &\le  \inf_{\updelta \le r} \left(2\updelta + \frac{4}{\sqrt{n}}\int_{\updelta}^{r} \sqrt{\mnum_q(\cH;\veps/4)}\rmd \veps\right) \\
&\le  \inf_{\updelta \le r} \left(2\updelta + \frac{4}{\sqrt{n}}\sum_{i=1}^2\int_{\updelta}^{r} \sqrt{\mnum_q(\cH_i;\veps/4a)}\rmd \veps\right) \tag{by assumption}\\
&\le  \inf_{\updelta \le r} \left(2\updelta + \frac{4a}{\sqrt{n}}\sum_{i=1}^2\int_{\updelta/a}^{r/a} \sqrt{\mnum_q(\cH_i;\veps/4)}\rmd \veps\right) \tag{$\veps \gets \veps/a$}\\
&=  a\inf_{\updelta \le r/a} \left(2\updelta + \frac{4}{\sqrt{n}}\sum_{i=1}^2\int_{\updelta}^{r/a} \sqrt{\mnum_q(\cH_i;\veps/4)}\rmd \veps\right) \tag{$\updelta \gets a\updelta$}\\
&=  a\inf_{\updelta_1,\updelta_2 \le r/a} \left(2\max\{\updelta_1,\updelta_2\} + \frac{4}{\sqrt{n}}\sum_{i=1}^2\int_{\max\{\updelta_1,\updelta_2\}}^{r/a} \sqrt{\mnum_q(\cH_i;\veps/4)}\rmd \veps\right) \\
&\le  a\inf_{\updelta_1,\updelta_2 \le r} \sum_{i=1}^2\left(2\updelta_i + \frac{4}{\sqrt{n}}\int_{\updelta_i}^{r/a} \sqrt{\mnum_q(\cH_i;\veps/4)}\rmd \veps\right) \\
&=  a\sum_{i=1}^2\Dudfunbar_{n,2}\left(\cH_i,\frac{r}{a}\right).
\end{align*}
Consequently, 
\begin{align*}
\deldudbar(\cH,c) &:= \inf\left\{r: \Dudfunbar_{n,2}(\cH,r)  \le \frac{r^2}{2c}\right\} \\
&\le \inf\left\{r: \Dudfunbar_{n,2}(\cH_1,\frac{r}{a}) + \Dudfunbar_{n,2}(\cH_2,\frac{r}{a})   \le \frac{r^2}{2ac}\right\} \\
&= a\inf\left\{r: \Dudfunbar_{n,2}(\cH_1,r) + \Dudfunbar_{n,2}(\cH_2,r)   \le \frac{a r^2}{2c}\right\}\\
&= a\inf\left\{r: \max_{i \in [2]}\Dudfunbar_{n,2}(\cH_1,r)  \le \frac{ar^2}{4c}\right\}.
\end{align*}
By \Cref{lem:Dudfunbar_concave}, the above is at most $a\max_{i \in [2]}\inf\{r: \Dudfunbar_{n,2}(\cH_1,r)  \le \frac{r^2}{2c}\} \le a\max_{i \in [2]}\deldudbar(\cH_i,2c/a)$.
\end{proof}

\subsection{Proof of Dudley Bounds (\Cref{lem:gl_comp,lem:dudley_local})}\label{sec:lem:comp_dud_bounds}

\begin{proof}[Proof of \Cref{lem:gl_comp}] 
    
        From the definition of the Dudley functional, \Cref{defn:dudfunc}, it is clear that the additive $\tau_0$-term in the metric entric bound contributes at most an additive $\frac{R\tau_0}{\sqrt{n}}$ term to the integral. Consequently, let handle what is left over, assuming throughout that $\tau_0 = 0$. For a class $\cH$, let $\phi(\veps)$ be the upper bound on $\mnum_{q}(\cH,\veps)$ prescribed by \Cref{defn:entclasses} (again, setting $\tau_0 = 0$). Then, from \Cref{defn:dudfunc}, so that
        \begin{align*}
    \Dudfunbar_{n,q}(\cH) &\le \inf_{\updelta \le R}\left(2\updelta + \frac{4}{\sqrt{n}}\int_{\updelta}^{R}\sqrt{\phi(\veps/2)}\rmd \veps\right) \lesssim \inf_{\updelta \le R}\left(\updelta + \frac{1}{\sqrt{n}}\int_{\updelta}^{R}\sqrt{\phi(\veps)}\rmd \veps\right),
    \end{align*}
    where last inequality uses similar changes of variables as in \Cref{lem:Hsum_bound}.
    When $p < 2$, we take $\updelta = 0$ and attain
    \begin{align*}
    \Dudfunbar_{n,q}(\cH) \lesssim  R\frac{1}{\sqrt{n}}\int_{0}^{R}\sqrt{\phi(\veps)}\rmd \veps \lesssim R\frac{\tau_0}{\sqrt{n}} +  R\sqrt{\frac{\tau_1}{n}}\cdot \begin{cases} \sqrt{\log(\tau_2 R)}  & p = 0\\
    \frac{1}{p/2-1}R^{1-p/2} & p \in (0,2)
    \end{cases}.
    \end{align*}
    Next, for the case $p = 2$, we pick $\updelta = R \sqrt{\tau_1/n}$ to get
    \begin{align*}
    \inf_{\updelta \le R} \updelta + \frac{1}{\sqrt{n}}\int_{\updelta}^{R}\sqrt{\phi(\veps)} \le R\frac{\tau_0}{\sqrt{n}}  + \inf_{\updelta \le R}  \updelta + \sqrt{\tau_1/n}\log(R/\updelta) \lesssim R\frac{\tau_0}{\sqrt{n}}  + \sqrt{\tau_1/n}\log(e + R\sqrt{n/\tau_1}).
    \end{align*}
    Finally, consider $p > 2$. Then, 
    \begin{align*}
    \inf_{\updelta \le R} \updelta + \frac{1}{\sqrt{n}}\int_{\updelta}^{R}\sqrt{\phi(\veps)} \le R\frac{\tau_0}{\sqrt{n}} +  \inf_{\updelta \le R}  \updelta + \frac{1}{p/2-1}\sqrt{\tau_1/n}\updelta^{1-p/2} 
    \end{align*}
    Taking $\updelta =  \frac{1}{p/2-1}\sqrt{\tau_1}\updelta^{1-p/2}$, we choose $\updelta = \tau_1^{1/p}(p/2-1)^{-2/p}$ yielding 
    \begin{align*}
    \inf_{\updelta \le R} \updelta + \frac{1}{\sqrt{n}}\int_{\updelta}^{R}\sqrt{\phi(\veps)} \lesssim R\frac{\tau_0}{\sqrt{n}}  + (\tau_1/n)^{1/p}(p/2-1)^{-2/p}.
    \end{align*}
    This concludes the proof. 
    \end{proof}
   

\begin{proof}[Proof of \Cref{lem:dudley_local}] Modifying the computation in \Cref{lem:gl_comp} implies
    \begin{align*}
    \Dudfunbar_{n,2}(\cH, r) \lesssim   \Xiglobn(p,\btau;r).
    \end{align*}
    . 
    For $p = 0$, we use that that for $r \ge c/\sqrt{n}$,
    \begin{align*}
     \Xiglobn(p,\btau ;r) = r\left(\frac{\tau_0 + \sqrt{\tau_1\log(e+\tau_2/r)}}{\sqrt{n}}\right)  \le r\left(\frac{(e+ \tau_0) + \sqrt{\tau_1\log(e+\sqrt{n}\tau_2/c)}}{\sqrt{n}}\right),
    \end{align*}
    so that 
    \begin{align*}
    \deldudbar(\cH, c) =  \inf \left\{r^2: \Dudfunbar_{n,2}(\cH, r) \le \frac{r^2}{2c}\right\} \lesssim c^2 \left(\frac{\tau_0^2 + \tau_1\log(e+\sqrt{n}\tau_2/c)}{n}\right) := \Xilocn(0,\btau;c)
    \end{align*}
    For $p \in (0,2)$,
    \begin{align*}
     \Xiglobn(p,{\btau};r) = \frac{r\tau_0 }{\sqrt{n}} + \frac{r^{1-p/2}}{1-p/2}\sqrt{\frac{\tau_1}{ n}} 
    \end{align*}
    Going forward, let $c_0$ a universal constant for which $\Dudfunbar_{n,2}(\cH, r) \le  c_0\Xiglobn(p,{\btau};r)$. Thus, one can check that
    \begin{align*}
    \inf \left\{r^2: \Dudfunbar_{n,2}(\cH,r) \le \frac{r^2}{2c}\right\} \lesssim \max\{r_1^2,r_2^2\},
    \end{align*}
    where $r_1$ and $r_2$ balance the following equations
    \begin{align*}
    \frac{r_1(1+\tau_0)}{\sqrt{n}} = r_1^2/4c_0c, \quad \frac{r_2^{1-p/2}}{1-p/2}\sqrt{\frac{\tau_1}{ n}} = r_2^2/4c_0c.
    \end{align*}
    Solving yields
    \begin{align*}
    r_1^2 = \frac{16c_0^2c^2(1+\tau_0)^2}{n}, \quad r_2^2 = \left(\frac{4c_0 c}{1-p/2}\sqrt{\frac{\tau_1}{ n}}\right)^{\frac{2}{1+p/2}} =  \left(\frac{1}{(1-p/2)^2} \cdot \frac{16c_0^2 c^2\tau_1}{ n }\right)^{\frac{2}{2+p}}
    \end{align*}
    which gives
    \begin{align*}
    \max\{r_1^2,r_2^2\} \lesssim \frac{c^2(1+\tau_0)^2}{n} + \left(\frac{c^2}{(1-p/2)^2} \cdot \frac{\tau_1}{ n }\right)^{\frac{2}{2+p}} := \Xilocn(p,\btau;c), \quad p \in (0,2).
    \end{align*} 

    For $p = 2$, we note that 
    \begin{align*}
    \Xiglobn(p,{\btau},\tau_1;r) \lesssim \left(\frac{r \tau_0 + \sqrt{\tau_1\log(e+R\sqrt{n}/\tau_1)}}{\sqrt{n}}\right).
    \end{align*}
    So similarly, 
    \begin{align*}
    \inf \left\{r^2:  \Dudfunbar_{n,2}(\cH,r) \le \frac{r^2}{2c}\right\} \lesssim \max\{r_1^2,r_3^2\},
    \end{align*}
    where $r_1$ is as above, and where $r_3$ is the smallest term satisfying
    \begin{align*}
    \frac{\sqrt{\tau_1\log(e+r_3\sqrt{n}/\tau_1)}}{\sqrt{n}} \le r_3^2/4c_0 c
    \end{align*}
    As $\frac{\sqrt{\tau_1\log(e+r_3\sqrt{n}/\tau_1)}}{\sqrt{n}}  \le r_3$ by Jensen's inequality, we can take $r_3 \le 4c_0 c$. Thus, it suffices that $r_3$ satisfy
    \begin{align*}
    r_3 \le 4c_0 c\frac{\sqrt{\tau_1\log(e+4c_0 c\sqrt{n}/\tau_1)}}{\sqrt{n}},
    \end{align*}
    so that suppressing the universal constant $c_0$,
    \begin{align*}
    r_1^2 + r_3^2 \lesssim \frac{c^2(1+\tau_0^2)}{n} + c\frac{\sqrt{\tau_1\log(e+ c\sqrt{n}/\tau_1)}}{\sqrt{n}} = \Xilocn(2,\btau;c).
    \end{align*}

    For $p > 2$, repeating the same arguments as above, we have
    \begin{align*}
    \inf \left\{r^2: \sup_{w_{1:n}} \Dudfunbar_{n,2}(\cH,r) \le \frac{r^2}{2c}\right\} \lesssim \max\{r_1^2,r_4^2\},
    \end{align*}
    where $r_1$ is as above and $r_4$ satisfies
    \begin{align*}
    r_4^2/4c_0 c = (\frac{\tau_1}{n})^{1/p}(p/2-1)^{-\frac{2}{p}}.
    \end{align*}
    so that $r_4^2 \lesssim c(\frac{\tau_1}{n})^{1/p}(p/2-1)^{-\frac{2}{p}}$. Following similar steps concludes the proof.
\end{proof}

\section{Experiment Details}
\label{app:exp}
This section describes various experiment details regarding model architectures and training hyperparameters.

\subsection{Regression}
\label{sub_app:regression}
\begin{figure}[t]
    \centering
    \includegraphics[width=0.3\linewidth]{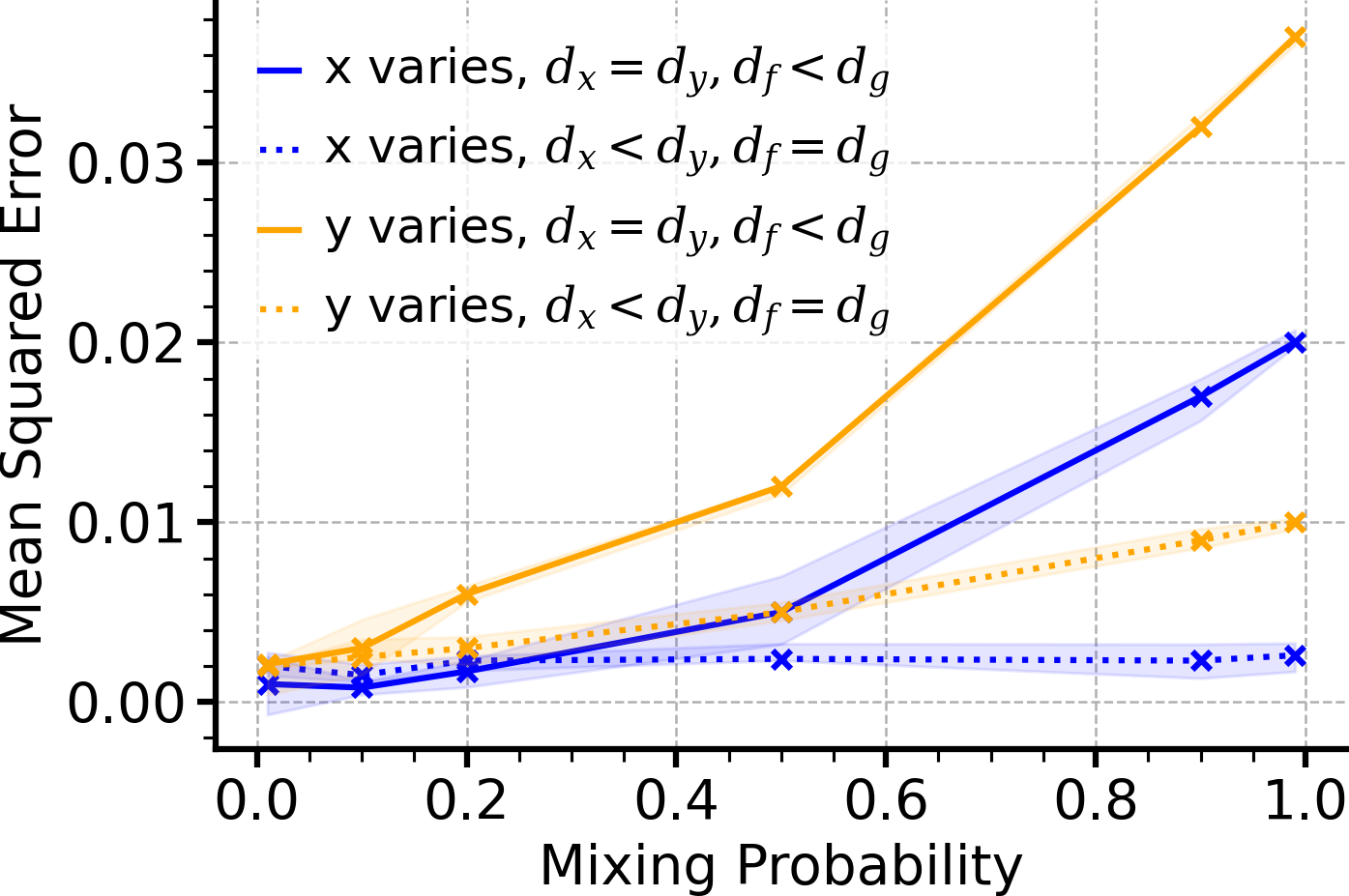}
    \includegraphics[width=0.3\linewidth]{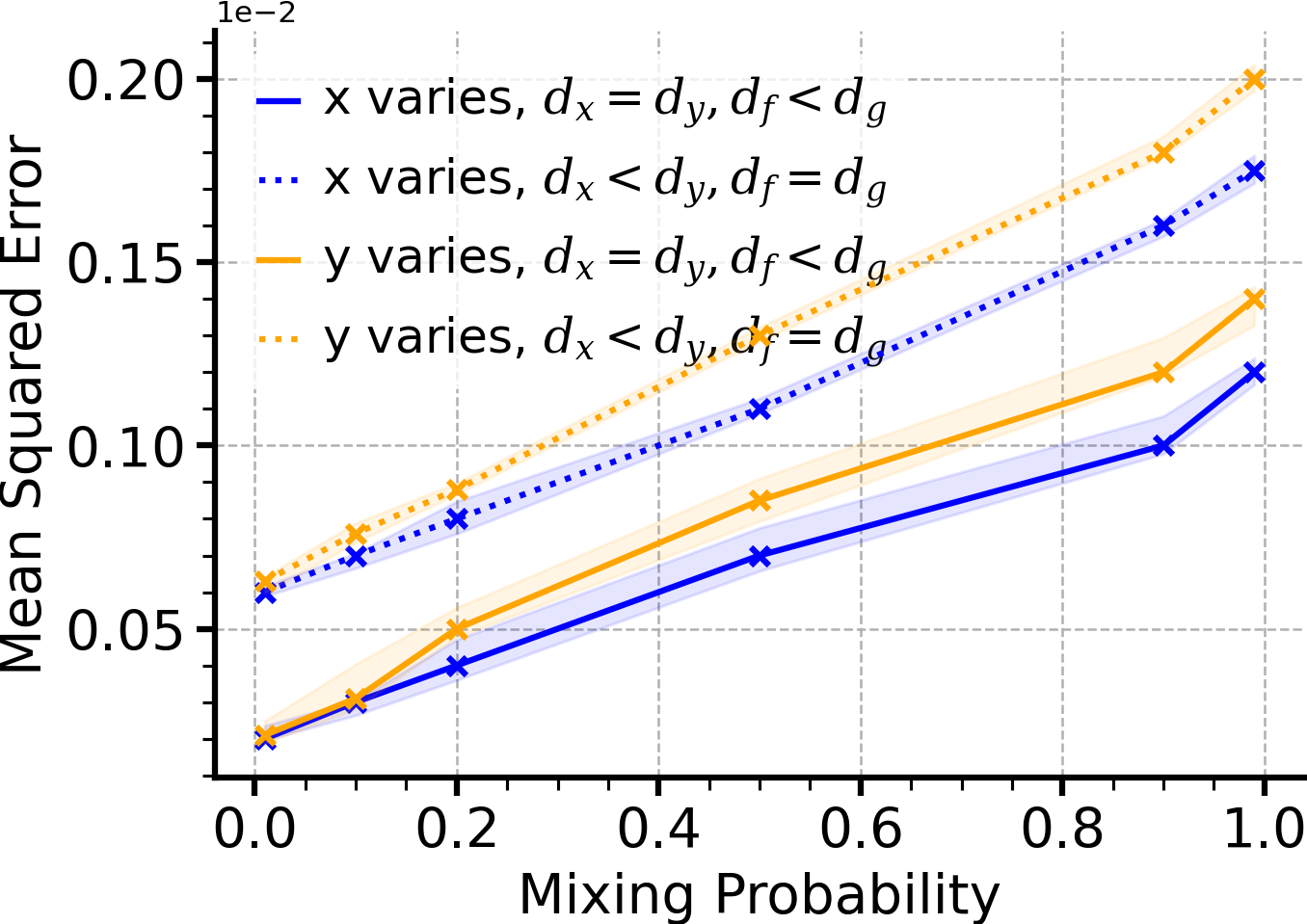}
    \vspace{-1em}
    \caption{\textbf{Mean Squared Error of predictors} of form $\hat{z} = f_\theta(x) + g_\theta(y)$ (Left) and $\hat{z} = h_\theta(x,y)$ (Right) on shifted distributions, where we hold the mixing probability of one of $\{\bx,\by\}$ fixed, and vary the other's probability. MSE for both predictors declines less with shifts in $p_x$ than those in $p_y$.}
    \label{fig:regression}
    \vspace{-1em}
\end{figure}
\paragraph{Analyzing \textit{simple} feature} To make $\bx$ simpler, we either have $d_x < d_y$ or $d_f < d_g$. We compare the generalization error (i.e., the difference between test and train mean squared error) for the auxiliary task of predicting $f_{\star}(\bx)$ and $g_{\star}(\by)$. We train $f_\phi(\bx)$ to predict $f_{\star}(\bx)$ and $g_{\phi}(\by)$ to predict $g_{\star}(\by)$. $f_\phi$ incurs a generalization error of $2.5 \times 10^{-5}$. When $d_x < d_y$, $g_\phi$ incurs a generalization error of $4.9 \times 10^{-5}$. When $d_f < d_g$, $g_\phi$ incurs a generalization error of $7.8 \times 10^{-5}$. In both cases, we observe that the auxiliary task of predicting $f_{\star}(\bx)$ has less generalization error.

\paragraph{Testing resiliency of predictors} We independently train two predictors $\hat{z} = f_\theta(x) + g_\theta(y)$ and $\hat{z} = h_\theta(x,y)$ (using concatenated features $(\bx, \by)$) to minimize mean-square error (MSE) under a training distribution with $p_x = p_y = .01$. We then measure the MSE for both predictors on shifted distributions, where we hold the mixing probability of one of $\{\bx,\by\}$ fixed, and vary the other's probability in the range $\{0.1, 0.2, 0.5, 0.9, 0.99\}$. \Cref{fig:regression} shows that MSE for both predictors declines less with shift in $p_x$ than with those in $p_y$, thereby corroborating our theoretical expectations.

\paragraph{Implementation details} We use $d_x=3$ and $d_y$ is either $3$ (when $d_x=d_y$) or $6$ (when $d_x<d_y$). Both $f(x)$ and $g(y)$ are 2-layered Multi-layered perceptions (MLP) with ReLU activation. While $f$ has a hidden dimension $d_f = 32$, $g$ has a hidden dimension of either $d_g = 32$ (when $d_f=d_g$) or $d_g=4096$ (when $d_f<d_g$). We parameterize $f_\theta(x)$ and $f_\phi(x)$ with 2-layered MLPs having ReLU activation and same hidden dimension as $f$. We parameterize $g_\theta(y)$ and $g_\phi(y)$ with 2-layered MLPs having ReLU activation and same hidden dimension as $g$. We parameterize $h_\theta(x,y)$ with a 2-layered MLP having ReLU activation and same hidden dimension as $g$. We collect $100k$ data points with $p_x=p_y=0.01$ for training $h_\theta$. We train $h_\theta$ with Adam optimizer~\citep{kingma2014adam} for $100$ epochs using a learning rate of $0.001$ and a batch size of $50$. During test, we increase $p_x$ and $p_y$ to $\{0.1, 0.2, 0.5, 0.9, 0.99\}$ and evaluate the model using $1000$ data points. 

\subsection{Binary Classification with Waterbird dataset}
\label{sub_app:waterbird}
\paragraph{Setup} Our task is to classify images of birds as waterbirds or landbirds, against a background of either land or water. These images have two high-level features: \birdtype{} and \background{}. We first empirically determine which feature is \textit{simple}. We then empirically test if the classifier is more resilient to distribution shift in the simple feature. 

\paragraph{Determining \textit{simple} feature} To determine which feature is simple, we learn two classifiers, $f_{\text{bird}}$ predicting \birdtype{} and $f_{\text{back}}$ predicting \background{}. We train them on standard training set of waterbird dataset and test them on a sampled test set that has the same distribution as the training set. While both $f_{\text{bird}}$ and $f_{\text{back}}$ have a training accuracy of $1.0$, $f_{\text{bird}}$ has a test accuracy of $0.88$ and $f_{\text{back}}$ has a test accuracy of $0.96$. Since $f_{\text{back}}$ has lower generalization error, we consider \background{} as simple.

\paragraph{Testing classifier resiliency} We test resiliency of $f_{\text{bird}}$ as we shift distribution of one feature while keeping the distribution of the other feature fixed. Specifically, we vary the proportion (in percentages) of images with waterbird (resp. land background) in test set while keeping the proportion of images with land background (resp. waterbird) fixed. We show the results in Figure~\ref{fig:comb_fig}. We observe that test accuracy of $f_{\text{bird}}$ varies less when we shift distribution of \background{} while keeping the distribution of \birdtype{} fixed.

\paragraph{Implementation details} We parameterize $f_\text{bird}$ and $f_\text{back}$ with ResNet50 model~\cite{he2016deep} and train them with stochastic gradient descent for $300$ epochs using a learning rate of $0.001$, batch size of $128$, momentum of $0.9$ and l2 regularization of $0.0001$. We took these hyperparameters and architectural choices from \citet{sagawa2019distributionally} which introduced the Waterbird dataset. We used the github repo \href{https://github.com/kohpangwei/group_DRO}{https://github.com/kohpangwei/group\_DRO} for running our experiments.

\newcommand{\landtype}{$\mathtt{landtype}$}
\newcommand{\region}{$\mathtt{region}$}
\newcommand{\zoo}{$\mathtt{zoo}$}
\newcommand{\africa}{$\mathtt{Africa}$}

\subsection{Multi-class Classification with FMoW}
\label{sub_app:fmow}
\paragraph{Setup} Our task is to predict \landtype{} from images with top down satellite view. These images also contain information about their geographical region (Africa, the Americas, Oceania, Asia, or Europe). Hence, they have two high-level features: \landtype{} and \region{}. We first empirically determine which feature is \textit{simple}. We then empirically test if the classifier is more resilient to distribution shift in the simple feature.
\begin{figure}[t]
    \centering
    \includegraphics[width=0.44\linewidth]{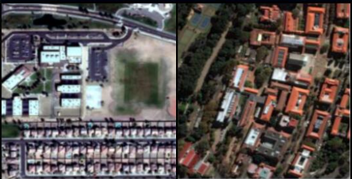}
    \hspace{2em}
    \includegraphics[width=0.3\linewidth]{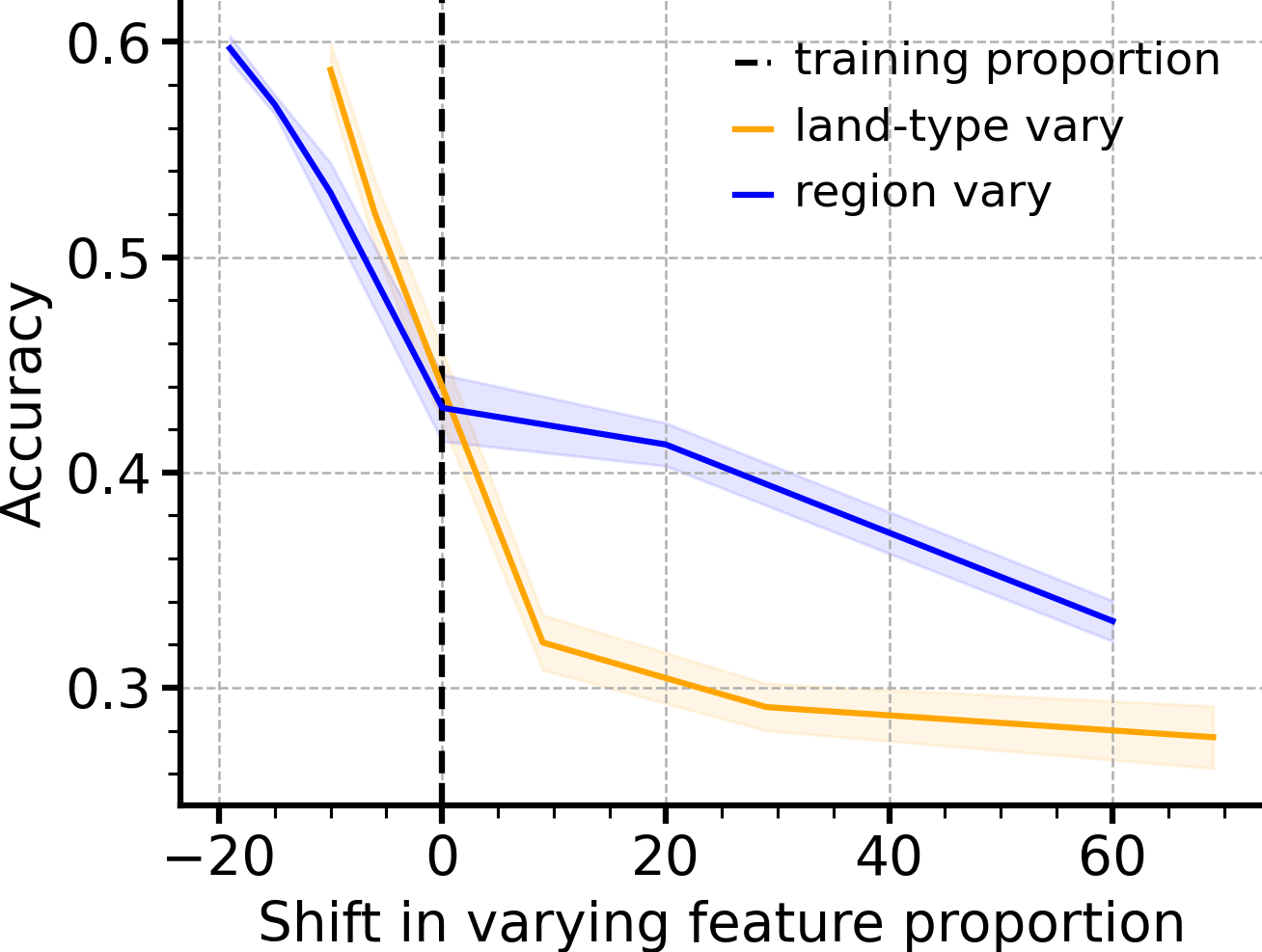}
    \caption{(Left) \textbf{Visualization of FMoW dataset}. (Right) \textbf{Test accuracy of \landtype{} classifier} as we shift distribution of \region{} (resp. \landtype{}) while keeping the distribution of \landtype{} (resp. \region{}) fixed. The results again show that the classifier is more robust to distribution shift in simple feature \region{}.}
    \label{fig:fmow}
    \vspace{-1em}
\end{figure}
\paragraph{Determining \textit{simple} feature} To determine which feature is simple, we learn two classifiers, $f_{\text{land}}$ predicting \landtype{} and $f_{\text{geo}}$ predicting \region{}. We train and test them on standard training and test set of FMoW dataset. While $f_{\text{land}}$ and $f_{\text{geo}}$ have a training accuracy (i.e. number of correct predictions/ number of datapoints) of $0.71$ and $0.74$, they have a test accuracy of $0.59$ and $0.69$ respectively. Since $f_{\text{geo}}$ has lower generalization error, we consider \region{} as simple.

\paragraph{Testing classifier resiliency} We test resiliency of $f_{\text{land}}$ as we shift distribution of one feature while keeping the distribution of the other feature fixed. To shift the distribution of \landtype{}, we vary the proportion of images with labelled as zoo (a \landtype{}). Similarly, to shift the distribution of \region{}, we vary the proportion of images from \africa{} region. We show the results in \Cref{fig:fmow}. We observe that test accuracy of $f_{\text{land}}$ varies less (i.e. $f_{\text{land}}$ is more resilient) when we shift distribution of \region{} while keeping the distribution of \landtype{} fixed.

\paragraph{Implementation details} We parameterize $f_\text{land}$ and $f_\text{geo}$ with DenseNet121 model~\citep{huang2017densely} and train them with Adam optimizer~\citep{kingma2014adam} for $60$ epochs using a learning rate of $0.0001$ and batch size of $32$. We took these hyperparameters and architectural choices from \citet{koh2021wilds}. We used the github repo \href{https://github.com/p-lambda/wilds}{https://github.com/p-lambda/wilds} for running our experiments.

\subsection{Learning logical operators with CelebA}
\label{sub_app:celeba}
\paragraph{Setup} We re-purpose the CelebA dataset to learn logical operators $\ORop$ and $\XORop$ for two attributes. We first empirically determine which attributes are \textit{simpler} than others. We then learn logical operators combining a \textit{simple} attribute and a \textit{complex} attribute. Finally, we empirically test the resilience of logical operators against distribution shifts in simple and complex attributes.
\begin{figure}[t]
    \centering
    \includegraphics[width=0.3\linewidth]{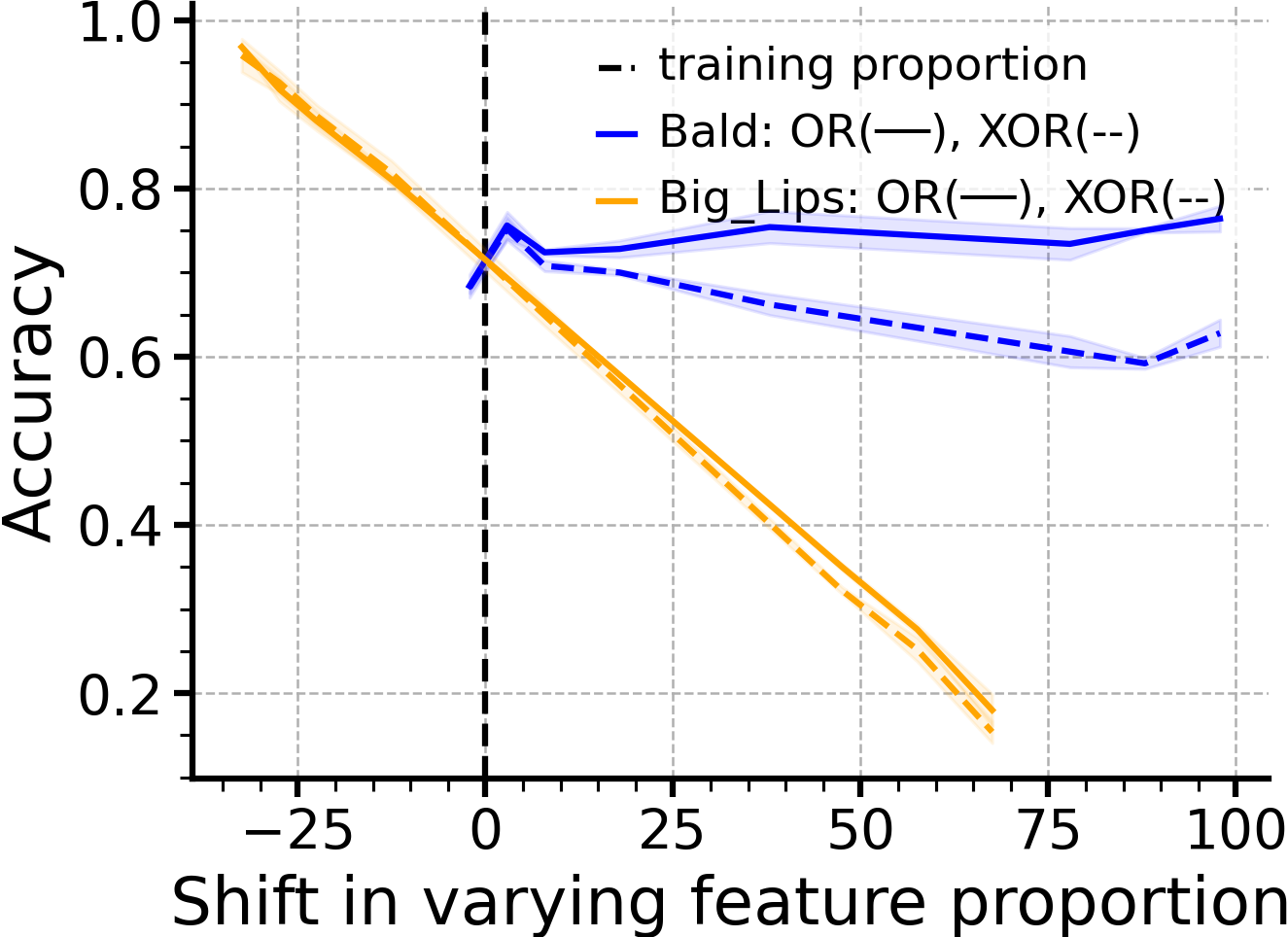}
    \includegraphics[width=0.3\linewidth]{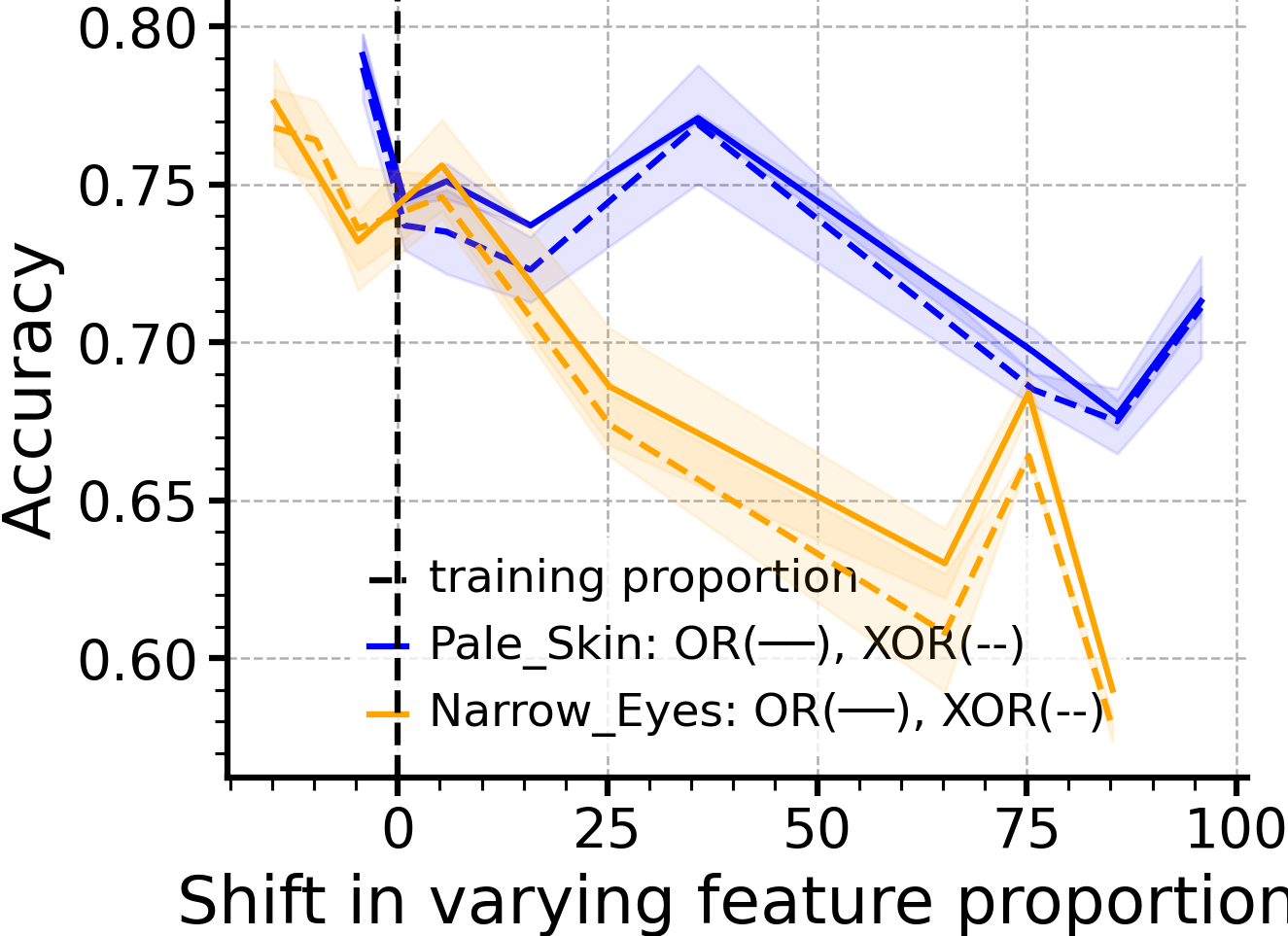}
    \vspace{-1em}
    \caption{\textbf{Test accuracy of logical operators over a pair of (\textit{simple}, \textit{complex}) attribute}. We vary the proportion (in percentages) of images with simple attribute (resp. complex attribute) in test set while keeping the proportion of images with complex attribute (resp. simple attribute) fixed. We use pairs \{$\bald$, $\biglips$\} (Left) and \{$\paleskin$, $\narroweyes$\} (Right).}
    \label{fig:celebA}
    \vspace{-1em}
\end{figure}

\paragraph{Determining \textit{simple} attributes} We first train and test a multi-head binary classifier that detects presence of $40$ different attributes, with one head per attribute, on images from the CelebA ``standard training set'' (CelebA-STS). We select attributes $\{\bald$, $\paleskin\}$ as ``simple'' due to their low generalization error, and $\{\biglips, \narroweyes\}$ as ``complex'' due to their larger generalization error. Table~\ref{tbl:celeba} shows a complete list of training and test accuracy of the multi-head binary classifier for each attribute.

\paragraph{Testing resiliency of logical operators} We first learn logical operators $f_{\ORop}$ and $f_{\XORop}$ over a pair of simple and complex attribute. We use two such pairs in our experiments: $\{\bald, \biglips\}$ and $\{\paleskin, \narroweyes\}$. We represent these logical operators as binary classifiers and train them on CelebA-STS. We get the labels for these logical operators by applying the same logical operation over labels for the simple and the complex attribute. We then test the resiliency of these logical operators by shifting the distribution of the simple attribute and the complex attribute, one at a time. Specifically, we vary the proportion (in percentages) of images with simple attribute (or complex attribute) in test set while keeping the proportion of images with complex attribute (or simple attribute) fixed. We show the results in \Cref{fig:comb_fig} and \Cref{fig:celebA}. We observe that the success rate of the logical operators vary less when we shift the distribution of the simple attributes $\{\bald, \biglips\}$.

\paragraph{Implementation details} We parameterize multi-head binary classifier, predicting presence of $40$ different facial attribute, with MobileNet~\citep{howard2017mobilenets}. We train it with Adam optimizer~\citep{kingma2014adam} for $100$ epochs using a learning rate of $0.001$ and a batch size of $64$. We use the same architecture and the training hyperparameters for learning logical operators $f_\text{OR}$ and $f_\text{XOR}$. We borrow these hyperparameters and the code for running our experiments from the github repo \href{https://github.com/suikei-wang/Facial-Attributes-Classification}{https://github.com/suikei-wang/Facial-Attributes-Classification}.

\begin{table}[t]
\centering
\begin{tabular}{|l|l|l|}
\hline
Attribute             & Train Accuracy             & Test Accuracy              \\ \hline
5\_o\_Clock\_Shadow   & 0.952 $\pm$ 0.003          & 0.945 $\pm$ 0.008          \\
Arched\_Eyebrows      & 0.879 $\pm$ 0.01           & 0.84 $\pm$ 0.006           \\
Attractive            & 0.846 $\pm$ 0.001          & 0.828 $\pm$ 0.003          \\
Bags\_Under\_Eyes     & 0.872 $\pm$ 0.004          & 0.845 $\pm$ 0.006          \\
\textbf{Bald}         & \textbf{0.991 $\pm$ 0.002} & \textbf{0.988 $\pm$ 0.004} \\
Bangs                 & 0.968 $\pm$ 0.006          & 0.96 $\pm$ 0.005           \\
\textbf{Big\_Lips}    & \textbf{0.8 $\pm$ 0.008}   & \textbf{0.716 $\pm$ 0.009} \\
Big\_Nose             & 0.867 $\pm$ 0.002          & 0.839 $\pm$ 0.004          \\
Black\_Hair           & 0.918 $\pm$ 0.007          & 0.896 $\pm$ 0.006          \\
Blond\_Hair           & 0.963 $\pm$ 0.005          & 0.957 $\pm$ 0.003          \\
Blurry                & 0.966 $\pm$ 0.008          & 0.961 $\pm$ 0.005          \\
Brown\_Hair           & 0.886 $\pm$ 0.009          & 0.885 $\pm$ 0.009          \\
Bushy\_Eyebrows       & 0.929 $\pm$ 0.007          & 0.922 $\pm$ 0.002          \\
Chubby                & 0.964 $\pm$ 0.003          & 0.948 $\pm$ 0.005          \\
Double\_Chin          & 0.971 $\pm$ 0.002          & 0.961 $\pm$ 0.006          \\
Eyeglasses            & 0.998 $\pm$ 0.003          & 0.996 $\pm$ 0.006          \\
Goatee                & 0.978 $\pm$ 0.002          & 0.974 $\pm$ 0.003          \\
Gray\_Hair            & 0.984 $\pm$ 0.005          & 0.98 $\pm$ 0.002           \\
Heavy\_Makeup         & 0.939 $\pm$ 0.007          & 0.918 $\pm$ 0.005          \\
High\_Cheekbones      & 0.898 $\pm$ 0.006          & 0.876 $\pm$ 0.004          \\
Male                  & 0.989 $\pm$ 0.001          & 0.979 $\pm$ 0.003          \\
Mouth\_Slightly\_Open & 0.957 $\pm$ 0.004          & 0.936 $\pm$ 0.008          \\
Mustache              & 0.975 $\pm$ 0.005          & 0.97 $\pm$ 0.007           \\
\textbf{Narrow\_Eyes} & \textbf{0.915 $\pm$ 0.002} & \textbf{0.875 $\pm$ 0.004} \\
No\_Beard             & 0.971 $\pm$ 0.009          & 0.96 $\pm$ 0.007           \\
Oval\_Face            & 0.795 $\pm$ 0.002          & 0.758 $\pm$ 0.005          \\
\textbf{Pale\_Skin}   & \textbf{0.97 $\pm$ 0.008}  & \textbf{0.967 $\pm$ 0.005} \\
Pointy\_Nose          & 0.795 $\pm$ 0.011          & 0.774 $\pm$ 0.009          \\
Receding\_Hairline    & 0.952 $\pm$ 0.003          & 0.939 $\pm$ 0.008          \\
Rosy\_Cheeks          & 0.959 $\pm$ 0.004          & 0.95 $\pm$ 0.009           \\
Sideburns             & 0.981 $\pm$ 0.002          & 0.978 $\pm$ 0.003          \\
Smiling               & 0.948 $\pm$ 0.006          & 0.928 $\pm$ 0.005          \\
Straight\_Hair        & 0.856 $\pm$ 0.003          & 0.831 $\pm$ 0.009          \\
Wavy\_Hair            & 0.87 $\pm$ 0.003           & 0.833 $\pm$ 0.004          \\
Wearing\_Earrings     & 0.923 $\pm$ 0.004          & 0.9 $\pm$ 0.002            \\
Wearing\_Hat          & 0.994 $\pm$ 0.009          & 0.989 $\pm$ 0.005          \\
Wearing\_Lipstick     & 0.946 $\pm$ 0.003          & 0.934 $\pm$ 0.007          \\
Wearing\_Necklace     & 0.895 $\pm$ 0.004          & 0.87 $\pm$ 0.006           \\
Wearing\_Necktie      & 0.97 $\pm$ 0.001           & 0.965 $\pm$ 0.002          \\
Young                 & 0.912 $\pm$ 0.002          & 0.877 $\pm$ 0.004          \\ \hline
\end{tabular}
\caption{Train and test accuracy of multi-head binary classifier for each attribute in CelebA dataset. We consider attributes with low generalization error as \textit{simple} and attributes with high generalization error as \textit{complex}. We highlight the selected \textit{simple} (Bald, Pale Skin) and \textit{complex} (Big Lips, Narrow Eyes) features. We report mean and standard error across 4 replicates.}
\label{tbl:celeba}
\end{table}

\subsection{Imitation learning on Robotic pusher arm environment}
\label{sub_app:robotic}
\paragraph{Environment Description} We use Robotic pusher arm environment adapted from \citet{ajay2022distributionally, gupta2018meta} where the goal is to push the red cube to the green circle. When the red cube reaches the green circle, the agent gets a
reward of +1. The state space is 12-dimensional consisting of mass of red cube (1), dampness parameter for each joint (1), joint angles (3) and velocities (3) of
the gripper, COM of the gripper (2) and position of the red cube (2). The green circle’s position is
fixed and at an initial distance of 0.5 from COM of the gripper. The red cube (of size 0.03) is initially
at a distance of 0.1 from COM of the gripper and at an angle $\pi/4$. During training, at beginning for every episode, we sample $m \sim \mathcal{N}(60, 15)$ and $d \sim \mathcal{N}(0.5, 0.1)$. The task horizon is 60 timesteps.

\paragraph{Expert Policy} To obtain expert policy that provides data for imitation learning and for training dynamics models ($p_{\phi_i}(s_{t+1} | s_t, a_t, i)$, $i \in \{d,m\}$), we train a policy $\pi_{\text{exp}}(a|s,m,d)$ with Soft-Actor-Critic~\citep{haarnoja2018soft} for $10e6$ environment steps.

\paragraph{Determining \textit{simple} factor} To determine which of the two is ``simpler'', we measure generalization error on the auxillary task of predicting next-step dynamics where one of $\{m,d\}$ is held fixed, and the other drawn from a certain distribution, fixed across both testing and training. We learn two dynamics model $p_{\phi_m}(s_{t+1}|s_t, a_t, m)$ and $p_{\phi_d}(s_{t+1}|s_t, a_t, d)$ on two separate datasets $\mathcal{D}_{\text{dyn},m}^{train}$ and $\mathcal{D}_{\text{dyn},d}^{train}$. Both $\mathcal{D}_{\text{dyn},m}^{train}$ and $\mathcal{D}_{\text{dyn},d}^{train}$ contain $100$ expert trajectories each, with varying $m$ and $d$ respectively while keeping the other factor fixed. While $m \sim \mathcal{N}(60,15)$ and $d=0.5$ in $\mathcal{D}_{\text{dyn},m}^{train}$, $d \sim \mathcal{N}(0.5,0.1)$ and $m=60$ in $\mathcal{D}_{\text{dyn},d}^{train}$. To evaluate learned dynamics models, we generate $\mathcal{D}_{\text{dyn},m}^{test}$ and $\mathcal{D}_{\text{dyn},d}^{test}$ in same way as their training counterparts. On training datasets, we find mean squared error ($\frac{1}{|\mathcal{D}|} \|s_{t+1} - p_{\phi_i}(s_{t+1}|s_t,a_t,i) \|_2^2$, $i \in \{m,d\}$) of $p_{\phi_m}(s_{t+1}|s_t, a_t, m)$ and $p_{\phi_d}(s_{t+1}|s_t, a_t, d)$ to be $0.062$ and $0.183$ respectively. On test datasets, we find the mean squared error of $p_{\phi_m}(s_{t+1}|s_t, a_t, m)$ and $p_{\phi_d}(s_{t+1}|s_t, a_t, d)$ to be $0.081$ and $0.237$ respectively. Since $p_{\phi_m}(s_{t+1}|s_t, a_t, m)$ has smaller generalization error, we consider object $\mass$ as simple. Intuitively, object $\mass$ only affects the dynamics of the system when the robotic arm is in contact with the object. In contrast, the joints' $\dampness$ affects the way the robotic arm moves and hence affects the system's dynamics independent of whether the robotic arm is in contact with the object.

\paragraph{Implementation details} We parameterize dynamics model ($p_{\phi_i}(s_{t+1} | s_t, a_t, i)$, $i \in \{d,m\}$), expert policy $\pi_{\text{exp}}(a|s,m,d)$ and imitator policy $\pi_\theta(a|s,m,d)$ with a $3$-layered Multi-layer perception (MLP) having hidden dimension of $512$ and ReLU activation. We train dynamics model and imitator policy with Adam optimizer~\citep{kingma2014adam} for $100$ epochs using a learning rate of $0.001$ and a batch size of $128$. 

\end{document}